\def\eqref#1{equation~\ref{#1}}
\def\1{\bm{1}}
\DeclareMathAlphabet{\mathsfit}{\encodingdefault}{\sfdefault}{m}{sl}
\SetMathAlphabet{\mathsfit}{bold}{\encodingdefault}{\sfdefault}{bx}{n}
\DeclareMathOperator*{\argmax}{arg\,max}
\providecommand*{\barvee}{%
  \mathbin{%
    \mathpalette\@barvee{}%
  }%
}
\newcommand*{\@barvee}[2]{%
  % #1: math style
  % #2: unused
  \sbox0{$#1\veebar\m@th$}%
  \sbox2{%
    \hbox to \wd0{%
      \hss
      \resizebox{1.05\wd0}{\height}{$#1-\m@th$}%
      \hss
    }%
  }%
  \sbox4{%
    \resizebox{\wd0}{.7\ht0}{$#1\vee\m@th$}%
  }%
  \sbox6{$#1\vcenter{}$}
  \ht2=\ht6 %
  \vbox to \ht0{%
    \copy2 %
    \vss
    \copy4 %
  }%
}
\newcommand{\reward}{R}
\newcommand{\rmax}{\reward_{\text{MAX}}}
\newcommand{\BlackBox}{\rule{1.5ex}{1.5ex}}  % end of proof
\newenvironment{proof}{\par\noindent{\bf Proof\ }}{\hfill\BlackBox\\[2mm]}
\newtheorem{theorem}{Theorem}
\newtheorem{proposition}{Proposition} 
\newtheorem{remark}{Remark}
\newtheorem{corollary}[theorem]{Corollary}
\newtheorem{definition}{Definition}
\newtheorem{assumption}{Assumption}[section]
\newlength{\strutheight}
\definecolor{goala}{HTML}{ff8080}
\definecolor{goalb}{HTML}{80ff80}
\definecolor{junction}{HTML}{8080ff}
\definecolor{corridor}{HTML}{c8c8c8}
\newcommand{\tikzsquare}[2][black,fill=blue]{\tikz[baseline=0.25ex]\draw[#1,thick] (0,0) rectangle (#2,#2);}%
\newcommand{\goala}{\tikzsquare[black,fill=goala]{0.3}}
\newcommand{\goalb}{\tikzsquare[black,fill=goalb]{0.3}}
\newcommand{\junction}{\tikzsquare[black,fill=junction]{0.3}}
\newcommand{\corridor}{\tikzsquare[black,fill=corridor]{0.3}}
\newcommand{\none}{\tikzsquare[black,fill=black]{0.3}}
\newcommand{\cmark}{\checkmark}
\newcommand{\xmark}{\ding{55}}
\title{Beyond Sliding Windows: Learning to Manage Memory in Non-Markovian Environments}
\author{Geraud Nangue Tasse\textsuperscript{1}, Matthew Riemer\textsuperscript{2,3}, Benjamin Rosman\textsuperscript{1}, Tim Klinger\textsuperscript{2} \\
$^{1}$\textbf{CSAM School, University of the Witwatersrand, ZA}\\
$^{2}$\textbf{IBM Research, Yorktown Heights, NY} \\
$^{3}$\textbf{Mila, Universit{\'e} de Montr{\'e}al, CA}\\
\texttt{\{geraud.nanguetasse1,benjamin.rosman1\}@wits.ac.za}, \\
\texttt{\{mdriemer,tklinger\}@us.ibm.com}
}
\begin{document}

\maketitle

\begin{abstract}
Recent success in developing increasingly general purpose agents based on sequence models has led to increased focus on the problem of deploying computationally limited agents within the vastly more complex real-world. A key challenge experienced in these more realistic domains is highly non-Markovian dependencies with respect to the agent's observations, which are less common in small controlled domains. The predominant approach for dealing with this in the literature is to stack together a window of the most recent observations (\textit{Frame Stacking}), but this window size must grow with the degree of non-Markovian dependencies, which results in prohibitive computational and memory requirements for both action inference and learning. In this paper, we are motivated by the insight that in many environments that are highly non-Markovian with respect to time, the environment only causally depends on a relatively small number of observations over that time-scale. A natural direction would then be to consider meta-algorithms that maintain relatively small adaptive stacks of memories such that it is possible to express highly non-Markovian dependencies with respect to time while considering fewer observations at each step and thus experience substantial savings in both compute and memory requirements. Hence, we propose a meta-algorithm (\textit{Adaptive Stacking}) for achieving exactly that with convergence guarantees and quantify the reduced computation and memory constraints for MLP, LSTM, and Transformer-based agents. 
Our experiments utilize popular memory tasks, which give us control over the degree of non-Markovian dependencies. 
This allows us to demonstrate that an appropriate meta-algorithm can learn the removal of memories not predictive of future rewards without excessive removal of important experiences. 
% Our experiments utilize popular memory tasks, which give us control over the degree of non-Markovian dependencies in the environment. 
% This allows us to demonstrate that an appropriate meta-algorithm can learn the removal of memories not predictive of future rewards and achieve convergence in the stack management policy without excessive removal of important experiences. 
Code: \href{https://github.com/geraudnt/adaptive-stacking}{https://github.com/geraudnt/adaptive-stacking}

\vspace{-0.2cm}
\begin{figure}[h!]
    \centering
    % \begin{subfigure}[b]{0.69\textwidth}
        \includegraphics[width=0.53\linewidth]{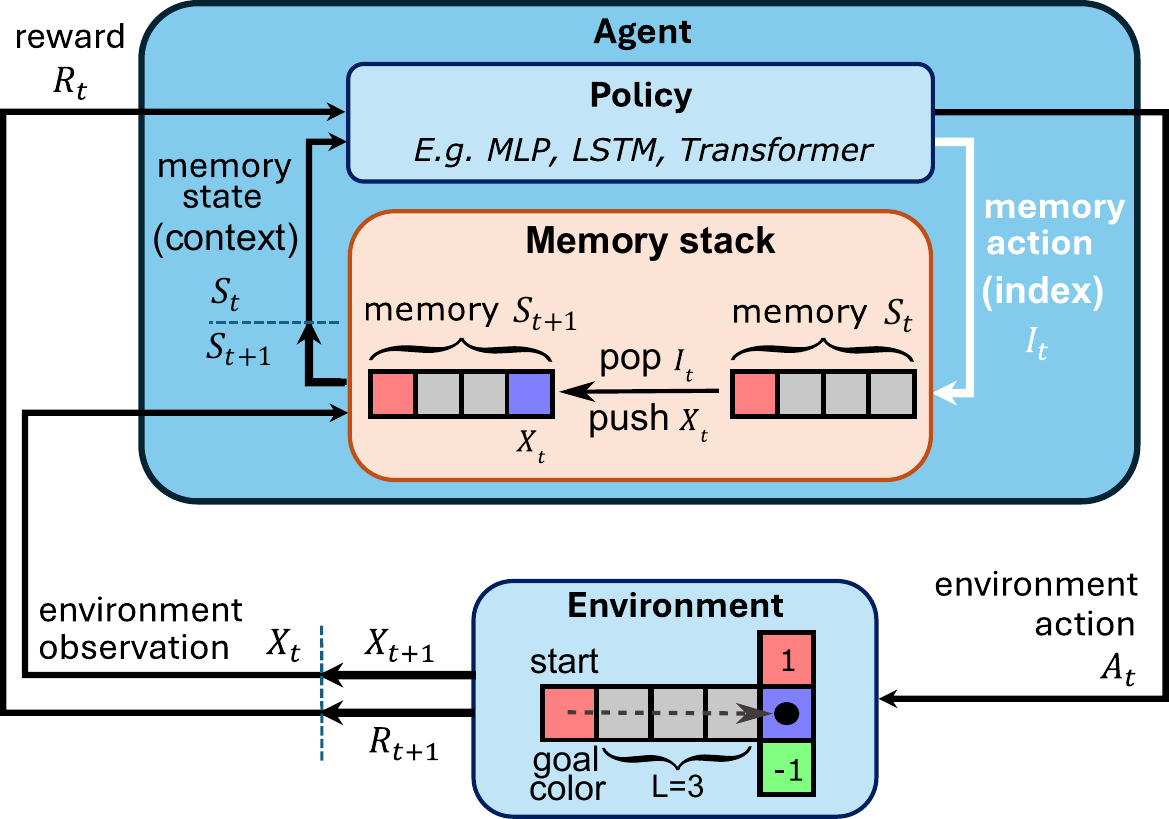}
    \quad
        \includegraphics[width=0.38\linewidth]{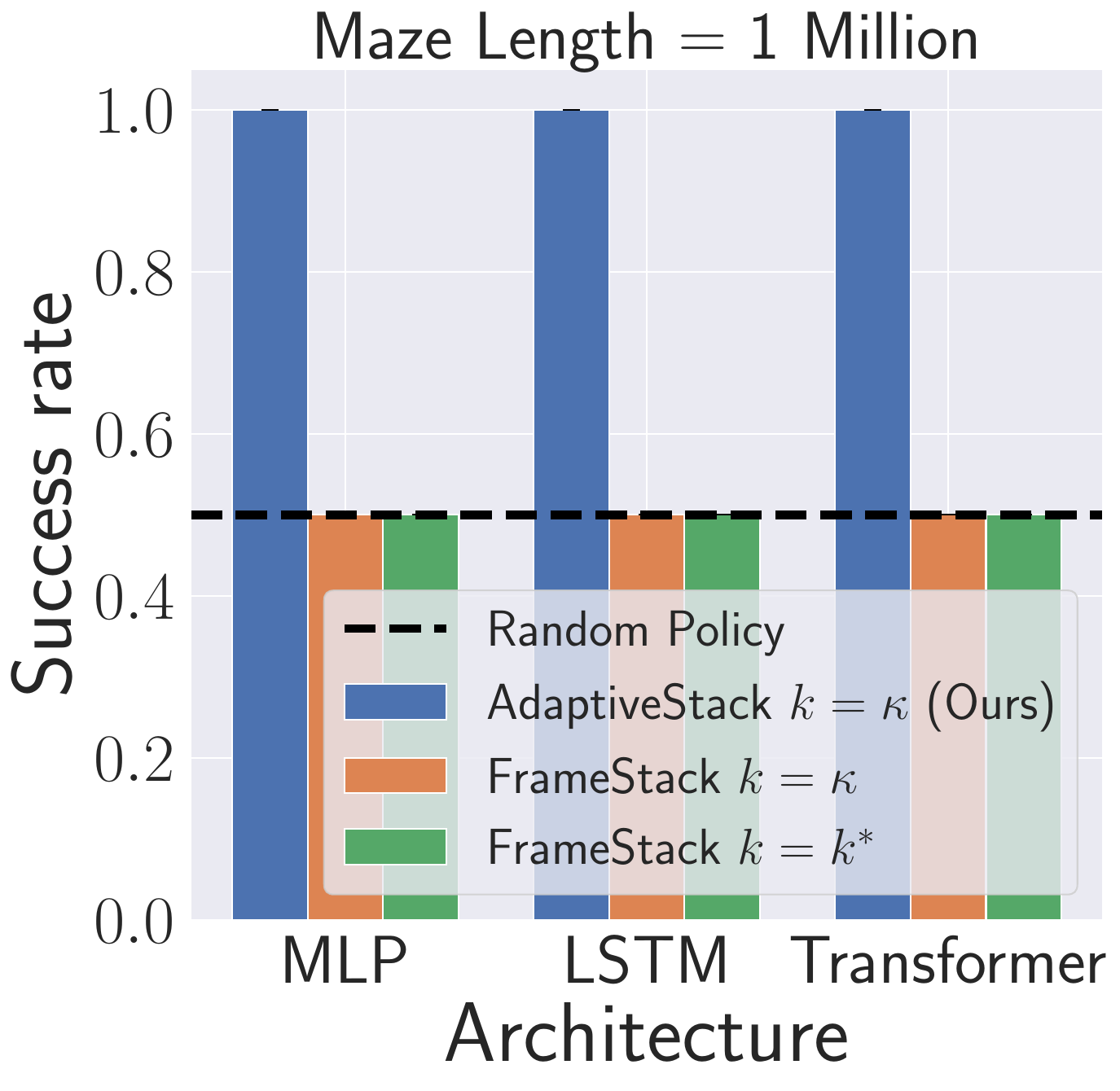}
    \caption{
    \looseness=-1
    Learning what to remember using Adaptive Stacking. 
    \textbf{(left)} Modification to the standard RL loop. \textbf{(right)} PPO trained for $10^6$ timesteps in a \textbf{Passive-TMaze} of length $16$ and tested on length $10^6$. 
    The episode lengths are equal to the maze lengths ($L+2$). 
    FS needs a stack size (context window) of $k^*=16$, which not only requires \textit{oracle} knowledge of the environment, but also scales with the maze length. 
    In contrast, only $\kappa=2$ is needed irrespective of maze length and architecture.
    }
\vspace{-0.8cm}
    \label{fig:memory_rl}
\end{figure}

% \begin{figure}[h!]
%     \centering
%     % \begin{subfigure}[b]{0.69\textwidth}
%         \includegraphics[width=0.55\linewidth]{images/illustrations/adaptive_stack_loop_expanded.pdf}
%     ~
%         \includegraphics[width=0.4\linewidth]{images/newplots/PPO-passive_tmaze-v0-successes-aggregate.pdf}
%     \caption{
%     \looseness=-1
%     Learning what to remember using Adaptive Stacking. 
%     \textbf{(left)} Modification to the standard RL loop. \textbf{(right)} Performance of PPO on a Passive-TMaze when trained for \textit{one million} timesteps on maze length $16$ and tested on maze length \textit{one million}. The episode lengths are equal to the maze lengths ($L+2$). The stack size (context window) $k$ needed for FrameStack is $k^*=16$, which not only requires \textit{oracle} knowledge of the environment, but also scales with the maze length. 
%     In contrast, only $\kappa=2$ is needed irrespective of maze length and architecture.
%     %The tasks are $k^*$-order Markovian but only require a memory of length $\kappa=2$.
%     % This leads to not compute and memory benefits during training, but also generalisation benefits
%     % Our results show that irrespective of the architecture (sequence model) being used,  . state of the art 
%     }
%     \label{fig:memory_rl}
% \end{figure}

\end{abstract}

% \newpage
\section{Introduction}

Reinforcement learning (RL) agents are typically formulated under the Markov assumption: the agent’s current observation contains all information needed for optimal decision-making \citep{puterman2014markov}. In practice, however, real-world environments are often partially observable -- the agent’s immediate observation is an incomplete snapshot of the true state. This leads to non-Markovian dependencies over time, where past observations contain critical context for future decisions. 
Notably, \citet{abel2021expressivity} proved that there exist certain tasks (for example expressed as desired behaviour specifications) that cannot be captured by any Markovian reward function. In other words, no memoryless reward can incentivise the correct behaviour for those tasks and agents must rely on histories of observations to infer hidden state information to resolve non-Markovian dependencies. This theoretical insight underlines that non-Markovian tasks are not just harder, but sometimes fundamentally require memory beyond the scope of standard Markov formulations.
We are interested in such settings in big worlds \citep{javed2024big}, where only a relatively small subset of past observations are relevant for optimal decision-making, but they are separated by large spans of time.

\looseness=-1
While RL has shown great success in a variety domains \citep{arulkumaran2017deep,cao2024survey}, handling such temporal dependencies remains a challenge especially for computationally limited agents operating in big worlds \citep{javed2024big}. 
In practice, the most common approach to address this problem is \textit{Frame Stacking} (FS),
which is a FIFO short-term memory wherein a fixed context window of the most recent $k^*$ observations (and actions) are concatenated. This is then used directly as policy input, or first used to infer hidden states typically using active inference \citep{friston2009free,sajid2021active} or sequence models like recurrent neural networks \citep{hochreiter1997long,hausknecht2015deep}, Transformers \citep{vaswani2017attention,chen2021decision}, and state space models \citep{gu2021efficiently,samsami2024mastering}. 
Given knowledge of the nature of the temporal dependencies, for example when they are expressible as reward machines \citep{icarte2022reward,bester2023counting}, prior works also use such histories of observations and program synthesis to learn abstract state machines that compactly represent the memory and temporal dependencies \citep{toro2019learning,hasanbeig2024symbolic}. While such approaches based on FS are very effective in domains with short-term dependencies, such as in Atari games \citep{mnih2013playing} where 4 frames are enough to capture the motion of objects, they quickly become impractical in domains where relevant information may have occurred in an \textit{unknown} \textit{large} number of steps \citep{ni2023transformers}. 
Importantly, increasing $k^*$ causes an exponential increase in the dimensionality of the observation space, leading to both a severe increase in compute and storage, and potentially poor sample efficiency and generalisation.

% Increasing $k$ inflates the observation space exponentially---leading to the classic curse of dimensionality---which in turn burdens both computation and memory, especially for deep sequence models like Transformers whose complexity scales quadratically in sequence length. Thus, both naive FS and heavy sequence models (e.g. RNNs, Transformers) face serious scalability bottlenecks in highly non-Markovian environments.

% A straightforward remedy is to equip the agent with memory. 
% However, storing or processing long histories verbatim leads to curses of dimensionality in terms of both compute, storage, and generalisation. 

However, many tasks may not actually require remembering everything. Often only a sparse subset of past observations is truly relevant for making optimal decisions. This insight aligns with findings in cognitive neuroscience: working memory in humans is known to have limited capacity and is thought to employ a selective gating mechanism that retains task-relevant information while filtering out irrelevant inputs \citep{unger2016working}. For example, a driver listening to a traffic report will update only the few road incidents relevant to her route into memory and ignore other trivial reports. Similarly, an RL agent with constrained memory should learn what to remember and what to forget. If the agent can identify which observations carry information critical for future reward, it could store just those and safely discard others, drastically reducing the burden on its memory and computation. Ideally, this is possible without sacrificing performance, but instead while actually improving generalisation.

% Driven by this insight, we propose \textit{Adaptive Stacking} (illustrating in Figure \ref{fig:memory_rl}), a general meta-algorithm that learns to selectively retain observations in partially observable, non-Markovian environments. We obtain the following main results:
Driven by this insight, we make the following main contributions:
\begin{enumerate*}
    \item \textbf{Adaptive Stacking}: We propose \textit{Adaptive Stacking} (AS), a general meta-algorithm that learns to selectively retain observations in a working memory of fixed size $\kappa$ (Figure \ref{fig:memory_rl}). 
    When $\kappa \ll k^*$, this significantly improves compute and memory efficiency. It also leads to an exponential reduction in the size of the search space, which has implications for sample efficiency and generalisation. 
    
    % \item \textbf{Adaptive Stacking}: We define Adaptive Stacking as a simple augmentation to the original decision process, but with additional actions for selecting observations to maintain a working memory of fixed size $\kappa$. 
    % When $\kappa \ll k^*$, this significantly improves compute and memory efficiency. It also leads to an exponential reduction in the size of the search space, which has implications for sample efficiency and generalisation. 
        
    \item \textbf{Theoretical analysis}:
    We then prove that agents using this approach are guaranteed to converge to an optimal policy in general when using unbiased value estimates, and in particular when using TD-learning %---the predominant learning approach---
    under general assumptions. 
    This enables practical trade-offs under the same resource constraints, such as the use of smaller memory to enable larger policy networks and the use of partial, instead of full, observations for better generalisation.

    \item \textbf{Empirical analysis}: We run comprehensive experiments on memory intensive tasks %---a canonical memory benchmark--
    %We run several experiments in a variety of T-maze tasks %---a canonical memory benchmark---
    using standard algorithms like Q-learning and PPO. Results demonstrate that AS generally leads to better memory management and sample efficiency than FS with $\kappa$ memory (when $k^*$ is unknown), while having comparable sample efficiency to FS with $k^*$ memory (when $k^*$ is given by an oracle).

    % \item \textbf{Empirical analysis}: We investigate the performance of Adaptive Stacking compared to FS in a variety of tasks (, Minigrid, Rubik's cube?, Needle-Haystack?), using a variety of RL algorithms (Q-learning, PPO, TD($\lambda$)?), and function approximators (MLP,LSTM,Transformers?). Our results demonstrate that Adaptive Stacking with $\kappa$ memory generally leads to better memory management and sample efficiency than FS with $\kappa$ memory (i.e. when $k^*$ is unknown), while having comparable sample efficiency and better generalisation than FS with $k^*$ memory (i.e. when $k^*$ is given by an oracle).
    
\end{enumerate*}

\section{Problem Setting}

\textbf{The Environment.} We are interested in non-Markovian environments, which can be modelled as a Non-Markovian Decision Process (NMDP).
Here, an agent interacts in an environment receiving observations $x_t \in \mathcal{X}$ at each step $t \in \{0,1, ..., T\}$ and producing action $a_t \in \mathcal{A}$, where $T$ is the length of an episode (or the lifetime of the agent in non-episodic settings). The agent's action causes the environment to transition to a new observation $x_{t+1} \in \mathcal{X}$ and also provides the agent with a scalar reward $r_{t+1} \in \mathbb{R}$. The environment is $k^*$-order Markovian (i.e. a $k^*$-order Markov Decision Process \citep{puterman2014markov}), meaning that $k^*\in\mathbb{N}$ is the smallest number such that the probability function $Pr(x_{t+1},r_{t+1} | x_{t:t-k^*},a_t )$ is stationary regardless of the agent's policy, where $x_{t:t-k^*}$ includes the last $k^*$ observations.\footnote{For clarity and without loss of generality, we only consider the history of observations and not the history of actions and rewards, since these can always be included in the observations as well.} 
If $k^*=1$, then this is a standard Markov Decision Processes (MDP).
We are interested in designing realistic computationally limited agents that can perform in environments where $k^*$ is very large. Note that our setting closely mirrors that of partially observable Markov decision processes (POMDP) \citep{kaelbling1998planning} where the last $k^*$ observations constitute a sufficient statistic of the state of the environment. In our work, discussion of the environment state is not necessary as we make no attempt to build a formal belief state as is commonly done in POMDPs. The notion of a memory state that we focus on building can be far more compact at scale.

\textbf{The Agent.} The agent acts in the environment using a policy $\pi(a_t | x_{t:t-k^*})$, which can be characterised by a value function $V^\pi(x_{t:t-k^*}) \coloneqq \mathbb{E}_{a_t \sim \pi, (x_{t+1},r_{t+1}) \sim Pr} \left[ \sum_{t=0}^{\infty} \gamma^t r_{t+1} | x_{t:t-k^*} \right]$.
The agent's objective is to learn an optimal policy $\pi^*$ that maximizes their long-term accumulated reward, characterised by the optimal value function $V^*(x_{t:t-k^*}) = \max_\pi V^\pi(x_{t:t-k^*})$.
However, the agent must learn $\pi^*$ with finite computational resources including a working memory $w$ (i.e. RAM) of finite capacity (in bits) $|w| \leq |w|^*$, 
% an episodic memory $e$ (i.e. disk space) of finite capacity (in bits) $|e| \leq |e|^*$, 
and computational resources $c$ of finite capacity (in allowable floating point operations per environment step) $|c| \leq |c|^*$ split across both inference and learning. The size and architecture of the agents parameters $\theta$ must be chosen such that the two resource limits are always respected. Most recent progress in AI has been driven by sequence models (e.g. Transformers or RNNs), which in our setting would learn a policy of the form $\pi_\theta(a_t|x_{t:t-k})$.
A fully differentiable sequence model has at least a linear dependence with respect to the sequence length $k$ for the working memory size i.e. $|w| \in \Omega(k)$ and computation i.e. $|c| \in \Omega(k)$ during inference and learning.\footnote{For the popular Transformer architecture, it is actually even worse $|c| \in \Omega(k^2)$.} 

\textbf{The Problem.} For a fully differentiable sequence model to learn in environments with large $k^*$, we must then correspondingly decrease the model size $|\theta|$ so that we can accommodate for the agent's limitations in terms of working memory $|w|^*$ and computational resources $|c|^*$. However, in many environments with high $k^*$, only $\kappa \ll k^*$ observations are actually needed to predict the environment dynamics. Thus $k^*$ is only large because the relevant observations are spaced apart by long temporal distances, not because there are many relevant observations to consider. So then if we learn to maintain a memory of size $k^* \geq k \geq \kappa$ with RL, we can potentially improve the efficiency of computation and working memory by a factor of $\Omega(k^* / \kappa)$ and increase $|\theta|$ at the same resource budget. Additionally, such an abstraction will induce a policy search space reduction of $\mathcal{O}(|\mathcal{X}|^{k^* - \kappa})$, which could lead to improvements in sample efficiency and generalisation for a policy using it. In this work, we consider approaches for achieving this goal with deep sequence models.

\section{Related Work}

%While the classical RL framework assumes a Markov Decision Process (MDP), a large number of real-world decision-making problems such as meta-learning, robust RL, temporal credit assignment, and generalization, are naturally cast as POMDPs \citep{ni2021recurrent, ni2023transformers}. These tasks often require reasoning over long histories of observations rather than relying solely on current state information, necessitating agents with memory or other mechanisms for long-term temporal inference.

\paragraph{Agents without Working Memory.} Foundational work in the field of RL has shown that settings that violate the Markov property introduce substantial complexity in learning policies that achieve optimal behaviour. For example, \citet{singh1994learning} and \citet{talvitie2011learning} demonstrated that applying standard TD-learning in POMDPs leads to convergence to biased value estimates. 
%Given this, \citet{singh1994learning} proposed stochastic policies over observations to obtain higher returns compared to deterministic policies, and \citet{talvitie2011learning} introduced prediction profile models---non-generative models focusing on predicting only task-relevant events.
%More fundamentally, \citet{abel2021expressivity} proved that there exist tasks whose optimal behavior cannot be induced by any Markovian reward function. Consequently, these tasks are not just more difficult, but fundamentally non-Markovian demanding memory for optimal performance. 
Classical solutions attempt to address this problem by maintaining a belief state (distribution over states) as a sufficient statistic of the history \citep{kaelbling1998planning,friston2009free,sajid2021active}, which scales exponentially in size with the size of the state space. As such, exact belief-state planning is largely considered intractable for complex environments, leading many modern RL agents to rely on learned memory or state representations %to approximate the hidden state 
without full belief-state estimation. In this work, we focus on models of this type that attempt to learn approximate and efficient state representations. 

\paragraph{RNN-based Learning Agents.} %A common practical solution to partial observability is \emph{Frame Stacking}---concatenating the most recent $k$ observations to form a proxy for the state \citep{mnih2013playing}. While effective in environments with short temporal dependencies (such as  Atari), FS scales poorly with the memory horizon: increasing $k$ exponentially expands the input space. 
A common approach for attempting to learn such a state representation is to utilize recurrent neural networks (RNNs) such as Long Short Term Memories (LSTMs) and Gated Recurrent Units (GRUs) \citep{hausknecht2015deep}. While the inference properties of RNNs are appealing, in that it would seem that each action can be taken after an amount of computation independent of the sequence length, these benefits fall apart for continually learning agents. An RNN can only reliably update its representation based on its previous representation when its parameters are still the same as the ones that made the previous representation, so the representation of the previous history must also be constantly updated as the RNN learns. As such, RNNs must typically also employ Frame Stacking to maintain a bounded amount of computation per step during learning. Another reason for this is that most RNNs rely on the back-propagation through time (BPTT) learning algorithm, which depends on the full history (which is often truncated similar to Frame Stacking for efficiency over long sequences). Recent improvements \citep{javed2023scalable,javed2024swifttd} have thus focused on efficient RNNs training methods (like real-time recurrent learning) and temporal difference estimation methods like SwiftTD to improve the learning speed and stability of online RL with RNNs. We instead focus on BPTT, the overwhelmingly most popular learning algorithm for RNNs, and demonstrate that Adaptive Stacking allows RNNs to utilize it to model non-Markovian dependencies that would not be possible with the same Frame Stacking budget. 
%To address the input dimensionality explosion of Frame Stacking, recurrent neural networks (RNNs) such as Long Short Term Memories (LSTMs) and Gated Recurrent Units (GRUs) have been employed \citep{hausknecht2015deep}. 
%Yet, these architectures often struggle in long-horizon tasks due to: vanishing gradients during back-propagation through time (BPTT) over full histories, limited capacity, and sensitivity to training dynamics \citep{singh1994learning, ni2021recurrent, javed2023scalable,javed2024swifttd}. 
% Recent improvements \citep{javed2023scalable,javed2024swifttd} have thus focused on efficient RNNs training methods.%(like real-time recurrent learning) and temporal difference estimation methods like SwiftTD to improve the learning speed and stability of online RL.

\paragraph{Transformer-based Learning Agents.} As an alternative means for constructing approximate state representations, Transformers have been increasingly applied to RL settings due to their success in natural language processing (NLP) \citep{vaswani2017attention}. Self-attention allows these models to learn to focus on relevant past events and scale to longer memory horizons while maintaining strong performance. For example \citet{parisotto2020stabilizing} proposed GTrXL, demonstrating improved stability over LSTMs. \citet{chen2021decision} later introduced the Decision Transformer, a sequence model for offline RL. 
Recent work has also proposed ways of making Transformers more efficient \citep{pramanik2023agalite}, and even combining them with RNNs \citep{cherepanov2023recurrent}. Most relevant to this work, \citet{ni2023transformers} rigorously studied the separation of memory length and credit assignment. They showed that Transformers can remember cues over a relatively large number of steps in synthetic T-Maze tasks, but struggle with long-term credit assignment. 
% These findings echo the limitations of recurrent models---attention helps with retention but not necessarily with learning to use retained information effectively. 
% However, these works still depend on maintaining a memory stack of length $k^*$ using FS to learn optimal policies.
However, these papers still depend on maintaining a memory stack of length $k^*$ using FS. See Appendix \ref{app:related_work} for an expanded discussion of how our work relates to dynamic memory models and popular LLM solutions over long contexts. 
% to learn optimal policies.

% \looseness=-1
\paragraph{Managing Compact Agent State Representations.} There are several papers that have similar motivation to ours in attempting to bypass the exponential blow-up in agent states by learning compact, predictive memory representations for arbitrary sequence models. \citet{allen2024mitigating} introduced \emph{\ensuremath{\lambda}-discrepancy}, a measure of the deviation between TD targets with and without bootstrapping. They prove that this discrepancy is zero in fully observed MDPs and positive in POMDPs, offering a diagnostic and learning signal for memory sufficiency. 
Alternative strategies include learning which observations are worth remembering, such as the \emph{Act-Then-Measure} framework \citep{krale2023act} which lets agents actively choose when to observe their state, balancing the cost of memory against its value. Most closely related to our paper is the line of work that provide the agent with additional actions to explicitly manage the memory stack \citep{peshkin1999,Demir2023}. Although it is merely a tabular model, \cite{Demir2023} is probably the most directly relevant work to our paper as the memory actions they consider are \textit{push} (add an element to the top of the stack) and \textit{skip} (do nothing), which they learn using intrinsic motivation. See Appendix \ref{apdx:demir-ablation} for an expanded discussion of where our approach comparatively shines as well as an in-depth empirical comparison with their approach. 

\looseness=-1
\paragraph{Frame Stacking is Ubiquitous.} However, all of these papers still use Frame Stacking when the stack is full, always removing the last observation from the stack when space is needed for a new observation. Hence their baselines are generally other sequence models \citep{lu2023structured,allen2024mitigating} or even RL algorithms \citep{krale2023act,Demir2023} -- all using Frame Stacking. 
% Hence can be seamless integrated with learning-based memory selection methods such as our Adaptive Stacking method. 
In contrast, our focus in this work is investigating Adaptive Stacking as an alternative to Frame Stacking, by having an agent learn \textit{which} observations to remove from its stack irrespective of the base RL algorithm or sequence model being used (Figure \ref{fig:memory_rl}). See Appendix \ref{app:related_work} for an in-depth contextualization of this approach within the greater body of literature on continual learning.

\section{Adaptive Stacking}
\label{sec:AdaptiveStacking}

We propose \textit{Adaptive Stacking} as a general-purpose memory abstraction for reinforcement learning in partially observable environments. Adaptive Stacking extends the common frame stacking heuristic by endowing the agent with control over which past observations to retain in a bounded memory stack of size $k$. Rather than passively retaining the most recent $k$ observations, the agent \emph{actively decides} which observation to discard, including the current observation. This transforms memory management into a decision-making problem aligned with maximizing reward.

\looseness=-1
\textbf{Motivating Examples.} Consider the TMaze environment (Figures \ref{fig:memory_rl},\ref{fig:domains}), a canonical memory task from neuroscience \citep{o1971hippocampus} which we adapt similarly to prior work in the field of RL %prior benchmark works 
\citep{bakker2001reinforcement,osband2019behaviour,hung2019optimizing,ni2023transformers}:
\begin{enumerate*}
    \item \textbf{Passive-TMaze task:}
    The agent begins in a corridor with a color-coded goal indicator (green $\goala$ or red $\goalb$), then proceeds through a long grey corridor $\corridor$ to the blue junction $\junction$ where the correct turning direction depends on the goal shown at the start. 
    Here, the agent only needs to remember the goal cue in order to pick the correct goal at the junction cue, and doesn't need to learn to navigate in the maze.     
    \item \textbf{Active-TMaze task:}
    Here, the agent must both learn to navigate to find the goal color and remember it to navigate to the corresponding goal location. %An Adaptive Stacking approach can learn to discard irrelevant movement observations while retaining the goal cue across time, mimicking selective attention and long-term binding found in biological agents. 
    While the necessary memory length is bounded for Frame Stacking in the Passive-TMaze ($k^*=L+2$), this memory threshold only holds for the optimal policy in the Active-TMaze. Indeed, the memory requirement can grow indefinitely ($k^*=\infty$) since a sub-optimal policy could stay arbitrarily long in some cells, while the true memory requirement to solve the tasks remains unchanged ($\kappa=2$). 
\end{enumerate*}
See a more detailed description of both tasks in Appendix \ref{apdx:value_consistency_benchmarks}.

\begin{figure}[b!]
    \centering
    \includegraphics[width=\linewidth]{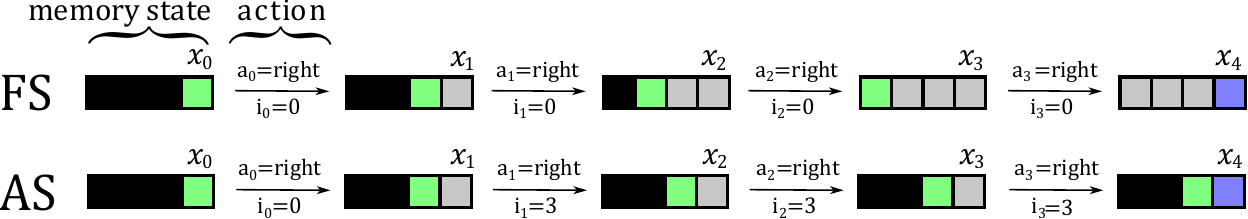}
    \caption{Illustration of Frame Stacking (FS) and Adaptive Stacking (AS) with $k=4$ in the passive-TMaze with $k^*=L+2=5$. 
    In order to free up space to push the new observation $x_t$, the FS agent always pops the last observation ($i_t=0$) while the AS agent chooses which observation to pop ($i_t \in \{1, \dots, k\}$). 
    % FS eventually forgets the goal trigger when the context length is not large enough ($k<k^*$), while Adaptive Stacking is able to remember the goal trigger by choosing to forget irrelevant grey observations. 
    In this figure, $\none$ corresponds to an empty memory slot.}
    \label{fig:memorystrategies_illustration}
\end{figure}

\subsection{RL with Internal Memory Decisions} 
\label{sec:AS}
Formally, Adaptive Stacking induces a new decision process where the agent at each timestep $t$ receives an observation $x_t \in \mathcal{X}$ and maintains a memory stack $s_t = [x_{i_1}, \ldots, x_{i_{k}}]$ containing $k$ selected past observations indexed by their relative timesteps in which the last element is always $x_{i_k}=x_t$. We will refer to this stack $s_t$ as the \textit{memory} or \textit{agent state} \citep{dong2022simple}. Upon receiving $x_{t+1}$, the agent executes two actions: an \emph{environment action} $a_t \in \mathcal{A}$, and a \emph{memory action} $i_t \in \{1, \dots, k\}$ selecting which observation to pop. The agent state is then updated as:
% \footnote{Until the memory stack is full, all observations are added to memory as in standard Frame Stacking.}
\begin{equation}
\label{eqn:memory-update}
s_{t+1} = \text{push}(\text{pop}(s_t, i_t), x_{t+1}).
\end{equation}

In general, this process induces a new POMDP $\mathcal{M}_k = \langle\mathcal{M}, \mathcal{S}, \mathcal{I}, u \rangle$ where $\mathcal{M}$ is the original POMDP, $\mathcal{S}$ is the set of agent states, $\mathcal{I}$ is the set of \textit{memory management actions}, and $u:\mathcal{S}\times\mathcal{I}\times\mathcal{X} \to \mathcal{S}$ is a \textit{memory update function} (such as Equation \ref{eqn:memory-update}). 
Hence, the approach is also compatible with modern architectures such as Transformers by simply defining $\mathcal{S}$, $\mathcal{I}$, and $u$ appropriately.
The agent's policy is now $\pi_k(a_t,i_t | s_t)$, which can be characterised by a value function $V_k^{\pi_k}(s_t) \coloneqq \mathbb{E}_{(a_t,i_t) \sim \pi_k, (x_{t+1},r_{t+1}) \sim Pr} \left[ \sum_{t=0}^{\infty} \gamma^t r_{t+1} | u(s_t,i_t,x_{t+1}) \right]$.
Its objective is now to learn an optimal policy $\pi_k^*$ that maximizes its long-term accumulated reward, characterised by the optimal value function $V_k^*(s_t) = V_k^{\pi_k^*}(s_t) = \max_{\pi_k} V^{\pi_k}(s_t)$.

\looseness=-1
Consider for example the Passive-TMaze example with corridor length \( L=3 \), so that \( k^* = L + 2 = 5 \). 
Figure~\ref{fig:memorystrategies_illustration} contrasts Frame Stacking and Adaptive Stacking with $k=4$. 
FS, due to its FIFO nature ($i_t=0$ for all $t$), forgets the goal signal when the maze is longer than the context window ($k < k^*$). In contrast, the optimal AS policy \( \pi_k^* \) learns to retain only the green goal indicator and discard irrelevant grey observations (thereby solving the task with a much smaller memory budget). 
Importantly, notice that discarding those irrelevant grey observations makes the problem non-Markovian and changes the agent's perceived optimal values. 
For example at timestep \( t=1 \), the memory state is \( s_t = \none\none\goalb\corridor \). However, multiple latent histories $x_{t:t-k^*}$ are compatible with this state: 
$
x_{t:t-k^*} \in \{\none\none\none\goalb\corridor,\none\none\goalb\corridor\corridor,\none\goalb\corridor\corridor\corridor \}.
$
This gives:
\[
V_2^{\pi_2^*}(\none\none\goalb\corridor) = \frac{1}{3} V^{\pi_2^*}(\none\none\none\goalb\corridor) + \frac{1}{3} V^{\pi_2^*}(\none\none\goalb\corridor\corridor) + \frac{1}{3} V^{\pi_2^*}(\none\goalb\corridor\corridor\corridor) = \frac{1}{3}(\gamma^3 + \gamma^2 + \gamma).
\]
\looseness=-1
However, the actual latent history at time \( t=1 \) is \( x_{t:t-k^*} = \none\none\none\goalb\corridor\), and the true optimal value is:
$V^*(x_{t:t-k^*}) = \gamma^3$.
This induces a value gap
$|V^*(x_{t:t-k^*})-V_2^{\pi_2^*}(s_t)|>0$, 
but \( \pi_2^* \) is still optimal since $V^{\pi_2^*}(x_{t:t-k^*})=V^*(x_{t:t-k^*})$, even though $s_t$ is not a sufficient statistic of the $k^*$-history $x_{t:t-k^*}$. Figure~\ref{fig:optimal_values} shows the value gap for varying T-Maze lengths.
This illustrates a crucial point:

\begin{remark}
 Uncertainty in history may harm value expectations, $|V^*(x_{t:t-k^*})-V_k^{\pi_k^*}(s_t)|>0$, but it does not necessarily harm policy optimality as long as the uncertain differences are irrelevant for optimal decision making: $V^*(x_{t:t-k^*}) = V^{\pi_k^*}(x_{t:t-k^*})$.
\end{remark}

\looseness=-1
By integrating the memory update into the RL loop (see Figure~\ref{fig:memory_rl}), Adaptive Stacking fits cleanly into existing learning pipelines similarly to Frame Stacking (for example Q-learning as shown in Algorithm ~\ref{algo:qlearn-adaptivestacking}) and can be trained end-to-end.
In this view, Adaptive Stacking transforms memory selection into a sequential decision-making problem aligned with the agent’s reward signal. This stands in contrast to passive memory mechanisms based on Frame Stacking, which indiscriminately process all inputs. 
This also aligns with cognitive models of working memory in humans, where attention-gated memory buffers retain only task-relevant cues while filtering distractors \citep{unger2016working}. 
However, this raises an important question: \textit{How does selective forgetting affect the standard theoretical guarantees established for the convergence of RL agents such as value function and policy optimality?}

% Initial Version:
% \begin{table*}[b!]
%     \centering
%     \begin{tabular}{llcccc} \toprule
%          Architecture & Memory Type & $|c|_{a \sim \pi_\theta}$ &  $|c|_{\text{TD}}$ & $|w|_{a \sim \pi_\theta}$ & $|w|_{\text{TD}}$ \\ 
%   \midrule
%    MLP & Frame Stack & $\Omega(k^*)$ & $\Omega(k^*)$ & $\Omega(k^*)$ & $\Omega(k^*)$ \\
%    MLP & Adaptive Stack & $\Omega(\kappa)$ & $\Omega(\kappa)$ & $\Omega(\kappa)$ & $\Omega(\kappa)$ \\
%   \midrule
%    LSTM & Frame Stack & $\Omega(k^*)$ & $\Omega(k^*)$ & $\Omega(k^*)$ & $\Omega(k^*)$ \\
%    LSTM & Adaptive Stack & $\Omega(\kappa)$ & $\Omega(\kappa)$ & $\Omega(\kappa)$ & $\Omega(\kappa)$ \\
%    \midrule
%    Transformer & Frame Stack & $\Omega({k^*}^2)$ & $\Omega(k^*)$ & $\Omega({k^*}^2)$ & $\Omega(k^*)$ \\
%    Transformer & Adaptive Stack & $\Omega(\kappa^2)$ & $\Omega(\kappa)$ & $\Omega(\kappa^2)$ & $\Omega(\kappa)$ \\

%  \bottomrule
%     \end{tabular}
%     \vspace{+0mm}
%     \caption{Compute $|c|$ and memory $|w|$ requirements for computing actions $a \sim \pi_\theta$ and TD updates. 
%     }
%     \label{tab:rps_lm_prompt}
%     \vspace{-4mm}
% \end{table*}

% Compressed Version 
\begin{table*}[b!]
    \centering
    \begin{tabular}{llcccc} \toprule
         Architecture & Memory Type & $|c|_{a \sim \pi_\theta}$ &  $|c|_{\text{TD}}$ & $|w|_{a \sim \pi_\theta}$ & $|w|_{\text{TD}}$ \\ 
  \midrule
   MLP or LSTM & Frame Stack & $\Omega(k^*)$ & $\Omega(k^*)$ & $\Omega(k^*)$ & $\Omega(k^*)$ \\
   MLP or LSTM & Adaptive Stack & $\Omega(\kappa)$ & $\Omega(\kappa)$ & $\Omega(\kappa)$ & $\Omega(\kappa)$ \\
   \midrule
   Transformer & Frame Stack & $\Omega({k^*}^2)$ & $\Omega(k^*)$ & $\Omega({k^*}^2)$ & $\Omega(k^*)$ \\
   Transformer & Adaptive Stack & $\Omega(\kappa^2)$ & $\Omega(\kappa)$ & $\Omega(\kappa^2)$ & $\Omega(\kappa)$ \\

 \bottomrule
    \end{tabular}
    \vspace{+0mm}
    \caption{Compute $|c|$ and memory $|w|$ requirements for computing actions $a \sim \pi_\theta$ and TD updates. 
    }
    \label{tab:rps_lm_prompt}
    \vspace{-4mm}
\end{table*}

% % Compressed Version with examples
% \begin{table*}[b!]
%     \centering
%     \begin{tabular}{llcccccc}
%         \toprule
%         & & \multicolumn{3}{c}{$|w|_{a \sim \pi_\theta}$} & \multicolumn{3}{c}{$|w|_{\text{TD}}$} \\
%         Architecture & Memory Type 
%         & $\Omega$ & $L$ = $14$ & $L$ = $10^6$
%         & $\Omega$ & $L$ = $14$ & $L$ = $10^6$ \\
%         \midrule
        
%         MLP or LSTM & Frame Stack 
%         & $\Omega(k^*)$ & $130$MB & 10GB  & $\Omega(k^*)$ & $130$MB & 10GB \\
        
%         MLP or LSTM & Adaptive Stack 
%         & $\Omega(\kappa)$ & $120$MB & 10GB & $\Omega(\kappa)$ & 10GB & 10GB \\
        
%         \midrule
        
%         Transformer & Frame Stack 
%         & $\Omega(k^{*2})$ & $294$MB & 10GB & $\Omega(k^*)$ & $180$GB & 10GB \\
        
%         Transformer & Adaptive Stack 
%         & $\Omega(\kappa^2)$ & $180$MB & 10GB & $\Omega(\kappa)$ & 10GB & 10GB \\
        
%         \bottomrule
%     \end{tabular}
%     \vspace{+0mm}
%     \caption{
%     \looseness=-1
%     Memory $|w|$ requirements for computing actions $a \sim \pi_\theta$ and TD updates (same $\Omega$ for the compute $|c|$), with examples for some maze lengths given the hyperparameters in Appendix \ref{apdx:ppo_hyperparams}.}
%     \label{tab:rps_lm_prompt}
%     \vspace{-4mm}
% \end{table*}

\subsection{Monte Carlo Value Function Estimates}

Recent work for reasoning with large language models has highlighted the effectiveness of Monte Carlo estimates of the value function \citep{shao2024deepseekmath} as originally pioneered by the REINFORCE policy gradient algorithm \citep{williams1992simple}. An advantage of this kind of algorithm is that its estimates of the value function are unbiased as they are formed based on rolling out the policy for sufficiently long in the environment itself. This greatly simplifies convergence to optimality in the limited memory setting \citep{allen2024mitigating}. We can then consider a notion of the minimal sufficient memory length. 

\begin{definition}
 Define $\kappa$ to be the smallest memory length such that there exists a policy $\pi_\kappa^*$ satisfying $V^{\pi_k^*}(x_{t:t-k^*}) = V^*(x_{t:t-k^*})$ for all $t$.
\end{definition}

This characterises the minimal task-relevant context size needed to act optimally in environments with large $k^*$, and motivates the central promise of Adaptive Stacking: optimal memory management via reward-guided memory decisions. $\kappa$ always exists since in the simplest case we can have $\kappa=k^*$ (Proposition~\ref{prop:optimality}), as shown in Figure~\ref{fig:k_kappa} when the maze length is 2 (when $L=0$).
Hence, any unbiased RL algorithm %---such as Reinforce \citep{williams1992simple} and Group Relative Policy Optimization (GRPO) \citep{shao2024deepseekmath}---
that is guaranteed to converge to optimal policies under Frame Stacking with $k=k^*$ is also guaranteed to converge to optimal policies under Adaptive Stacking with $k=\kappa$ (Theorem~\ref{thm:unbiased_convergence}).
% learns optimal policies using unbiased value estimates, such as Monte-carlo rollouts) \
% , 

% $\frac{|\mathcal{A}|^{|\mathcal{X}|^k}}{(k|\mathcal{A}|)^{|\mathcal{X}|^\kappa}}$

% $\frac{1}{k^*}|\mathcal{A}|^{|\mathcal{X}|^{k^*}-|\mathcal{X}|^\kappa}$

\begin{proposition}
\label{prop:optimality}
If \( k = k^* \), then there exists a \( \pi_k^* \) such that \( V^{\pi_k^*}(s_t) = V^*(x_{t:t-k^*}) \) for all \( s_t\in\mathcal{S} \).
\end{proposition}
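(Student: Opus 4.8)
The plan is to exhibit an explicit $\pi_k^*$ and evaluate it directly. Let the memory component of $\pi_k^*$ be the Frame-Stacking (FIFO) rule --- at every step discard the oldest retained observation, so that a sliding window is kept --- and let the environment component be an optimal policy $\pi^*$ of the underlying NMDP, applied to the current stack contents. Formally, $\pi_k^*(a_t, i_t \mid s_t)$ puts all its mass on pairs $(a_t, i_t)$ with $i_t$ equal to the FIFO index and $a_t \sim \pi^*(\cdot \mid s_t)$; this is well posed because, under the FIFO rule, the stack only ever contains a contiguous window of recent observations, which is exactly the type of object $\pi^*$ consumes once $k = k^*$.

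First I would show, by induction on $t$, that along every trajectory generated by $\pi_k^*$ the memory update of Equation~\ref{eqn:memory-update} gives $s_t = x_{t:t-k^*}$ (with the usual padding convention while the stack is not yet full). The base case is the stack initialization. For the inductive step: since $k = k^*$, applying $\text{pop}(\cdot, i_t)$ at the FIFO index removes the oldest of the $k^*$ stored observations, and the subsequent $\text{push}$ of $x_{t+1}$ reconstitutes the window of the last $k^*$ observations, completing the induction.

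Next I would invoke the defining property of a $k^*$-order Markov environment: $x_{t:t-k^*}$ is a sufficient statistic for $Pr(x_{t+1}, r_{t+1} \mid x_{t:t-k^*}, a_t)$, independently of the policy. Combined with the previous step --- and with the fact that the memory action is deterministic and loses no information --- this shows that $\mathcal{M}_k$ run under $\pi_k^*$ induces, via the identification $s_t \leftrightarrow x_{t:t-k^*}$, exactly the same distribution over observation/reward trajectories as the original process run under $\pi^*$. Since the discounted return is a function of that trajectory, $V_k^{\pi_k^*}(s_t) = V^{\pi^*}(x_{t:t-k^*}) = V^*(x_{t:t-k^*})$, which is the assertion. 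As a by-product (not needed for the proposition), since any policy observing only a function of the history is dominated by the optimal history-dependent policy, this $\pi_k^*$ is in fact optimal in $\mathcal{M}_k$ as well.

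I do not anticipate a real obstacle: the content reduces to the observation that, with $k = k^*$, Adaptive Stacking contains Frame Stacking, and Frame Stacking with a full-length window retains a sufficient statistic. The only care required is bookkeeping --- handling the start-of-episode padding of the stack, and interpreting the quantifier ``for all $s_t \in \mathcal{S}$'': the stated identity is meaningful precisely for stack contents that encode a genuine $k^*$-history (the ones reachable under the FIFO rule, for which $V^*(x_{t:t-k^*})$ is defined), and on any remaining element of $\mathcal{S}$ one may define the memory component of $\pi_k^*$ arbitrarily, as such states never arise along trajectories.
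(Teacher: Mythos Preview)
Your proposal is correct and takes essentially the same approach as the paper: choose the FIFO memory rule together with an optimal full-history policy, observe that this forces $s_t = x_{t:t-k^*}$ along every trajectory, and conclude equality of values. The paper's version is terser --- it invokes the identity $V_k^{\pi_k}(s_t)=\sum_{x_{t:t-k^*}}\Pr(x_{t:t-k^*}\mid s_t,\pi_k)\,V^{\pi_k}(x_{t:t-k^*})$ and notes the posterior is a Dirac delta --- while you spell out the induction and the start-of-episode/unreachable-state bookkeeping, but the argument is the same.
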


% \begin{proof}
% When \( k = k^* \), the optimal policy \( \pi_k^* \) can retain all the observations necessary to make the environment Markov. Thus, no important information is discarded, and the agent can reconstruct the full latent state. Hence,
% $Pr(x_{t:t-k^*} | s_t, \pi_k^*) = \delta(x_{t:t-k^*} = s_t)$,
% implying
% $V_k^{\pi_k^*}(s_t) = V^{\pi_k^*}(x_{t:t-k^*}) = V^*(x_{t:t-k^*})$.
% \end{proof}

\begin{theorem}
\label{thm:unbiased_convergence}
Let $\mathbb{A}$ be an RL algorithm that converges under Frame Stacking with $k\geq k^*$. If $\mathbb{A}$ uses unbiased value estimates to learn optimal policies, then it also converges under Adaptive Stacking with $k\geq\kappa$ observations, assuming the policy class is sufficiently expressive.
\end{theorem}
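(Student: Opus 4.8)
The plan is to reduce the claim to two facts: (i) for every $k \ge \kappa$ the optimal expected return $J^*$ is attainable within the memory-$k$ Adaptive Stacking policy class; and (ii) an algorithm $\mathbb{A}$ that learns optimal policies from \emph{unbiased} value estimates does not care that the agent state $s_t$ of the induced process $\mathcal{M}_k$ is a POMDP state rather than an MDP state, so it still converges to a policy attaining $J^*$, which is an optimal policy. Throughout write $J(\pi) \coloneqq \mathbb{E}_{x_0}[V^{\pi}(x_0)]$ and $J^* \coloneqq \mathbb{E}_{x_0}[V^{*}(x_0)]$.

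For (i), the case $k=\kappa$ is immediate from the definition of $\kappa$: the policy $\pi_\kappa^*$ there satisfies $V^{\pi_\kappa^*}(x_{t:t-k^*}) = V^*(x_{t:t-k^*})$ for all $t$, so in particular $J(\pi_\kappa^*) = J^*$ (and $\kappa \le k^*$ exists by Proposition~\ref{prop:optimality}). For $k > \kappa$ I would give a short embedding argument showing that more memory cannot hurt: a memory-$k$ agent maintains, among its $k$ slots, exactly the $\kappa$-element working set that $\pi_\kappa^*$ would hold, treating the $k-\kappa$ surplus slots as a buffer that it always pops from when it must keep a new observation. Because $\mathrm{pop}$ may remove any index while $\mathrm{push}$ appends the current observation on top, one checks inductively that the invariant ``$\pi_\kappa^*$'s working set $\subseteq$ current stack contents, with exactly $k-\kappa$ surplus entries'' is preserved irrespective of which slot $\pi_\kappa^*$ discards; ``sufficiently expressive'' then lets the memory-$k$ policy read this working set off the stack and reproduce $\pi_\kappa^*$'s environment action. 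Hence $\max_{\pi_k} J = J^*$ for every $k \ge \kappa$.

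For (ii), note that while $s_t$ alone is not a sufficient statistic (the Remark), the pair $(h_t,s_t)$ --- the latent $k^*$-history $h_t = x_{t:t-k^*}$ together with the memory stack --- evolves as a Markov chain under any fixed $\pi_k$, and rewards depend only on $(h_t,a_t)$; moreover the optimal value of this composite MDP at $(h,s)$ is exactly $V^*(h)$, since a controller that already sees $h$ gains nothing from memory actions and the $h$-dynamics cap the return at $V^*(h)$. The crucial point is that $\mathbb{A}$'s convergence guarantee uses only a well-defined return objective and \emph{unbiased} estimates of it (or of its gradient) --- ingredients Adaptive Stacking preserves --- and not the Markovianity of the input it is handed; this is exactly why the hypothesis requires unbiased estimates, as it is the \emph{biased} TD fixed point (the Singh--Talvitie phenomenon noted earlier) that the non-Markovian $s_t$ would otherwise corrupt. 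Thus $\mathbb{A}$ run inside $\mathcal{M}_k$ converges to some $\pi_k$ with $J(\pi_k) = \max_{\pi_k} J = J^*$, and the performance-difference lemma against an optimal policy of the composite MDP turns $J(\pi_k)=J^*$ into $A^*\big((h,s),(a,i)\big)=0$ at every reachable $(h,s,a,i)$ --- i.e. $\pi_k$ takes optimal environment actions along all reachable trajectories, hence $V^{\pi_k}(x_{t:t-k^*}) = V^*(x_{t:t-k^*})$ there, which is the desired convergence to an optimal policy under Adaptive Stacking.

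The main obstacle is making step (ii) precise without overreaching: one must identify exactly which properties of the learning problem $\mathbb{A}$'s proof of convergence relies on, show that these (the objective, unbiasedness) survive the substitution of the Frame-Stacking window by an Adaptive-Stacking memory, and confirm that the property that is lost --- an MDP state --- is not among them. Isolating the role of unbiasedness versus the biased bootstrap target is what closes that gap, and the composite-MDP reformulation plus the performance-difference lemma is what converts ``optimal return'' back into ``optimal policy'' --- precisely the distinction the Remark flags between harmless value gaps on agent states and genuine suboptimality.
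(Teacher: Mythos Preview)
Your proposal is correct and follows the same two-step skeleton as the paper's proof: establish that an optimal policy exists in the memory-$k$ Adaptive Stacking class for every $k\ge\kappa$, then invoke the unbiased-estimate hypothesis to conclude $\mathbb{A}$ converges to it in the induced process $\mathcal{M}_k$. You in fact supply more detail than the paper in two places---an explicit surplus-slot embedding for the $k>\kappa$ case (which the paper simply asserts from the definition of $\kappa$ without argument), and the composite $(h,s)$-MDP plus performance-difference lemma to turn ``optimal return'' into pointwise optimality along reachable histories (which the paper folds directly into its formal Unbiased-Convergence Assumption and does not spell out).
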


See Appendix \ref{apdx:theory} for all the proofs. This implies that algorithms that leverage Monte Carlo return based value functions for policy gradient updates can be shown to converge to the optimal policy with standard conditions regarding exploration and the policy parameterization. We also prove convergence for linear policies in the episodic setting and the standard continuing average reward setting. In line with prior work on RL, non-linear policies in general can only be shown to converge to local optima.  
This has significant implications for compute and memory efficiency when using sequence models like Transformers. %Standard 
Transformers incur compute costs of $\Omega(k^2)$ and working memory costs of $\Omega(k)$ due to self-attention over long contexts \citep{narayanan2021efficient, transformer-math-eleutherai}. %In contrast, 
Adaptive Stacking reduces these to $\Omega(\kappa^2)$ and $\Omega(\kappa)$ respectively, by retaining only reward-relevant observations of length $\kappa \ll k^*$, thereby yielding substantial efficiency gains in both inference and training shown in Table~\ref{tab:rps_lm_prompt} (see Appendix \ref{apdx:compute_memory_transformers} for derivations).

\textbf{The need for bootstrapped value estimates.} It is also important to note that there are scaling issues regarding using Monte Carlo returns for value function estimates in continuing environments as in Appendix \ref{app:continuing_proof}. For true continual RL environments in big worlds, the amount of steps needed for these unbiased rollouts becomes unwieldy \citep{riemer2022continual,khetarpal2022towards}. Additionally, \citet{riemer2024balancing} demonstrated that this amount of steps increases with the agent's memory size. Thus is it will eventually be necessary to use truncated returns with bias inserted from bootstrapped value estimates to tackle the challenging futuristic environments that our paper is inspired by.

\subsection{Adaptive Stacking as a form of State Abstraction} \label{sec:state_abstraction}
% While Adaptive Stacking is designed to learn what to remember or discard from memory, an important question is the effect this process has on standard theoretical guarantees for RL agents. 
% We begin by observing that there is a relationship between value functions in the induced and underlying POMDPs. Precisely,
% \[
% V_k^{\pi_k}(s_t) = \sum_{x_{t:t-k^*}} Pr(x_{t:t-k^*}|s_t,\pi_k) V^{\pi_k}(x_{t:t-k^*}) \textit{ for all } t,
% \]

% where $Pr(x_{t:t-k^*}|s_t,\pi_k)$ is the asymptotic probability that the underlying POMDP has a $k^*$ history of $x_{t:t-k^*}$ when $s_t$ is observed while following the policy $\pi_k$ \citep{singh1994learning}.

While Adaptive Stacking is designed to learn which past observations to retain, a key theoretical question is how this compression affects the ability of RL agents to preserve optimal behaviour. Specifically, we want to understand how the value function under Adaptive Stacking relates to the value function under full-history policies.
We first observe that there is a general relationship between the adaptive stack value function and the underlying full-history value function:
$
V_k^{\pi_k}(s_t) = \sum_{x_{t:t-k^*}} Pr(x_{t:t-k^*}|s_t,\pi_k) V^{\pi_k}(x_{t:t-k^*}) \quad \text{for all } s_t\in\mathcal{S},
$
where \( Pr(x_{t:t-k^*}|s_t,\pi_k) \) is the asymptotic probability amortized over time that the environment $k^*$-history is \( x_{t:t-k^*} \) when the agent state is \( s_t \) under policy \( \pi_k \) \citep{singh1994learning}.
This equation shows that the agent’s value under compressed memory is an expectation over possible latent histories. When the memory stack discards critical observations, this conditional distribution becomes broader, increasing uncertainty. %and potentially reducing value. 
As such, it is clear that Adaptive Stacking can be seen as a form of state abstraction in which multiple histories are compressed together in the estimate of the value function. 

\textbf{Model-equivalent abstractions.} One of the most popular form of state abstractions are based on the idea that a state abstraction itself should be able to reconstruct the rewards received in the environment and state transitions in its own abstract space. A number of methods for learning these kinds of abstractions have been proposed \citep{zhang2019learning,zhang2020learning,tomar2021model} -- often called "bisimulation" abstractions. As discussed by \citet{li2006} this class of abstractions allows for convergent TD learning, but results in the least compression among popular techniques. 

\textbf{Value-equivalent abstractions.} 
A more ambitious kind of abstraction that results in more compression while still ensuring convergent TD learning is based on the value equivalence principle \citep{li2006,abel2022theory}. This class of abstractions requires that, given a policy (or the optimal policy), its value function conditioned on the true history is equal to that conditioned on the state abstraction. While even more compressed state abstractions exist that preserve the optimal policy, it cannot be shown that they lead to convergent TD learning in the general case \citep{li2006}. 

\textbf{An even more powerful class of abstractions.} While it is not possible to show this convergence in general, it is possible to exploit the fact that Adaptive Stacking is a very particular form of structured state abstraction in that it maintains actual observations from the environment. Indeed, The \textbf{T-Maze} counter example in Section \ref{sec:AS} shows how the optimal policy can be learned from a form of state abstraction that does not preserve the standard value-equivalence property and thus results in even more compression.  
%While unbiased RL algorithms such as REINFORCE can be used to learn optimal Adaptive Stacking policies, they often suffer from large variance \citep{sutton2018reinforcement}.
%To address this, most state of the art reinforcement learning algorithms such as Proximal Policy Optimisation (PPO)~\citep{schulman2017proximal} use Temporal Difference (TD) value estimates \citep{sutton2018reinforcement}, which implicitly rely on the fact that the space of policies is partially ordered by their TD value estimates. In particular, these methods incrementally adjust a policy toward better-performing ones under the assumption that the partial ordering of policies is preserved across approximations. 
% 
%However, in partially observable environments where memory is lossy, this assumption may not always hold: two distinct histories can become combined due to memory compression, leading to degraded value estimates. 
%Despite this, 
In general we can show that in tasks where uncertainty in the full history is not relevant for value prediction (Assumption~\ref{ass:partial-order}), Adaptive Stacking preserves the relative ordering between policies (Theorem~\ref{thm:partial-order}), which is a sufficient condition for TD convergence.
% leads to globally convergent TD learning (Theorem~\ref{thm:td_convergence}).

\begin{assumption}[Value-Consistency]
\label{ass:partial-order}
Let $\pi_k$ be an Adaptive Stacking policy over memory states $s_t \in \mathcal{S}_k$.  
We say the memory representation is \emph{value-consistent} with respect to $\pi_k$ if, for all full histories $x_{t:t-k^*}$ and $x'_{t:t-k^*}$ such that both
$
Pr(x_{t:t-k^*} \mid s_t, \pi_k) > 0 \quad \text{and} \quad Pr(x'_{t:t-k^*} \mid s_t, \pi_k) > 0,
$
it holds that
$V^{\pi_k}(x_{t:t-k^*}) = V^{\pi_k}(x'_{t:t-k^*})
$
 for all $s_t \in \mathcal{S}$.
\end{assumption}

\begin{theorem}[Partial-order Preserving]
\label{thm:partial-order}
Consider an agent with a value-consistent memory stack of arbitrary length $k \in \mathbb{N}$. Let $\pi_k^1$ and $\pi_k^2$ be two arbitrary Adaptive Stacking policies such that 
$V_k^{\pi_k^1}(s_t) \leq V_k^{\pi_k^2}(s_t)$ for all $t$. 
Then
$V^{\pi_k^1}(x_{t:t-k^*}) \leq V^{\pi_k^2}(x_{t:t-k^*})$ for all $t$.
\end{theorem}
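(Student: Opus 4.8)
The plan is to route everything through the identity stated just before the theorem, $V_k^{\pi_k}(s_t) = \sum_{x_{t:t-k^*}} Pr(x_{t:t-k^*}\mid s_t,\pi_k)\, V^{\pi_k}(x_{t:t-k^*})$, and to first prove an \emph{agreement lemma}: if the memory stack is value-consistent with respect to $\pi_k$ (Assumption~\ref{ass:partial-order}), then $V_k^{\pi_k}(s_t) = V^{\pi_k}(x_{t:t-k^*})$ for \emph{every} history $x_{t:t-k^*}$ compatible with $s_t$ under $\pi_k$, i.e. with $Pr(x_{t:t-k^*}\mid s_t,\pi_k) > 0$. Given this lemma for both $\pi_k^1$ and $\pi_k^2$, the theorem reduces to chaining two equalities around the hypothesised inequality.

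The agreement lemma is essentially a one-line collapse of a convex combination. Fix a reachable $s_t$ and pick any history $h^\star$ with $Pr(h^\star\mid s_t,\pi_k) > 0$ (one exists since the conditional distribution sums to one). Value-consistency says $V^{\pi_k}(h) = V^{\pi_k}(h^\star)$ for every $h$ in the support of $Pr(\cdot\mid s_t,\pi_k)$, so substituting into the identity gives $V_k^{\pi_k}(s_t) = V^{\pi_k}(h^\star)\sum_{h} Pr(h\mid s_t,\pi_k) = V^{\pi_k}(h^\star)$, and hence $V_k^{\pi_k}(s_t) = V^{\pi_k}(h)$ for every such $h$. Since the stack is value-consistent with respect to both policies under consideration, this holds with $\pi_k$ replaced by $\pi_k^1$ and by $\pi_k^2$.

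To finish, fix $t$; I would like a memory state $s_t$ compatible with the history $x_{t:t-k^*}$ under both policies, so that two applications of the agreement lemma sandwich the hypothesis: $V^{\pi_k^1}(x_{t:t-k^*}) = V_k^{\pi_k^1}(s_t) \leq V_k^{\pi_k^2}(s_t) = V^{\pi_k^2}(x_{t:t-k^*})$. The step I expect to require the most care is securing this common $s_t$: the compatibility relation $Pr(\cdot\mid s_t,\pi_k) > 0$ is itself policy-dependent, so the memory state witnessing $V^{\pi_k^1}(x_{t:t-k^*})$ need not witness $V^{\pi_k^2}(x_{t:t-k^*})$, whereas $V_k^{\pi_k^1} \leq V_k^{\pi_k^2}$ only helps when evaluated at a common state. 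I would handle this by restricting attention to memory states reachable --- jointly with the given history --- under both policies, which is exactly the regime over which the quantifier ``for all $t$'' in the hypothesis is meaningful, or equivalently by reading the statement along realised trajectories; value-consistency then also tells us $V_k^{\pi_k^j}$ is constant on each history-compatibility class, so the chain is insensitive to which witness inside a class is used once the classes are aligned. I would close by noting that this partial-order preservation is precisely what lets monotone improvement in memory-state values translate into monotone improvement in the true history values, yielding the sufficiency for convergent TD learning referenced in the surrounding text.
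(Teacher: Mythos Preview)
Your proposal is correct and mirrors the paper's proof: both collapse the mixture identity $V_k^{\pi_k}(s_t)=\sum_{x_{t:t-k^*}} Pr(x_{t:t-k^*}\mid s_t,\pi_k)\,V^{\pi_k}(x_{t:t-k^*})$ to a constant via value-consistency (your ``agreement lemma'' is exactly the paper's $c_i(s_t)$), then chain the two equalities around the hypothesised inequality. Your explicit flagging of the policy-dependent compatibility of $s_t$ with $x_{t:t-k^*}$ is in fact more careful than the paper's own proof, which silently writes ``for all $x_{t:t-k^*}$ such that $\Pr(x_{t:t-k^*}\mid s_t,\pi_k^i)>0$'' without addressing that the support depends on $i$.
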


\looseness=-1
This key result allows us to extend convergence results from traditional RL to our setting. 
That is, any RL agorithm that converges to the optimal policy under Adaptive Stacking simultaneously converges to the optimal policy over the underlying history when $k \geq \kappa$.
For example, \citet{singh1994learning} (in Theorem 1) show that in POMDPs, policy evaluation of a policy $\pi_k$ using TD(0) under standard assumptions converges to a fixed point value function $V^{\pi_k}_{TD}(s_t)$ that is generally \emph{lower} than the true expected return $V_k^{\pi_k}(s_t)$, due to uncertainty over hidden state. Hence TD-learning also preserves partial-ordering. Similary, we can show that Q-learning still converges to optimal policies (Theorem~\ref{thm:td_convergence}).
% 
% Assumption~\ref{ass:partial-order} says that two distinct full histories that map to the same memory state must agree on their expected value under the given policy.
While these results rely on Assumption~\ref{ass:partial-order}, it happens to always hold for a wide range of tasks of interest in RL, such as goal-reaching tasks with non-zero rewards only for reaching goal states.
% , and even stochastic environments like the TMaze tasks.
% However, it does not hold for tasks with arbitrary reward functions where histories lead to different reward dynamics, such as reward machine tasks where the reward machines are not observable \citep{hasanbeig2024symbolic}.
See Appendix \ref{apdx:value_consistency_benchmarks} for a list of relevant benchmarks for which this assumption holds. %, and and for a sample set of experiments.
As this is the case for the popular environments we consider in Section \ref{sec:experiments}, we do not need to consider any form of explicit state abstraction supervision in our experiments.

\section{Experiments}\label{sec:experiments}

\begin{figure}[b!]
    \centering
    \begin{subfigure}[b]{0.28\textwidth}
        \centering
        \includegraphics[width=0.8\linewidth]{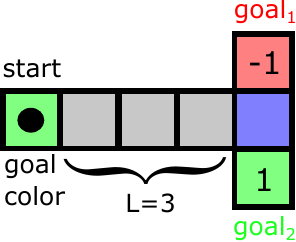}
        \caption{\textbf{Passive-TMaze}
        }
        \label{fig:passive_tmazes}
    \end{subfigure}%
    \quad
    \begin{subfigure}[b]{0.28\textwidth}
        \centering
        \includegraphics[width=0.8\linewidth]{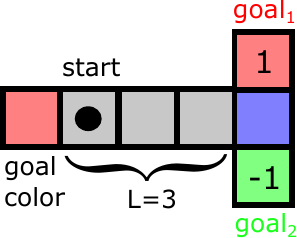}
        \caption{\textbf{Active-TMaze}
        }
        \label{fig:active_tmazes}
    \end{subfigure}%
    \quad
    \begin{subfigure}[b]{0.22\textwidth}
        \centering
        \includegraphics[width=0.7\linewidth]{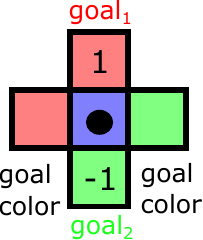}
        \caption{\textbf{XORMaze}}
        \label{fig:xormaze}
    \end{subfigure}
    \\
    \begin{subfigure}[b]{0.28\textwidth}
        \centering
        \includegraphics[width=0.9\linewidth]{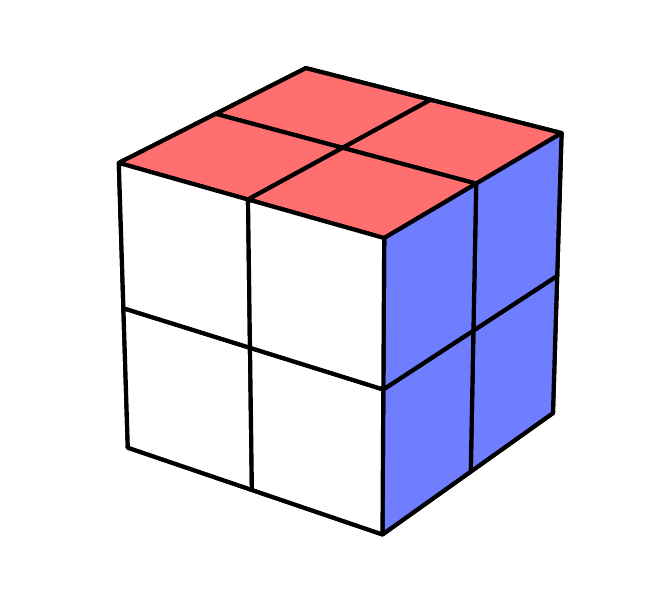}
        \caption{\textbf{$2\times 2\times 2$ Rubik's cube}}
        \label{fig:cube}
    \end{subfigure}%
    \quad
    \begin{subfigure}[b]{0.28\textwidth}
        \centering
        \includegraphics[width=0.8\linewidth]{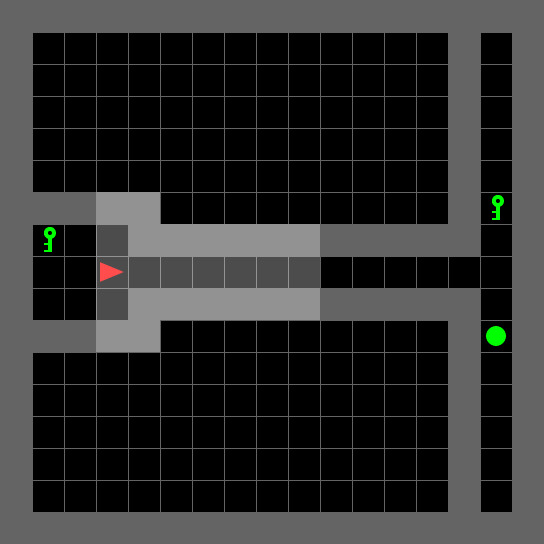}
        \caption{\textbf{MiniGrid-Memory}}
        \label{fig:minigrid}
    \end{subfigure}%
    % \quad
    % \begin{subfigure}[b]{0.28\textwidth}
    %     \centering
    %     \includegraphics[width=0.8\linewidth]{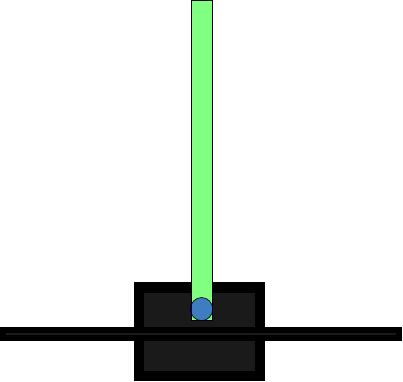}%
    %     \caption{\textbf{POPGym-CartPole}}
    %     \label{fig:popgym}
    % \end{subfigure}%
    \quad
    \begin{subfigure}[b]{0.28\textwidth}
        \centering
        \includegraphics[width=0.95\linewidth]{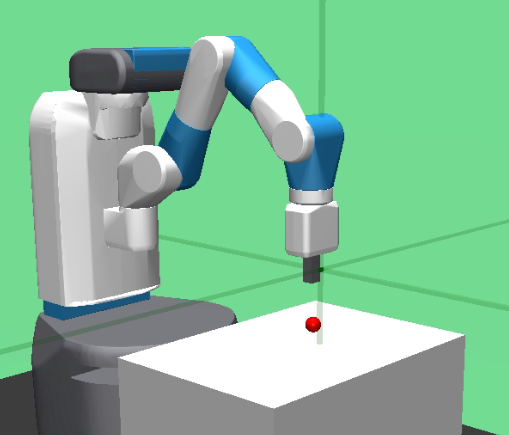}
        \caption{\textbf{FetchReach}}
        \label{fig:fetchreach}
    \end{subfigure}
    \caption{
    \looseness=-1
    Experimental domains ($10^6$ training steps for all). (a-b) are as described in Section \ref{sec:AdaptiveStacking}. 
    (c) Episodic ($T=100$) with similar observations as the \textbf{Active-TMaze}, but the agent must go to the XOR of the two goal cues. 
    (d) Episodic ($T=100$) where the agent only sees one cube face at a time. (e) Episodic ($T=1445$) with a similar goal as the \textbf{Active-TMaze}. 
    % (e) Continual with hidden joint positions. The agent needs to keep the pole upright for $600$ steps. 
    (f) Continual with hidden joint velocities. The goal position also changes every $50$ steps and is only observable for the first two steps after each change (Assumption \ref{ass:partial-order} does not hold).
    See Appendix \ref{apdx:domains} for detailed descriptions.
    } 
    \label{fig:domains}
\end{figure}

\begin{figure}[t!]
    \centering
    \includegraphics[width=0.8\linewidth]{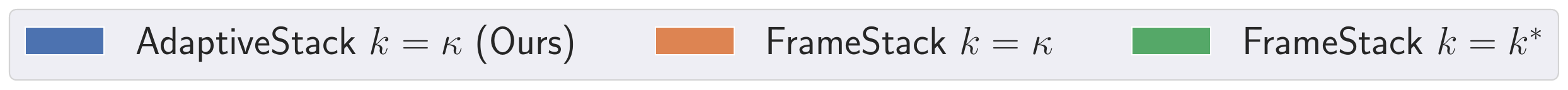}\\
    \begin{subfigure}[b]{\textwidth}
        \centering
        \includegraphics[width=0.25\linewidth]{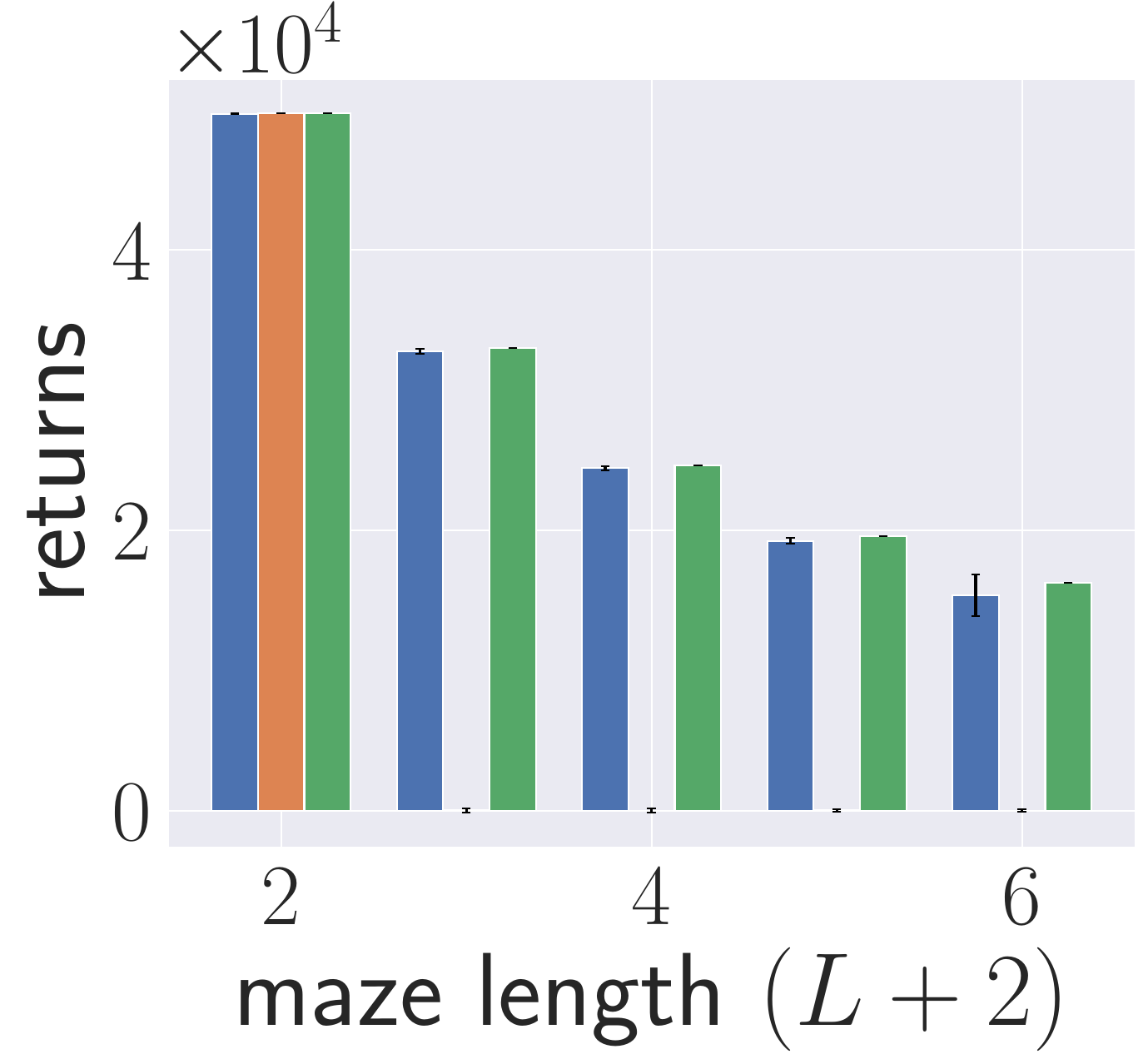}%
        \includegraphics[width=0.25\linewidth]{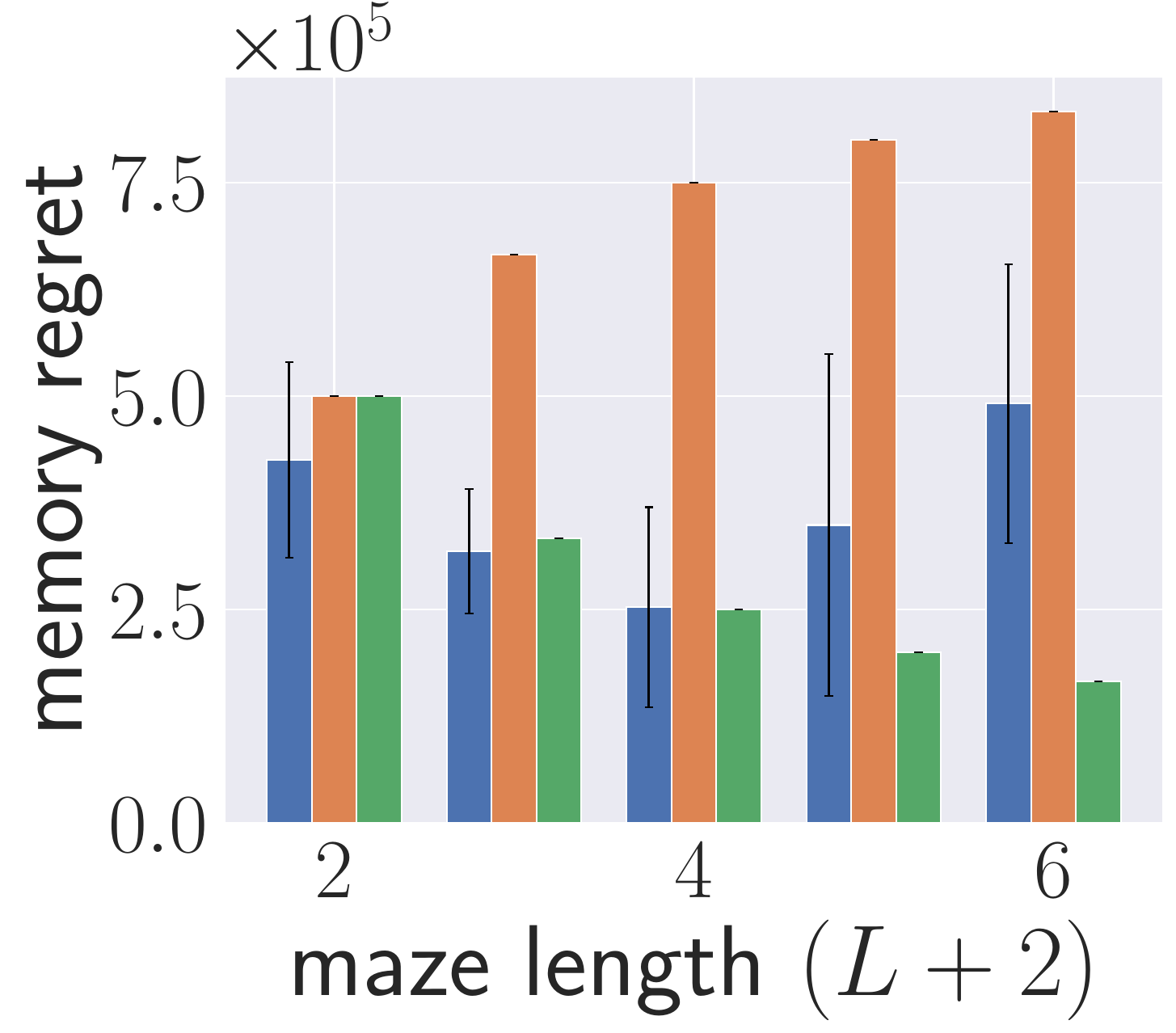}%
        \includegraphics[width=0.25\linewidth]{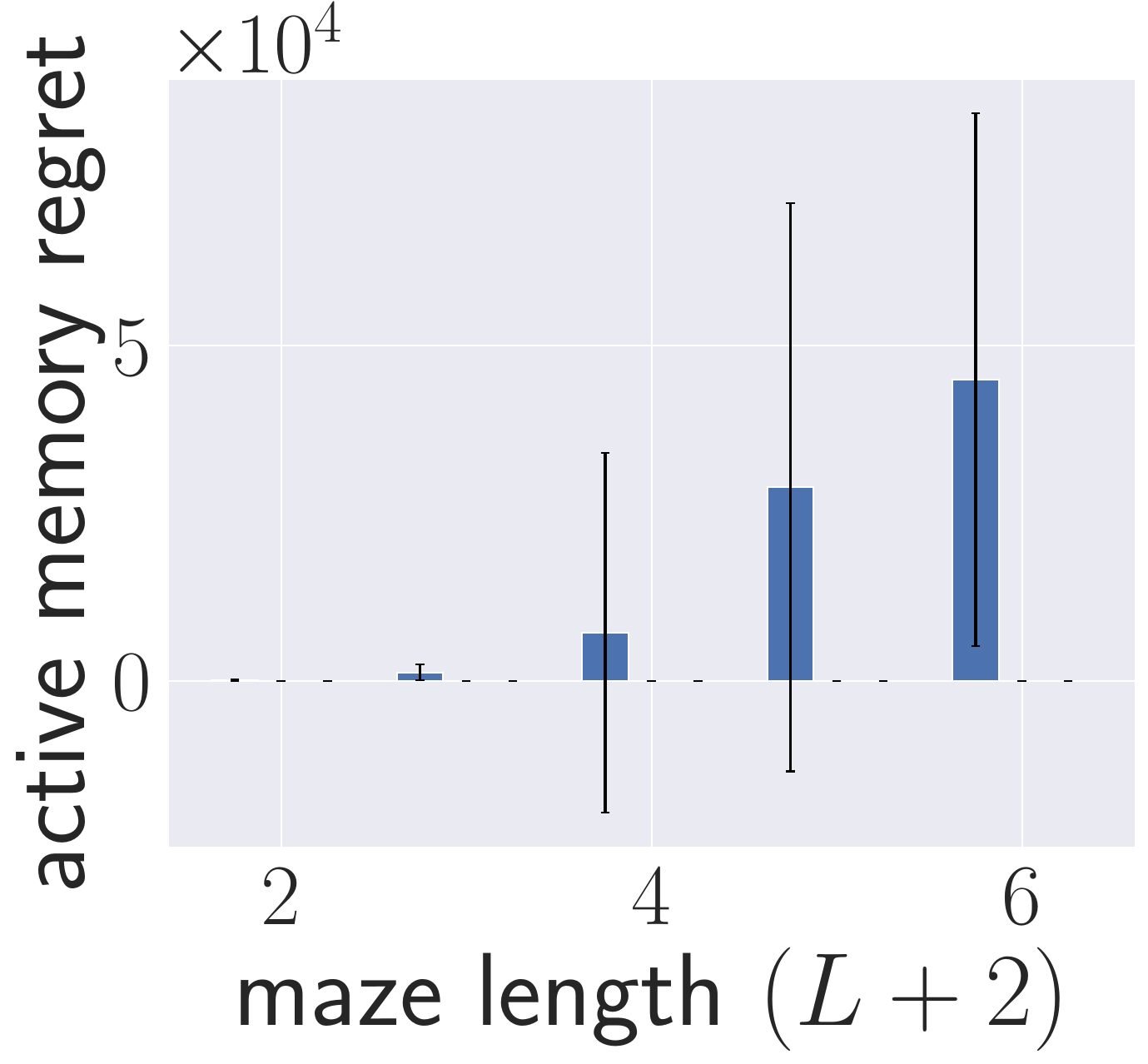}%
        \includegraphics[width=0.25\linewidth]{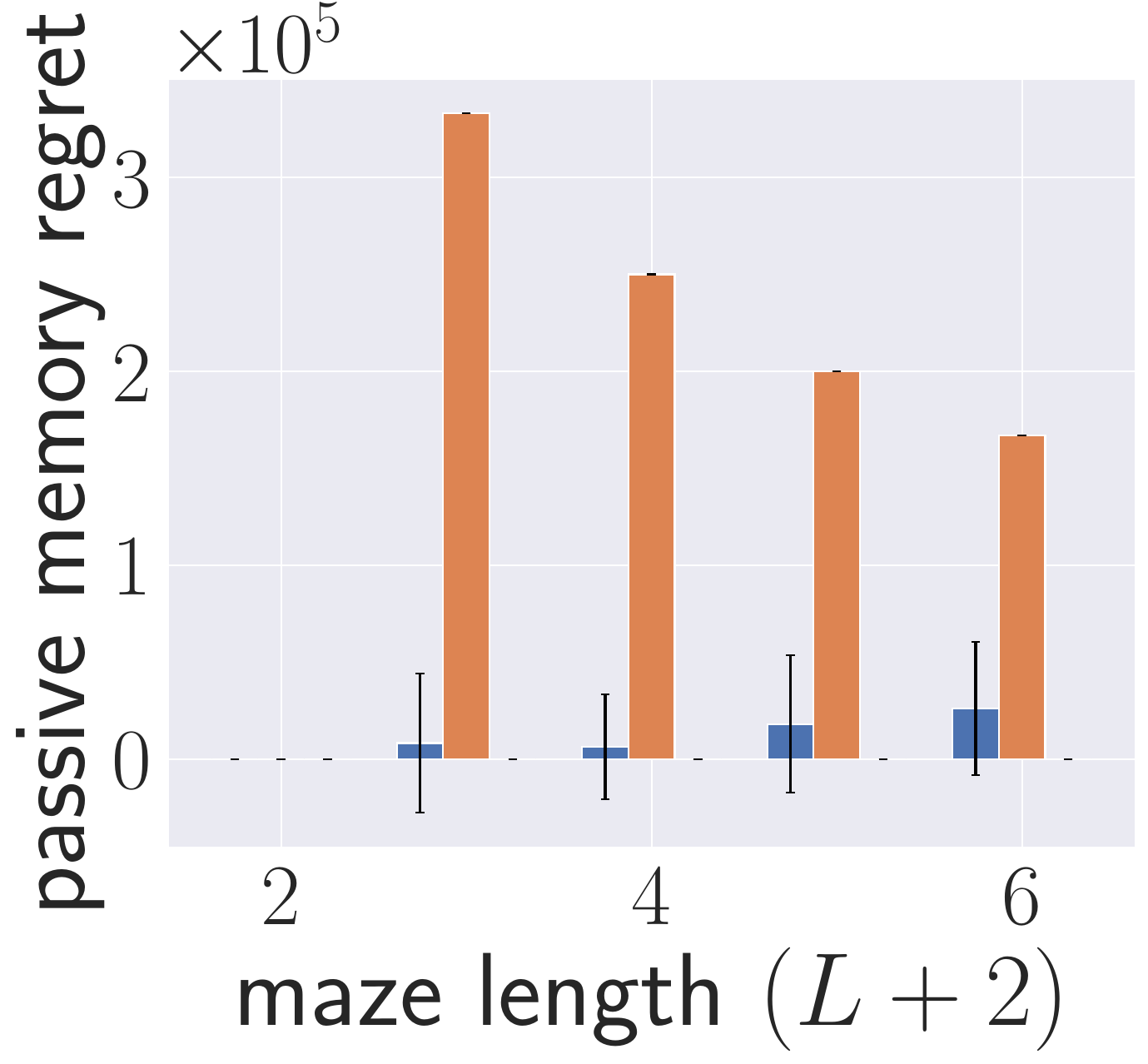}
        \caption{\textbf{Passive-TMaze}. $k^*=L+2$}
        \label{fig:ql_passive_tmaze}
    \end{subfigure}    
    \begin{subfigure}[b]{\textwidth}
        \centering
        \includegraphics[width=0.25\linewidth]{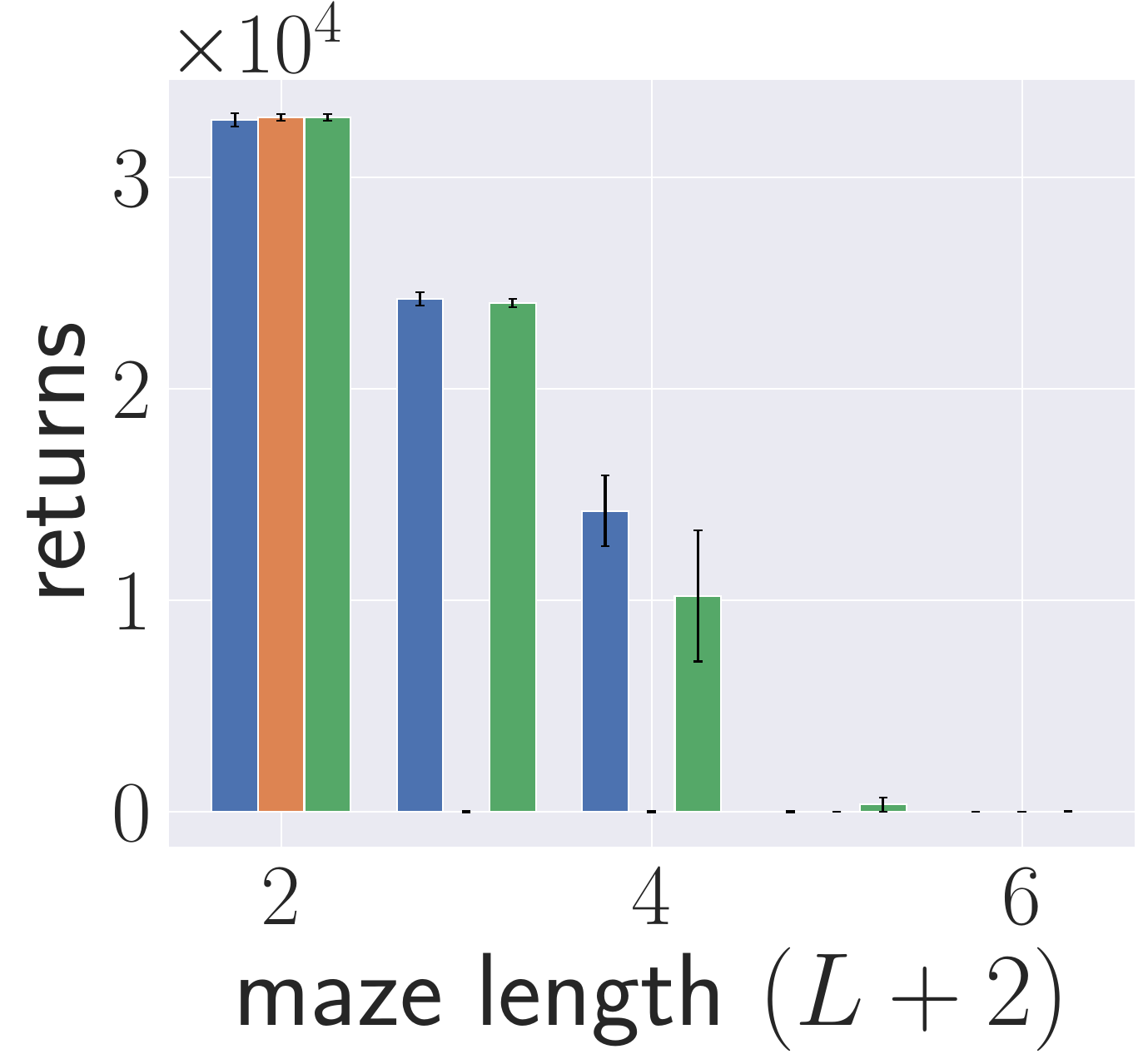}%
        \includegraphics[width=0.25\linewidth]{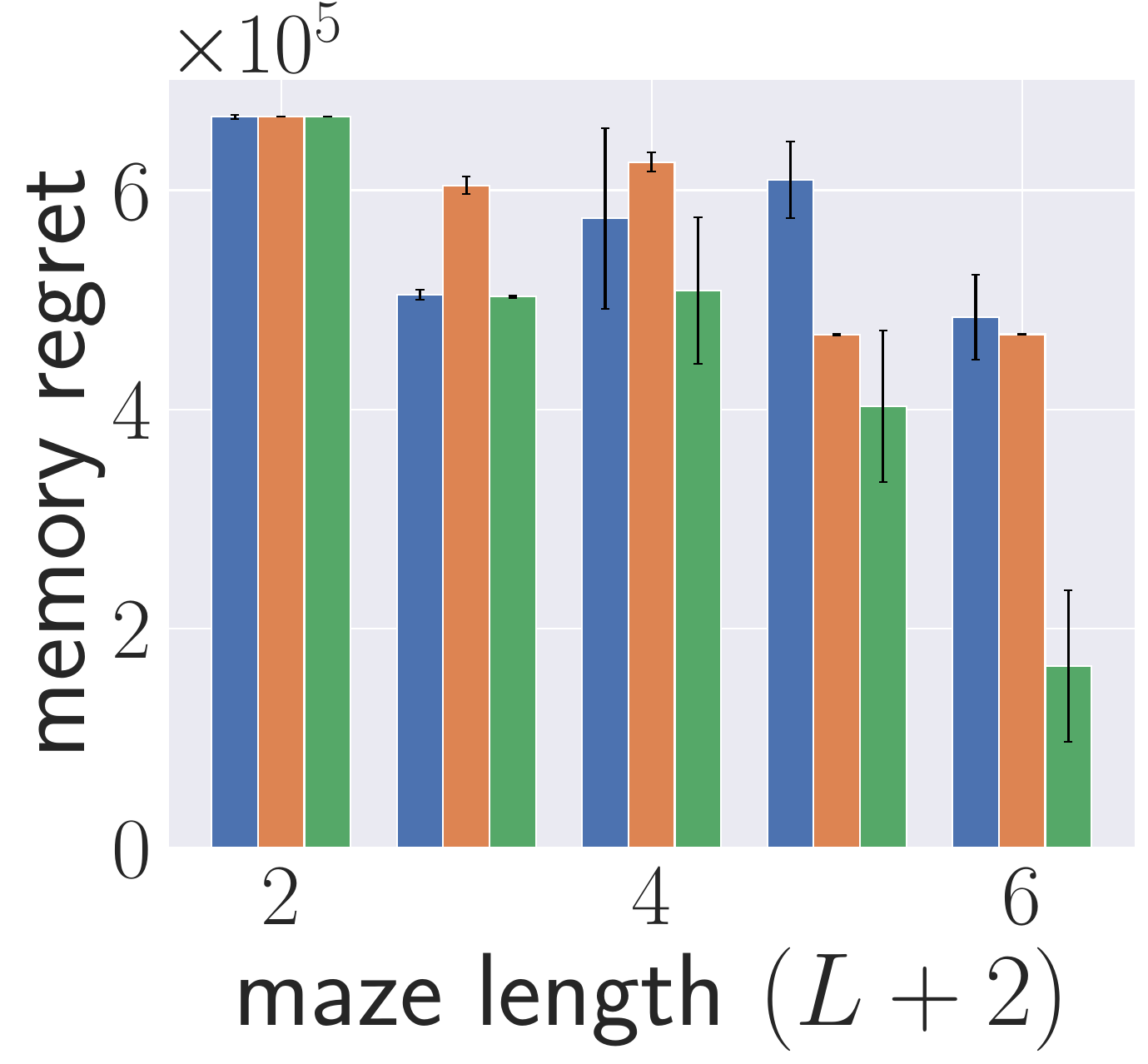}%
        \includegraphics[width=0.25\linewidth]{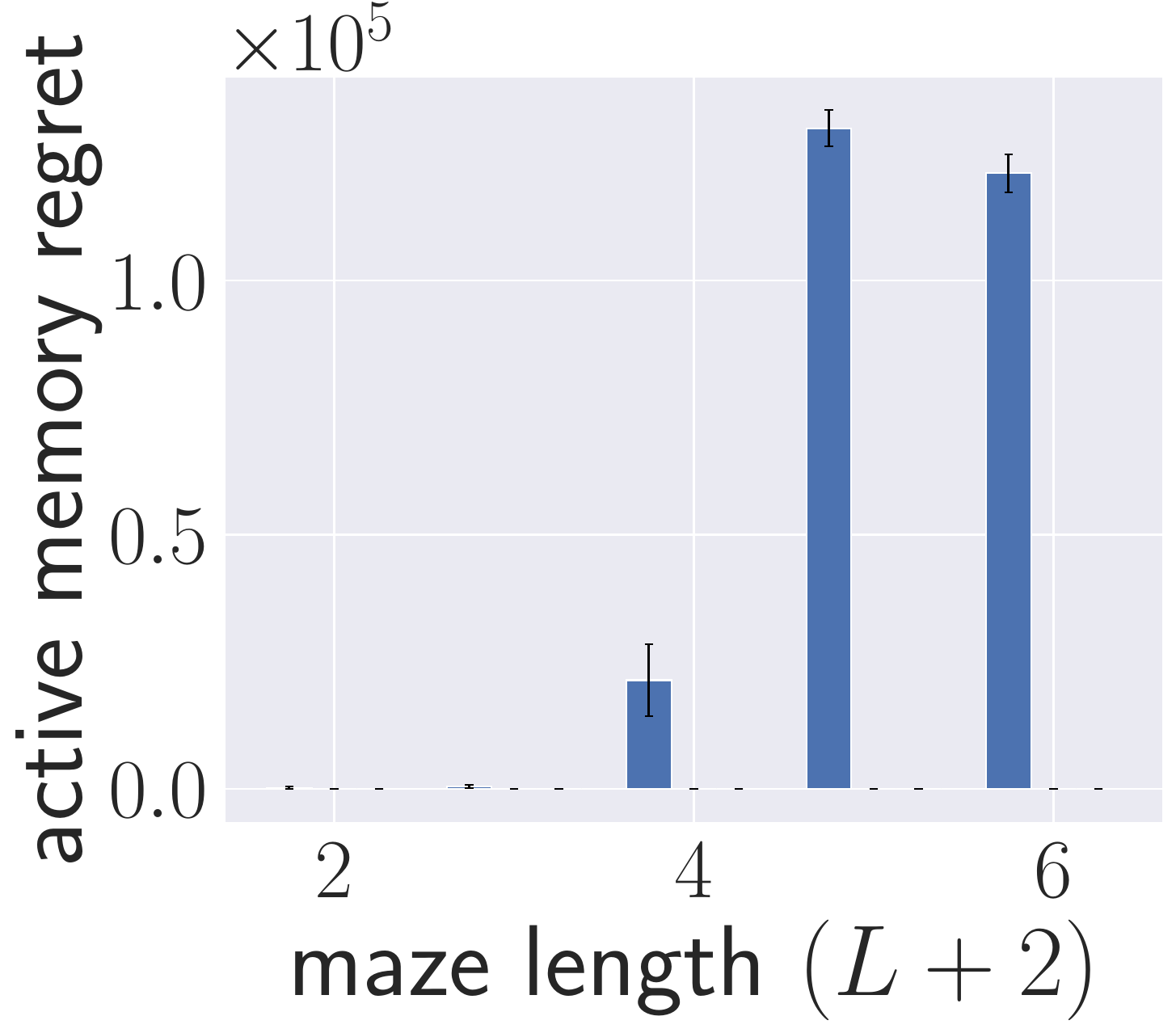}%
        \includegraphics[width=0.25\linewidth]{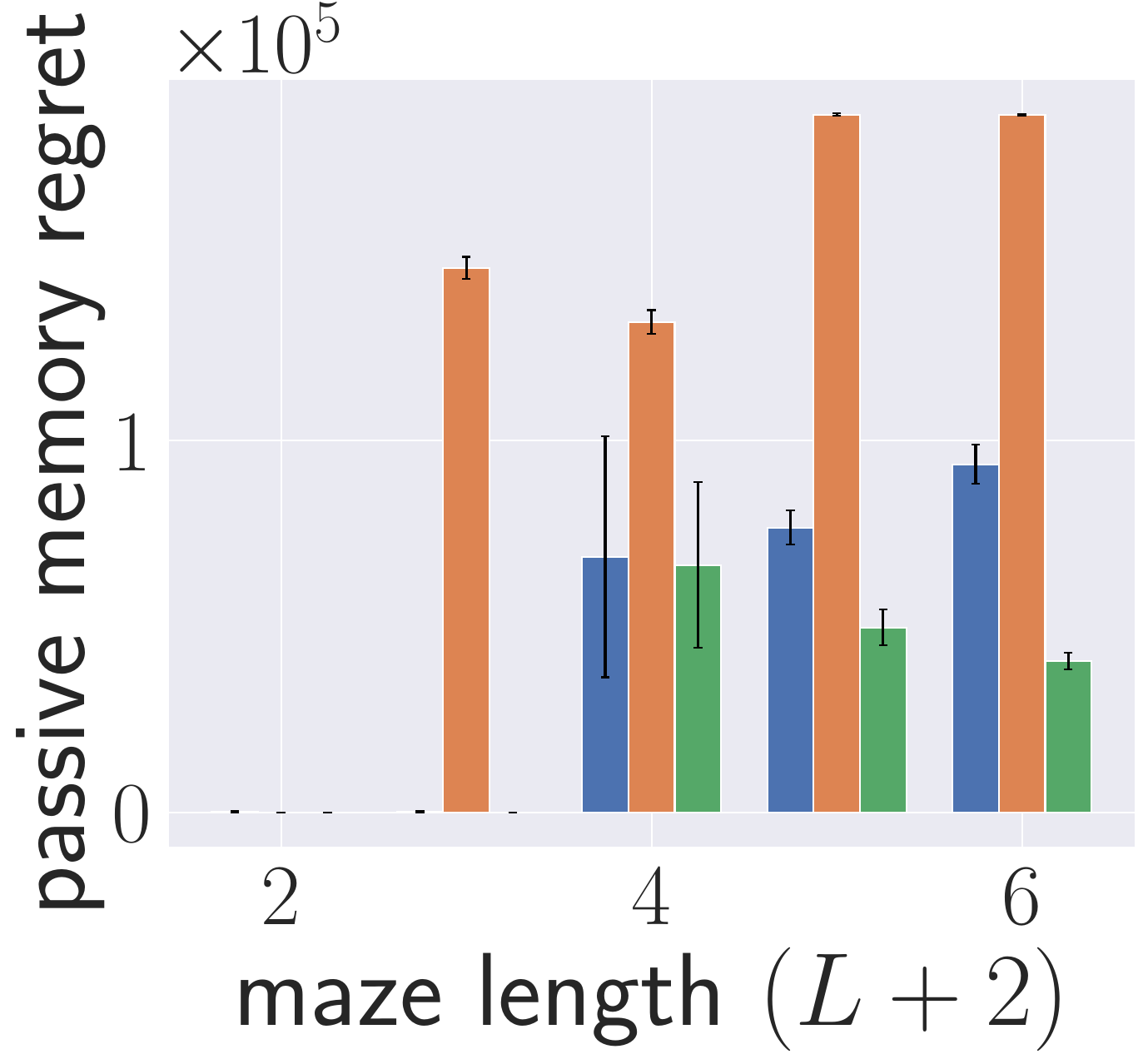}
        \caption{\textbf{Active-TMaze}. $k^*=\infty$ theoretically, but for practicality we use $k=L+2$ for the oracle Framestack.}
        \label{fig:ql_active_tmaze}
    \end{subfigure}%
    \caption{Continual TMazes ($T=10^6$) with Q-learning ($N_{rs}=20$). AS matches the oracle FS($k^*$) in returns and memory usage, while outperforming FS($\kappa$) especially for long-term dependencies.}
% \label{fig:ql-passive-tmazes}
    \label{fig:ql-continual-tmazes}
\end{figure}
\begin{figure}[b!]
    \centering
    \begin{subfigure}[b]{0.49\textwidth}
        \centering
        \includegraphics[width=0.5\linewidth]{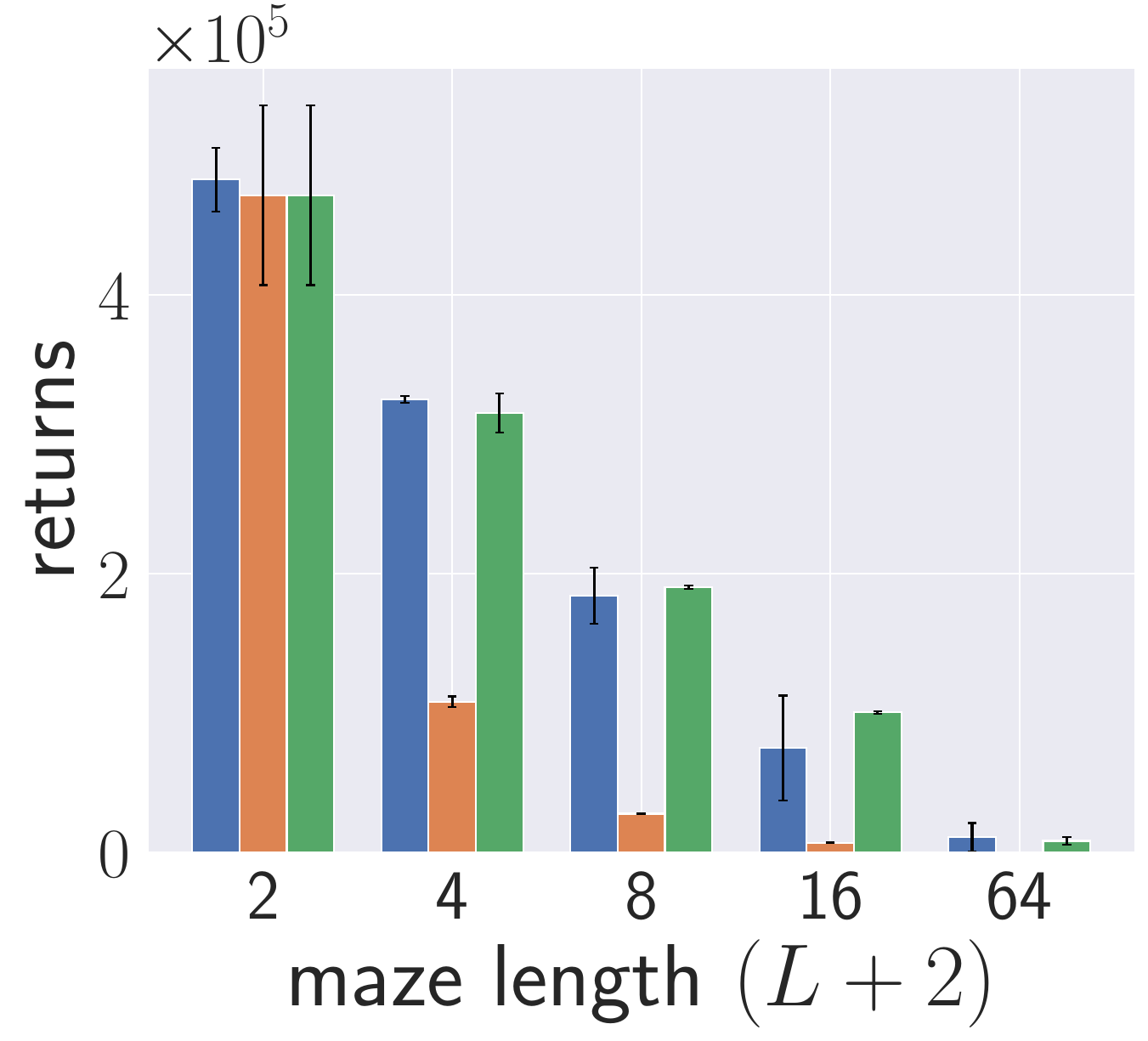}%
        \includegraphics[width=0.5\linewidth]{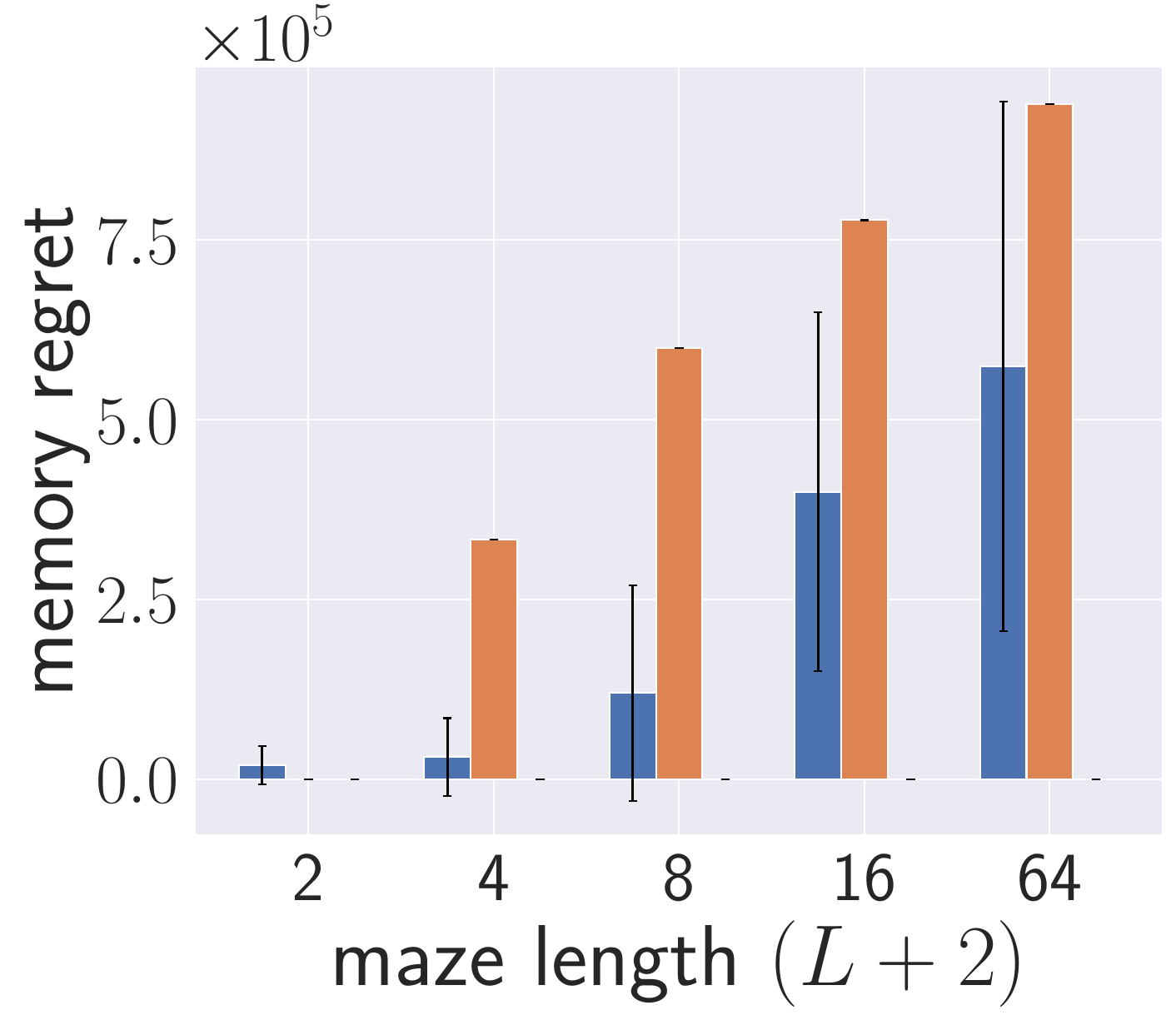}
        \caption{Maze length ($L+2$) scaling}
        \label{fig:PPO_passive_tmaze_total_rewards_maze_length}
    \end{subfigure}%
    ~
    \begin{subfigure}[b]{0.49\textwidth}
        \centering
        \includegraphics[width=0.5\linewidth]{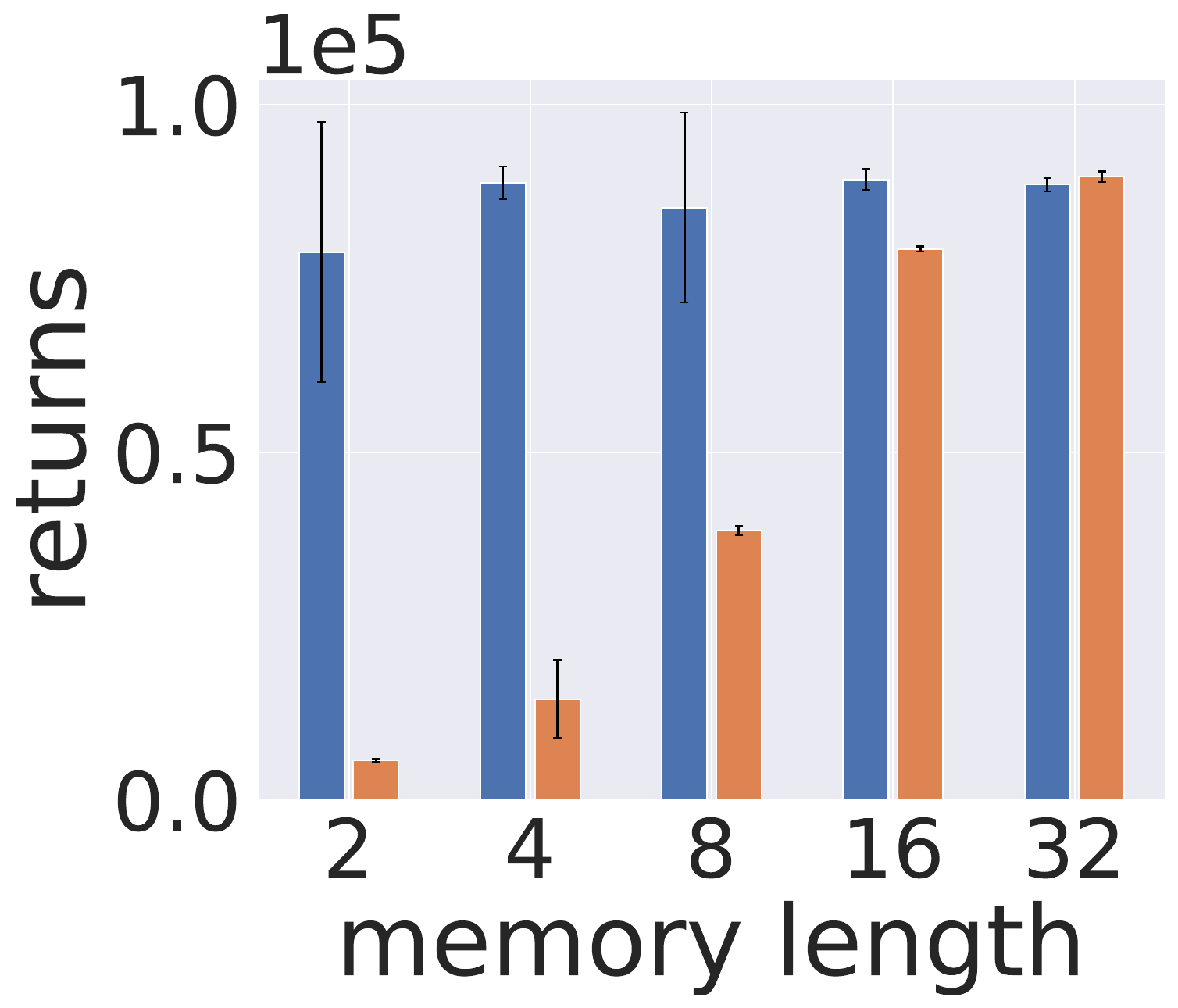}%
        \includegraphics[width=0.5\linewidth]{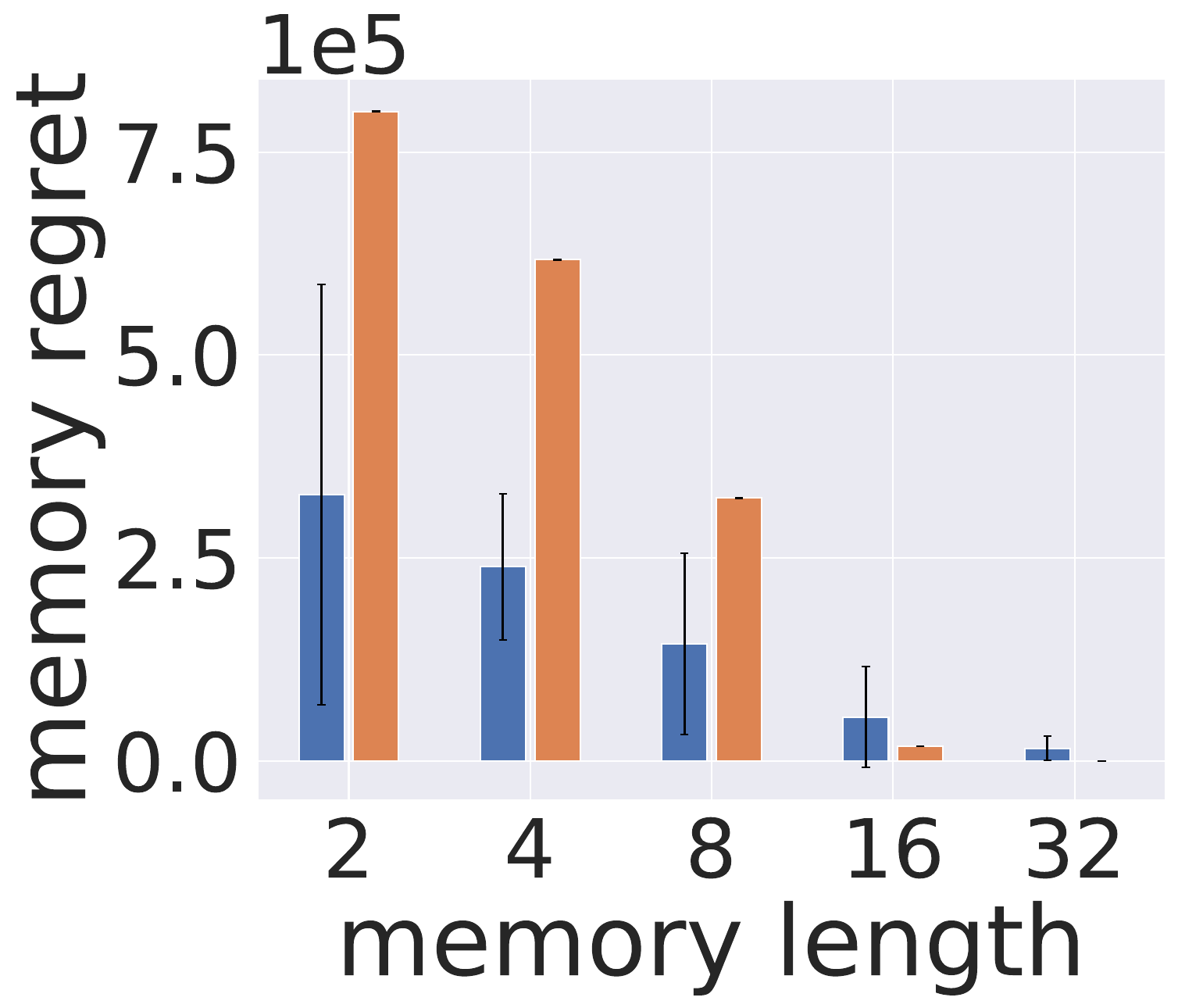}
        \caption{Memory length ($k$) scaling with $L \in [0, 16]$}
        \label{fig:PPO_passive_tmaze_total_rewards_memory}
    \end{subfigure}%
    ~
    % ~
    % \begin{subfigure}[b]{0.25\textwidth}
    %     \centering
    %     \includegraphics[width=\linewidth]{images/newplots/PPO-mlp-env_passive_tmaze-v0-returns.pdf}
    %     \caption{\textbf{Total rewards (MLP)}}
    %     \label{fig:PPO_passive_tmaze_total_rewards_mlp}
    % \end{subfigure}%
    % \begin{subfigure}[b]{0.25\textwidth}
    %     \centering
    %     \includegraphics[width=\linewidth]{images/newplots/PPO-passive_tmaze-v0-returns-aggregate.pdf}
    %     \caption{\textbf{Architecture agnostic}}
    %     \label{fig:PPO_passive_tmaze_mlp_lstm_transformer}
    % \end{subfigure}%
    \caption{
    \looseness=-1
    Episodic \textbf{Passive-TMaze} ($k^*=T=L+2$) with PPO using an MLP ($N_{rs}=10$). AS learns to retain the goal cue regardless of maze and memory lengths, unlike FS with the same $k$.}
% \label{fig:PPO-passive-tmazes}
    \label{fig:PPO-passive-tmazes}
    \vspace{-0.25cm}
\end{figure}

\looseness=-1
We evaluate Adaptive Stacking on a variety of challenging memory tasks (Figure~\ref{fig:domains}) to assess both learning performance, memory management, and generalisation. We compare against two baselines: FrameStack with $k=\kappa$ (insufficient memory) and $k=k^*$ (oracle memory),
% \footnote{
% In the \textbf{Active-Tmaze} and \textbf{XorMaze}, $k^*=\infty$ since a non-optimal policy could stay arbitrarily long in some cells. Hence, in practice we instead  use $k=L+2$ for the oracle FrameStack these experiment.
% }, 
and report four key metrics here:
% (and more detailed memory metrics in the Appendix): 
(1) \textbf{Returns}: cumulative discounted rewards, (2) \textbf{Memory regret}: number of steps when the goal cue is absent from memory, (3) \textbf{Active memory regret}: steps where the goal cue is seen but not added to memory, and (4) \textbf{Passive memory regret}: steps where the goal cue is removed from memory.
All error bars represent one standard deviation across a number of random seeds ($N_{rs}$).

\paragraph{Continual TMaze with Q-learning.} 
\looseness=-1
We first evaluate in a continual Passive and Active TMazes, where episodes do not terminate, and rewards are only given at goal transitions. This stresses the agent's ability to persist and discard information appropriately. Results in Figure~\ref{fig:ql-continual-tmazes} show that Adaptive Stacking achieves high returns and low reward regret, consistent with theoretical predictions. When $\kappa = k^*$ (maze length 2), all methods perform similarly. But when $\kappa < k^*$, AS($\kappa$) retains significantly lower passive memory regret than FS($\kappa$), learning to preserve goal cues over long delays. 
Note that FS($k^*$), even in the Passive-Tmaze, still incures some total memory regret for not having the goal cue in memory each time the agent re-spawns at the start location. 
See Appendix \ref{app:experiments} for the learning curves.

\paragraph{Episodic TMaze with PPO.}
\looseness=-1
We further evaluate AS in the episodic Passive-TMaze (with random corridor lengths) using PPO with increasing memory stack and maximum maze lengths. Similarly to the tabular case, we observe that AS($\kappa$) still significantly outperforms FS($\kappa$) in terms of both memory regret and rewards, and matches the oracle $k^*$ baseline (Figure~\ref{fig:PPO-passive-tmazes}). 
The memory regrets in particular highlight how the difficulty of learning what to remember increases proportionally with the degree of non-Markovness given constant memory capacity (Figure \ref{fig:PPO_passive_tmaze_total_rewards_maze_length} right), but decreases proportionally with the memory capacity given a constant degree of non-Markovness (Figure \ref{fig:PPO_passive_tmaze_total_rewards_memory} right). 
 % Figure~\ref{fig:PPO-passive-tmazes} shows that AS consistently recovers optimal rewards, with low memory regret. 
% When the stack length $k$ is greater than $\kappa$ but less than $k^*$, FS incurs high memory regret, as it always discards the oldest observation which is often the goal cue. In contrast, AS actively avoids discarding critical information, leading to competitive sample efficiency with FS($k^*$) despite using a smaller memory.
We also observe that these results are consistent regardless of whether the architecture is an MLP, LSTM, or even a Transformer (Figures \ref{fig:PPO-passive-tmazes-lstm}-\ref{fig:PPO-passive-tmazes-transformer}), where the LSTM using AS($\kappa$) or FS($k^*$) even generalises to maze lengths longer than the maximum seen during training. 
Interestingly, when training on a fixed maze of length 16 and testing on a much larger maze of length $10^6$, AS($\kappa$) still solves the task with 100\% accuracy, regardless of architecture  (Figures \ref{fig:memory_rl}). In contrast, the oracle FS previously trained with $k=16$ now fails for all architectures including the oracle LSTM (possibly due to overfitting). 
% This may be because the LSTM using FS($k^*$) has overfit the fixed maze length. 
% This is because $k^*$ is now one million and FS always discard the oldest observation (which is the goal cue) when $k<k^*$. 
This difference in generalisation may also be attributed to the compact agent state representations learned by AS (Figures \ref{fig:apdx:PPO-passive-tmazes-mlp}-\ref{fig:GRPO-passive-tmazes-memory-eval}), since observations irrelevant for reward maximisation are discarded.
% This highlights that our approach is architecture-agnostic, scales well with  and complements various model classes.
% 

\paragraph{Generalisation to other representative domains.}
\looseness=-1
Finally, we investigate the performance of AS on significantly more complex tasks: The \textbf{XorMaze} with $\kappa=3$ and $k^*=5$; The \textbf{Rubik's cube} trained with $k=10$ for both FS and AS; The \textbf{MiniGrid-Memory} trained with $k=2$ for both FS and AS; and the \textbf{FetchReach} with $\kappa=4$ and $k^*=50$ (Figure \ref{fig:PPO-passive-generalisation}). We observe consistent results, with AS achieving significantly higher training and testing successes than FS with the same $k$ in all domains, and even higher than the oracle FS($k^*$) in \textbf{FetchReach}. This demonstrates the inability of FS to learn under limited memory, and its sample inefficiency when learning from full histories (consistent with results from \citet{ni2023transformers}). 
% Interestingly, the LSTM with FS($k^*$) also struggles even though it is designed exactly for tasks like this requiring long and short term memory (potentially due to the known difficulty of BPTT). 
% and bounded training steps.  
% (potentially because of the fixed $10^6$ steps training budget, demonstrating the slow learning of using full episode rollouts). 
Finally, AS with only $k=4$ memory even generalises from remembering the goal after $50$ steps to still remembering it after $10^6$ steps. 
In contrast, the oracle degrades quickly to 0\% success rates irrespective of architectures (Figure \ref{fig:apdx:PPO-fetch-generalisation}).
% even though it was trained with a larger memory stack (full history rollout). 
% (similarly to the results obtained in the \textbf{TMazes}). 
% These results highlight the importance of Adaptive Stacking not just for learning what to remember when the oracle stack length is unknown or computationally (or memory)  
% Importantly, and similarly to we previous results, we also observe that AS significantly outperforms FS in terms of generalisation (when the memory requirements grow significantly larger from training to testing).
% 

% 
\begin{figure}[t!]
    \centering
    \begin{subfigure}[b]{0.245\textwidth}
        \centering
        \includegraphics[width=\linewidth]{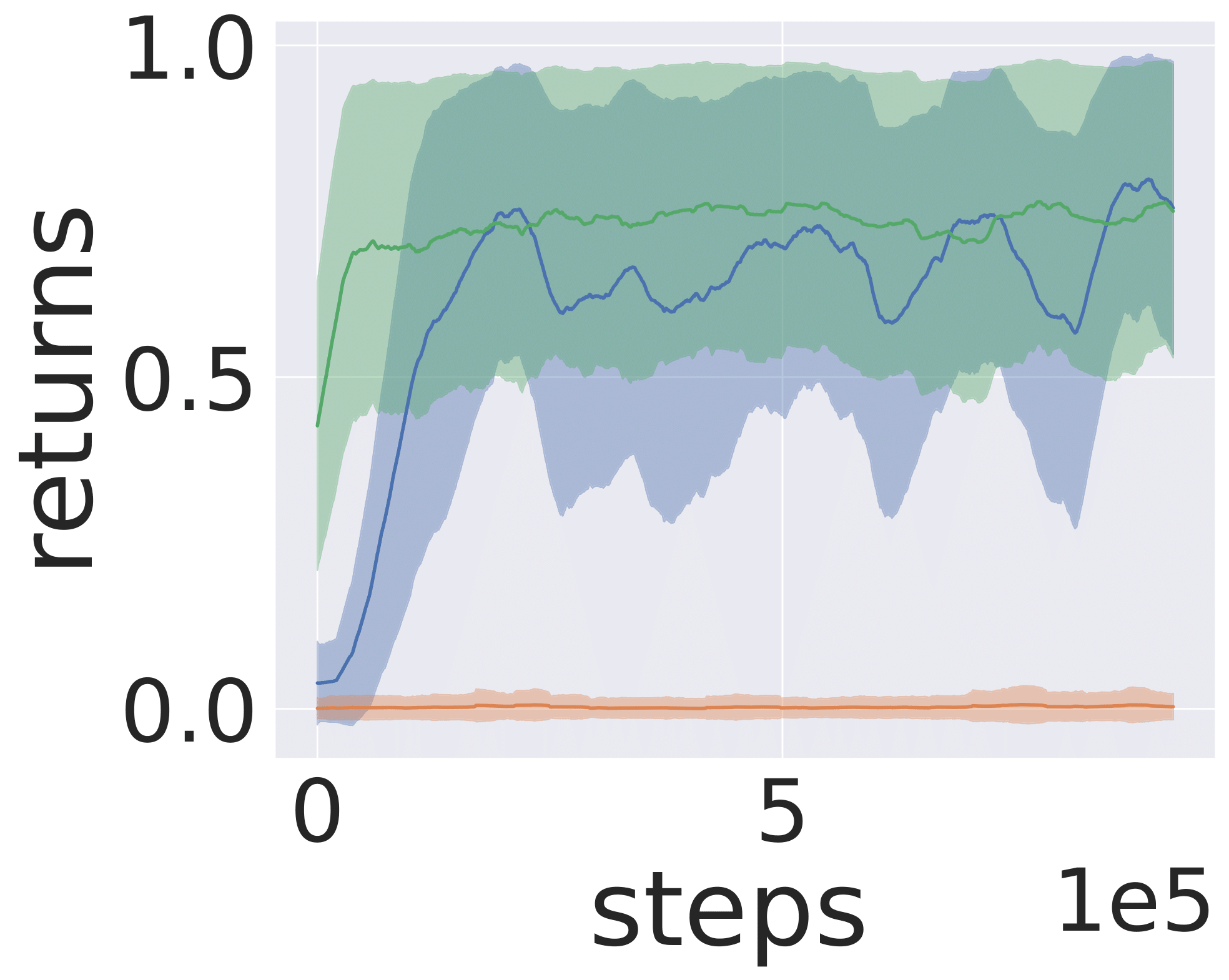}%
        \caption{\textbf{XorMaze} (Q-Learning)}
        \label{fig:PPO_xormaze}
    \end{subfigure}%
    ~
    % \begin{subfigure}[b]{0.245\textwidth}
    %     \centering
    %     \includegraphics[width=\linewidth]{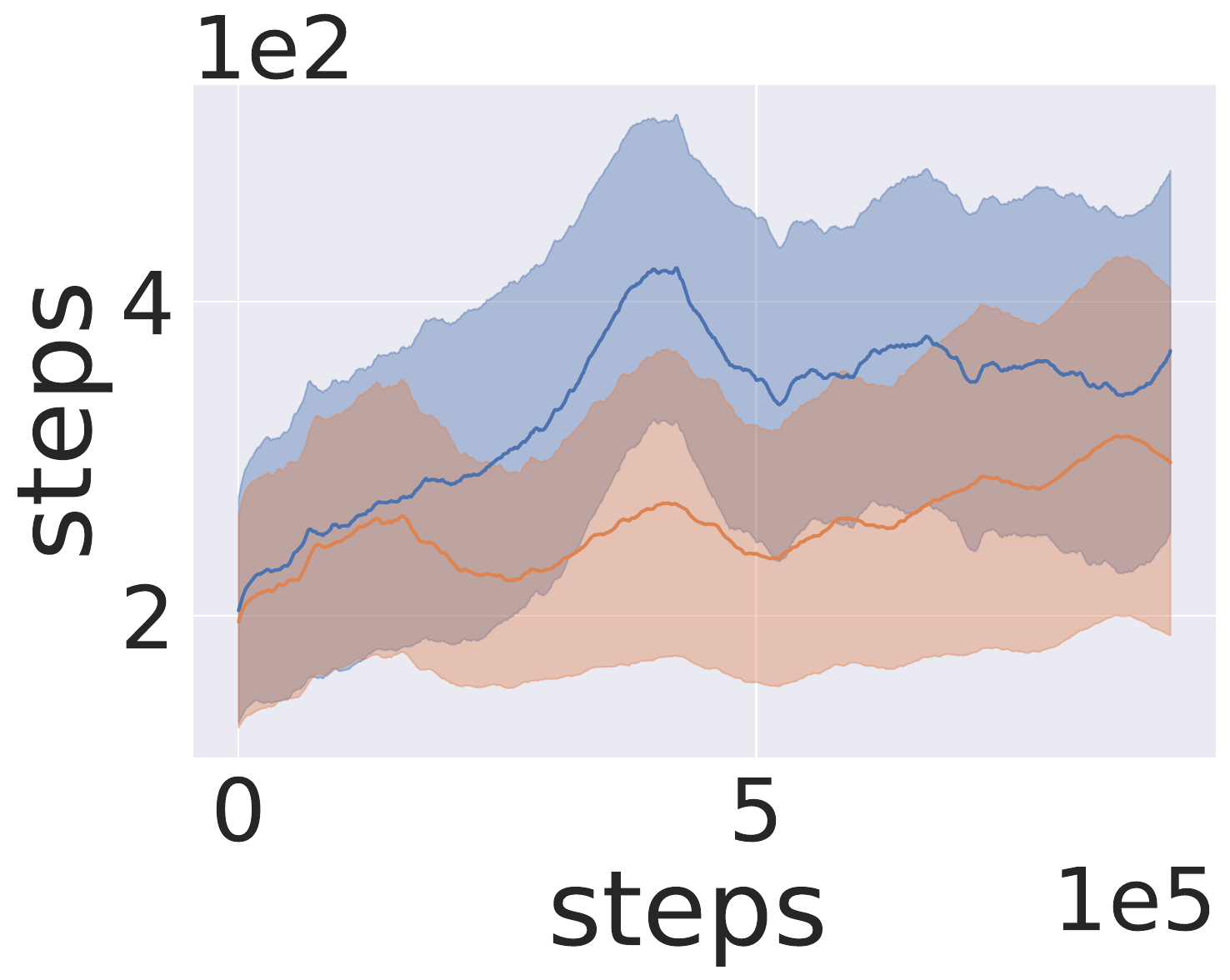}%
    %     % \includegraphics[width=0.5\linewidth]{images/newplots/PPO-mlp-env_popgym-PositionOnlyCartPoleHard-num_stack_2-steps.pdf}
    %     \caption{\textbf{POPGym (PPO $\uparrow$)}}
    %     \label{fig:PPO_popgym}
    % \end{subfigure}%
    % ~
    \begin{subfigure}[b]{0.745\textwidth}
        \centering
        \includegraphics[width=0.34\linewidth]{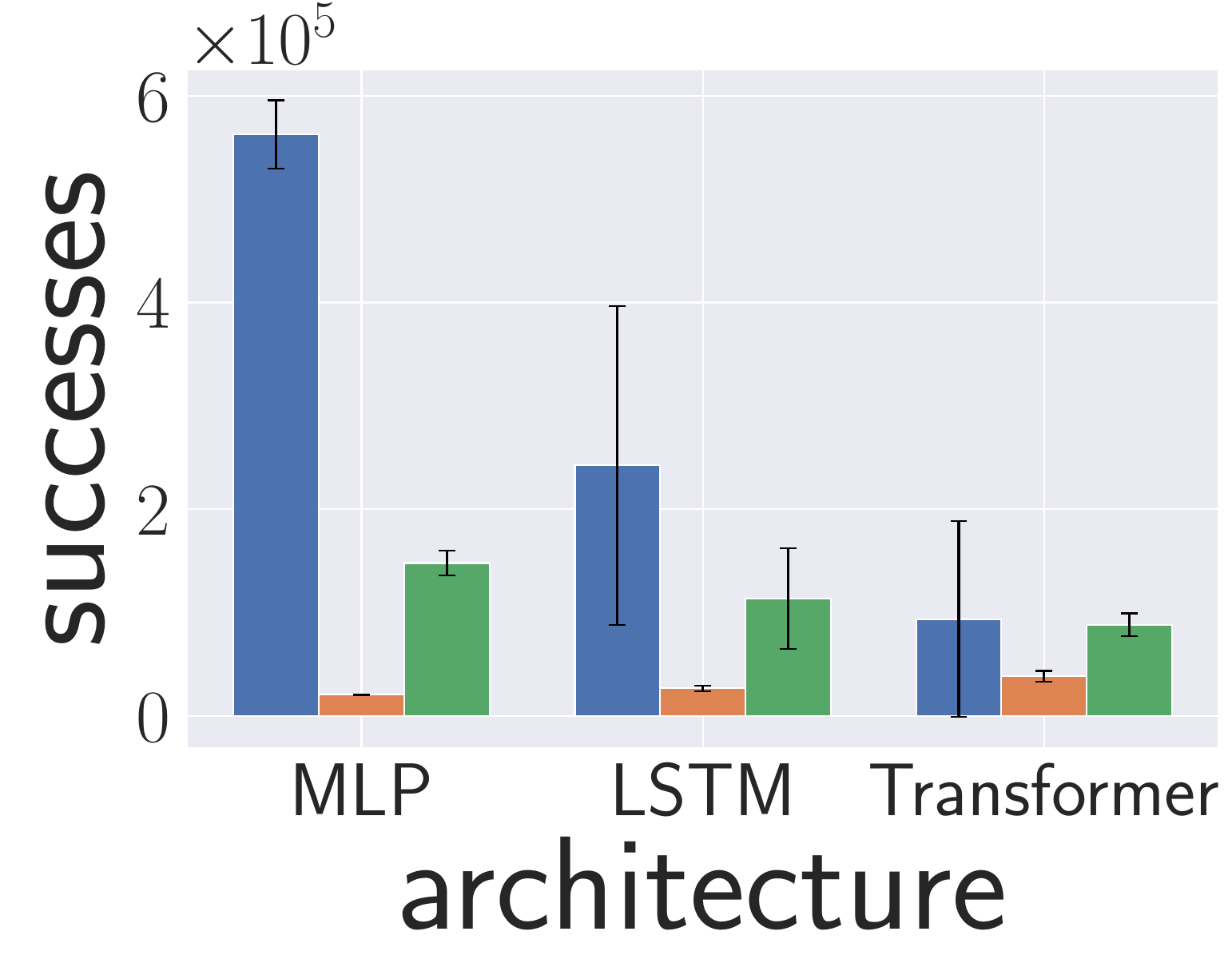}%
        \includegraphics[width=0.33\linewidth]{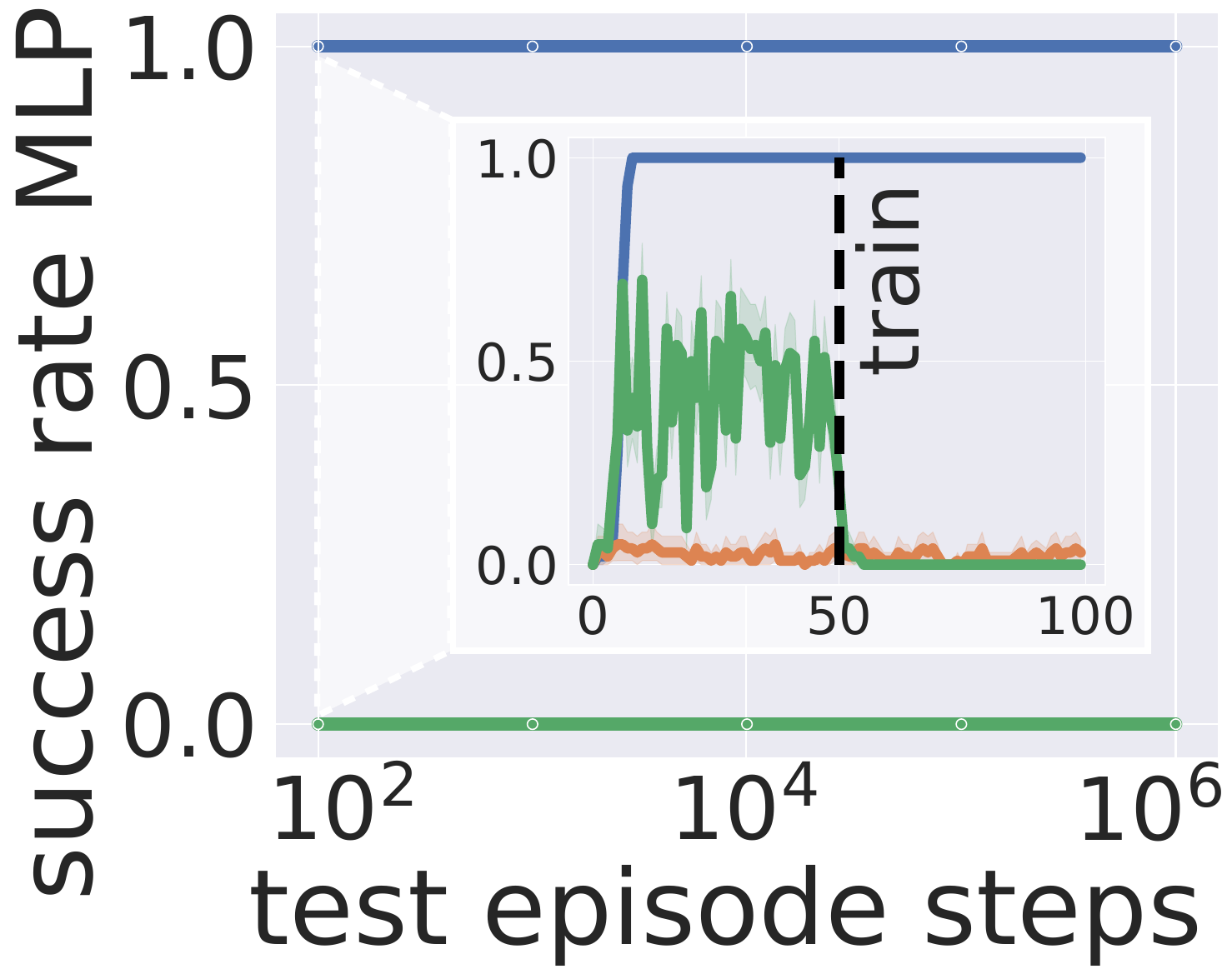}%
        \includegraphics[width=0.33\linewidth]{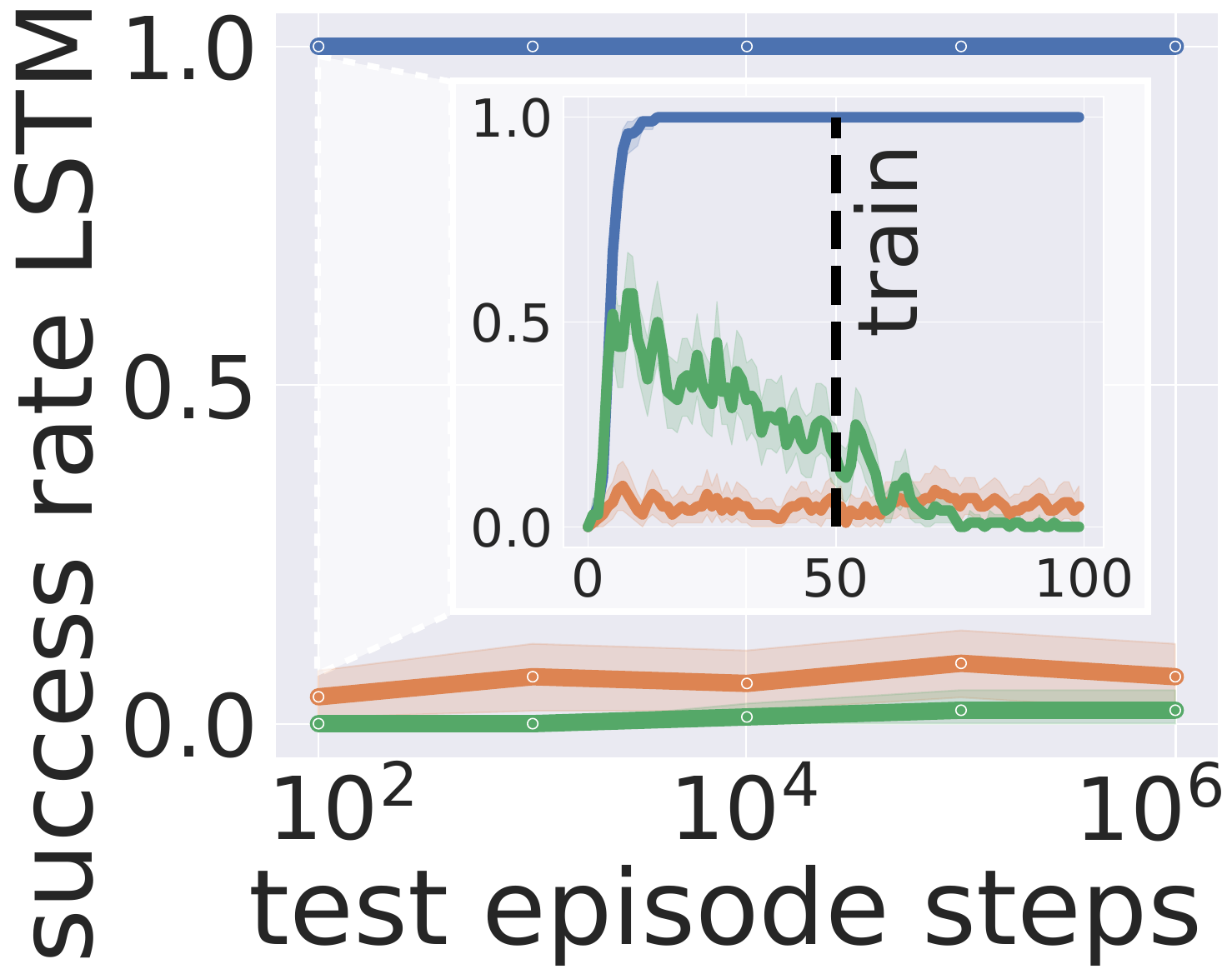}%
        \caption{Partially observable \textbf{FetchReach} with PPO. Train $N_{rs}=3$}
        \label{fig:PPO_fetch}
    \end{subfigure}
    \\
    \begin{subfigure}[b]{0.49\textwidth}
        \centering
        \includegraphics[width=0.5\linewidth]{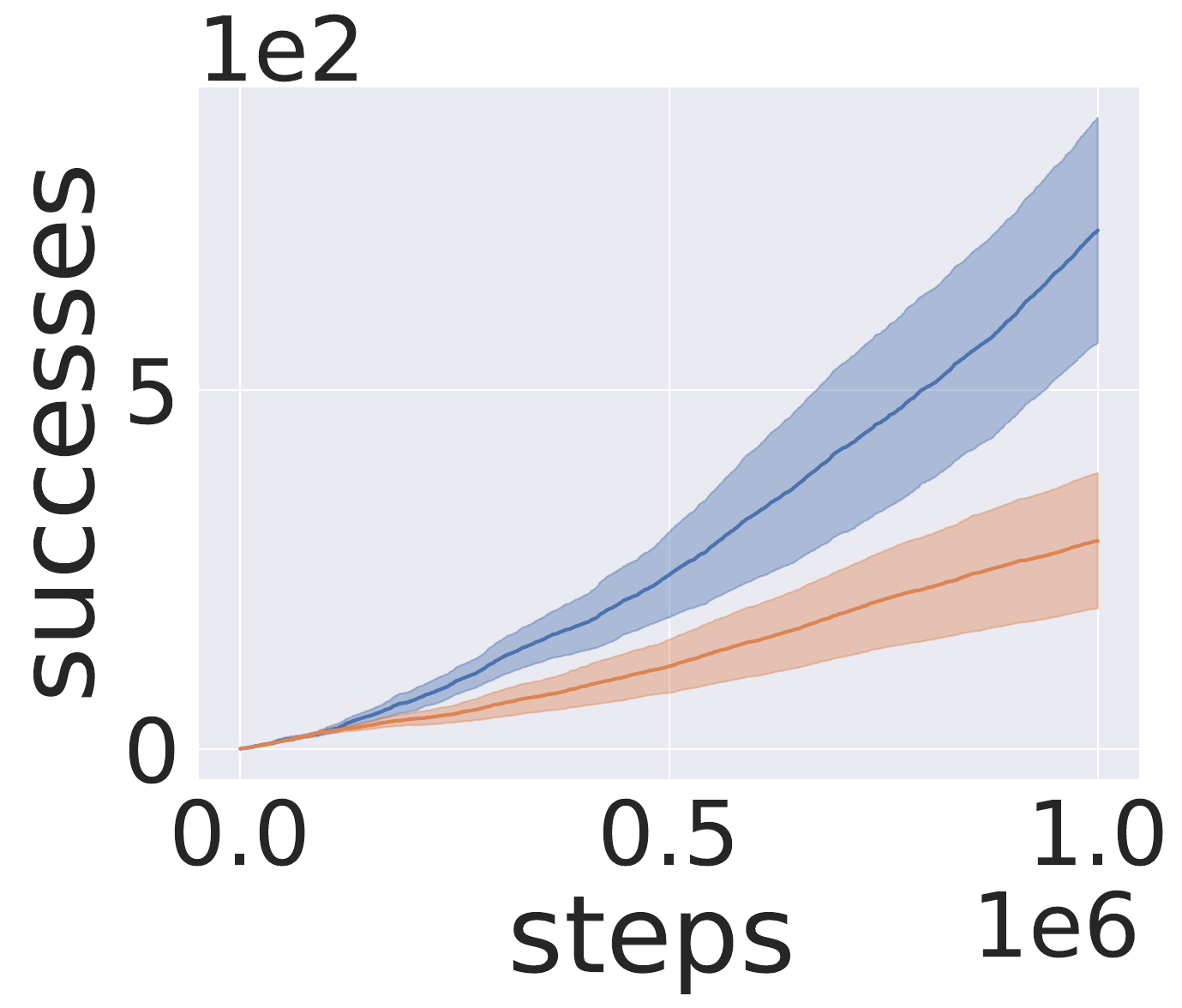}%
        \includegraphics[width=0.5\linewidth]{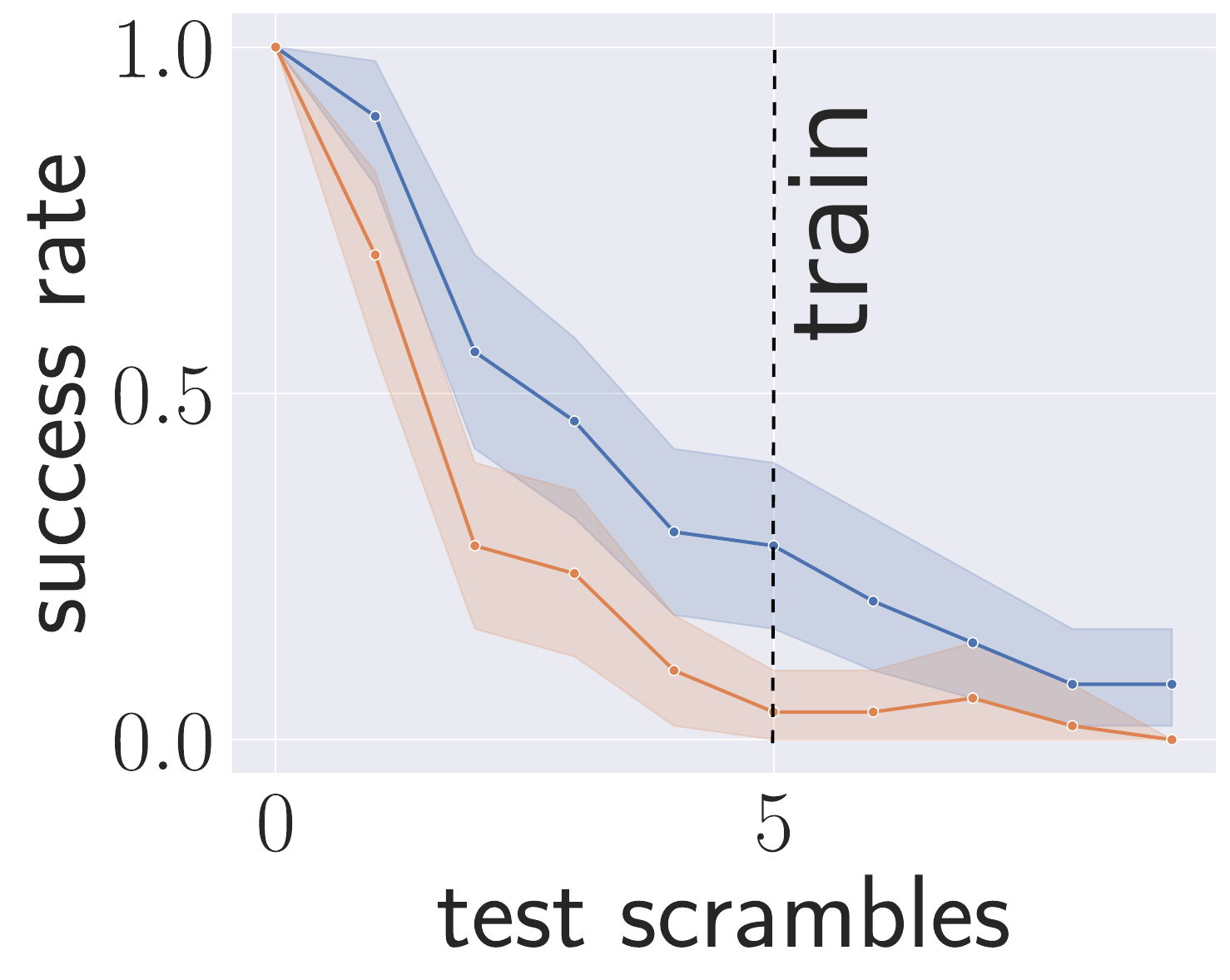}%
        \caption{\textbf{Rubik's Cube} with PPO using an MLP}
        \label{fig:PPO_cube}
    \end{subfigure}%
    ~
    \begin{subfigure}[b]{0.49\textwidth}
        \centering
        \includegraphics[width=0.5\linewidth]{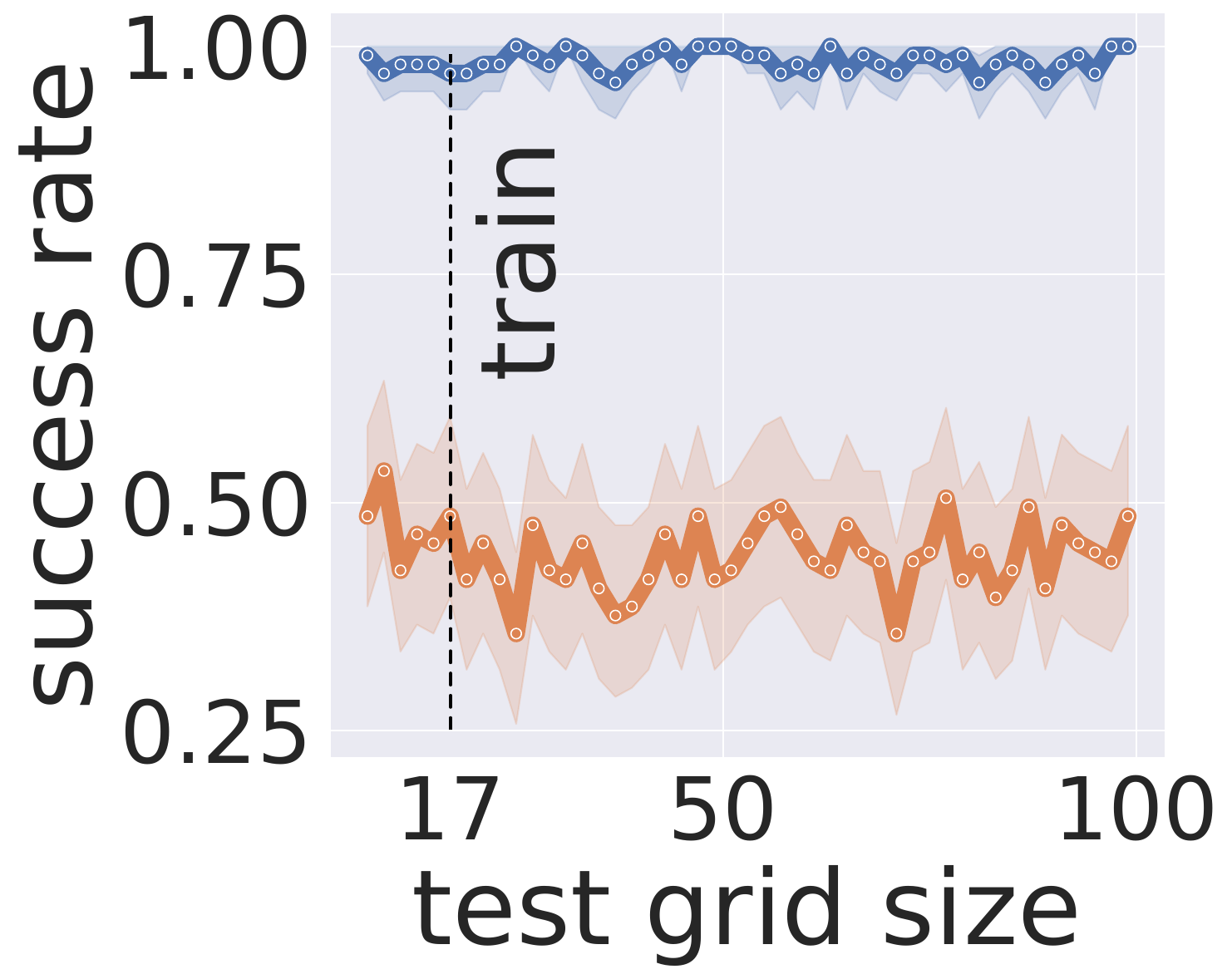}%
        \includegraphics[width=0.5\linewidth]{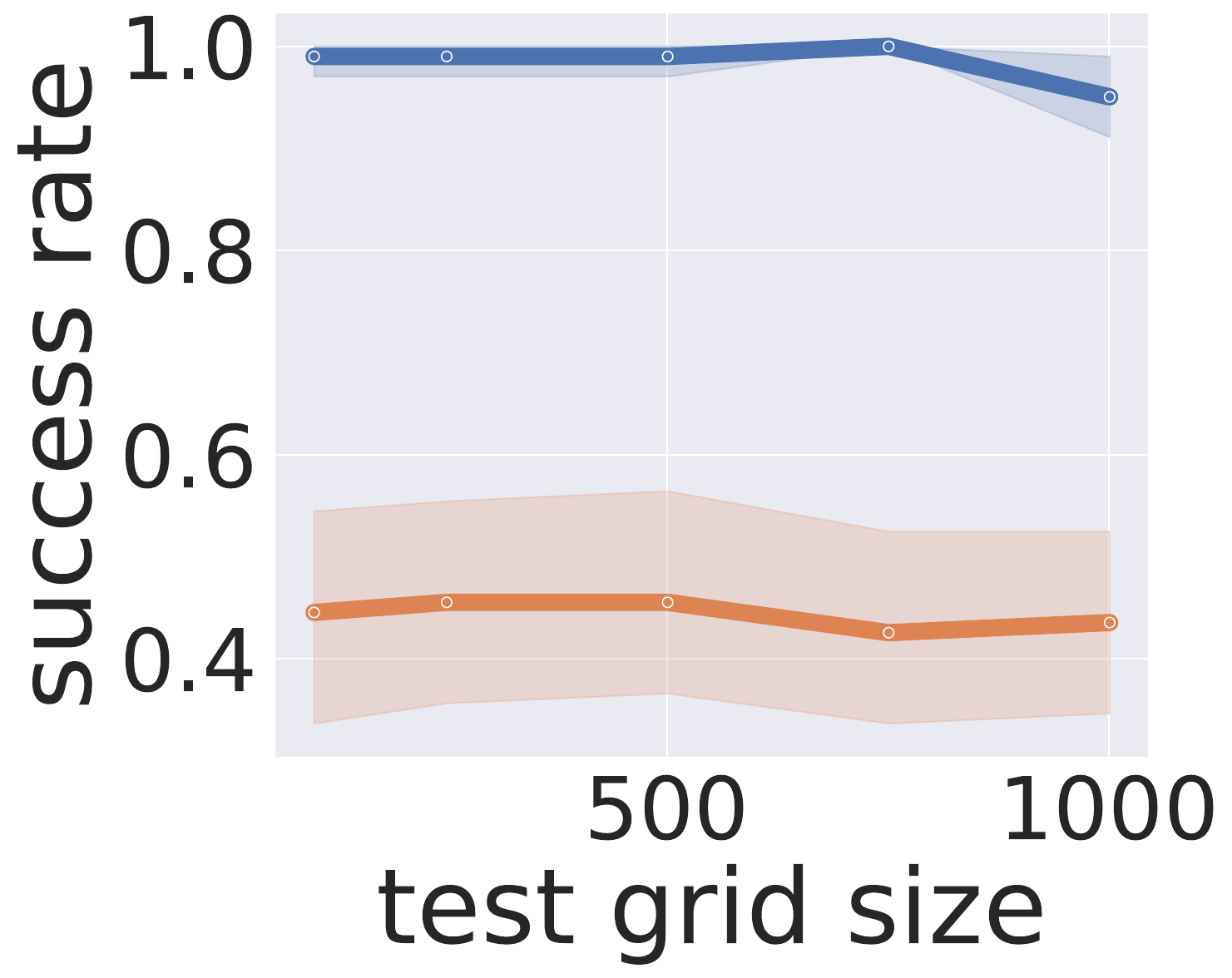}%
        \caption{\textbf{MiniGrid-Memory} with PPO using an MLP}
        \label{fig:PPO_minigrid}
    \end{subfigure}%
    % ~
    % \begin{subfigure}[b]{0.5\textwidth}
    %     \centering
    %     \includegraphics[width=0.5\linewidth]{images/newplots/PPO-mlp-env_cube-v0_scramble_steps_5-random_length_False-cube_cam_face-num_stack_10-steps.pdf}%
    %     \includegraphics[width=0.5\linewidth]{images/newplots/PPO-mlp-cube-v0-cube_cam_face-scramble_steps_5-num_stack_10-steps_.pdf}
    %     \caption{\textbf{Rubik's Cube} training and evaluation (lower is better)}
    %     \label{fig:PPO_popgym}
    % \end{subfigure}
    \caption{
    Domain-agnostic results, showing performance during training ($N_{rs}=10$ except when stated otherwise) and testing ($N_{rs}=100$).
    The shaded regions show 95\% confidence intervals.
    }
   %  \caption{
   %  % \looseness=-1
   %  Generalisation results with PPO (test $N_{rs}=100$). We show successes during training (left) and test success rates (right). The shaded regions show 95\% confidence intervals. (a) uses an MLP.
   % }
    % \caption{Environment-agnostic results with Q-learning and PPO using an MLP policy ($N_{rs}=10$). The agents are trained with $k=3$ for the \textbf{XorMaze}, $k=2$ for \textbf{POPGym}, and $k=10$ for the \textbf{Rubik's cube}. The shaded regions show 95\% confidence intervals.
    % Consistent with our other results, we observe that AS achieves performance close to FS($k^*$) (a,b) and much better than FS($\kappa$).}
% \label{fig:PPO-passive-tmazes}
    \label{fig:PPO-passive-generalisation}
\end{figure}

% Across all settings, AS matches the performance of oracle memory baselines FS (with $k^*$) while using smaller memory ($\kappa$), and significantly outperforms FS (with $\kappa$) by learning good memory policies. Learning curves and additional results using LSTMs and Transformers are included in Appendix \ref{app:experiments}.

\section{Conclusion}

We have introduced \textbf{Adaptive Stacking}, a general-purpose meta-algorithm for learning to manage memory in partially observable environments. Unlike standard Frame Stacking, which blindly retains recent observations, Adaptive Stacking allows agents to learn which observations to remember or discard via reinforcement learning. We showed that this yields theoretical guarantees on policy optimality under both unbiased %RL 
optimization
and TD-based learning, even when using a significantly smaller memory than required for full observability. Experiments across multiple TMaze tasks confirm that Adaptive Stacking matches the performance of oracle memory agents while using far less memory, and substantially outperforms naive baselines under tight memory budgets. This offers a promising path toward scalable, memory-efficient RL in large, partially observable environments.

% \newpage
% \section{Broader Impacts}

% We propose a general-purpose method for improving memory efficiency in reinforcement learning. On the positive side, this could broaden access to advanced RL systems by reducing their computational and memory demands, which is especially beneficial in low-resource settings such as education, mobile robotics, and emerging economies. It may also enhance agent interpretability by encouraging sparse and task-relevant memory use. On the negative side, more memory-efficient agents could be more easily deployed in surveillance or autonomous systems with limited oversight, potentially increasing the risk of misuse or unintended harm. Furthermore, optimising for memory efficiency may lead to implicit biases in what the agent chooses to remember or forget. Future work should explore safeguards for memory gating decisions, particularly in safety-critical domains.

% \subsubsection*{Author Contributions}
% If you'd like to, you may include  a section for author contributions as is done
% in many journals. This is optional and at the discretion of the authors.

\newpage
\subsubsection*{Acknowledgments}
We thank the IBM Cognitive Compute Cluster and the Wits High Performance Computing Infrastructure provided by the Mathematical Sciences Support unit for providing computational resources for our experiments.  
GNT is a Fellow of the Machine Intelligence \& Neural Discovery Institute (MIND).
MR would like to acknowledge support from the Canada Excellence Research Chairs (CERC) Program.
BR is a CIFAR Azrieli Global Scholar in the Learning in Machines \& Brains program.

% \newpage
\bibliography{iclr2026_conference}

\begin{thebibliography}{127}
\providecommand{\natexlab}[1]{#1}
\providecommand{\url}[1]{\texttt{#1}}
\expandafter\ifx\csname urlstyle\endcsname\relax
  \providecommand{\doi}[1]{doi: #1}\else
  \providecommand{\doi}{doi: \begingroup \urlstyle{rm}\Url}\fi

\bibitem[Abbes et~al.(2025)Abbes, Subbaraj, Riemer, Islah, Therien, Tabaru, Kingetsu, Chandar, and Rish]{abbes2025revisiting}
Istabrak Abbes, Gopeshh Subbaraj, Matthew Riemer, Nizar Islah, Benjamin Therien, Tsuguchika Tabaru, Hiroaki Kingetsu, Sarath Chandar, and Irina Rish.
\newblock Revisiting replay and gradient alignment for continual pre-training of large language models.
\newblock \emph{arXiv preprint arXiv:2508.01908}, 2025.

\bibitem[Abdulhai et~al.(2022)Abdulhai, Kim, Riemer, Liu, Tesauro, and How]{abdulhai2022context}
Marwa Abdulhai, Dong-Ki Kim, Matthew Riemer, Miao Liu, Gerald Tesauro, and Jonathan~P How.
\newblock Context-specific representation abstraction for deep option learning.
\newblock In \emph{Proceedings of the AAAI Conference on Artificial Intelligence}, volume~36, pp.\  5959--5967, 2022.

\bibitem[Abel(2022)]{abel2022theory}
David Abel.
\newblock A theory of abstraction in reinforcement learning.
\newblock \emph{arXiv preprint arXiv:2203.00397}, 2022.

\bibitem[Abel et~al.(2021)Abel, Dabney, Harutyunyan, Ho, Littman, Precup, and Singh]{abel2021expressivity}
David Abel, Will Dabney, Anna Harutyunyan, Mark~K Ho, Michael Littman, Doina Precup, and Satinder Singh.
\newblock On the expressivity of markov reward.
\newblock \emph{Advances in Neural Information Processing Systems}, 34:\penalty0 7799--7812, 2021.

\bibitem[Allen et~al.(2024)Allen, Kirtland, Tao, Lobel, Scott, Petrocelli, Gottesman, Parr, Littman, and Konidaris]{allen2024mitigating}
Cameron Allen, Aaron Kirtland, Ruo~Yu Tao, Sam Lobel, Daniel Scott, Nicholas Petrocelli, Omer Gottesman, Ronald Parr, Michael Littman, and George Konidaris.
\newblock Mitigating partial observability in sequential decision processes via the lambda discrepancy.
\newblock \emph{Advances in Neural Information Processing Systems}, 37:\penalty0 62988--63028, 2024.

\bibitem[Anokhin et~al.(2025)Anokhin, Rishav, Riemer, Chung, Rish, and Kahou]{anokhinhandling}
Ivan Anokhin, Rishav Rishav, Matthew Riemer, Stephen Chung, Irina Rish, and Samira~Ebrahimi Kahou.
\newblock Handling delay in real-time reinforcement learning.
\newblock In \emph{The Thirteenth International Conference on Learning Representations}, 2025.

\bibitem[Anthony et~al.(2023)Anthony, Biderman, and Schoelkopf]{transformer-math-eleutherai}
Quentin Anthony, Stella Biderman, and Hailey Schoelkopf.
\newblock Transformer math 101, 2023.

\bibitem[Arulkumaran et~al.(2017)Arulkumaran, Deisenroth, Brundage, and Bharath]{arulkumaran2017deep}
Kai Arulkumaran, Marc~Peter Deisenroth, Miles Brundage, and Anil~Anthony Bharath.
\newblock Deep reinforcement learning: A brief survey.
\newblock \emph{IEEE Signal Processing Magazine}, 34\penalty0 (6):\penalty0 26--38, 2017.

\bibitem[Bacon et~al.(2017)Bacon, Harb, and Precup]{bacon2017option}
Pierre-Luc Bacon, Jean Harb, and Doina Precup.
\newblock The option-critic architecture.
\newblock In \emph{Proceedings of the AAAI conference on artificial intelligence}, volume~31, 2017.

\bibitem[Bakker(2001)]{bakker2001reinforcement}
Bram Bakker.
\newblock Reinforcement learning with long short-term memory.
\newblock \emph{Advances in neural information processing systems}, 14, 2001.

\bibitem[Beck et~al.(2020)Beck, Ciosek, Devlin, Tschiatschek, Zhang, and Hofmann]{beck2019tmaze}
Jacob Beck, Kamil Ciosek, Sam Devlin, Sebastian Tschiatschek, Cheng Zhang, and Katja Hofmann.
\newblock Amrl: Aggregated memory for reinforcement learning.
\newblock In \emph{International Conference on Learning Representations}, 2020.

\bibitem[Beltagy et~al.(2020)Beltagy, Peters, and Cohan]{beltagy2020longformer}
Iz~Beltagy, Matthew~E Peters, and Arman Cohan.
\newblock Longformer: The long-document transformer.
\newblock \emph{arXiv preprint arXiv:2004.05150}, 2020.

\bibitem[Bester et~al.(2023)Bester, Rosman, James, and Tasse]{bester2023counting}
Tristan Bester, Benjamin Rosman, Steven James, and Geraud~Nangue Tasse.
\newblock Counting reward automata: Sample efficient reinforcement learning through the exploitation of reward function structure.
\newblock \emph{arXiv preprint arXiv:2312.11364}, 2023.

\bibitem[Caccia et~al.(2020)Caccia, Belilovsky, Caccia, and Pineau]{caccia2020online}
Lucas Caccia, Eugene Belilovsky, Massimo Caccia, and Joelle Pineau.
\newblock Online learned continual compression with adaptive quantization modules.
\newblock In \emph{International conference on machine learning}, pp.\  1240--1250. PMLR, 2020.

\bibitem[Cao et~al.(2024)Cao, Zhao, Cheng, Shu, Chen, Liu, Liang, Zhao, Yan, and Li]{cao2024survey}
Yuji Cao, Huan Zhao, Yuheng Cheng, Ting Shu, Yue Chen, Guolong Liu, Gaoqi Liang, Junhua Zhao, Jinyue Yan, and Yun Li.
\newblock Survey on large language model-enhanced reinforcement learning: Concept, taxonomy, and methods.
\newblock \emph{IEEE Transactions on Neural Networks and Learning Systems}, 2024.

\bibitem[Cases et~al.(2019)Cases, Rosenbaum, Riemer, Geiger, Klinger, Tamkin, Li, Agarwal, Greene, Jurafsky, et~al.]{cases2019recursive}
Ignacio Cases, Clemens Rosenbaum, Matthew Riemer, Atticus Geiger, Tim Klinger, Alex Tamkin, Olivia Li, Sandhini Agarwal, Joshua~D Greene, Dan Jurafsky, et~al.
\newblock Recursive routing networks: Learning to compose modules for language understanding.
\newblock In \emph{Proceedings of the 2019 Conference of the North American Chapter of the Association for Computational Linguistics: Human Language Technologies, Volume 1 (Long and Short Papers)}, pp.\  3631--3648, 2019.

\bibitem[Chang et~al.(2019)Chang, Gupta, Levine, and Griffiths]{changautomatically}
Michael Chang, Abhishek Gupta, Sergey Levine, and Thomas~L Griffiths.
\newblock Automatically composing representation transformations as a means for generalization.
\newblock In \emph{International Conference on Learning Representations}, 2019.

\bibitem[Chaudhury et~al.(2025)Chaudhury, Das, Swaminathan, Kollias, Nelson, Pahwa, Pedapati, Melnyk, and Riemer]{chaudhury2025epman}
Subhajit Chaudhury, Payel Das, Sarathkrishna Swaminathan, Georgios Kollias, Elliot Nelson, Khushbu Pahwa, Tejaswini Pedapati, Igor Melnyk, and Matthew Riemer.
\newblock Epman: Episodic memory attention for generalizing to longer contexts.
\newblock \emph{arXiv preprint arXiv:2502.14280}, 2025.

\bibitem[Chen et~al.(2021)Chen, Lu, Rajeswaran, Lee, Grover, Laskin, Abbeel, Srinivas, and Mordatch]{chen2021decision}
Lili Chen, Kevin Lu, Aravind Rajeswaran, Kimin Lee, Aditya Grover, Misha Laskin, Pieter Abbeel, Aravind Srinivas, and Igor Mordatch.
\newblock Decision transformer: Reinforcement learning via sequence modeling.
\newblock \emph{Advances in neural information processing systems}, 34:\penalty0 15084--15097, 2021.

\bibitem[Cherepanov et~al.(2023)Cherepanov, Staroverov, Yudin, Kovalev, and Panov]{cherepanov2023recurrent}
Egor Cherepanov, Alexey Staroverov, Dmitry Yudin, Alexey~K Kovalev, and Aleksandr~I Panov.
\newblock Recurrent action transformer with memory.
\newblock \emph{arXiv preprint arXiv:2306.09459}, 2023.

\bibitem[Chevalier-Boisvert et~al.(2018)Chevalier-Boisvert, Willems, and Pal]{chevalier2018minigrid}
Maxime Chevalier-Boisvert, Lucas Willems, and Suman Pal.
\newblock Minimalistic gridworld environment for openai gym.
\newblock \url{https://github.com/maximecb/gym-minigrid}, 2018.
\newblock GitHub repository.

\bibitem[Child et~al.(2019)Child, Gray, Radford, and Sutskever]{child2019generating}
Rewon Child, Scott Gray, Alec Radford, and Ilya Sutskever.
\newblock Generating long sequences with sparse transformers.
\newblock \emph{arXiv preprint arXiv:1904.10509}, 2019.

\bibitem[Cuconasu et~al.(2024)Cuconasu, Trappolini, Siciliano, Filice, Campagnano, Maarek, Tonellotto, and Silvestri]{cuconasu2024power}
Florin Cuconasu, Giovanni Trappolini, Federico Siciliano, Simone Filice, Cesare Campagnano, Yoelle Maarek, Nicola Tonellotto, and Fabrizio Silvestri.
\newblock The power of noise: Redefining retrieval for rag systems.
\newblock In \emph{Proceedings of the 47th International ACM SIGIR Conference on Research and Development in Information Retrieval}, pp.\  719--729, 2024.

\bibitem[Demir(2023)]{Demir2023}
Alper Demir.
\newblock Learning what to memorize: Using intrinsic motivation to form useful memory in partially observable reinforcement learning.
\newblock \emph{Applied Intelligence}, 53\penalty0 (16):\penalty0 19074--19092, Aug 2023.
\newblock ISSN 1573-7497.
\newblock \doi{10.1007/s10489-022-04328-z}.
\newblock URL \url{https://doi.org/10.1007/s10489-022-04328-z}.

\bibitem[Dognin et~al.(2025)Dognin, Rios, Luss, Sattigeri, Liu, Padhi, Riemer, Nagireddy, Varshney, and Bouneffouf]{dognin2025contextual}
Pierre Dognin, Jesus Rios, Ronny Luss, Prasanna Sattigeri, Miao Liu, Inkit Padhi, Matthew Riemer, Manish Nagireddy, Kush Varshney, and Djallel Bouneffouf.
\newblock Contextual value alignment.
\newblock In \emph{ICASSP 2025-2025 IEEE International Conference on Acoustics, Speech and Signal Processing (ICASSP)}, pp.\  1--5. IEEE, 2025.

\bibitem[Dong et~al.(2022)Dong, Van~Roy, and Zhou]{dong2022simple}
Shi Dong, Benjamin Van~Roy, and Zhengyuan Zhou.
\newblock Simple agent, complex environment: Efficient reinforcement learning with agent states.
\newblock \emph{Journal of Machine Learning Research}, 23\penalty0 (255):\penalty0 1--54, 2022.

\bibitem[Esslinger et~al.(2022)]{esslinger2022memory}
Max Esslinger et~al.
\newblock Memory gym: A benchmark for long-term memory in reinforcement learning.
\newblock In \emph{Advances in Neural Information Processing Systems}, volume~35, pp.\  12345--12356, 2022.

\bibitem[Foerster et~al.(2017)Foerster, Chen, Al-Shedivat, Whiteson, Abbeel, and Mordatch]{foerster2017learning}
Jakob~N Foerster, Richard~Y Chen, Maruan Al-Shedivat, Shimon Whiteson, Pieter Abbeel, and Igor Mordatch.
\newblock Learning with opponent-learning awareness.
\newblock \emph{arXiv preprint arXiv:1709.04326}, 2017.

\bibitem[Fortunato et~al.(2019)]{fortunato2019psychlab}
Meire Fortunato et~al.
\newblock Psychlab: A psychology laboratory for deep reinforcement learning agents.
\newblock In \emph{Advances in Neural Information Processing Systems}, volume~32, pp.\  1637--1648, 2019.

\bibitem[Friston(2009)]{friston2009free}
Karl Friston.
\newblock The free-energy principle: a rough guide to the brain?
\newblock \emph{Trends in cognitive sciences}, 13\penalty0 (7):\penalty0 293--301, 2009.

\bibitem[Ge et~al.(2023)Ge, Hu, Wang, Wang, Chen, and Wei]{ge2023context}
Tao Ge, Jing Hu, Lei Wang, Xun Wang, Si-Qing Chen, and Furu Wei.
\newblock In-context autoencoder for context compression in a large language model.
\newblock \emph{arXiv preprint arXiv:2307.06945}, 2023.

\bibitem[Graves et~al.(2014)Graves, Wayne, and Danihelka]{graves2014neural}
Alex Graves, Greg Wayne, and Ivo Danihelka.
\newblock Neural turing machines.
\newblock \emph{arXiv preprint arXiv:1410.5401}, 2014.

\bibitem[Graves et~al.(2016)Graves, Wayne, Reynolds, Harley, Danihelka, Grabska-Barwinska, Colmenarejo, Grefenstette, Ramalho, Agapiou, Badia, Morneweg, Hermann, Tsamados, Saxton, and Legg]{graves2016dnc}
Alex Graves, Greg Wayne, Malcolm Reynolds, Tim Harley, Ivo Danihelka, Agnieszka Grabska-Barwinska, Sergio~G{\'o}mez Colmenarejo, Edward Grefenstette, Tiago Ramalho, John Agapiou, Adri{\`a}~Puigdom{\`e}nech Badia, Damian Morneweg, Karl Hermann, Antonios Tsamados, David Saxton, and Shane Legg.
\newblock Hybrid computing using a neural network with dynamic external memory.
\newblock \emph{Nature}, 538\penalty0 (7626):\penalty0 471--476, 2016.

\bibitem[Gu et~al.(2021)Gu, Goel, and R{\'e}]{gu2021efficiently}
Albert Gu, Karan Goel, and Christopher R{\'e}.
\newblock Efficiently modeling long sequences with structured state spaces.
\newblock \emph{arXiv preprint arXiv:2111.00396}, 2021.

\bibitem[Guo \& Vosoughi(2025)Guo and Vosoughi]{guo2025serial}
Xiaobo Guo and Soroush Vosoughi.
\newblock Serial position effects of large language models.
\newblock In \emph{Findings of the Association for Computational Linguistics: ACL 2025}, pp.\  927--953, 2025.

\bibitem[Hasanbeig et~al.(2024)Hasanbeig, Jeppu, Abate, Melham, and Kroening]{hasanbeig2024symbolic}
Hosein Hasanbeig, Natasha~Yogananda Jeppu, Alessandro Abate, Tom Melham, and Daniel Kroening.
\newblock Symbolic task inference in deep reinforcement learning.
\newblock \emph{Journal of Artificial Intelligence Research}, 80:\penalty0 1099--1137, 2024.

\bibitem[Hausknecht \& Stone(2015)Hausknecht and Stone]{hausknecht2015deep}
Matthew~J Hausknecht and Peter Stone.
\newblock Deep recurrent q-learning for partially observable mdps.
\newblock In \emph{AAAI fall symposia}, volume~45, pp.\  141, 2015.

\bibitem[Hochreiter \& Schmidhuber(1997)Hochreiter and Schmidhuber]{hochreiter1997long}
Sepp Hochreiter and J{\"u}rgen Schmidhuber.
\newblock Long short-term memory.
\newblock \emph{Neural computation}, 9\penalty0 (8):\penalty0 1735--1780, 1997.

\bibitem[Holla et~al.(2024)Holla, Kumar, and Singh]{holla2024large}
Kiran~Voderhobli Holla, Chaithanya Kumar, and Aryan Singh.
\newblock Large language models aren't all that you need.
\newblock \emph{arXiv preprint arXiv:2401.00698}, 2024.

\bibitem[Hopfield(1982)]{hopfield1982neural}
John~J Hopfield.
\newblock Neural networks and physical systems with emergent collective computational abilities.
\newblock \emph{Proceedings of the national academy of sciences}, 79\penalty0 (8):\penalty0 2554--2558, 1982.

\bibitem[Hsieh et~al.(2024)Hsieh, Sun, Kriman, Acharya, Rekesh, Jia, Zhang, and Ginsburg]{hsieh2024ruler}
Cheng-Ping Hsieh, Simeng Sun, Samuel Kriman, Shantanu Acharya, Dima Rekesh, Fei Jia, Yang Zhang, and Boris Ginsburg.
\newblock Ruler: What's the real context size of your long-context language models?
\newblock \emph{arXiv preprint arXiv:2404.06654}, 2024.

\bibitem[Hung et~al.(2019)Hung, Lillicrap, Abramson, Wu, Mirza, Carnevale, Ahuja, and Wayne]{hung2019optimizing}
Chia-Chun Hung, Timothy Lillicrap, Josh Abramson, Yan Wu, Mehdi Mirza, Federico Carnevale, Arun Ahuja, and Greg Wayne.
\newblock Optimizing agent behavior over long time scales by transporting value.
\newblock \emph{Nature communications}, 10\penalty0 (1):\penalty0 5223, 2019.

\bibitem[Hung et~al.(2018)]{hung2018optimizing}
Chia-Chun Hung et~al.
\newblock Optimizing agent behavior over long time scales by transporting value.
\newblock In \emph{International Conference on Machine Learning}, pp.\  2043--2052, 2018.

\bibitem[Icarte et~al.(2018)Icarte, Klassen, Valenzano, and McIlraith]{icarte2018}
Rodrigo~Toro Icarte, Toryn Klassen, Richard Valenzano, and Sheila McIlraith.
\newblock Using reward machines for high-level task specification and decomposition in reinforcement learning.
\newblock In \emph{International Conference on Machine Learning}, pp.\  2107--2116. PMLR, 2018.

\bibitem[Icarte et~al.(2022)Icarte, Klassen, Valenzano, and McIlraith]{icarte2022reward}
Rodrigo~Toro Icarte, Toryn~Q Klassen, Richard Valenzano, and Sheila~A McIlraith.
\newblock Reward machines: Exploiting reward function structure in reinforcement learning.
\newblock \emph{Journal of Artificial Intelligence Research}, 73:\penalty0 173--208, 2022.

\bibitem[Javed \& Sutton(2024)Javed and Sutton]{javed2024big}
Khurram Javed and Richard~S Sutton.
\newblock The big world hypothesis and its ramifications for artificial intelligence.
\newblock In \emph{Finding the Frame: An RLC Workshop for Examining Conceptual Frameworks}, 2024.

\bibitem[Javed et~al.(2023)Javed, Shah, Sutton, and White]{javed2023scalable}
Khurram Javed, Haseeb Shah, Richard~S Sutton, and Martha White.
\newblock Scalable real-time recurrent learning using columnar-constructive networks.
\newblock \emph{Journal of Machine Learning Research}, 24\penalty0 (256):\penalty0 1--34, 2023.

\bibitem[Javed et~al.(2024)Javed, Sharifnassab, and Sutton]{javed2024swifttd}
Khurram Javed, Arsalan Sharifnassab, and Richard~S Sutton.
\newblock Swifttd: A fast and robust algorithm for temporal difference learning.
\newblock In \emph{Reinforcement Learning Conference}, 2024.

\bibitem[Kaelbling et~al.(1998)Kaelbling, Littman, and Cassandra]{kaelbling1998planning}
Leslie~Pack Kaelbling, Michael~L Littman, and Anthony~R Cassandra.
\newblock Planning and acting in partially observable stochastic domains.
\newblock \emph{Artificial intelligence}, 101\penalty0 (1-2):\penalty0 99--134, 1998.

\bibitem[Kanerva(1988)]{kanerva1988sparse}
Pentti Kanerva.
\newblock \emph{Sparse distributed memory}.
\newblock MIT press, 1988.

\bibitem[Khetarpal et~al.(2022)Khetarpal, Riemer, Rish, and Precup]{khetarpal2022towards}
Khimya Khetarpal, Matthew Riemer, Irina Rish, and Doina Precup.
\newblock Towards continual reinforcement learning: A review and perspectives.
\newblock \emph{Journal of Artificial Intelligence Research}, 75:\penalty0 1401--1476, 2022.

\bibitem[Kim et~al.(2021)Kim, Liu, Riemer, Sun, Abdulhai, Habibi, Lopez-Cot, Tesauro, and How]{kim2021policy}
Dong~Ki Kim, Miao Liu, Matthew~D Riemer, Chuangchuang Sun, Marwa Abdulhai, Golnaz Habibi, Sebastian Lopez-Cot, Gerald Tesauro, and Jonathan How.
\newblock A policy gradient algorithm for learning to learn in multiagent reinforcement learning.
\newblock In \emph{International Conference on Machine Learning}, pp.\  5541--5550. PMLR, 2021.

\bibitem[Kim et~al.(2022{\natexlab{a}})Kim, Riemer, Liu, Foerster, Everett, Sun, Tesauro, and How]{kim2022influencing}
Dong-Ki Kim, Matthew Riemer, Miao Liu, Jakob Foerster, Michael Everett, Chuangchuang Sun, Gerald Tesauro, and Jonathan~P How.
\newblock Influencing long-term behavior in multiagent reinforcement learning.
\newblock \emph{Advances in Neural Information Processing Systems}, 35:\penalty0 18808--18821, 2022{\natexlab{a}}.

\bibitem[Kim et~al.(2022{\natexlab{b}})Kim, Riemer, Liu, Foerster, Tesauro, and How]{kim2022game}
Dong-Ki Kim, Matthew Riemer, Miao Liu, Jakob~N Foerster, Gerald Tesauro, and Jonathan~P How.
\newblock Game-theoretical perspectives on active equilibria: A preferred solution concept over nash equilibria.
\newblock \emph{arXiv e-prints}, pp.\  arXiv--2210, 2022{\natexlab{b}}.

\bibitem[Konda \& Tsitsiklis(2002)Konda and Tsitsiklis]{konda2002actor}
Vijay~R. Konda and John~N. Tsitsiklis.
\newblock Actor-critic algorithms.
\newblock In \emph{Advances in Neural Information Processing Systems (NeurIPS)}, volume~14, pp.\  1008--1014. MIT Press, 2002.

\bibitem[Koppenaal \& Glanzer(1990)Koppenaal and Glanzer]{koppenaal1990examination}
Lois Koppenaal and Murray Glanzer.
\newblock An examination of the continuous distractor task and the “long-term recency effect”.
\newblock \emph{Memory \& Cognition}, 18\penalty0 (2):\penalty0 183--195, 1990.

\bibitem[Kostas et~al.(2020)Kostas, Nota, and Thomas]{kostas2020asynchronous}
James Kostas, Chris Nota, and Philip Thomas.
\newblock Asynchronous coagent networks.
\newblock In \emph{International Conference on Machine Learning}, pp.\  5426--5435. PMLR, 2020.

\bibitem[Krale et~al.(2023)Krale, Simao, and Jansen]{krale2023act}
Merlijn Krale, Thiago~D Simao, and Nils Jansen.
\newblock Act-then-measure: reinforcement learning for partially observable environments with active measuring.
\newblock In \emph{Proceedings of the International Conference on Automated Planning and Scheduling}, volume~33, pp.\  212--220, 2023.

\bibitem[Kuratov et~al.(2024)Kuratov, Bulatov, Anokhin, Rodkin, Sorokin, Sorokin, and Burtsev]{kuratov2024babilong}
Yury Kuratov, Aydar Bulatov, Petr Anokhin, Ivan Rodkin, Dmitry Sorokin, Artyom Sorokin, and Mikhail Burtsev.
\newblock Babilong: Testing the limits of llms with long context reasoning-in-a-haystack.
\newblock \emph{Advances in Neural Information Processing Systems}, 37:\penalty0 106519--106554, 2024.

\bibitem[Lampinen et~al.(2021)]{lampinen2021towards}
Andrew Lampinen et~al.
\newblock Towards mental time travel: A hierarchical memory for reinforcement learning agents.
\newblock \emph{arXiv preprint arXiv:2112.08369}, 2021.

\bibitem[Li et~al.(2006)Li, Walsh, and Littman]{li2006}
Lihong Li, Thomas~J. Walsh, and Michael~L. Littman.
\newblock Towards a unified theory of state abstraction for mdps.
\newblock In \emph{International Symposium on Artificial Intelligence and Mathematics, AI{\&}Math 2006, Fort Lauderdale, Florida, USA, January 4-6, 2006}, 2006.
\newblock URL \url{http://anytime.cs.umass.edu/aimath06/proceedings/P21.pdf}.

\bibitem[Li et~al.(2024{\natexlab{a}})Li, Li, Zhang, Mei, and Bendersky]{li2024retrieval}
Zhuowan Li, Cheng Li, Mingyang Zhang, Qiaozhu Mei, and Michael Bendersky.
\newblock Retrieval augmented generation or long-context llms? a comprehensive study and hybrid approach.
\newblock \emph{arXiv preprint arXiv:2407.16833}, 2024{\natexlab{a}}.

\bibitem[Li et~al.(2024{\natexlab{b}})Li, Zhang, and Zhang]{li2024unveiling}
Zizhong Li, Haopeng Zhang, and Jiawei Zhang.
\newblock Unveiling the magic: Investigating attention distillation in retrieval-augmented generation.
\newblock In \emph{Proceedings of the 2024 Conference of the North American Chapter of the Association for Computational Linguistics: Human Language Technologies (Volume 2: Short Papers)}, pp.\  745--754, 2024{\natexlab{b}}.

\bibitem[Liu et~al.(2024)Liu, Lin, Hewitt, Paranjape, Bevilacqua, Petroni, and Liang]{liu2024lost}
Nelson~F Liu, Kevin Lin, John Hewitt, Ashwin Paranjape, Michele Bevilacqua, Fabio Petroni, and Percy Liang.
\newblock Lost in the middle: How language models use long contexts.
\newblock \emph{Transactions of the Association for Computational Linguistics}, 12:\penalty0 157--173, 2024.

\bibitem[Lu et~al.(2023)Lu, Schroecker, Gu, Parisotto, Foerster, Singh, and Behbahani]{lu2023structured}
Chris Lu, Yannick Schroecker, Albert Gu, Emilio Parisotto, Jakob Foerster, Satinder Singh, and Feryal Behbahani.
\newblock Structured state space models for in-context reinforcement learning.
\newblock \emph{Advances in Neural Information Processing Systems}, 36:\penalty0 47016--47031, 2023.

\bibitem[Marbach \& Tsitsiklis(2001)Marbach and Tsitsiklis]{marbach2001simulation}
Peter Marbach and John~N. Tsitsiklis.
\newblock Simulation-based optimization of markov reward processes.
\newblock \emph{IEEE Transactions on Automatic Control}, 46\penalty0 (2):\penalty0 191--209, 2001.

\bibitem[Memarian et~al.(2022)Memarian, Touzel, Riemer, Bhati, and Rish]{memarian2022summarizing}
Amin Memarian, Maximilian~Puelma Touzel, Matthew Riemer, Rupali Bhati, and Irina Rish.
\newblock Summarizing societies: Agent abstraction in multi-agent reinforcement learning.
\newblock In \emph{From Cells to Societies: Collective Learning across Scales Workshop at ICLR}, 2022.

\bibitem[Mnih et~al.(2013)Mnih, Kavukcuoglu, Silver, Graves, Antonoglou, Wierstra, and Riedmiller]{mnih2013playing}
Volodymyr Mnih, Koray Kavukcuoglu, David Silver, Alex Graves, Ioannis Antonoglou, Daan Wierstra, and Martin Riedmiller.
\newblock Playing atari with deep reinforcement learning.
\newblock \emph{arXiv preprint arXiv:1312.5602}, 2013.

\bibitem[Morad et~al.(2023)Morad, Kortvelesy, Bettini, Liwicki, and Prorok]{morad2023popgym}
Steven Morad, Ryan Kortvelesy, Matteo Bettini, Stephan Liwicki, and Amanda Prorok.
\newblock Popgym: Benchmarking partially observable reinforcement learning.
\newblock \emph{arXiv preprint arXiv:2303.01859}, 2023.

\bibitem[Munkhdalai et~al.(2019)Munkhdalai, Sordoni, Wang, and Trischler]{munkhdalai2019metalearned}
Tsendsuren Munkhdalai, Alessandro Sordoni, Tong Wang, and Adam Trischler.
\newblock Metalearned neural memory.
\newblock \emph{Advances in Neural Information Processing Systems}, 32, 2019.

\bibitem[Munkhdalai et~al.(2024)Munkhdalai, Faruqui, and Gopal]{munkhdalai2024leave}
Tsendsuren Munkhdalai, Manaal Faruqui, and Siddharth Gopal.
\newblock Leave no context behind: Efficient infinite context transformers with infini-attention.
\newblock \emph{arXiv preprint arXiv:2404.07143}, 101, 2024.

\bibitem[Narayanan et~al.(2021)Narayanan, Shoeybi, Casper, LeGresley, Patwary, Korthikanti, Vainbrand, Kashinkunti, Bernauer, Catanzaro, et~al.]{narayanan2021efficient}
Deepak Narayanan, Mohammad Shoeybi, Jared Casper, Patrick LeGresley, Mostofa Patwary, Vijay Korthikanti, Dmitri Vainbrand, Prethvi Kashinkunti, Julie Bernauer, Bryan Catanzaro, et~al.
\newblock Efficient large-scale language model training on gpu clusters using megatron-lm.
\newblock In \emph{Proceedings of the International Conference for High Performance Computing, Networking, Storage and Analysis}, pp.\  1--15, 2021.

\bibitem[Ni et~al.(2021)Ni, Eysenbach, and Salakhutdinov]{ni2021recurrent}
Tianwei Ni, Benjamin Eysenbach, and Ruslan Salakhutdinov.
\newblock Recurrent model-free rl can be a strong baseline for many pomdps.
\newblock \emph{arXiv preprint arXiv:2110.05038}, 2021.

\bibitem[Ni et~al.(2023)Ni, Ma, Eysenbach, and Bacon]{ni2023transformers}
Tianwei Ni, Michel Ma, Benjamin Eysenbach, and Pierre-Luc Bacon.
\newblock When do transformers shine in rl? decoupling memory from credit assignment.
\newblock \emph{Advances in Neural Information Processing Systems}, 36:\penalty0 50429--50452, 2023.

\bibitem[O'Keefe \& Dostrovsky(1971)O'Keefe and Dostrovsky]{o1971hippocampus}
John O'Keefe and Jonathan Dostrovsky.
\newblock The hippocampus as a spatial map: preliminary evidence from unit activity in the freely-moving rat.
\newblock \emph{Brain research}, 1971.

\bibitem[Osband et~al.(2019)Osband, Doron, Hessel, Aslanides, Sezener, Saraiva, McKinney, Lattimore, Szepesvari, Singh, et~al.]{osband2019behaviour}
Ian Osband, Yotam Doron, Matteo Hessel, John Aslanides, Eren Sezener, Andre Saraiva, Katrina McKinney, Tor Lattimore, Csaba Szepesvari, Satinder Singh, et~al.
\newblock Behaviour suite for reinforcement learning.
\newblock \emph{arXiv preprint arXiv:1908.03568}, 2019.

\bibitem[Osband et~al.(2020)]{osband2020bsuite}
Ian Osband et~al.
\newblock Behaviour suite for reinforcement learning.
\newblock In \emph{International Conference on Learning Representations}, 2020.

\bibitem[Parisotto et~al.(2020)Parisotto, Song, Rae, Pascanu, Gulcehre, Jayakumar, Jaderberg, Kaufman, Clark, Noury, et~al.]{parisotto2020stabilizing}
Emilio Parisotto, Francis Song, Jack Rae, Razvan Pascanu, Caglar Gulcehre, Siddhant Jayakumar, Max Jaderberg, Raphael~Lopez Kaufman, Aidan Clark, Seb Noury, et~al.
\newblock Stabilizing transformers for reinforcement learning.
\newblock In \emph{International conference on machine learning}, pp.\  7487--7498. PMLR, 2020.

\bibitem[Pasukonis et~al.(2022)]{pasukonis2022memorymaze}
Vytas Pasukonis et~al.
\newblock Evaluating long-term memory in 3d mazes.
\newblock \emph{arXiv preprint arXiv:2210.13383}, 2022.

\bibitem[Peshkin et~al.(1999)Peshkin, Meuleau, and Kaelbling]{peshkin1999}
Leonid Peshkin, Nicolas Meuleau, and Leslie~Pack Kaelbling.
\newblock Learning policies with external memory.
\newblock In \emph{Proceedings of the Sixteenth International Conference on Machine Learning}, ICML '99, pp.\  307–314, San Francisco, CA, USA, 1999. Morgan Kaufmann Publishers Inc.
\newblock ISBN 1558606122.

\bibitem[Peysakhovich \& Lerer(2023)Peysakhovich and Lerer]{peysakhovich2023attention}
Alexander Peysakhovich and Adam Lerer.
\newblock Attention sorting combats recency bias in long context language models.
\newblock \emph{arXiv preprint arXiv:2310.01427}, 2023.

\bibitem[Plappert et~al.(2018)Plappert, Andrychowicz, Ray, McGrew, Baker, Powell, Schneider, Tobin, Chociej, Welinder, Kumar, and Zaremba]{fetch2018}
Matthias Plappert, Marcin Andrychowicz, Alex Ray, Bob McGrew, Bowen Baker, Glenn Powell, Jonas Schneider, Josh Tobin, Maciek Chociej, Peter Welinder, Vikash Kumar, and Wojciech Zaremba.
\newblock Multi-goal reinforcement learning: Challenging robotics environments and request for research, 2018.

\bibitem[Pleines et~al.(2023)]{pleines2023memory}
Marco Pleines et~al.
\newblock Memory gym: Partially observable challenges to memory-based agents.
\newblock In \emph{International Conference on Learning Representations}, 2023.

\bibitem[Pramanik et~al.(2023)Pramanik, Elelimy, Machado, and White]{pramanik2023agalite}
Subhojeet Pramanik, Esraa Elelimy, Marlos~C Machado, and Adam White.
\newblock Agalite: Approximate gated linear transformers for online reinforcement learning.
\newblock \emph{arXiv preprint arXiv:2310.15719}, 2023.

\bibitem[Puterman(2014)]{puterman2014markov}
Martin~L Puterman.
\newblock \emph{Markov decision processes: discrete stochastic dynamic programming}.
\newblock John Wiley \& Sons, 2014.

\bibitem[Rae et~al.(2019)Rae, Potapenko, Jayakumar, and Lillicrap]{rae2019compressive}
Jack~W Rae, Anna Potapenko, Siddhant~M Jayakumar, and Timothy~P Lillicrap.
\newblock Compressive transformers for long-range sequence modelling.
\newblock \emph{arXiv preprint arXiv:1911.05507}, 2019.

\bibitem[Riemer et~al.(2018{\natexlab{a}})Riemer, Cases, Ajemian, Liu, Rish, Tu, and Tesauro]{riemer2018learning1}
Matthew Riemer, Ignacio Cases, Robert Ajemian, Miao Liu, Irina Rish, Yuhai Tu, and Gerald Tesauro.
\newblock Learning to learn without forgetting by maximizing transfer and minimizing interference.
\newblock \emph{arXiv preprint arXiv:1810.11910}, 2018{\natexlab{a}}.

\bibitem[Riemer et~al.(2018{\natexlab{b}})Riemer, Liu, and Tesauro]{riemer2018learning2}
Matthew Riemer, Miao Liu, and Gerald Tesauro.
\newblock Learning abstract options.
\newblock \emph{Advances in neural information processing systems}, 31, 2018{\natexlab{b}}.

\bibitem[Riemer et~al.(2019)Riemer, Klinger, Bouneffouf, and Franceschini]{riemer2019scalable}
Matthew Riemer, Tim Klinger, Djallel Bouneffouf, and Michele Franceschini.
\newblock Scalable recollections for continual lifelong learning.
\newblock In \emph{Proceedings of the AAAI conference on artificial intelligence}, volume~33, pp.\  1352--1359, 2019.

\bibitem[Riemer et~al.(2020)Riemer, Cases, Rosenbaum, Liu, and Tesauro]{riemer2020role}
Matthew Riemer, Ignacio Cases, Clemens Rosenbaum, Miao Liu, and Gerald Tesauro.
\newblock On the role of weight sharing during deep option learning.
\newblock In \emph{Proceedings of the AAAI Conference on Artificial Intelligence}, volume~34, pp.\  5519--5526, 2020.

\bibitem[Riemer et~al.(2022)Riemer, Raparthy, Cases, Subbaraj, Puelma~Touzel, and Rish]{riemer2022continual}
Matthew Riemer, Sharath~Chandra Raparthy, Ignacio Cases, Gopeshh Subbaraj, Maximilian Puelma~Touzel, and Irina Rish.
\newblock Continual learning in environments with polynomial mixing times.
\newblock \emph{Advances in Neural Information Processing Systems}, 35:\penalty0 21961--21973, 2022.

\bibitem[Riemer et~al.(2024{\natexlab{a}})Riemer, Khetarpal, Rajendran, and Chandar]{riemer2024balancing}
Matthew Riemer, Khimya Khetarpal, Janarthanan Rajendran, and Sarath Chandar.
\newblock Balancing context length and mixing times for reinforcement learning at scale.
\newblock \emph{Advances in Neural Information Processing Systems}, 37:\penalty0 80268--80302, 2024{\natexlab{a}}.

\bibitem[Riemer et~al.(2024{\natexlab{b}})Riemer, Subbaraj, Berseth, and Rish]{riemer2024realtime}
Matthew Riemer, Gopeshh Subbaraj, Glen Berseth, and Irina Rish.
\newblock Realtime reinforcement learning: Towards rapid asynchronous deployment of large models.
\newblock In \emph{ICML Workshop: Aligning Reinforcement Learning Experimentalists and Theorists}, 2024{\natexlab{b}}.

\bibitem[Riemer et~al.(2025)Riemer, Subbaraj, Berseth, and Rish]{riemer2025enabling}
Matthew Riemer, Gopeshh Subbaraj, Glen Berseth, and Irina Rish.
\newblock Enabling realtime reinforcement learning at scale with staggered asynchronous inference.
\newblock In \emph{International Conference on Learning Representations}, 2025.

\bibitem[Robbins \& Monro(1951)Robbins and Monro]{robbins1951stochastic}
Herbert Robbins and Sutton Monro.
\newblock A stochastic approximation method.
\newblock \emph{The annals of mathematical statistics}, pp.\  400--407, 1951.

\bibitem[Rosenbaum et~al.(2018)Rosenbaum, Klinger, and Riemer]{rosenbaum2018routing}
Clemens Rosenbaum, Tim Klinger, and Matthew Riemer.
\newblock Routing networks: Adaptive selection of non-linear functions for multi-task learning.
\newblock In \emph{International Conference on Learning Representations}. International Conference on Learning Representations, ICLR, 2018.

\bibitem[Rosenbaum et~al.(2019)Rosenbaum, Cases, Riemer, and Klinger]{rosenbaum2019routing}
Clemens Rosenbaum, Ignacio Cases, Matthew Riemer, and Tim Klinger.
\newblock Routing networks and the challenges of modular and compositional computation.
\newblock \emph{arXiv preprint arXiv:1904.12774}, 2019.

\bibitem[Sajid et~al.(2021)Sajid, Ball, Parr, and Friston]{sajid2021active}
Noor Sajid, Philip~J Ball, Thomas Parr, and Karl~J Friston.
\newblock Active inference: demystified and compared.
\newblock \emph{Neural computation}, 33\penalty0 (3):\penalty0 674--712, 2021.

\bibitem[Samsami et~al.(2024)Samsami, Zholus, Rajendran, and Chandar]{samsami2024mastering}
Mohammad~Reza Samsami, Artem Zholus, Janarthanan Rajendran, and Sarath Chandar.
\newblock Mastering memory tasks with world models.
\newblock \emph{arXiv preprint arXiv:2403.04253}, 2024.

\bibitem[Schlag et~al.(2019)Schlag, Smolensky, Fernandez, Jojic, Schmidhuber, and Gao]{schlag2019enhancing}
Imanol Schlag, Paul Smolensky, Roland Fernandez, Nebojsa Jojic, J{\"u}rgen Schmidhuber, and Jianfeng Gao.
\newblock Enhancing the transformer with explicit relational encoding for math problem solving.
\newblock \emph{arXiv preprint arXiv:1910.06611}, 2019.

\bibitem[Schlag et~al.(2020)Schlag, Munkhdalai, and Schmidhuber]{schlag2020learning}
Imanol Schlag, Tsendsuren Munkhdalai, and J{\"u}rgen Schmidhuber.
\newblock Learning associative inference using fast weight memory.
\newblock \emph{arXiv preprint arXiv:2011.07831}, 2020.

\bibitem[Schmidgall et~al.(2024)Schmidgall, Harris, Essien, Olshvang, Rahman, Kim, Ziaei, Eshraghian, Abadir, and Chellappa]{schmidgall2024addressing}
Samuel Schmidgall, Carl Harris, Ime Essien, Daniel Olshvang, Tawsifur Rahman, Ji~Woong Kim, Rojin Ziaei, Jason Eshraghian, Peter Abadir, and Rama Chellappa.
\newblock Addressing cognitive bias in medical language models.
\newblock \emph{arXiv preprint arXiv:2402.08113}, 2024.

\bibitem[Shao et~al.(2024)Shao, Wang, Zhu, Xu, Song, Bi, Zhang, Zhang, Li, Wu, et~al.]{shao2024deepseekmath}
Zhihong Shao, Peiyi Wang, Qihao Zhu, Runxin Xu, Junxiao Song, Xiao Bi, Haowei Zhang, Mingchuan Zhang, YK~Li, Y~Wu, et~al.
\newblock Deepseekmath: Pushing the limits of mathematical reasoning in open language models.
\newblock \emph{arXiv preprint arXiv:2402.03300}, 2024.

\bibitem[Singh et~al.(1994)Singh, Jaakkola, and Jordan]{singh1994learning}
Satinder~P Singh, Tommi Jaakkola, and Michael~I Jordan.
\newblock Learning without state-estimation in partially observable markovian decision processes.
\newblock In \emph{Machine Learning Proceedings 1994}, pp.\  284--292. Elsevier, 1994.

\bibitem[Sukhbaatar et~al.(2015)Sukhbaatar, Szlam, Weston, and Fergus]{sukhbaatar2015memn2n}
Sainbayar Sukhbaatar, Arthur Szlam, Jason Weston, and Rob Fergus.
\newblock End-to-end memory networks.
\newblock In \emph{Advances in Neural Information Processing Systems}, volume~28, 2015.

\bibitem[Sutton \& Barto(2018)Sutton and Barto]{sutton2018reinforcement}
Richard~S Sutton and Andrew~G Barto.
\newblock \emph{Reinforcement learning: An introduction}.
\newblock MIT press, 2018.

\bibitem[Sutton et~al.(1999)Sutton, Precup, and Singh]{sutton1999between}
Richard~S Sutton, Doina Precup, and Satinder Singh.
\newblock Between mdps and semi-mdps: A framework for temporal abstraction in reinforcement learning.
\newblock \emph{Artificial intelligence}, 112\penalty0 (1-2):\penalty0 181--211, 1999.

\bibitem[Talvitie \& Singh(2011)Talvitie and Singh]{talvitie2011learning}
Erik Talvitie and Satinder Singh.
\newblock Learning to make predictions in partially observable environments without a generative model.
\newblock \emph{Journal of Artificial Intelligence Research}, 42:\penalty0 353--392, 2011.

\bibitem[Tasse et~al.(2024)Tasse, Jarvis, James, and Rosman]{tasse2022skill}
Geraud~Nangue Tasse, Devon Jarvis, Steven James, and Benjamin Rosman.
\newblock Skill machines: Temporal logic skill composition in reinforcement learning.
\newblock In \emph{International Conference on Learning Representations}, 2024.

\bibitem[Thomas(2011)]{thomas2011policy}
Philip~S Thomas.
\newblock Policy gradient coagent networks.
\newblock \emph{Advances in Neural Information Processing Systems}, 24, 2011.

\bibitem[Thomas \& Barto(2011)Thomas and Barto]{thomas2011conjugate}
Philip~S Thomas and Andrew~G Barto.
\newblock Conjugate markov decision processes.
\newblock In \emph{Proceedings of the 28th International Conference on Machine Learning (ICML-11)}, pp.\  137--144, 2011.

\bibitem[Tian \& Zhang(2024)Tian and Zhang]{tian2024selective}
Yuan Tian and Tianyi Zhang.
\newblock Selective prompt anchoring for code generation.
\newblock \emph{arXiv preprint arXiv:2408.09121}, 2024.

\bibitem[Tomar et~al.(2021)Tomar, Zhang, Calandra, Taylor, and Pineau]{tomar2021model}
Manan Tomar, Amy Zhang, Roberto Calandra, Matthew~E Taylor, and Joelle Pineau.
\newblock Model-invariant state abstractions for model-based reinforcement learning.
\newblock \emph{arXiv preprint arXiv:2102.09850}, 2021.

\bibitem[Toro~Icarte et~al.(2019)Toro~Icarte, Waldie, Klassen, Valenzano, Castro, and McIlraith]{toro2019learning}
Rodrigo Toro~Icarte, Ethan Waldie, Toryn Klassen, Rick Valenzano, Margarita Castro, and Sheila McIlraith.
\newblock Learning reward machines for partially observable reinforcement learning.
\newblock \emph{Advances in neural information processing systems}, 32, 2019.

\bibitem[Touzel et~al.(2024)Touzel, Memarian, Riemer, Mircea, Williams, Ahlstrand, Lehnert, Bhati, Dumas, and Rish]{touzel2024scalable}
Maximilian~Puelma Touzel, Amin Memarian, Matthew Riemer, Andrei Mircea, Andrew~Robert Williams, Elin Ahlstrand, Lucas Lehnert, Rupali Bhati, Guillaume Dumas, and Irina Rish.
\newblock Scalable approaches for a theory of many minds.
\newblock In \emph{Agentic Markets Workshop at ICML}, 2024.

\bibitem[Unger et~al.(2016)Unger, Ackerman, Chatham, Amso, and Badre]{unger2016working}
Kerstin Unger, Laura Ackerman, Christopher~H Chatham, Dima Amso, and David Badre.
\newblock Working memory gating mechanisms explain developmental change in rule-guided behavior.
\newblock \emph{Cognition}, 155:\penalty0 8--22, 2016.

\bibitem[Vaezipoor et~al.(2021)Vaezipoor, Li, Icarte, and Mcilraith]{vaezipoor2021ltl2action}
Pashootan Vaezipoor, Andrew~C Li, Rodrigo A~Toro Icarte, and Sheila~A Mcilraith.
\newblock Ltl2action: Generalizing ltl instructions for multi-task rl.
\newblock In \emph{International Conference on Machine Learning}, pp.\  10497--10508. PMLR, 2021.

\bibitem[Vaswani et~al.(2017)Vaswani, Shazeer, Parmar, Uszkoreit, Jones, Gomez, Kaiser, and Polosukhin]{vaswani2017attention}
Ashish Vaswani, Noam Shazeer, Niki Parmar, Jakob Uszkoreit, Llion Jones, Aidan~N Gomez, {\L}ukasz Kaiser, and Illia Polosukhin.
\newblock Attention is all you need.
\newblock \emph{Advances in neural information processing systems}, 30, 2017.

\bibitem[Wang et~al.(2024)Wang, Zhang, Li, Huang, Han, Ji, Kakade, Peng, and Ji]{wang2024eliminating}
Ziqi Wang, Hanlin Zhang, Xiner Li, Kuan-Hao Huang, Chi Han, Shuiwang Ji, Sham~M Kakade, Hao Peng, and Heng Ji.
\newblock Eliminating position bias of language models: A mechanistic approach.
\newblock \emph{arXiv preprint arXiv:2407.01100}, 2024.

\bibitem[Weston et~al.(2014)Weston, Chopra, and Bordes]{weston2014memory}
Jason Weston, Sumit Chopra, and Antoine Bordes.
\newblock Memory networks.
\newblock \emph{arXiv preprint arXiv:1410.3916}, 2014.

\bibitem[Williams(1992)]{williams1992simple}
Ronald~J Williams.
\newblock Simple statistical gradient-following algorithms for connectionist reinforcement learning.
\newblock \emph{Machine learning}, 8:\penalty0 229--256, 1992.

\bibitem[Xu et~al.(2024)Xu, Jain, and Kankanhalli]{xu2024hallucination}
Ziwei Xu, Sanjay Jain, and Mohan Kankanhalli.
\newblock Hallucination is inevitable: An innate limitation of large language models.
\newblock \emph{arXiv preprint arXiv:2401.11817}, 2024.

\bibitem[Yang \& Nguyen(2021)Yang and Nguyen]{yang2021recurrent}
Fan Yang and Phu Nguyen.
\newblock Recurrent world models facilitate policy evolution.
\newblock In \emph{Advances in Neural Information Processing Systems}, volume~34, pp.\  1009--1021, 2021.

\bibitem[Ye et~al.(2024)Ye, Dong, Xia, Sun, Zhu, Huang, and Wei]{ye2024differential}
Tianzhu Ye, Li~Dong, Yuqing Xia, Yutao Sun, Yi~Zhu, Gao Huang, and Furu Wei.
\newblock Differential transformer.
\newblock \emph{arXiv preprint arXiv:2410.05258}, 2024.

\bibitem[Zhang et~al.(2019)Zhang, Lipton, Pineda, Azizzadenesheli, Anandkumar, Itti, Pineau, and Furlanello]{zhang2019learning}
Amy Zhang, Zachary~C Lipton, Luis Pineda, Kamyar Azizzadenesheli, Anima Anandkumar, Laurent Itti, Joelle Pineau, and Tommaso Furlanello.
\newblock Learning causal state representations of partially observable environments.
\newblock \emph{arXiv preprint arXiv:1906.10437}, 2019.

\bibitem[Zhang et~al.(2020)Zhang, McAllister, Calandra, Gal, and Levine]{zhang2020learning}
Amy Zhang, Rowan McAllister, Roberto Calandra, Yarin Gal, and Sergey Levine.
\newblock Learning invariant representations for reinforcement learning without reconstruction.
\newblock \emph{arXiv preprint arXiv:2006.10742}, 2020.

\bibitem[Zini et~al.(2020)Zini, Pedramfar, Riemer, Moradipari, and Liu]{zini2020coagent}
Modjtaba~Shokrian Zini, Mohammad Pedramfar, Matthew Riemer, Ahmadreza Moradipari, and Miao Liu.
\newblock Coagent networks revisited.
\newblock \emph{arXiv preprint arXiv:2001.10474}, 2020.

\end{thebibliography}
\bibliographystyle{iclr2026_conference}

\newpage
\appendix

\section{Expanded Discussion of Related Work}
\label{app:related_work}

\paragraph{Comparison to Dynamic Memory Architectures.} Our work is related to past ambitious efforts to design neural architectures with differentiable end-to-end memory mechanisms such as the Neural Turing Machine (NTM) \citep{graves2014neural} and its predecessor the Differentiable Neural Computer (DNC) \citep{graves2016dnc}. In the NTM and DNC, a large external matrix of memory slots is maintained such that memory is persistent across long time spans and can hold many items. However, reading and writing over a large memory can be extremely expensive. The DNC improves practicality over the vanilla NTM but still has non-trivial overhead. Having differentiable reading and writing has high potential, but has made scaling up such models very difficult due to the training complexity and compute overhead. As such, these models have not gained much traction in the recent AI literature. That said, the NTM and DNC have an advantage over Adaptive Stacking in terms of expressivity, as they are not limited to modelling events that are causally related to a bounded number of experiences. On the other hand, Adaptive Stacking is significantly easier and more efficient to train by exploiting this prior. Adaptive Stacking is also complementary to approaches like Memory Networks \citep{weston2014memory,sukhbaatar2015memn2n} that have no explicit mechanism of maintaining a bounded memory despite computational and memory costs that scale with its size. Meanwhile, Adaptive Stacking focusses on memories that are directly influential in the agent's value function rather than associative memory approaches that just attempt to compress inputs so that they can be reconstructed \citep{hopfield1982neural,kanerva1988sparse,rae2019compressive,munkhdalai2019metalearned,schlag2019enhancing,schlag2020learning,ge2023context,munkhdalai2024leave}. This leads to significantly more potential for compression of representations with Adaptive Stacking, for example, when observations contain spurious features.

\paragraph{Connection to Long Context Modelling with LLMs.} The scaling of Transformer based models to processing long sequences has also been extensively studied in the context of LLMs. Much of this literature is motivated by the quadratic memory and time complexity of the self-attention mechanism. As such, existing solutions include techniques like sliding window attention, dilated sliding window, and sparse attention \citep{child2019generating,beltagy2020longformer} which have similarities with Frame Stacking in the RL context. However, recent long context benchmarks show that LLMs still underperform when the input context has a length longer or even similar to those seen during training \citep{kuratov2024babilong,hsieh2024ruler}. For example, it is common for LLMs to display recency bias \citep{liu2024lost,wang2024eliminating,schmidgall2024addressing,guo2025serial} in which LLMs tend to prioritize information found towards the end of a context while neglecting important details presented in the beginning and the middle parts of the context. An Adaptive Stack can help alleviate this issue by truncating the context to only include important parts. Another significant challenge is the impact of distractors as highlighted in \citep{peysakhovich2023attention}. Indeed, the performance of long context LLMs generally decreases as the context length increases with the addition of distractors \citep{koppenaal1990examination,li2024retrieval,cuconasu2024power}. Adaptive Stacking can help resolve this issue as it is clear that an overabundance of information, even if irrelevant, can hinder the model’s ability to identify and utilize the most pertinent parts of the context effectively. Finally, long-context modelling in LLMs also suffers because of the phenomenon of attention dilution \citep{li2024retrieval,holla2024large,xu2024hallucination,tian2024selective} which occurs due to the softmax normalization in the attention mechanism. The idea is that since attention weights must sum to 1, the presence of irrelevant information can result in everything receiving a small but non-negligible amount of attention. This dilution of focus can hinder the model’s ability to concentrate on the most crucial information. The introduction of Adaptive Stacking directly addresses this issue in the context of online RL. However, in the LLM literature it is common to turn to retrieval-augmented-generation (RAG) frameworks that perform a focussed search over an offline set of provided documents. In RAG a retrieval model first identifies and retrieves the relevant context from a large corpus, which is then passed to the generative model for text generation. To avoid a decoupling of the models used for retrieval and generation, recent papers have considered how to combine attention scores between models \citep{li2024unveiling,ye2024differential,chaudhury2025epman}. Adaptive Stacking achieves a similar capability by learning two policies that work towards a common end-to-end reward rather than following heuristics regarding attention alignment.

\paragraph{Different Challenges of Small Agents in Big Worlds.} While we motivated our work in terms of the challenge of a small agent functioning in a world that is much bigger than itself \citep{javed2024big}, it is important to note that there are many aspects to this problem and that we only attempt to address one in this paper. Our paper focusses on the challenge of a bounded agent trying to model the state of the world based on a highly non-Markovian history representation. However, this challenge is also related to multiple other aspects of the problem. A key aspect of the small agent, big world problem is the existence of many other agents of at least equal cognitive capacity of the learning agent. As such, our proposal of an Adaptive Stack can be seen as an attempt to model their changing behaviour over time, reaching what is referred to as an Active Equilibria \citep{foerster2017learning,kim2021policy,kim2022influencing,kim2022game}. Moreover, it is possible that our abstraction of the agent history discussed in Section \ref{sec:state_abstraction} can also be considered a form of abstraction over the policies of groups of other agents \citep{memarian2022summarizing,touzel2024scalable}. Finally, our proposal for an Adaptive Stack can be seen as addressing issues related to real-time continual learning in big worlds. For example, the ability to reduce the stack size decreases the problem of delay experienced by the agent in the environment \citep{anokhinhandling} while increasing its action throughput \citep{riemer2024realtime,riemer2025enabling}. 

\paragraph{Experience Storage for Replay.} The study of limited experience replay buffers is also relevant to minimizing the memory footprint of computationally limited continual learning agents. Indeed, in this area of study it is also common to use a FIFO buffer of past experiences \citep{mnih2013playing} that can be seen as analogous to Frame Stacking. That said, outside of RL, reservoir sampling may be more popular \citep{riemer2018learning1} or even compressed representations of buffers \citep{riemer2019scalable,caccia2020online}, which wouldn't make sense in the setting of our paper. It is also important to note that unlike the memory stack we consider in this work, the size of replay buffers can be totally decoupled from the computation per step involved in continual learning \citep{abbes2025revisiting}. Indeed, our paper focusses on managing the stack size used for policy inference for both action selection and learning. It can be considered an orthogonal research direction to replay buffers that is complementary to buffer management strategies in memory bounded continual learning settings.  

\paragraph{Comparison to Models with Dynamic Architectures.} Adaptive Stacking can be seen as a particular architecture within the very general class of hierarchical RL models called co-agent networks \citep{thomas2011conjugate,thomas2011policy,kostas2020asynchronous,zini2020coagent}. As such, it is also related to the most popular family of co-agent networks referred to as options \citep{sutton1999between} and approaches for learning them with neural networks \citep{bacon2017option,riemer2018learning2,riemer2020role}. Every memory configuration of the model can be seen as parametrizing a combinatorial space of options imbued with their own state abstractions as in \citep{abdulhai2022context}. As a result, there is a theoretical connection with RL approaches for learning modular architectures \citep{rosenbaum2018routing,rosenbaum2019routing,cases2019recursive,changautomatically} and those that build contextual value functions \citep{dognin2025contextual}. An advantage of Adaptive Stacking is that the memory mechanism allows for modularity based on the context while allowing for increased weight sharing between modules. 

\section{Adaptive Stacking Algorithm}
\label{apdx:algos}

\SetAlgoNoLine
\begin{algorithm}[h!]
\DontPrintSemicolon
    \SetKwInOut{Initialise}{Initialise}
    \SetKwInOut{Input}{Input}
 \Input{ discounting $\gamma=0.99$, learning rate $\alpha=0.01$,  exploration $\epsilon=0$, memory length $k$ \;}
 \Initialise{ value function $Q(s,\langle a,i \rangle)=\rmax$\;}
 \ForEach{episode}{
    Get initial observation $x_0\in\mathcal{X}$\; 
    Initialise observation stack $s_0 \gets [x_0]_k$ ~ \tcp{ e.g. $s_0 = [x_0,x_0]$ if $k=2$ } 
   \ForEach{timestep $t=0, 1, ..., T$ while episode is not done}{
        $\langle a_t,i_t \rangle \gets 
        \begin{cases}
        \argmax_{\langle a,i \rangle} Q(s_t, \langle a,i \rangle) & \mbox{w.p.  } 1 - \varepsilon  \\
        \text{a random action} & \mbox{w.p. } \varepsilon 
        \end{cases}$ \;
        Execute $a_t$, get reward $r_{t+1}$ and next observation $x_{t+1}$ \;
        Remove observation from stack $s_{t+1} \gets pop(s_t,i_t)$ \;
        Push observation into stack $s_{t+1} \gets push(s_{t+1},x_{t+1}) $ \;
        $Q(s_t, \langle a_t,i_t \rangle) \xleftarrow{\alpha} \left( r_{t+1} + \gamma \max_{\langle a,i \rangle} Q(s_{t+1}, \langle a,i \rangle) \right) - Q(s_t, \langle a_t,i_t \rangle)$\;
     }
 }
 \caption{Q-Learning: Adaptive Stacking}
\label{algo:qlearn-adaptivestacking}
\end{algorithm}

\begin{figure*}[h!]
    \centering
    \begin{subfigure}[b]{0.45\textwidth}
        \centering
        \includegraphics[width=0.9\linewidth]{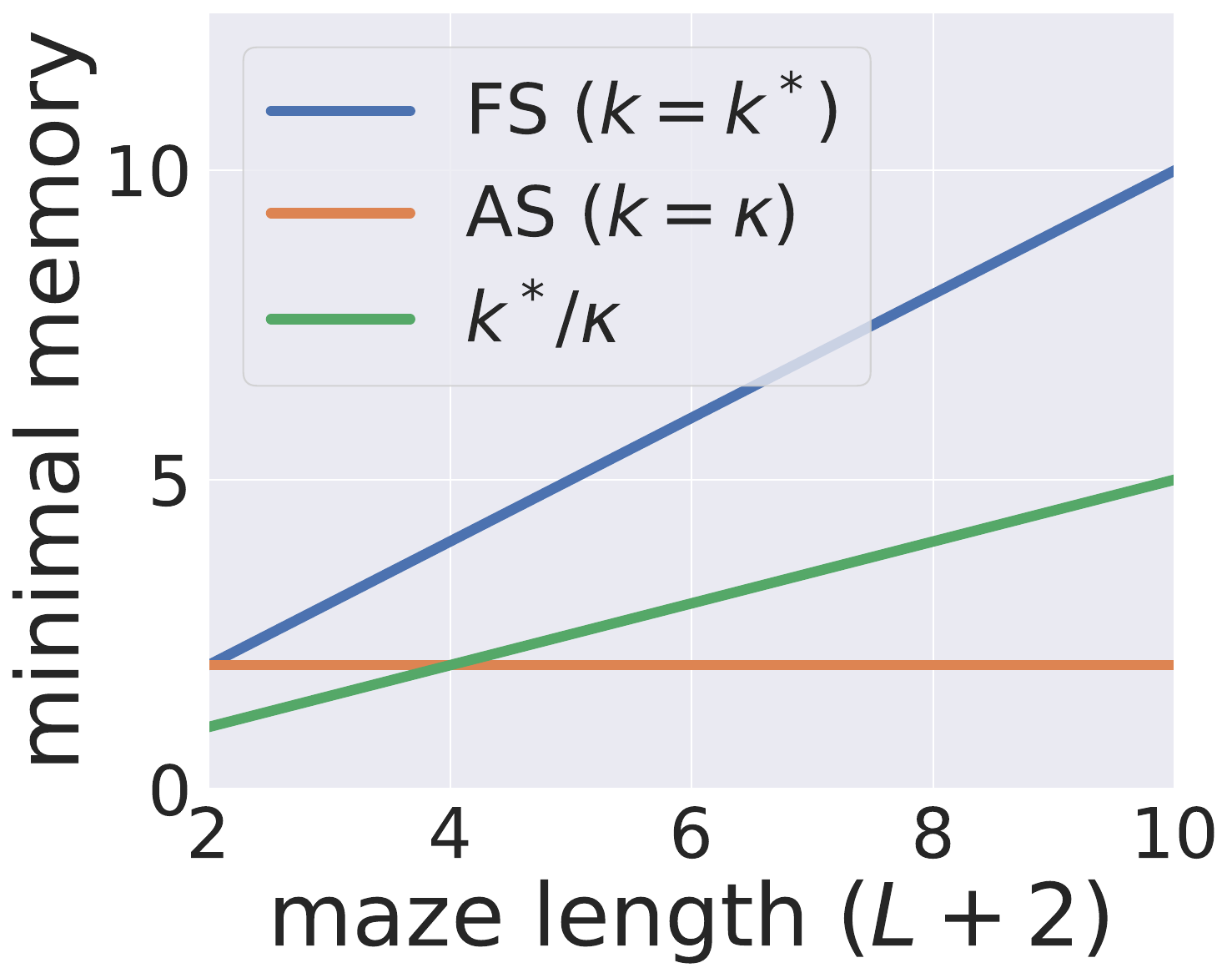}
        \caption{Minimal memory}
        \label{fig:k_kappa}
    \end{subfigure}%
    ~
    \begin{subfigure}[b]{0.45\textwidth}
        \centering
        \includegraphics[width=0.9\linewidth]{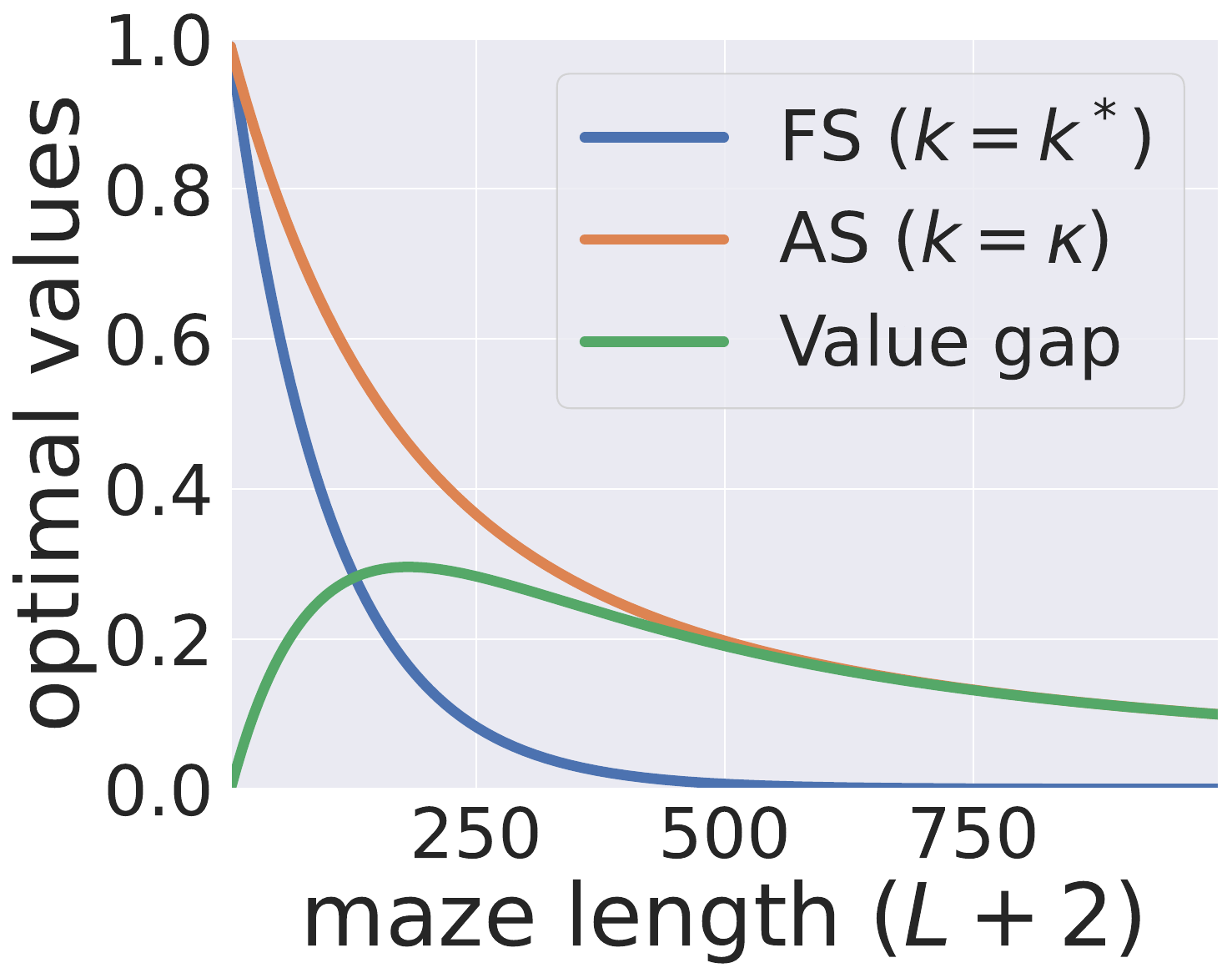}
        \caption{Value gap ($\gamma=0.99$)
        }
        \label{fig:optimal_values}
    \end{subfigure}
    \caption{
    \looseness=-1
    \textbf{TMaze} Memory and value scaling in the \textbf{TMaze} environments.
    (a) Minimal memory stack required when using Frame Stacking vs Adaptive Stacking in either task. 
    (b) The value gap between the optimal Frame Stacking values ($V^*$) and Adaptive Stacking values ($V_k^{\pi_k^*}$) when an agent has observed $\goala$ then $\corridor$ under their respective optimal policies.
    }
    \label{fig:tmazes_theory}
\end{figure*}

% \begin{figure}[h!]
%     \centering
%     \begin{subfigure}[b]{\textwidth}
%         \centering
%         \includegraphics[width=0.9\linewidth]{images/illustrations/framestack.pdf}
%         \caption{\textbf{Frame Stacking}. At every time step, the agent pops the last observation in the memory stack in order free up space to push the new observation into the stack.}
%         \label{fig:framestack_illustration}
%     \end{subfigure}
%     \\
%     \begin{subfigure}[b]{\textwidth}
%         \centering
%         \includegraphics[width=0.9\linewidth]{images/illustrations/adaptivestack.pdf}
%         \caption{\textbf{Adaptive Stacking}. At every time step, the agent chooses which observation in the memory stack to pop in order to free up space to push the new observation into the stack.}
%         \label{fig:adaptivestack_illustration}
%     \end{subfigure}
%     \caption{Illustration of Frame Stacking and Adaptive Stacking with $k=4$ in the passive-TMaze with $k^*=L+2=5$. Frame Stacking eventually forgets the goal trigger when the context length is not large enough ($k<k^*$), while Adaptive Stacking is able to remember the goal trigger by choosing to forget irrelevant grey observations. In this figure, $\none$ corresponds to an empty memory slot.}
%     \label{fig:memorystrategies_illustration}
% \end{figure}

\section{Theoretical Results}
\label{apdx:theory}

\setcounter{proposition}{0}
\setcounter{theorem}{0}

We begin by restating the key definitions and then give precise statements and proofs for all the theoretical results in the main paper.

\subsection{Preliminaries and Notation}

Let the underlying non-Markovian environment be a $k^*$-order MDP over observations $x_t\in\mathcal X$, with full-history value
\[
V^*(x_{t:t-k^*})
= \max_{\pi} \mathbb{E}\Bigl[\sum_{h=0}^\infty \gamma^h\,r_{t+h+1}\;\Big|\;x_{t:t-k^*},\pi \Bigr].
\]

Under Adaptive Stacking (AS) with memory size \(k\), the agent memory state is
\(
s_t=[x_{i_1},\dots,x_{i_k}]
\)
and its value under policy \(\pi_k\) is
\begin{equation}
\label{pomdp_vs_mdp_values}
V_k^{\pi_k}(s_t)
=\mathbb{E}\Bigl[\sum_{h=0}^\infty\gamma^h\,r_{t+h+1}\;\Big|\;s_t,\pi_k\Bigr]
=\sum_{x_{t:t-k^*}}
\Pr\bigl(x_{t:t-k^*}\mid s_t,\pi_k\bigr)\,
V^{\pi_k}(x_{t:t-k^*}),
\end{equation}
where \(V^{\pi_k}(x_{t:t-k^*})\) is the full-history value of \(\pi_k\) using Frame Stacking (FS).

We define
\[
\kappa
\;=\;\min\bigl\{k\in\mathbb N:\exists\,\pi_k^*
\text{ with }V^{\pi_k^*}(x_{t:t-k^*})=V^*(x_{t:t-k^*})
\;\forall\,t\bigr\}.
\]

\subsection{Proof of Proposition~\ref{prop:optimality}}

\begin{proposition}
\label{optimality_appendix}
If \( k = k^* \), then there exists a policy \( \pi_k^* \) such that \( V^{\pi_k^*}(s_t) = V^*(x_{t:t-k^*}) \) for all \( t \).
\end{proposition}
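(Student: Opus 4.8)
The plan is to exhibit an explicit $\pi_k^*$ by restricting the memory dynamics of Adaptive Stacking so that they exactly reproduce Frame Stacking, and then invoke the classical optimality theory for $k^*$-order MDPs. The point is that when $k=k^*$ there is enough room in the stack to never lose information, so no genuine abstraction occurs and the problem collapses back to the original (sufficient-statistic) setting.

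First I would fix the memory policy to the FIFO rule that always discards the oldest observation currently in the stack — this is precisely the memory action Frame Stacking takes. With $k=k^*$, the update in Equation~\eqref{eqn:memory-update} then reduces to the sliding-window update, so for every $t$ the stack contains exactly the last $k^*$ observations, i.e.\ $s_t = x_{t:t-k^*}$ (with the usual padding convention for $t<k^*$, which is itself a deterministic function of the observed prefix and hence harmless). In particular the correspondence $x_{t:t-k^*}\leftrightarrow s_t$ is a bijection, so $\Pr(x_{t:t-k^*}\mid s_t,\pi_k)$ is a point mass at the true history for any policy $\pi_k$ consistent with this memory rule, and Equation~\eqref{pomdp_vs_mdp_values} collapses to $V_k^{\pi_k}(s_t)=V^{\pi_k}(x_{t:t-k^*})$.

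Second, since the environment is $k^*$-order Markovian, the kernel $\Pr(x_{t+1},r_{t+1}\mid x_{t:t-k^*},a_t)$ depends only on $s_t$ and $a_t$ once the FIFO memory rule is fixed; hence the induced process $\mathcal{M}_{k^*}$, restricted to FIFO memory, is a genuine first-order MDP on the state space of $k^*$-histories — this is the textbook reduction of a $k$-order MDP to an MDP \citep{puterman2014markov}. Classical MDP theory then gives an optimal stationary policy $\pi_{\mathrm{env}}^*(a_t\mid s_t)$ for this MDP whose value equals $V^*(x_{t:t-k^*})$. Defining $\pi_k^*(a_t,i_t\mid s_t)=\pi_{\mathrm{env}}^*(a_t\mid s_t)\cdot\mathbf{1}[\,i_t=\text{FIFO index}\,]$ and chaining the two steps yields $V^{\pi_k^*}(s_t)=V^{\pi_{\mathrm{env}}^*}(x_{t:t-k^*})=V^*(x_{t:t-k^*})$ for all $t$, which is the claim.

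The routine calculations are negligible here; the only mildly delicate point is the bookkeeping around the pre-fill phase $t<k^*$ — making the bijection $s_t\leftrightarrow x_{t:t-k^*}$ precise given the initialization $s_0=[x_0]_k$, and checking that the padded slots do not spuriously enlarge the state space in a way that breaks the reduction. Everything else follows directly from the $k^*$-order Markov property and standard existence of optimal policies for finite (or suitably well-behaved) MDPs.
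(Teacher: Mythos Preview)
Your proposal is correct and follows essentially the same approach as the paper: fix the memory actions to the FIFO rule so that $s_t=x_{t:t-k^*}$, which makes $\Pr(x_{t:t-k^*}\mid s_t,\pi_k^*)$ a point mass and lets the agent follow an optimal full-history policy, yielding $V^{\pi_k^*}(s_t)=V^*(x_{t:t-k^*})$. You are simply more explicit than the paper about the reduction to a first-order MDP and the initialization bookkeeping, but the underlying argument is identical.
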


\begin{proof}
When \(k=k^*\), the Adaptive Stacking agent can simply retain the last \(k^*\) observations in order, equivalently to Frame Stacking.
Thus, no important information is discarded, and the agent can follow an optimal full-history policy \(\pi_k^*\) on \(s_t=[x_{t},x_{t-1},\dots,x_{t-k^*}]\). Hence,
% $Pr(x_{t:t-k^*} | s_t, \pi_k^*) = \delta(x_{t:t-k^*} = s_t)$,
\[
\Pr\bigl(x_{t:t-k^*}\mid s_t,\pi_k^*\bigr)
=\begin{cases}1&\text{if }s_t=x_{t:t-k^*},\\0&\text{otherwise,}\end{cases}
\]
implying
$V_k^{\pi_k^*}(s_t) = V^{\pi_k^*}(s_t) = V^{\pi_k^*}(x_{t:t-k^*}) = V^*(x_{t:t-k^*})$.
\end{proof}

\subsection{Proof for Theorem~\ref{thm:unbiased_convergence} } \label{app:thm1_proof}

Theorem~\ref{thm:unbiased_convergence} in the main paper stated that traditional RL algorithms that converge under Frame Stacking with $k \geq k^*$ also converge under Adaptive Stacking, provided they use unbiased value estimates to learn optimal policies. We first formally state this unbiased convergence assumption:

\begin{assumption}[Unbiased Convergence]
\label{ass:unbiased_convergence_appendix}
Let $\mathbb{A}$ be an RL algorithm that converges under Frame Stacking with $k \geq k^*$. Assume that for any memory length $k \in \mathbb{N}$, $\mathbb{A}$ also converges to a $k$-order policy
\[
\pi_{k}^*(x_{t:t-k}) \;=\;\arg\max_{\pi_k}\;\hat{V}^{\pi_k}(x_{t:t-k})
\quad\forall\,t,
\]
where $\hat{V}^{\pi_k}(x_{t:t-k})$ is an unbiased estimator of the true return:
\[
\mathbb{E}\left[\hat{V}^{\pi_k}(x_{t:t-k})\right] = V^{\pi_k}(x_{t:t-k}).
\]
\end{assumption}

We now restate the result more formally and provide a detailed proof.

\begin{theorem}
\label{thm:unbiased_convergence_appendix}
Let $\mathbb{A}$ be an RL algorithm that satisfies Assumption~\ref{ass:unbiased_convergence_appendix}. Then for any $k \geq \kappa$, $\mathbb{A}$ converges to an optimal Adaptive Stacking policy $\pi_k^*$ such that:
\[
V^{\pi_k^*}(x_{t:t-k^*}) = \max_{\pi} V^{\pi}(x_{t:t-k^*})
\quad\forall\,t.
\]
\end{theorem}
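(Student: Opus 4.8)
The plan is to read Adaptive Stacking with memory $k$ as ordinary reinforcement learning on the induced POMDP $\mathcal{M}_k=\langle\mathcal{M},\mathcal{S},\mathcal{I},u\rangle$, whose ``observation'' at step $t$ is the memory state $s_t$ and whose action is the pair $(a_t,i_t)$, and then to invoke Assumption~\ref{ass:unbiased_convergence_appendix} on this derived process. First I would note that a Monte-Carlo return collected from a memory state $s_t$ under an AS policy $\pi_k$ is, by \eqref{pomdp_vs_mdp_values}, an unbiased estimator of $V_k^{\pi_k}(s_t)$, so the hypothesis that $\mathbb{A}$ forms unbiased value estimates applies verbatim to $\mathcal{M}_k$ (it is precisely the ``memory length one'' instance of the assumption, taken over the observation space $\mathcal{S}$). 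Hence $\mathbb{A}$, run under Adaptive Stacking with memory $k$, converges to a policy $\pi_k^*$ that maximises the expected return $J(\pi_k)\coloneqq\E\!\left[\sum_{t=0}^{\infty}\gamma^t r_{t+1}\right]$ over the (sufficiently expressive) class of AS policies of memory $k$ -- in the continuing case with the appropriate discounted or average-reward surrogate, as covered by the standard conditions already invoked.

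The second step is \emph{realizability}: for $k\ge\kappa$ the AS-memory-$k$ policy class already contains a globally optimal policy. By the definition of $\kappa$ there is a policy $\pi_\kappa^*$ with $V^{\pi_\kappa^*}(x_{t:t-k^*})=V^*(x_{t:t-k^*})$ for all $t$, and I would lift it to memory $k$ by dedicating $\kappa$ slots to the observations it would retain and padding the remaining $k-\kappa$ slots with the most recent (and policy-irrelevant) observations, always popping from that padding region when room is needed -- the same construction used in the proof of Proposition~\ref{prop:optimality}. This preserves every environment action, hence the return, so the lifted policy $\tilde\pi_\kappa$ achieves $J(\tilde\pi_\kappa)=J^*\coloneqq\max_\pi J(\pi)$. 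Combined with the first step, the converged policy satisfies $J(\pi_k^*)=J^*$.

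It remains to promote global optimality of the return to pointwise optimality of the value function, carried out on the full-history values. Fix $t$ and let $h=x_{t:t-k^*}$ be a history occurring with positive probability under $\pi_k^*$. Consider the hybrid policy $\pi'$ that follows $\pi_k^*$ until it first reaches $h$ and thereafter follows a full-history optimal policy; a performance-difference computation gives $J(\pi')-J(\pi_k^*)=c\,\bigl(V^*(h)-V^{\pi_k^*}(h)\bigr)$ for some $c>0$. Since $J(\pi_k^*)=J^*$ is globally maximal, this forces $V^{\pi_k^*}(h)=V^*(h)=\max_\pi V^\pi(h)$, and as $t$ and $h$ were arbitrary (the standard exploration condition ensuring the histories of interest do occur) we obtain $V^{\pi_k^*}(x_{t:t-k^*})=\max_\pi V^\pi(x_{t:t-k^*})$ for all $t$.

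\textbf{Main obstacle.} The delicate point is the interface in the first two steps: making it precise that ``$\mathbb{A}$ uses unbiased value estimates'' drives the learned policy to the \emph{global} maximiser of $J$ over the AS class rather than to a mere stationary point, and that this class is rich enough both to represent the lifted $\pi_\kappa^*$ and to make the $\argmax$ in Assumption~\ref{ass:unbiased_convergence_appendix} attainable -- which is exactly what the ``sufficiently expressive policy class'' and the standard exploration/parameterisation conditions are there to guarantee. One also has to be a little careful in the last step that $V^{\pi_k^*}(h)$ is read as the return from history $h$ averaged over the memory states $\pi_k^*$ can be in at $h$, since each such continuation is still one of the strategies $V^*(h)$ maximises over. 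The remaining work -- the lifting construction and instantiating $J$ for the episodic versus continuing average-reward settings -- is routine; in particular the induced process $\mathcal{M}_k$ need not be Markovian in $s_t$ for the Monte-Carlo argument to go through, which is why the reduction to Assumption~\ref{ass:unbiased_convergence_appendix} works at all.
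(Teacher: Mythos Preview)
Your proposal is correct and follows essentially the same route as the paper: view Adaptive Stacking as RL on the induced process $\mathcal{M}_k$, observe that Monte-Carlo returns are unbiased for $V_k^{\pi_k}(s_t)$ via \eqref{pomdp_vs_mdp_values}, invoke the assumption to get convergence to a return-maximising AS policy, and use the definition of $\kappa$ for realisability. The paper's proof stops after the realisability step and simply asserts the pointwise conclusion $V^{\pi_k^*}(x_{t:t-k^*})=V^*(x_{t:t-k^*})$; your added performance-difference argument (step 3) and your explicit lifting of $\pi_\kappa^*$ to memory $k$ fill in details the paper leaves implicit, so your write-up is if anything more careful than the original.
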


\begin{proof}
Adaptive Stacking with memory size $k$ induces a new decision process $\mathcal{M}_k$ in which the agent new observation is the memory state $s_t \in \mathcal{S}_k$, a stack of $k$ underlying environment observations. This process can be treated as a POMDP, where the true underlying state is the latent history $x_{t:t-k^*}$.

By definition of $\kappa$, there exists at least one $k$-order policy $\pi_k$ with $k \geq \kappa$ that achieves the optimal value on all underlying latent histories:
\[
V^{\pi_k}(x_{t:t-k^*}) = V^*(x_{t:t-k^*}) \quad \text{for all } t.
\]
Since $\pi_k$ acts on $s_t \in \mathcal{S}_k$ and implicitly induces a distribution over latent histories $x_{t:t-k^*}$, its value in the induced process is as shown in Equation~\ref{pomdp_vs_mdp_values}. By construction of $\pi_k$, we have $V^{\pi_k}(x_{t:t-k^*}) = V^*(x_{t:t-k^*})$, so:
\[
V_k^{\pi_k}(s_t) = \sum_{x_{t:t-k^*}} \Pr(x_{t:t-k^*} \mid s_t, \pi_k)\, V^*(x_{t:t-k^*}) = \mathbb{E}_{x_{t:t-k^*}}[V^*(x_{t:t-k^*}) \mid s_t].
\]
This implies that the policy $\pi_k$ achieves the best possible value in the induced process $\mathcal{M}_k$ given that it is optimal over latent histories.

Now, because $\mathbb{A}$ uses unbiased estimates of $V^{\pi_k}(x_{t:t-k})$ and converges to the policy that maximizes expected return under such estimates (by Assumption~\ref{ass:unbiased_convergence_appendix}), and since $k \geq \kappa$ implies such a policy exists, it follows that $\mathbb{A}$ converges to $\pi_k^*$ that satisfies:
\[
V^{\pi_k^*}(x_{t:t-k^*}) = V^*(x_{t:t-k^*}) \quad \forall t.
\]
\end{proof}

The critical observation from Theorem~\ref{thm:unbiased_convergence} is that convergence to an optimal policy is not limited to $k \geq k^*$, but to any $k \geq \kappa$, where $\kappa$ is the minimal sufficient memory required to disambiguate value-relevant latent histories. The key assumption for this result is that $\mathbb{A}$ optimizes return estimates that are unbiased \emph{with respect to the true value under the full history}, for example as achieved by Monte Carlo policy gradient methods like REINFORCE~\citep{sutton2018reinforcement}.

We emphasize that this result does not extend to TD-based methods (which use biased targets), and is handled separately in Appendix~\ref{app:thm3}.

\subsection{Convergence of Episodic Monte Carlo Learning for Optimal $\kappa$-Memory Policies} \label{app:episodic_proof}

\begin{corollary}[Convergence of Episodic Monte Carlo Policy Gradient]
\label{cor:episodic_linear_mc}
Consider the episodic setting with finite horizon \(T<\infty\). Let policies be softmax-linear:
\[
\pi_\theta(b\mid s) \;=\; \frac{\exp(\theta^\top \phi(s,b))}{\sum_{b'} \exp(\theta^\top \phi(s,b'))},
\]
where \(\phi(s,b)\in\mathbb{R}^d\) are bounded features and parameter vector \(\theta\) is constrained to a compact convex set \(\Theta\). Let the learning algorithm \(\mathbb{A}\) be Monte Carlo policy-gradient (REINFORCE) with entropy regularization (weight \(\beta>0\)) and projected gradient updates (projection onto \(\Theta\)). Assume the entropy coefficient \(\beta\) may be annealed to zero slowly so as to ensure persistent exploration during learning.

Then, under Assumption~\ref{ass:unbiased_convergence_appendix}, for any memory size \(k \ge \kappa\) Adaptive Stacking trained with \(\mathbb{A}\) converges to an optimal \(\kappa\)-sufficient Adaptive Stacking policy \(\pi_k^*\) (i.e. it attains the full-history optimal value \(V^*\) on all latent histories).
\end{corollary}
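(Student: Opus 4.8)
The plan is to reduce Corollary~\ref{cor:episodic_linear_mc} to Theorem~\ref{thm:unbiased_convergence_appendix} by checking that the specified algorithm $\mathbb{A}$ — projected, entropy-regularized REINFORCE over a compact convex parameter set $\Theta$ with softmax-linear policies — meets the two requirements bundled into Assumption~\ref{ass:unbiased_convergence_appendix}: that it uses unbiased value estimates, and that it converges under Frame Stacking with $k \ge k^*$. Once both hold, Theorem~\ref{thm:unbiased_convergence_appendix} applies to this $\mathbb{A}$ and gives, for every $k \ge \kappa$, convergence to a policy $\pi_k^*$ with $V^{\pi_k^*}(x_{t:t-k^*}) = V^*(x_{t:t-k^*})$ for all $t$, which is exactly the claimed conclusion.

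First I would establish unbiasedness. Since the horizon $T$ is finite, every rollout terminates and the Monte Carlo return $\hat G_t = \sum_{h=0}^{T-t-1}\gamma^h r_{t+h+1}$ is a bounded random variable with $\mathbb{E}[\hat G_t \mid x_{t:t-k}] = V^{\pi_k}(x_{t:t-k})$ for any memory length $k$, directly from the definition of $V^{\pi_k}$ as the expected episodic discounted return. Hence the score-function estimator $\sum_t \nabla_\theta \log \pi_\theta(b_t\mid s_t)\,\hat G_t$ is an unbiased estimate of $\nabla_\theta J(\theta)$, with $J$ the expected episodic return, and adding the closed-form gradient of the entropy bonus gives an unbiased estimate of $\nabla_\theta J_\beta(\theta)$ for the regularized objective $J_\beta = J + \beta\,\mathcal{H}(\pi_\theta)$. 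This is the part of the corollary that is genuinely proved here, and it is also why the statement restricts to the episodic case: unbiased full rollouts are not available in the continuing setting (cf. the discussion of bootstrapping in the main text).

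Next I would establish convergence under Frame Stacking with $k \ge k^*$. There the stacked process is a genuine order-one MDP over $x_{t:t-k}$, so the argument is a standard projected stochastic-approximation one: bounded features and bounded returns make $\nabla_\theta J_\beta$ uniformly bounded and Lipschitz; the Robbins--Monro step-size schedule together with projection onto the compact convex set $\Theta$ puts the iterates $\theta_n$ in the regime of the ODE / Kushner--Clark method, so $\theta_n$ converges almost surely to the KKT set of $\max_{\theta\in\Theta} J_\beta$; the entropy weight $\beta>0$ keeps $\pi_\theta$ uniformly stochastic, which yields persistent exploration (every memory state visited infinitely often) and, via the known characterization of the softmax policy-gradient landscape, forces the interior maximizer of $J_\beta$ to be a global optimum of the softmax class; finally the slow annealing $\beta\downarrow 0$ makes the tracked sequence of regularized optima converge to the true optimal policy, provided the linear-softmax class on $\Theta$ is expressive enough to represent it (the ``sufficiently expressive policy class'' caveat inherited from Theorem~\ref{thm:unbiased_convergence}). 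Feeding the resulting algorithm into Theorem~\ref{thm:unbiased_convergence_appendix} closes the argument.

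The main obstacle is the last step above: upgrading ``converges to KKT points of $J_\beta$'' to ``converges to the global optimum.'' Policy-gradient objectives are non-concave in $\theta$, so this cannot come from stochastic approximation alone; it has to be extracted from the special structure of the softmax parameterization together with the regularizer and the annealing schedule, and one must additionally verify that the compact constraint set $\Theta$ does not excise the optimum. A secondary subtlety is the two-time-scale bookkeeping — the gradient step-size decay versus the $\beta$-annealing rate — which must be coordinated so that exploration persists long enough for the value estimates to be accurate at every memory state while the regularization still vanishes in the limit.
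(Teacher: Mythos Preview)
Your high-level plan---reduce to Theorem~\ref{thm:unbiased_convergence_appendix} by checking that REINFORCE satisfies Assumption~\ref{ass:unbiased_convergence_appendix}---matches the paper's exactly, and your unbiasedness argument (finite horizon, bounded Monte Carlo returns, score-function estimator) is essentially the same as the paper's Steps~1--2.

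Where you diverge is in the second half. You read Assumption~\ref{ass:unbiased_convergence_appendix} as requiring you to \emph{prove} convergence under Frame Stacking with $k\ge k^*$, and then spend most of your effort on the projected stochastic-approximation / ODE argument, the non-concavity of $J_\beta$, the KKT-to-global upgrade, and the two-time-scale coordination of step sizes with $\beta$-annealing. The paper does none of this. Its proof simply notes that entropy regularization ``ensures the policy stays sufficiently stochastic\ldots\ this condition is part of the standard assumptions required for policy-gradient convergence in the episodic Monte Carlo setting (and is compatible with Assumption~\ref{ass:unbiased_convergence_appendix})'', then invokes Theorem~\ref{thm:unbiased_convergence_appendix} directly. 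In other words, the paper treats the ``converges under Frame Stacking'' clause as already granted by the phrase ``under Assumption~\ref{ass:unbiased_convergence_appendix}'' in the corollary statement, and only discharges the unbiased-estimate clause.

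So your proposal is not wrong, but it is doing substantially more than the paper asks for, and the obstacles you flag (global optimality from KKT points, $\Theta$ not excising the optimum, time-scale separation) are real difficulties that the paper simply does not confront---it offloads them to the standing assumption and to ``standard'' policy-gradient convergence results. If you want to match the paper's proof, you can drop the entire stochastic-approximation paragraph and replace it with a one-line appeal to Assumption~\ref{ass:unbiased_convergence_appendix} plus the observation that entropy regularization supplies the persistent exploration that assumption tacitly needs.
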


\begin{proof}
We show that REINFORCE with the stated parameterization satisfies the unbiasedness requirement of Assumption~\ref{ass:unbiased_convergence_appendix} when estimating policy returns on the induced stack-process \(\mathcal M_k\). Once unbiasedness is shown, Theorem~\ref{thm:unbiased_convergence_appendix} implies the corollary.

\paragraph{1. Monte Carlo returns are unbiased in episodic setting.}
Fix a stack-policy \(\pi_k\) and an initial stack \(s_t\). Let \(\tau=(s_t,b_t,r_{t+1},\dots,s_{t+T})\) denote a finite-horizon trajectory generated by executing \(\pi_k\) in the true environment. The Monte Carlo return
\[
\hat V^{\pi_k}(s_t) \;=\; \sum_{n=0}^{T-1} \gamma^n r_{t+n+1}
\]
is an unbiased estimator of the true expected return
\[
V_k^{\pi_k}(s_t) \;=\; \mathbb{E}_{\tau\sim\pi_k}\Big[ \sum_{n=0}^{T-1} \gamma^n r_{t+n+1} \mid s_t\Big],
\]
because expectation and sample-average commute (law of the unconscious statistician). This unbiasedness is purely a sampling fact and does not require the augmented-state to be Markov; it only requires that rollouts are generated according to the policy \(\pi_k\) and the true environment dynamics.

\paragraph{2. REINFORCE gradient estimator uses unbiased returns.}
The REINFORCE policy-gradient estimator uses samples \(\hat V^{\pi_k}(s_t)\) (possibly with a baseline) multiplied by \(\nabla_\theta \log \pi_\theta(b_t\mid s_t)\). Since \(\hat V^{\pi_k}(s_t)\) is an unbiased estimator of the true return, the REINFORCE estimator is an unbiased estimator of the true policy gradient:
\[
\mathbb{E}\Big[ \nabla_\theta \log \pi_\theta(b_t\mid s_t)\,\hat V^{\pi_k}(s_t)\Big] \;=\; \nabla_\theta J(\theta),
\]
where \(J(\theta)\) is the (entropy-regularized) episodic objective. The softmax-linear parameterization with bounded \(\phi\) and compact \(\Theta\) ensures \(\nabla_\theta\log\pi_\theta\) is bounded; projection onto \(\Theta\) guarantees iterates remain bounded.

\paragraph{3. Exploration via entropy regularization.}
The entropy regularizer (with slowly annealed \(\beta\)) ensures the policy stays sufficiently stochastic during learning so that the sampling procedure visits the relevant stack-states and joint actions; this condition is part of the standard assumptions required for policy-gradient convergence in the episodic Monte Carlo setting (and is compatible with Assumption~\ref{ass:unbiased_convergence_appendix}).

\paragraph{4. Apply Theorem~\ref{thm:unbiased_convergence_appendix}.}
Because REINFORCE provides unbiased estimates \(\hat V^{\pi_k}(s_t)\) of \(V_k^{\pi_k}(s_t)\) for any stack-policy \(\pi_k\), the Unbiased Convergence Assumption is satisfied. Hence by Theorem~\ref{thm:unbiased_convergence_appendix}, when \(k\ge\kappa\) the algorithm \(\mathbb{A}\) converges to an optimal Adaptive Stacking policy \(\pi_k^*\) achieving the full-history optimum \(V^*\). This completes the proof.
\end{proof}

\subsection{Convergence of Memory-Augmented Monte Carlo Policy Gradient in Unichain Average-Reward NMDPs} \label{app:continuing_proof}

\begin{corollary}[Convergence of Average-Reward Monte Carlo Policy Gradient]
\label{cor:average_linear_mc}
Consider the infinite horizon unichain average-reward setting: assume every stationary policy on the induced stack-process \(\mathcal M_k\) yields a unichain Markov chain (single recurrent class) and the chain is aperiodic. Let the policy class be softmax-linear \(\pi_\theta(b\mid s)\) as in Corollary~\ref{cor:episodic_linear_mc} with bounded features and \(\theta\in\Theta\) compact. Use entropy regularization (weight \(\beta>0\)) and projected updates. Let \(\mathbb{A}\) be Monte Carlo policy-gradient that estimates the average reward via long trajectory averages (or regeneration-based sampling) of length \(L\) on the order of the chain's mixing time; assume \(L\) is chosen (or scheduled) so that estimators are asymptotically unbiased in the SA sense described below.

Then, under Assumption~\ref{ass:unbiased_convergence_appendix} and the unichain assumption above, for any \(k\ge\kappa\) Adaptive Stacking trained with \(\mathbb{A}\) converges (in the average-reward sense) to an entropy-regularized optimal Adaptive Stacking policy \(\pi_k^*\) maximizing long-run average reward.
\end{corollary}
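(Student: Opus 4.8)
The plan is to follow the same three-step template as the proof of Corollary~\ref{cor:episodic_linear_mc}, replacing the episodic sampling facts with their long-run average-reward analogues and then invoking Theorem~\ref{thm:unbiased_convergence_appendix} read with the discounted return replaced by the gain (long-run average reward). Concretely, I would (i) show that for every softmax-linear stack-policy $\pi_\theta$ the long-trajectory (or regeneration-based) estimator of the average reward on the induced stack-process $\mathcal{M}_k$ is asymptotically unbiased in the stochastic-approximation sense, and likewise for the score-weighted returns appearing in the average-reward policy-gradient theorem; (ii) show that the resulting projected Monte Carlo policy-gradient iterates converge to the set of stationary points of the entropy-regularized average-reward objective $J_\beta$, which under slow annealing of $\beta$ contracts to the maximizers of the long-run average reward, so that $\mathbb{A}$ satisfies the hypothesis of Assumption~\ref{ass:unbiased_convergence_appendix} in the average-reward sense; and (iii) invoke Theorem~\ref{thm:unbiased_convergence_appendix} with $k\ge\kappa$ to conclude that the limit policy $\pi_k^*$ attains the full-history optimal average reward on all latent histories $x_{t:t-k^*}$.

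For step (i), fix $\pi_\theta$. The unichain and aperiodicity hypotheses make the stack-state chain admit a unique stationary distribution $d_\theta$ and be geometrically ergodic; because the transition kernel is continuous in $\theta$ and $\Theta$ is compact, the mixing time $t_{\mathrm{mix}}(\theta)$ is bounded uniformly over $\Theta$. Standard Markov-chain ergodic and concentration bounds then give that the length-$L$ reward average $\hat\rho_L$ has bias and variance of order $t_{\mathrm{mix}}/L$ about the true gain $\rho(\theta)$, and regeneration-based sampling gives exact per-cycle unbiasedness; taking $L=L_m$ large relative to (or growing with) $t_{\mathrm{mix}}$ along the iterations makes the residual bias summable, which is exactly the ``asymptotically unbiased in the SA sense'' condition assumed. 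The average-reward policy-gradient theorem is applicable because $d_\theta$ is differentiable in $\theta$ under the unichain-plus-smooth-softmax setup, and applying the same ergodic argument to the score-weighted samples shows the gradient estimator is asymptotically unbiased for $\nabla_\theta J_\beta(\theta)$.

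For step (ii), the softmax-linear form with bounded features $\phi$ and compact $\Theta$ gives uniformly bounded and Lipschitz $\nabla_\theta\log\pi_\theta$ and $\nabla_\theta J_\beta$; projection onto the convex compact $\Theta$ keeps iterates bounded; and the entropy term with weight $\beta>0$ keeps $\pi_\theta$ bounded away from deterministic, so every stack-state and joint action is visited infinitely often and the chain stays ergodic along the whole run. With vanishing bias, bounded variance, and Robbins--Monro step sizes ($\sum_m\alpha_m=\infty$, $\sum_m\alpha_m^2<\infty$), the ODE / Kushner--Clark method for projected stochastic approximation yields convergence of $\theta_m$ to the stationary set of $J_\beta$ on $\Theta$; slowly annealing $\beta\to0$ contracts this set to the gain-maximizers, so $\mathbb{A}$ converges to $\argmax_{\pi_k}$ of the (unbiased) long-run average reward. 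This verifies the hypothesis needed for Theorem~\ref{thm:unbiased_convergence_appendix}, and since $k\ge\kappa$ guarantees a $\kappa$-sufficient policy exists, the theorem gives $V^{\pi_k^*}(x_{t:t-k^*})=V^*(x_{t:t-k^*})$ for all $t$ in the average-reward reading, i.e. $\pi_k^*$ is optimal for the long-run average reward over full histories.

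The genuinely new difficulty, relative to the episodic corollary, is concentrated in step (i): there one has independent episodic rollouts and hence exact unbiasedness for free, whereas here the gain must be extracted from a single long, strongly correlated Markov trajectory. Making the SA-unbiasedness claim rigorous therefore hinges on uniform geometric ergodicity over the compact $\Theta$ and on a carefully tuned schedule $L_m$ (or a clean regeneration scheme) so that accumulated bias is controlled across iterations; moreover the invocation of the average-reward policy-gradient theorem genuinely requires differentiability of the stationary distribution $d_\theta$ in $\theta$, which is exactly where the unichain and aperiodicity assumptions are indispensable and cannot be relaxed.
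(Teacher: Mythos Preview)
Your proposal is correct and follows essentially the same route as the paper: establish asymptotic unbiasedness of long-trajectory (or regenerative) average-reward and gradient estimators under the unichain/aperiodicity assumption, verify boundedness and persistent exploration via the softmax-linear parameterization with entropy regularization, and then invoke Theorem~\ref{thm:unbiased_convergence_appendix}. The paper organizes this into six sub-steps while you use three, and you add a little more detail in places (uniform mixing over compact $\Theta$, the Kushner--Clark/ODE machinery, explicit mention of differentiability of $d_\theta$), but the substance is the same.
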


\begin{proof}
We organize the proof into the following steps: (i) show how to construct asymptotically unbiased average-reward estimators using long trajectories or regenerative sampling under the unichain assumption, (ii) note that with those estimators \(\mathbb{A}\) satisfies the Unbiased Convergence Assumption, and (iii) apply Theorem~\ref{thm:unbiased_convergence_appendix}.

\paragraph{1. Unichain ergodicity $\Rightarrow$ time-average convergence.}
Under the unichain and aperiodicity assumption for the induced stack-process \(\mathcal M_k\), the Markov chain induced by any stationary policy \(\pi_k\) has a unique stationary distribution \(\mu^{\pi_k}\). By the ergodic theorem for Markov chains, for a single long trajectory \((s_0,b_0,r_1,s_1,b_1,r_2,\dots)\) generated under \(\pi_k\) we have almost surely
\[
\frac{1}{L}\sum_{t=0}^{L-1} r_{t+1} \xrightarrow[L\to\infty]{a.s.} \rho(\pi_k) \;=\; \sum_s \mu^{\pi_k}(s) \sum_b \pi_k(b\mid s) r(s,b),
\]
the long-run average reward (gain). Thus the empirical average over a sufficiently long trajectory is an asymptotically unbiased estimator of \(\rho(\pi_k)\). Alternatively, if regenerative sampling is available (returns to a recurrent state), one can form i.i.d.\ regenerative cycles and obtain unbiased cycle-averages; both approaches are standard ways to estimate average reward unbiasedly on unichain chains.

\paragraph{2. Constructing an (approximately) unbiased differential-return estimator for gradients.}
Policy-gradient formulas in the average-reward setting require estimating \(\nabla_\theta \rho(\theta)\), which can be written in terms of stationary expectations involving the differential value function (Poisson solution). A practical Monte Carlo estimator uses centered finite-horizon partial returns with empirical centering by the block average, e.g. for a block of length \(L\):
\[
\widehat U_t \;=\; \sum_{k=0}^{L-1-t} \big( r_{t+k+1} - \bar r^{(L)} \big), \quad
\bar r^{(L)} := \frac{1}{L}\sum_{k=0}^{L-1} r_{t+k+1}.
\]
Under unichain ergodicity, as \(L\to\infty\) these centered finite-horizon returns yield consistent (asymptotically unbiased) estimators of the differential/action-value \(Q_{\mathrm{diff}}^{\pi_k}\) that appears in the average-reward policy-gradient identity. See standard references on average-reward policy gradient estimators (e.g., \citet{konda2002actor}, \citet{marbach2001simulation}) for the detailed derivation.

\paragraph{3. Practical sampling schedule and asymptotic unbiasedness.}
To use these estimators in stochastic approximation, one chooses a schedule of block lengths \(L_n\) and batch sizes \(N_n\) that grows so that the estimator bias due to finite \(L_n\) is controlled relative to the step sizes \(\alpha_n\) (standard SA condition: \(\sum_n \alpha_n \varepsilon_n < \infty\) where \(\varepsilon_n\) is the bias at iteration \(n\)). Concretely, if the chain mixes geometrically with mixing time \(\tau_{\mathrm{mix}}\) (uniform in a neighbourhood of the iterates), choosing \(L_n = C\log(1/\alpha_n)\) (or larger) plus sufficient burn-in ensures the bias \(\varepsilon_n=O(\alpha_n)\) or better, which is summable when multiplied by \(\alpha_n\). Under the unichain assumption this scheduling is feasible in principle; in practice one picks \(L_n\) large enough (or uses regenerative sampling) to make bias negligible.

\paragraph{4. Boundedness, exploration and gradient boundedness.}
With softmax-linear parameterization and bounded features, \(\nabla_\theta \log\pi_\theta\) is uniformly bounded on the compact parameter set \(\Theta\). Entropy regularization keeps policies stochastic during learning and avoids vanishing exploration. Projection of \(\theta\) onto \(\Theta\) ensures iterates stay bounded.

\paragraph{5. Satisfying the Unbiased Convergence Assumption.}
Putting (1)--(4) together, the Monte Carlo average-reward gradient estimator (constructed from long blocks or regenerative cycles) yields asymptotically unbiased estimates of the average-reward policy gradient; equivalently one can produce (asymptotically) unbiased estimates \(\hat V^{\pi_k}\) of the relevant value-like quantities required by \(\mathbb{A}\). Hence the Unbiased Convergence Assumption (Assumption~\ref{ass:unbiased_convergence_appendix}) is satisfied in the asymptotic sense required for SA convergence.

\paragraph{6. Apply Theorem~\ref{thm:unbiased_convergence_appendix}.}
By Assumption~\ref{ass:unbiased_convergence_appendix}, any algorithm that converges under Frame Stacking with unbiased value estimates will converge to the optimal \(k\)-order policy. Therefore, with the asymptotically unbiased average-reward estimates constructed above and \(k\ge\kappa\), \(\mathbb{A}\) converges to an Adaptive Stacking policy \(\pi_k^*\) that maximizes the long-run average reward \(\rho(\pi)\). This completes the proof.
\end{proof}

\vspace{1em}
\paragraph{Remarks}
\begin{itemize}[leftmargin=*]
  \item The episodic corollary is straightforward because finite-horizon Monte Carlo returns are exactly unbiased. The linear softmax parameterization + compactness + entropy + projection assumptions are standard to ensure bounded gradients and persistent exploration, and to make the SA theory applicable.
  \item The unichain average-reward corollary requires the extra ergodicity/unichain assumption to justify long-run averages as consistent estimators of \(\rho(\pi)\) (or regenerative sampling to provide unbiased cycle averages). It also requires an explicit sampling schedule (block lengths \(L_n\) growing appropriately) so that estimator bias is negligible in the SA limit; I sketched the standard way to satisfy this condition (pick blocks scaling with mixing time / \(\log(1/\alpha_n)\)).
  \item In both corollaries the key bridge to Theorem~\ref{thm:unbiased_convergence_appendix} is verifying that the practical Monte Carlo estimators produce (asymptotically) unbiased estimates of the target value quantities. Once that is established, the theorem implies convergence under Adaptive Stacking for \(k\ge\kappa\).
\end{itemize}

\subsection{Proof for Theorem~\ref{thm:partial-order}}\label{app:thm2}

We now prove that if two policies have an ordering over value functions in the induced memory POMDP $\mathcal{M}_k$, and the memory representation is value-consistent, then the same ordering holds over the original latent histories.

\begin{assumption}[Value-Consistency]
\label{ass:partial-order_appendix}
Let $\pi_k$ be an Adaptive Stacking policy over memory states $s_t \in \mathcal{S}_k$.  
We say the memory representation is \emph{value-consistent} with respect to $\pi_k$ if for any $s_t \in \mathcal{S}_k$ and any two latent histories $x_{t:t-k^*}, x'_{t:t-k^*}$ such that
\[
\Pr(x_{t:t-k^*} \mid s_t, \pi_k) > 0 \quad \text{and} \quad \Pr(x'_{t:t-k^*} \mid s_t, \pi_k) > 0,
\]
it holds that:
\[
V^{\pi_k}(x_{t:t-k^*}) = V^{\pi_k}(x'_{t:t-k^*}).
\]
\end{assumption}

\begin{theorem}[Partial-order Preserving]
\label{thm:partial-order_appendix}
Let $k \in \mathbb{N}$ and let $\pi_k^1, \pi_k^2$ be two policies under Adaptive Stacking such that for all memory states $s_t \in \mathcal{S}_k$:
\[
V_k^{\pi_k^1}(s_t) \leq V_k^{\pi_k^2}(s_t).
\]
If both policies induce value-consistent memory representations (Assumption~\ref{ass:partial-order_appendix}), then for all latent histories $x_{t:t-k^*}$:
\[
V^{\pi_k^1}(x_{t:t-k^*}) \leq V^{\pi_k^2}(x_{t:t-k^*}).
\]
\end{theorem}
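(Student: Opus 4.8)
The plan is to route the argument through the marginalization identity of Equation~\ref{pomdp_vs_mdp_values}, which writes every memory-state value as a convex combination $V_k^{\pi_k}(s_t)=\sum_{x_{t:t-k^*}}\Pr(x_{t:t-k^*}\mid s_t,\pi_k)\,V^{\pi_k}(x_{t:t-k^*})$ over the latent histories that co-occur with $s_t$ under $\pi_k$. The first step is to observe that value-consistency (Assumption~\ref{ass:partial-order_appendix}) makes this combination degenerate: fix any latent history $x^0$ with $\Pr(x^0\mid s_t,\pi_k)>0$; for every other $x_{t:t-k^*}$ in the support, the pair $(x_{t:t-k^*},x^0)$ both have positive probability given $s_t$, so $V^{\pi_k}(x_{t:t-k^*})=V^{\pi_k}(x^0)$, and since the weights sum to one the whole sum collapses to that single value. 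This yields the working identity $V_k^{\pi_k}(s_t)=V^{\pi_k}(x_{t:t-k^*})$ for \emph{every} latent history $x_{t:t-k^*}$ that co-occurs with $s_t$ under $\pi_k$; that is, under value-consistency Adaptive Stacking preserves the full-history value of the policy with respect to which it is value-consistent, on every latent history compatible with the current memory state.

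With this identity the comparison is one line, provided we can anchor both policies at a shared memory state. Fix a latent history $x_{t:t-k^*}$ and a memory state $s_t$ that co-occurs with $x_{t:t-k^*}$ under both $\pi_k^1$ and $\pi_k^2$. Applying the identity to each policy gives $V^{\pi_k^1}(x_{t:t-k^*})=V_k^{\pi_k^1}(s_t)$ and $V^{\pi_k^2}(x_{t:t-k^*})=V_k^{\pi_k^2}(s_t)$, so the hypothesis $V_k^{\pi_k^1}(s_t)\le V_k^{\pi_k^2}(s_t)$ transfers verbatim to $V^{\pi_k^1}(x_{t:t-k^*})\le V^{\pi_k^2}(x_{t:t-k^*})$, which is exactly the claim.

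\textbf{Main obstacle.} The step that needs care is the existence of a memory state $s_t$ that co-occurs with a given latent history under \emph{both} policies at once: because $\pi_k^1$ and $\pi_k^2$ may take different memory actions, the sets of memory states consistent with a fixed latent history can differ between them, and the hypothesis only relates $V_k^{\pi_k^1}$ and $V_k^{\pi_k^2}$ at the \emph{same} argument, so a mismatch $s_1\neq s_2$ would break the chain. I would handle this by exploiting that the memory update $u$ is deterministic and the initial memory $s_0=[x_0]_k$ is policy-independent, arguing inductively along the episode that any latent history jointly realizable by $\pi_k^1$ and $\pi_k^2$ is realizable together with a common memory state (the finite horizon keeps this induction clean in the episodic case). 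Should a fully general argument resist, the statement still covers what the downstream convergence results need, since there it is applied with a reference optimal memory policy as one of the two policies and the required co-occurrence is then available, so that the TD(0) fixed-point ordering of \citet{singh1994learning} and the Q-learning guarantee (Theorem~\ref{thm:td_convergence}) lift from memory states to the underlying histories.
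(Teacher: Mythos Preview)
Your approach is essentially identical to the paper's: both use the marginalization identity (Equation~\ref{pomdp_vs_mdp_values}) and observe that value-consistency collapses the convex combination to a constant $c_i(s_t)=V^{\pi_k^i}(x_{t:t-k^*})$, so the memory-state ordering transfers directly to the full-history values. The paper's proof is in fact shorter than yours precisely because it does \emph{not} flag the coupling issue you call the ``main obstacle'': it simply writes $c_1(s_t)=V^{\pi_k^1}(x_{t:t-k^*})\le V^{\pi_k^2}(x_{t:t-k^*})=c_2(s_t)$ at a single $s_t$ and declares the result, implicitly assuming the same latent history has positive probability under both policies at the same memory state. So you have identified a genuine subtlety that the paper glosses over, and your inductive sketch (common deterministic initialization $s_0=[x_0]_k$, deterministic memory update $u$) is a reasonable way to address it; the paper offers no such justification.
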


\begin{proof}
By Equation~\ref{pomdp_vs_mdp_values}, the expected return under $\pi_k^1$ in the induced memory process is:
\[
V_k^{\pi_k^i}(s_t) = \sum_{x_{t:t-k^*}} \Pr(x_{t:t-k^*} \mid s_t, \pi_k^i)\, V^{\pi_k^i}(x_{t:t-k^*}).
\]
Under Assumption~\ref{ass:partial-order_appendix}, for each $i \in \{1,2\}$, all latent histories $x_{t:t-k^*}$ consistent with a memory state $s_t$ have equal value:
\[
V^{\pi_k^i}(x_{t:t-k^*}) = c_i(s_t), \quad \text{a constant}.
\]
Hence, the above expectation reduces to:
\[
V_k^{\pi_k^i}(s_t) = c_i(s_t).
\]
Therefore, the ordering assumption implies:
\[
c_1(s_t) = V^{\pi_k^1}(x_{t:t-k^*}) \leq V^{\pi_k^2}(x_{t:t-k^*}) = c_2(s_t),
\]
for all $x_{t:t-k^*}$ such that $\Pr(x_{t:t-k^*} \mid s_t, \pi_k^i) > 0$.

Thus, the partial ordering $V_k^{\pi_k^1}(s_t) \leq V_k^{\pi_k^2}(s_t)$ implies:
\[
V^{\pi_k^1}(x_{t:t-k^*}) \leq V^{\pi_k^2}(x_{t:t-k^*}) \quad \text{for all } x_{t:t-k^*}.
\]
\end{proof}

\subsection{Proof for Theorem~\ref{thm:td_convergence}}\label{app:thm3}

We now prove that Temporal Difference (TD) learning converges to the optimal policy under Adaptive Stacking, provided that $k \geq \kappa$ and the memory representation is value-consistent.

\begin{theorem}
\label{thm:td_convergence}
Let $k \geq \kappa$, and suppose Q-learning under standard learning assumptions \citep{robbins1951stochastic} is applied to the induced decision process $\mathcal{M}_k$ under a fixed exploratory policy that ensures persistent exploration. If policies in $\mathcal{M}_k$ are value-consitent, then:
\begin{enumerate}
    \item The Q-function $Q(s, a, i)$ converges with probability 1 to a fixed point $\hat{Q}(s, a, i)$.
    \item The greedy policy with respect to $\hat{Q}$ is optimal. That is, $\pi_k^*(s_t) \in \arg\max_{(a, i)} \hat{Q}(s_t, a, i)$ achieves the optimal value $V^*(x_{t:t-k^*})$.
\end{enumerate}
\end{theorem}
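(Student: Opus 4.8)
We outline the argument. The plan is to first establish convergence to a fixed point (claim~1) and then identify the greedy policy of that fixed point with the full-history optimum via a squeeze that exploits value-consistency together with $k\geq\kappa$ (claim~2).

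\textbf{Step 1: convergence of $Q$ to a fixed point.} First I would observe that $\mathcal{M}_k$ is a POMDP whose hidden state is the $k^*$-history $x_{t:t-k^*}$ and whose observation is the memory state $s_t$, with the memory update $u$ deterministic, so all randomness is inherited from the environment. Under the fixed exploratory behaviour policy, the joint chain over $(x_{t:t-k^*},s_t,\langle a_t,i_t\rangle)$ is ergodic, so following \citet{singh1994learning} there is a well-defined time-amortized conditional $\Pr(x_{t:t-k^*}\mid s_t)$ and hence stationary effective one-step reward and transition kernels over the finite set $\mathcal{S}_k\times(\mathcal{A}\times\mathcal{I})$. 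The Q-learning recursion of Algorithm~\ref{algo:qlearn-adaptivestacking} on $(s_t,\langle a_t,i_t\rangle)$ is then an asynchronous stochastic-approximation iteration for a $\gamma$-contraction; with Robbins--Monro step sizes \citep{robbins1951stochastic} and every memory-state/joint-action pair visited infinitely often (guaranteed by persistent exploration over a finite space), standard stochastic-approximation arguments give convergence with probability $1$ to the unique fixed point $\hat Q$.

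\textbf{Step 2: optimality of the greedy policy.} Let $\hat\pi(s)\in\argmax_{\langle a,i\rangle}\hat Q(s,\langle a,i\rangle)$. I would establish $V^{\hat\pi}(x_{t:t-k^*})=V^*(x_{t:t-k^*})$ for all $t$ by a sandwich. (i) \emph{Upper bound:} no memory-augmented policy can beat the full-history optimum, so $V^{\hat\pi}(x_{t:t-k^*})\leq V^*(x_{t:t-k^*})$. (ii) \emph{Collapse of aliasing:} applying Assumption~\ref{ass:partial-order} to $\hat\pi$, all full histories compatible with a memory state share a common value, so Equation~\ref{pomdp_vs_mdp_values} collapses to $V_k^{\hat\pi}(s_t)=V^{\hat\pi}(x_{t:t-k^*})$; the same reasoning applied to the $\kappa$-sufficient optimal policy $\pi_k^*$ (which exists since $k\geq\kappa$, cf. Proposition~\ref{prop:optimality} for the idea that spare memory can be left unused) gives $V_k^{\pi_k^*}(s_t)=V^{\pi_k^*}(x_{t:t-k^*})=V^*(x_{t:t-k^*})$. (iii) \emph{No bootstrapping bias:} since the bootstrapped target $\max_{\langle a,i\rangle}\hat Q(u(s,i,x'),\langle a,i\rangle)$ depends on $x'$ only through the successor memory state $u(s,i,x')$, and value-consistency makes the value there independent of which compatible history produced it, the fixed-point equation satisfied by $\hat Q$ coincides with the genuine Bellman optimality equation of $\mathcal{M}_k$ -- this is precisely the mechanism by which a value-equivalent abstraction retains convergent control \citep{li2006}, upgrading the Singh bound from $\leq$ to $=$. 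Hence $\hat\pi$ is optimal in $\mathcal{M}_k$, i.e. $V_k^{\hat\pi}(s_t)\geq V_k^{\pi_k^*}(s_t)$ for all $s_t$. Chaining (i)--(iii) yields $V^*(x_{t:t-k^*})=V_k^{\pi_k^*}(s_t)\leq V_k^{\hat\pi}(s_t)=V^{\hat\pi}(x_{t:t-k^*})\leq V^*(x_{t:t-k^*})$, forcing equality throughout, so $\hat\pi$ attains $V^*$ on every latent history.

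\textbf{Expected main obstacle.} The delicate step is 2(iii): turning Singh's inequality $V_{\mathrm{TD}}^{\pi}\leq V_k^{\pi}$ into the equality needed to conclude that $\hat\pi$ is genuinely optimal in $\mathcal{M}_k$ rather than merely optimal for a behaviour-policy-dependent surrogate process. I expect this to require an explicit fixed-point verification: checking that the constant-per-memory-state assignment $s\mapsto V^{\pi}(x)$ (for any compatible $x$) is reproduced exactly by the aliased Bellman operator, using the true history-level Bellman recursion and the fact that the realized successor history is always compatible with the successor memory state. A secondary nuisance is that the support of $\Pr(x_{t:t-k^*}\mid s_t,\pi)$ is policy-dependent, so value-consistency -- which we assume holds for all policies in $\mathcal{M}_k$ -- must be invoked for the correct policy at each stage: the behaviour policy in Step~1, and $\hat\pi$ and $\pi_k^*$ in Step~2.
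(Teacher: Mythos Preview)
Your proposal is correct and aligned with the paper's two-part structure: both invoke Theorem~2 of \citet{singh1994learning} for almost-sure convergence of Q-learning to a fixed point $\hat Q$ on the aliased process, then use value-consistency together with $k\geq\kappa$ to conclude that the greedy policy attains $V^*$. The paper's Step~2 is shorter than yours: it simply notes that $k\geq\kappa$ supplies a $\pi_k^*$ achieving $V^*$ and then applies the partial-order-preserving Theorem~\ref{thm:partial-order} to transport optimality from $\mathcal{M}_k$ to latent histories. Your sandwich unpacks that theorem directly---your 2(ii) ``collapse of aliasing'' is precisely the content of the proof of Theorem~\ref{thm:partial-order}---so the two routes coincide in substance. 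Where you are more careful is your flagged obstacle 2(iii): the paper does not explicitly argue why greedy-w.r.t.-$\hat Q$ is genuinely optimal \emph{within} $\mathcal{M}_k$ rather than merely optimal for a behaviour-policy-dependent surrogate, whereas you correctly identify this as the load-bearing step and sketch the right fixed-point verification. In short, your approach is the paper's, executed with more scrutiny at the point where the paper is briefest.
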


\begin{proof}
Since the agent operates over the induced process $\mathcal{M}_k$, its effective state is $s_t \in \mathcal{S}_k$. The Q-learning update rule is:
\[
Q_{t+1}(s_t, a_t, i_t) \leftarrow Q_t(s_t, a_t, i_t) + \alpha_t \left[ r_{t+1} + \gamma \max_{(a',i')} Q_t(s_{t+1}, a', i') - Q_t(s_t, a_t, i_t) \right],
\]
where $s_{t+1} = \text{push}(\text{pop}(s_t,i_t), x_{t+1})$ is the updated memory stack, and $\alpha_t$ is a learning rate satisfying the standard conditions:
\[
\sum_t \alpha_t = \infty, \quad \sum_t \alpha_t^2 < \infty.
\]

Under the assumption that all $(s,a,i)$ tuples are visited infinitely often, and rewards are bounded, Theorem 2 of \cite{singh1994learning} guarantees that $Q(s,a,i)$ converges to the fixed point $\hat{Q}(s,a,i)$.

Since $k \geq \kappa$, by definition of $\kappa$, there exists a policy $\pi_k^*$ such that for all latent histories $x_{t:t-k^*}$:
\[
V^{\pi_k^*}(x_{t:t-k^*}) = V^*(x_{t:t-k^*}).
\]
Because memory length $k$ is sufficient to represent all task-relevant distinctions (the disambiguation required for value prediction), we know from Theorem~\ref{thm:partial-order} that under the value-consistency assumption, the policy $\pi_k^*$ that is greedy with respect to $\hat{Q}$ in the induced process $\mathcal{M}_k$ will also be optimal in the underlying latent space:
\[
\pi_k^*(s_t) \in \arg\max_{(a,i)} \hat{Q}(s_t, a, i)
\quad \Rightarrow \quad V^{\pi_k^*}(x_{t:t-k^*}) = V^*(x_{t:t-k^*}) \quad \forall\,t.
\]

Thus, Q-learning in the Adaptive Stacking process not only converges, but yields an optimal policy over the original environment when $k \geq \kappa$.
\end{proof}

\section{Value-Consistency Assumption in Popular Benchmarks}
\label{apdx:value_consistency_benchmarks}

In this section, we analyze common RL benchmarks to determine when our Value-Consistency (VC) Assumption~\ref{ass:partial-order} holds. Recall that this assumption requires that all full histories $x_{t:t-k^*}$ mapping to the same agent memory state $s_t$ under policy $\pi_k$ must share the same expected return $V^{\pi_k}(x_{t:t-k^*})$. This often holds in goal-reaching or sparse-reward settings, but can be violated in tasks with dense or history-sensitive rewards (such as unobservable reward machines).

Table~\ref{tab:assumption-benchmarks} summarizes our analysis, and we provide justification for each task below.

\begin{table}[p]
\centering
\begin{tabular}{lcccl}
\toprule
\textbf{Task} & \textbf{T} &  $\textbf{$k^*$}$  & $\boldsymbol{\kappa}$ & \textbf{Assumption \ref{ass:partial-order} (VC) Holds?} \\
\midrule
\parbox[t]{3cm}{T-Maze (Classic) \\ \citep{bakker2001reinforcement}} & 70 & $T$ & 2 & \cmark: Only goal cue matters \\
\parbox[t]{3cm}{TMaze Long \\ \citep{beck2019tmaze}} & 100 & $T$ & 2 & \cmark: Only goal cue matters \\
\parbox[t]{3cm}{Passive Visual Match \\ \citep{hung2018optimizing}} & 600 & $T$ & Long & \cmark: Only goal cue and apples affects return \\
\parbox[t]{3cm}{Memory Length \\ \citep{osband2020bsuite}} & 100 & $T$ & 2 & \cmark: Observation i.i.d. per timestep \\
\parbox[t]{3cm}{Memory Maze \\ \citep{pasukonis2022memorymaze}} & 4000 & Long & Long & \cmark: Only current transition affect rewards \\
\parbox[t]{3cm}{PsychLab \\ \citep{fortunato2019psychlab}} & 600 & $T$ & Long & \cmark: Passive episodic recall \\
\parbox[t]{3cm}{HeavenHell \\ \citep{esslinger2022memory}} & 20 & $T$ & 2 & \cmark: Only goal cue matters \\
\parbox[t]{3cm}{Memory Cards \\ \citep{esslinger2022memory}} & 50 & Long & Long & \cmark: Only current card pairs affect rewards \\
\parbox[t]{3cm}{Ballet \\ \citep{lampinen2021towards}} & 1024 & $\geq464$ & $\geq464$ & \cmark: Rewards unaffected by previous actions \\
\parbox[t]{3cm}{Mortar Mayhem \\ \citep{pleines2023memory}} & 135 & $T$ & Long & \cmark: Rewards unaffected by previous actions \\
\parbox[t]{3cm}{Numpad \\ \citep{parisotto2020stabilizing}} & 500 & $T$ & $N^2$ & \cmark: Rewards unaffected by previous actions \\
\parbox[t]{3cm}{Reacher-POMDP \\ \citep{ni2021recurrent}} & 50 & Long & 2 & \cmark: Only goal cue matters \\
\parbox[t]{3cm}{Repeat First \\ \citep{morad2023popgym}} & 832 & T & 2 & \cmark: Only the first action matters \\
\parbox[t]{3cm}{Autoencode \\ \citep{morad2023popgym}} & 312 & $T$ & 156 & \cmark: Rewards unaffected by previous actions \\
\parbox[t]{3cm}{MiniGrid-Memory \\ \citep{chevalier2018minigrid}} & 1445 & $T$ & $2$ & \cmark: Only goal cue matters (given position) \\
\parbox[t]{3cm}{POPGym CartPole \\ \citep{morad2023popgym}} & 600 & 2 & 2 & \cmark: Only previous observation matters \\
\parbox[t]{3cm}{Reward Machines \\ \citep{icarte2022reward}} & 1000 & $T$ & $T$ & \xmark: Rewards affected by previous actions \\
\midrule
\textbf{Ours} & & & & \\
\parbox[t]{3cm}{\textbf{Passive T-Maze} \\\textbf{(Episodic)}} & $10^6$ & $T$ & 2 & \cmark: Only goal cue matters \\
\parbox[t]{3cm}{\textbf{Passive T-Maze} \\\textbf{(Continual)}} & $10^6$ & $T$ & 2 & \cmark: Only goal cue matters \\
\parbox[t]{3cm}{\textbf{Active T-Maze} \\\textbf{(Episodic)}} & $100$ & $T$ & 2 & \cmark: Only goal cue matters \\
\parbox[t]{3cm}{\textbf{Active T-Maze} \\\textbf{(Continual)}} & $10^6$ & $\infty$ & 2 & \cmark: Only goal cue matters \\
\parbox[t]{3cm}{\textbf{XorMaze}} & $100$ & $T$ & 3 & \cmark: Only goal cues matters \\
\parbox[t]{3cm}{\textbf{Rubik's Cube}} & $100$ & $T$ & Long & \cmark: Episodic, single goal, sparse rewards  \\
\parbox[t]{3cm}{\textbf{FetchReach}} & $10^6$ & $\infty$ & $\leq4$ & \xmark: Continual, multiple goals, dense rewards \\
\bottomrule
\end{tabular}
\vspace{2mm}
\caption{Evaluation of Value-Consistency (VC) assumption across popular RL benchmark tasks. 
T is the maximum episode horizon or total training steps (for continual settings).
$k^*$ is the memory length required to make the environment Markov; Long means a relatively large proportion of the episode must be remembered to make optimal value predictions.
$\kappa$ is the minimal memory length required to achieve optimal return.
Finally, VC Holds states whether Value-Consistency is satisfied.
}
\label{tab:assumption-benchmarks}
\end{table}

\paragraph{T-Maze (Classic) \citep{bakker2001reinforcement}:} 
The agent observes a goal cue at the start, traverses a corridor, and makes a binary decision at a junction. Here, $k^* = T = 70$, since full observability only comes from the initial and final steps. However, \(\kappa = 2\) suffices: the initial cue and position are enough to act optimally. VC holds since all consistent histories that lead to the same stack (for example, seeing ``green'') yield the same value.

\paragraph{TMaze Long \citep{beck2019tmaze}:}
Structurally identical to Classic T-Maze but with longer horizon $T = 100$. Again, $k^* = T$, \(\kappa = 2\), and VC holds for the same reason.

\paragraph{Passive Visual Match \citep{hung2018optimizing}:}
The goal color is observed passively at the start. The main reward depends only on whether the agent chooses the matching color at the end (plus intermediate rewards from collecting apples). $k^* = T = 600$, but \(\kappa = \) Long. VC holds since the goal cue and nearby apples fully determines return.

\paragraph{Memory Length \citep{osband2020bsuite}:}
Observations are i.i.d. at each step. $k^* = T = 100$, but optimality requires only \(\kappa = 2\). VC holds since memory state compresses all relevant statistics.

\paragraph{Memory Maze \citep{pasukonis2022memorymaze}:}
Agent must collect colored balls in order. The reward depends only on the current pickup. $k^* = \text{Long}$, \(\kappa = \text{Long}\). VC holds since rewards depend only on present state and target.

\paragraph{HeavenHell \citep{esslinger2022memory}:}
The agent visits an oracle early in the episode which defines the correct terminal target. The memory requirement is $k^* = T = 20$, but once the cue is retained, \(\kappa = 2\) ensures optimality. VC holds because different paths to the same cue yield identical future returns.

\paragraph{Memory Cards \citep{esslinger2022memory}:}
The agent must match cards based on values seen in earlier steps. $k^* = 2$, but \(\kappa = \text{Long}\) due to potential card permutations. VC holds because matching decisions are memory-conditional, not trajectory-sensitive.

\paragraph{PsychLab \citep{fortunato2019psychlab}:}
Involves passive image memorization, typically from the beginning of an episode. $k^* = T = 600$, but memorizing the image is sufficient (\(\kappa = \text{Long}\)). VC holds due to deterministic mapping from memory state to return.

\paragraph{Ballet \citep{lampinen2021towards}:}
Agent observes sequences of dances and selects a correct dancer. Though the reward is episodic, the agent actions occur only post-observation. $k^* \geq 464$, \(\kappa \geq 464\). VC holds because the same memory state determines the post-dance plan.

\paragraph{Mortar Mayhem \citep{pleines2023memory}:}
Memorizing a command sequence and executing it. $k^* = T = 135$, \(\kappa = \) Long. VC holds due to value depending solely on correctly recalling the command sequence.

\paragraph{Numpad \citep{parisotto2020stabilizing}:}
Agent must press a sequence of $N$ pads. $k^* = T = 500$, $\kappa = N^2$. VC holds: as long as the memory contains the correct order, the actual transition path is irrelevant.

\paragraph{Reacher-POMDP \citep{yang2021recurrent}:}
The goal is revealed only at the first step, so $k^*$ must capture that first observation. Any policy only needs to retain that goal and act accordingly, so \(\kappa = 2\) suffices. VC holds since differing histories that preserve the same goal state will yield the same value estimate.

\paragraph{Repeat First \citep{morad2023popgym}:}
Rewards depend on repeating the first action. $k^* = T$, but \(\kappa = 2\) suffices by retaining just the first action. VC holds since the memory state is value-determining.

\paragraph{Autoencode \citep{morad2023popgym}:}
Agent reproduces observed sequence in reverse. $k^* = 311$, $\kappa = 156$ (half the trajectory). VC holds since the value depends only on accuracy of reproduction.

\paragraph{MiniGrid-Memory \citep{chevalier2018minigrid}:}
To plan efficiently, the agent must memorize a goal object seen early and traverse a grid. The episode lengths are $T=5L^2$, where $L$ is the length of the grid. The worst-case $k^* = T$ and \(\kappa = 2\) for simple cue-based planning. VC holds since only the goal cue suffice (given that the position is observable). Interestingly, we observe that Adaptive Stacking is still able to learn here even without given positional information (only using the default partial observations).
% In practice, if the position is not given, it can be estimated using path intergration.

\paragraph{POPGym CartPole \citep{morad2023popgym} (Figure \ref{fig:PPO_popgym}):} 
The Stateless VelocityOnlyCartPoleHard task from the POPGym Benchmark \citep{morad2023popgym}, which is representative of domains that are complex in dynamics and continuous but actually simple in memory requirements. This environment occludes the velocity component, but full observability is achieved after two steps (velocity and estimated position). Thus, $k^* = 2 = \kappa$. VC holds as only immediate transitions affect return.
Hence is similar in memory requirements to the TMaze task with $L=0$.

\subsection{Unobservable Reward Machines Counter-Example}

While the Value-Consistency Assumption holds in many benchmark settings (Table~\ref{tab:assumption-benchmarks}), it fails in environments where the true reward function depends not just on environment observations, but on dynamic latent trajectory properties such as event sequences which change based on the agent policy. This is most notably the case in environments that use \emph{reward machines} \citep{icarte2018,vaezipoor2021ltl2action,icarte2022reward,tasse2022skill} -- finite state automata over temporal logic formulae that determine rewards or sub-goals based on the sequence of states visited.

For example, consider the task  \textit{"Deliver coffee to the office without breaking decorations"} in a the office grid-world environment \citep{icarte2022reward}. The task is encoded as a reward machine over three atomic propositions: 
\(p_{\text{coffee}}\) (the agent visits the coffee location), 
\(p_{\text{office}}\) (the agent visits the office location),
\(p_{\text{decor}}\) (the agent steps on any decoration tile).
The agent starts at some initial location and must:
visit the coffee location \emph{first},
then visit the office location,
without ever triggering \(p_{\text{decor}}\).
A reward of \(+1\) is given only if the full trajectory satisfies the temporal formula:
\[
(F (p_{\text{coffee}} \wedge X (F p_{\text{office}}))) \wedge (G \neg p_{\text{decor}}).
\]

\paragraph{Why VC Fails.} 
In the native environment, the agent's observations are just its \((x, y)\) location. There is no explicit record of whether the coffee has been visited, or if a decoration tile was stepped on. Consequently, two different trajectories can lead to the same agent observation \(s_t = (x, y)\) and memory stack \(s_t = [x_{i_1}, \dots, x_{i_k}]\). Yet these trajectories may differ in \emph{reward-relevant history}, for example, one might have stepped on a decoration earlier while another didn’t.
Since the reward for reaching the office depends on whether the coffee was collected \emph{and} no decorations were touched in the past, which is unobservable from \(s_t\) alone, the condition:
\[
V^{\pi_k}(x_{t:t-k^*}) = V^{\pi_k}(x'_{t:t-k^*})
\]
does \emph{not} hold for histories \(x_{t:t-k^*}, x'_{t:t-k^*}\) that lead to the same agent state \(s_t\). Therefore, Assumption~\ref{ass:partial-order} is violated.
Other common temporal logic tasks that violate VC include:
\begin{itemize}
    \item \textit{"Collect key A before key B, then go to door"}: reward depends on the \emph{order} of events, not the final state.
    \item \textit{"Don't revisit any state"}: any policy that loops violates the reward constraint, but the current memory may not capture visit counts.
    \item \textit{"Eventually visit both goal zones A and B, but never touch lava"}: again, whether lava was touched can be lost under memory compression.
\end{itemize}

The VC assumption breaks because environment-level memory states \(s_t\) are not sufficient statistics for the reward machine's state. The true reward depends on a latent automaton state that evolves with trajectory-dependent triggers. This is equivalent to acting in a \emph{cross-product MDP} over \((x, y) \times u\), where \(u\) is the internal automaton state.

\paragraph{Can the Failure Be Benign?}
Despite the theoretical violation, practical agents can still learn to behave correctly using Adaptive Stacking when:
The reward machine state can be inferred from a small set of key observations;
The agent learns to preserve these key triggers (for example, the first visit to coffee or decoration tiles);
The failure to preserve value consistency leads to pessimistic value estimates, but not incorrect action selection.

Hence, reward machine tasks represent a natural and important class of environments where the VC assumption breaks due to latent trajectory-dependent semantics. This distinction is useful for future work aiming to blend Adaptive Stacking with automaton inference, or for delineating the boundaries of where value-consistent abstraction is theoretically sound.

% \newpage
\section{Experimental Details}
\label{apx:exp_details}

All agents were implemented using PyTorch (and Gymnasium for the environments), and trained for $10^6$ steps.  Tabular Q-learning used in‐memory arrays, and PPO used Stable‐Baselines3. Finally, all experiments were ran on CPU only Linux servers.

\subsection{Experimental domains}
\label{apdx:domains}

\paragraph{Our Passive and Active T-Maze (in Episodic and Continual settings) (Figures \ref{fig:passive_tmazes},\ref{fig:active_tmazes}):}
There are only 4 observations here, corresponding to the color of the grid cell the agent is in: $red$ $\goala$ for $goal_1$, $green$ $\goalb$ for $goal_2$, $blue$ $\junction$ for the maze junction, and $grey$ $\corridor$ for the maze corridor. The given goal (red or green sampled uniformly) is only shown at the tail end of the maze, and the corridor has length $L$. The agent is represented by the black dot and has four cardinal actions for navigation. 
\begin{enumerate}
\item \textbf{Passive-TMaze}. The agent starts at the tail end of the maze. It then takes one step to the right at every time step regardless of it's action, until the junction location where the top and right actions achieve $goal_1$ while the down and left actions achieve $goal_2$. 
\item \textbf{Active-TMaze}. The agent starts one step to the right of the tail end of the maze. It then moves in the cardinal direction corresponding to its action at every time step, or stays still if the action hits the maze walls (for example taking the up or down actions in the corridor and the right action at the junction). 
\end{enumerate}

In all our TMaze variants, the goal cue is shown at the tail of the maze and the return depends only on whether the goal is reached.  In the continual setting, the memory state is unchanged after the agent reaches a goal (unlike the episodic setting where the memory is reset). Even in the training loop, there is no oracle done signal and the agent is automatically placed back into the starting position once it reaches a goal. Hence the agent here needs to learn to replace the goal cue it previously memorized. Thus \(\kappa = 2\) and $k^*$ matches the maze traversal length ($L+2$), which is $10^6$ in our longest evaluation (Figure \ref{fig:memory_rl} right). For the continual \textbf{Active-TMaze}, $k^*=\infty$ since the agent can stay arbitrarily long in the grey corridor. 
VC holds even under random corridor lengths.

\paragraph{Our XorMaze (Figure \ref{fig:PPO_xormaze}):}
The \textbf{XorMaze} is representative of simple domains with complex memory requirements. 
It is similar to the TMaze (with the same observations) but has two corridors: one vertical and one horizontal. These corridors are crossed in the middle (forming the + symbol), and the agent starts at their intersection (also the junction location). The horizontal and vertical corners are a single step from the center. At the corners of the horizontal corridor, there are goal cues randomly choosen between red and green. In the vertical axis, we have the red goal at the top ($goal_1$) and a green goal at the bottom ($goal_2$). The task is to observe the values in the horizontal axis, and the agent has to go to the cell in the vertical axis that is the result of an XOR. For example, if the horizontal values are red and green it should go to the top location ($goal_1$), but if they are red and red (or green and green) it should go to the bottom location ($goal_2$). Hence $\kappa=3$ and $k^*=T$ since it is episodic. VC holds here similarly to the TMaze.

\paragraph{Our Rubik's Cube (Figure \ref{fig:cube}):}
The 2x2 Rubik's cube task is representative of complex domains with complex memory requirements. The agent here only sees one of the six cube faces at a time. Hence the state is a $24$ dimensional vector and the agent only observes a $4$ dimensional slice of it. The agent has $16$ total actions: $12$ default actions for rotating the cube, plus $4$ additional actions for $90$ degrees rotations of the camera across each 3D axis (to see an adjacent face). The goal of the agent is to start from a randomly scrambled cube and reach the solved state (the unique correct colour for each face). Hence $\kappa\geq 6$ and $k^*=T$ since it is episodic. VC holds here since the environment is deterministic, goal reaching with sparse rewards (one for reaching the goal and zero otherwise), and there's a single goal state.

\paragraph{Our FetchReach (Figure \ref{fig:fetchreach}):}
\looseness=-1
We modify the FetchReachDense task from the Gymnasium Robotics benchmark suite \citep{fetch2018} to be representative of continuous control domains that are complex in both dynamics and memory requirements (requiring both long-term and short-term memory). The default environment is continual, fully observable, has dense distance based rewards, and has a 3D goal position which changes randomly every $50$ steps. The goal of the agent (Fetch robot) is to always move it's end effectors to the 3D goal position and stay there forever (until the goal position changes again).
Precisely:
\begin{enumerate}
    \item We occlude the velocity component at all time steps (similarly to \textbf{POPGym Cartpole})
    \item We also occlude that goal position after the first $2$ steps after each change. So this \textit{goal cue} is only observable for the first $2$ steps after each change (similarly to the \textbf{TMaze} tasks).
\end{enumerate}
Hence the agent must simultaneously: remember the goal cue for all steps until the goal position changes (demonstrating long-term memory), and for each of those steps it must remember the joint positions at the previous step in order to estimate the occluded joint velocities (demonstrating short-term memory).
Thus, $k^* = \infty$ and $\kappa\leq4$.
VC does not hold since the environment is not only continual, but also has random goals (and random start states) and dense rewards.

\subsection{Recorded Metrics}
Every 100 environment steps we log:
\begin{enumerate}
  \item \emph{Return}: cumulative discounted reward.
  \item \emph{Reward regret}: \(V^*(x_{t:t-k^*}) - V^\pi(x_{t:t-k^*})\).
  \item \emph{Memory regret}: fraction of steps where the goal cue is absent from the memory stack.
  \item \emph{Active memory regret}: steps when the goal cue is observed but not stored.
  \item \emph{Passive memory regret}: steps when the goal cue is in memory but then discarded.
\end{enumerate}
Plots report mean and 1 standard deviation over \(N_{rs}\) independent seeds.

\subsection{Tabular Q-Learning (Continual and Episodic)}
We run a standard \(\varepsilon\)-greedy tabular Q-learning agent in both Passive and Active T-Maze, under continual or episodic modes.  Hyperparameters are listed in Table~\ref{tab:ql_hparams}.

\begin{table}[h!]
  \centering
  \caption{Q-Learning hyperparameters}
  \label{tab:ql_hparams}
  \begin{tabular}{ll}
    \toprule
    \textbf{Parameter}            & \textbf{Value}                                     \\
    \midrule
    Discount factor \(\gamma\)    & 0.99                                               \\
    Learning rate \(\alpha\)      & 0.1                                               \\
    Exploration \(\varepsilon\)   & fixed 0.01 \\
    Total steps                   & \(10^6\)                                           \\
    Memory configurations         & FS(\(\kappa\)), FS(\(k^*\)), AS(\(\kappa\))        \\
    Random seeds \(N_{rs}\)       & 20                                                 \\
    Logging frequency             & every 100 steps                                    \\
    \bottomrule
  \end{tabular}
\end{table}

\subsection{Proximal Policy Optimization (Episodic and Continual)}
\label{apdx:ppo_hyperparams}
We evaluate PPO with MLP, CNN, LSTM and Transformer policies in both Passive and Active T-Maze, under episodic or continual modes.  Table~\ref{tab:ppo_hparams} details the optimizer settings.

\begin{table}[h!]
  \centering
  \caption{PPO hyperparameters}
  \label{tab:ppo_hparams}
  \begin{tabular}{ll}
    \toprule
    \textbf{Parameter}             & \textbf{Value}                 \\
    \midrule
    Total timesteps                & \(10^6\)                       \\
    Discount factor \(\gamma\)     & 0.99                           \\
    GAE \(\lambda\)                & 0.95                           \\
    Rollout length \(n\_steps\)    & 128                            \\
    Minibatch size                 & 128                            \\
    Epochs per update              & 10                             \\
    Learning rate                  & \(3\times10^{-4}\)             \\
    Clip range                     & 0.2                            \\
    Entropy coefficient            & 0.0 (default)                  \\
    Value loss coefficient         & 0.5 (default)                  \\
    Random seeds \(N_{rs}\)        & 10                             \\
    Logging frequency              & every 100 steps  \\
    \bottomrule
  \end{tabular}
\end{table}

Each policy network receives the \(k\)-length memory stack as input and outputs two probability distributions: one over environment actions and one over memory‐slot indices.  The final policy is obtained by sampling each head independently.

\paragraph{MLP}
\begin{enumerate}
  \item \emph{Input}: one-hot encoding of each of the \(k\) observations, concatenated into a vector.
  \item \emph{Hidden layers}: three fully‐connected layers of 128 units.
  \item \emph{Outputs}: 
    \begin{enumerate}
      \item \textbf{Env‐action head}: linear layer to \(\lvert\mathcal{A}\rvert\) logits.
      \item \textbf{Memory‐action head}: linear layer to \(k\) logits.
    \end{enumerate}
\end{enumerate}

\paragraph{LSTM}
\begin{enumerate}
  \item \emph{Input embedding}: each observation is embedded into a 128‐dim vector.
  \item \emph{Sequence model}: single‐layer LSTM with 128 hidden units processes the \(k\) embeddings.
  \item \emph{Readout}: final hidden state of size 128.
  \item \emph{Outputs}: two linear heads (as above) mapping the 128‐dim readout to action logits.
\end{enumerate}

\paragraph{Transformer}
\begin{enumerate}
  \item \emph{Input embedding}: each observation is embedded into 128‐dim, plus learned positional embeddings for positions \(1,\dots,k\).
  \item \emph{Transformer decoder stack}: two layers, model dimension 128, 4 attention heads, feed‐forward dimension 256.
  \item \emph{Readout}: the representation at the final time step.
  \item \emph{Outputs}: two linear heads mapping the 128‐dim readout to environment logits and memory‐slot logits.
\end{enumerate}

\newpage
\section{Supplementary Experiments}
\label{app:experiments}

\subsection{Learning Area Under the Curve Bar Plots}

\begin{figure}[h!]
    \centering
    \includegraphics[width=0.8\linewidth]{images/newplots/legend.pdf}\\
    \begin{subfigure}[b]{0.24\textwidth}
        \centering
        \includegraphics[width=\linewidth]{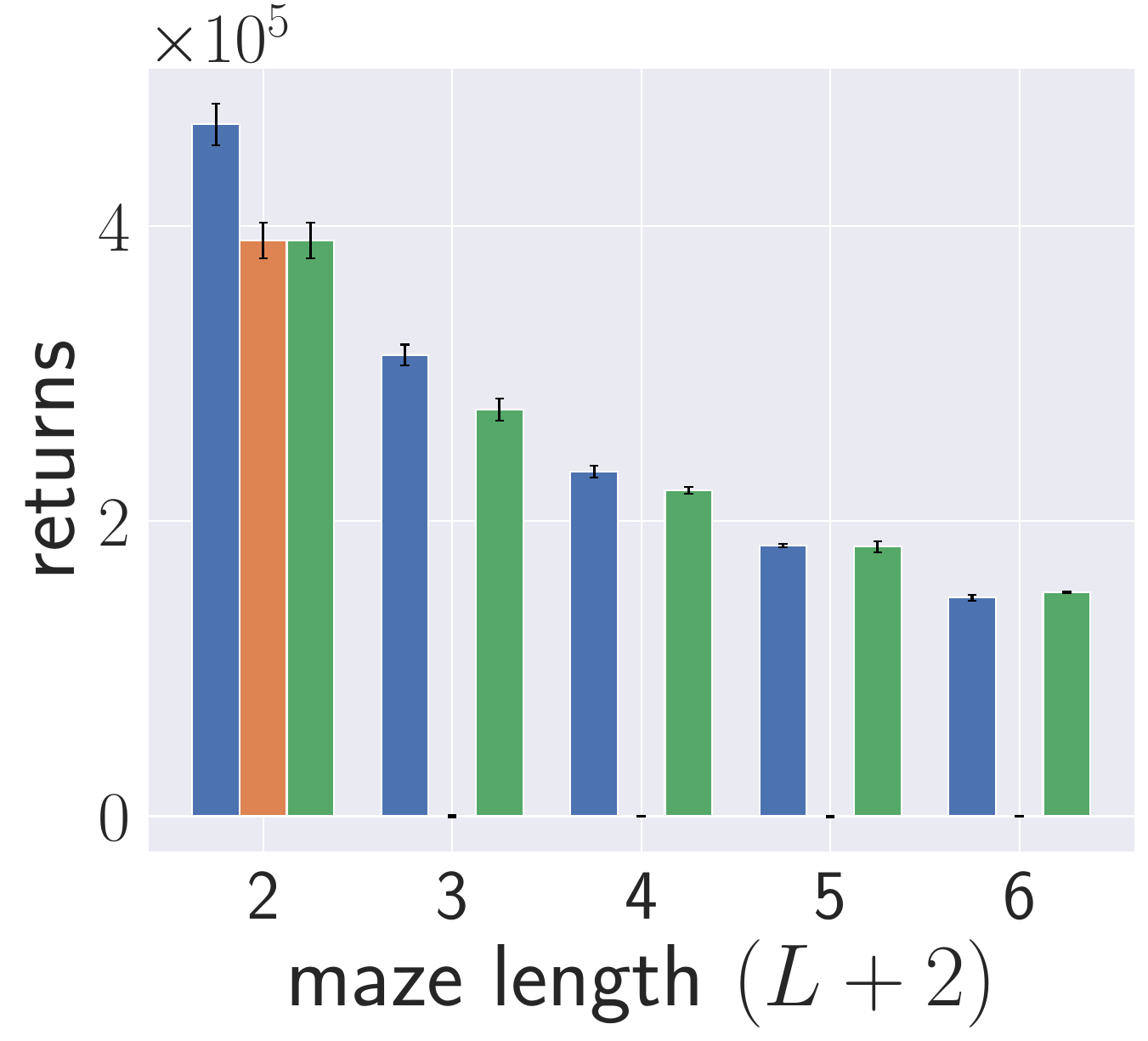}
        \caption{\textbf{Total rewards}}
       % \label{fig:PPO_passive_tmaze_total_rewards}
    \end{subfigure}
    \begin{subfigure}[b]{0.24\textwidth}
        \centering
        \includegraphics[width=\linewidth]{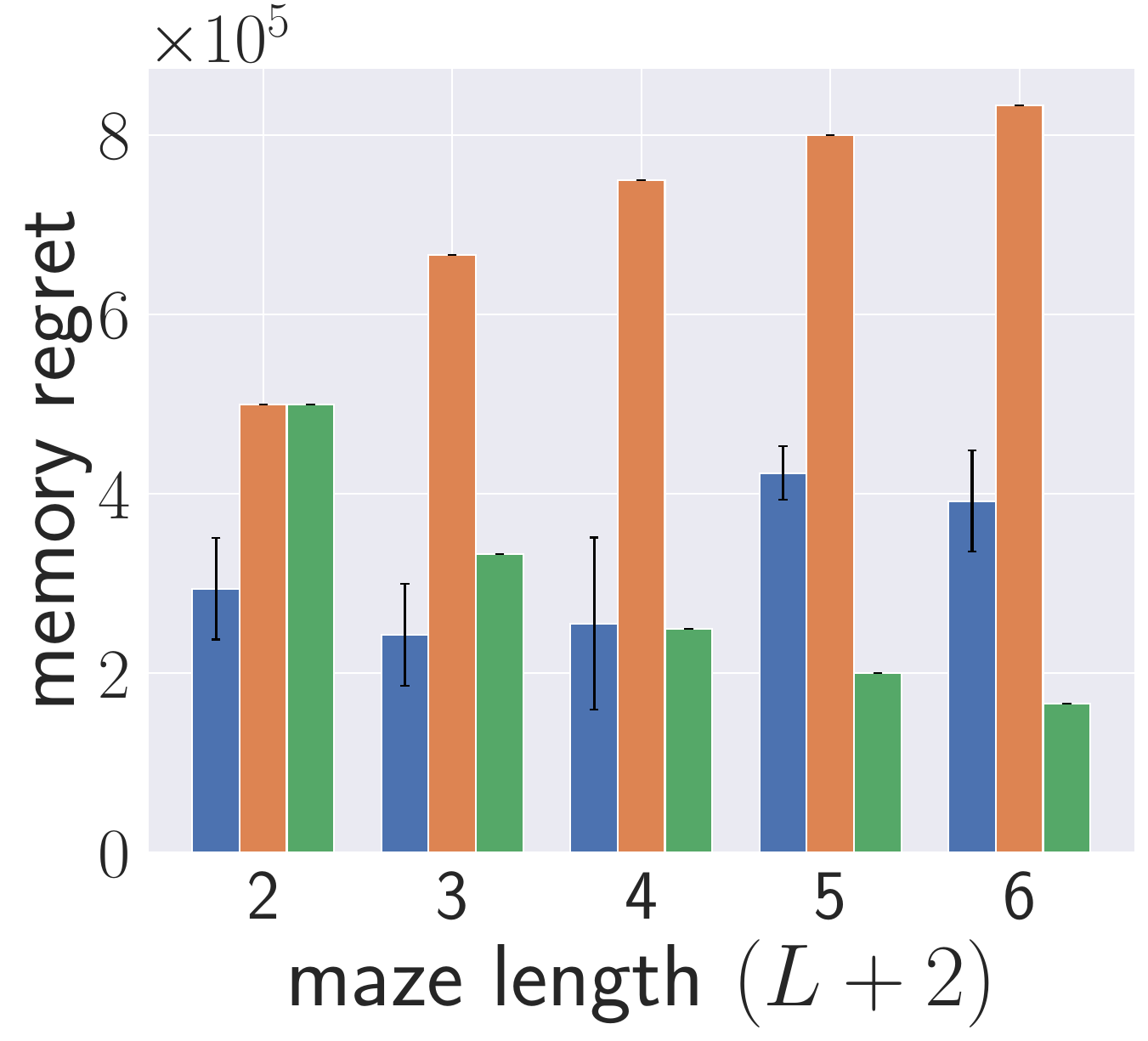}
        \caption{\textbf{Memory regret}}
       % \label{fig:PPO_passive_tmaze_mask_regret}
    \end{subfigure}
    \begin{subfigure}[b]{0.24\textwidth}
        \centering
        \includegraphics[width=\linewidth]{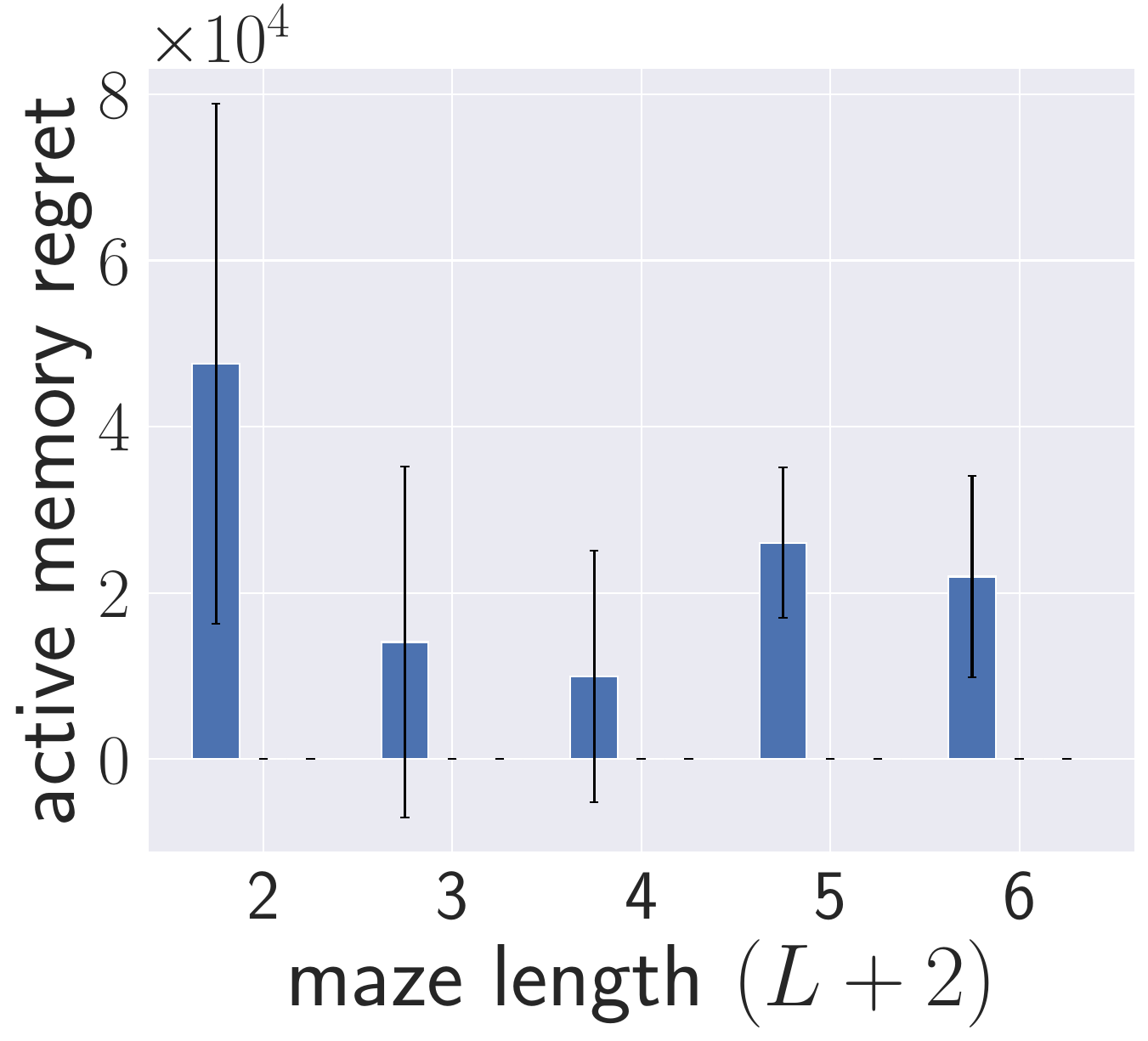}
        \caption{\textbf{Active regret}}
       % \label{fig:PPO_passive_tmaze_active_regret}
    \end{subfigure}
    \begin{subfigure}[b]{0.24\textwidth}
        \centering
        \includegraphics[width=\linewidth]{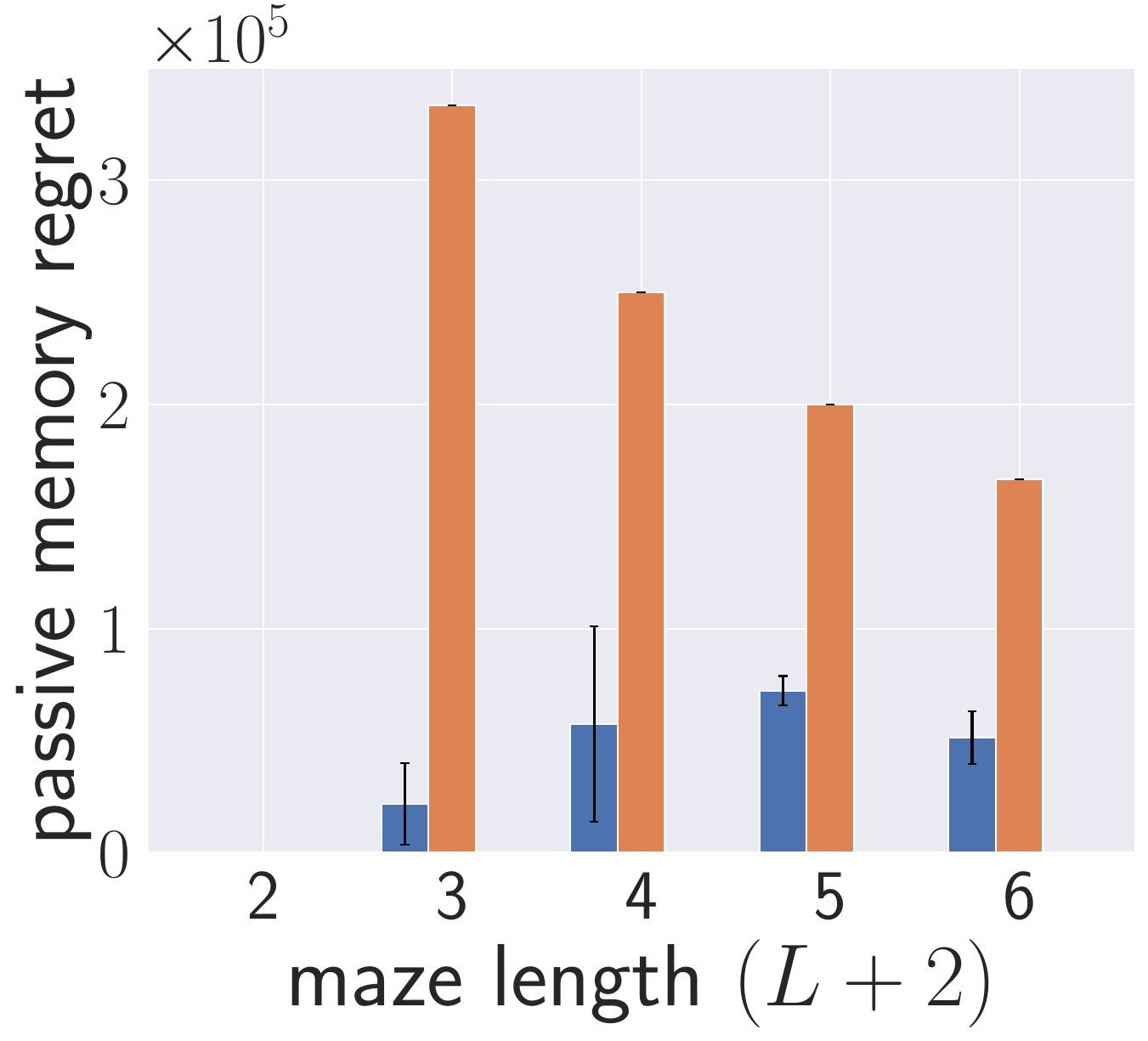}
        \caption{\textbf{Passive regret}}
       % \label{fig:PPO_passive_tmaze_passive_regret}
    \end{subfigure}
    \caption{Episodic \textbf{Passive-TMaze} (with corridor lengths per episode fixed to max length) with PPO and MLP policy ($N_{rs}=10$).}
   % \label{fig:PPO-passive-tmazes}
\end{figure}

\begin{figure}[h!]
    \centering
    \begin{subfigure}[b]{0.24\textwidth}
        \centering
        \includegraphics[width=\linewidth]{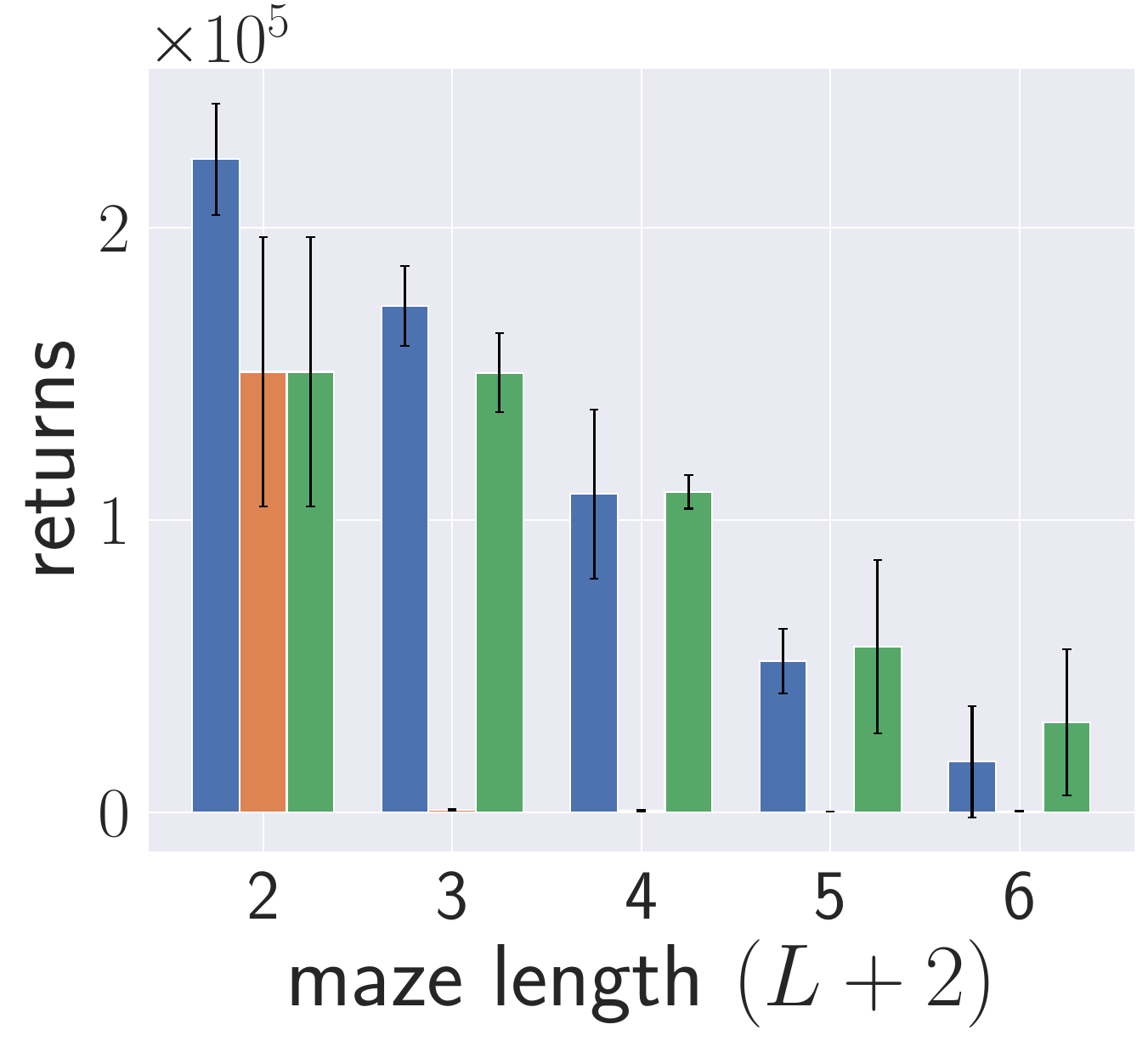}
        \caption{\textbf{Total rewards}}
       % \label{fig:PPO_active_tmaze_total_rewards}
    \end{subfigure}
    \begin{subfigure}[b]{0.24\textwidth}
        \centering
        \includegraphics[width=\linewidth]{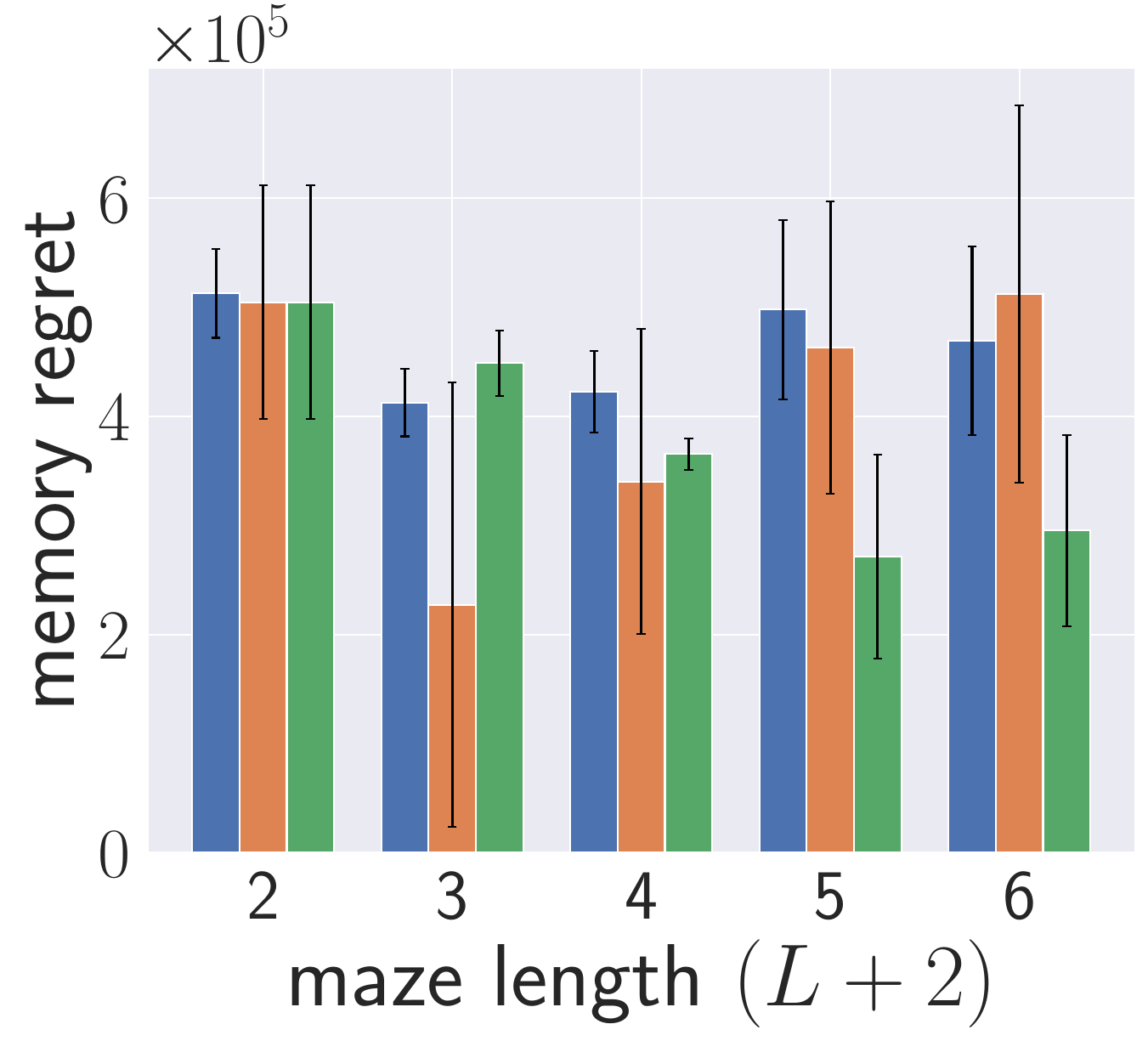}
        \caption{\textbf{Memory regret}}
       % \label{fig:PPO_active_tmaze_mask_regret}
    \end{subfigure}
    \begin{subfigure}[b]{0.24\textwidth}
        \centering
        \includegraphics[width=\linewidth]{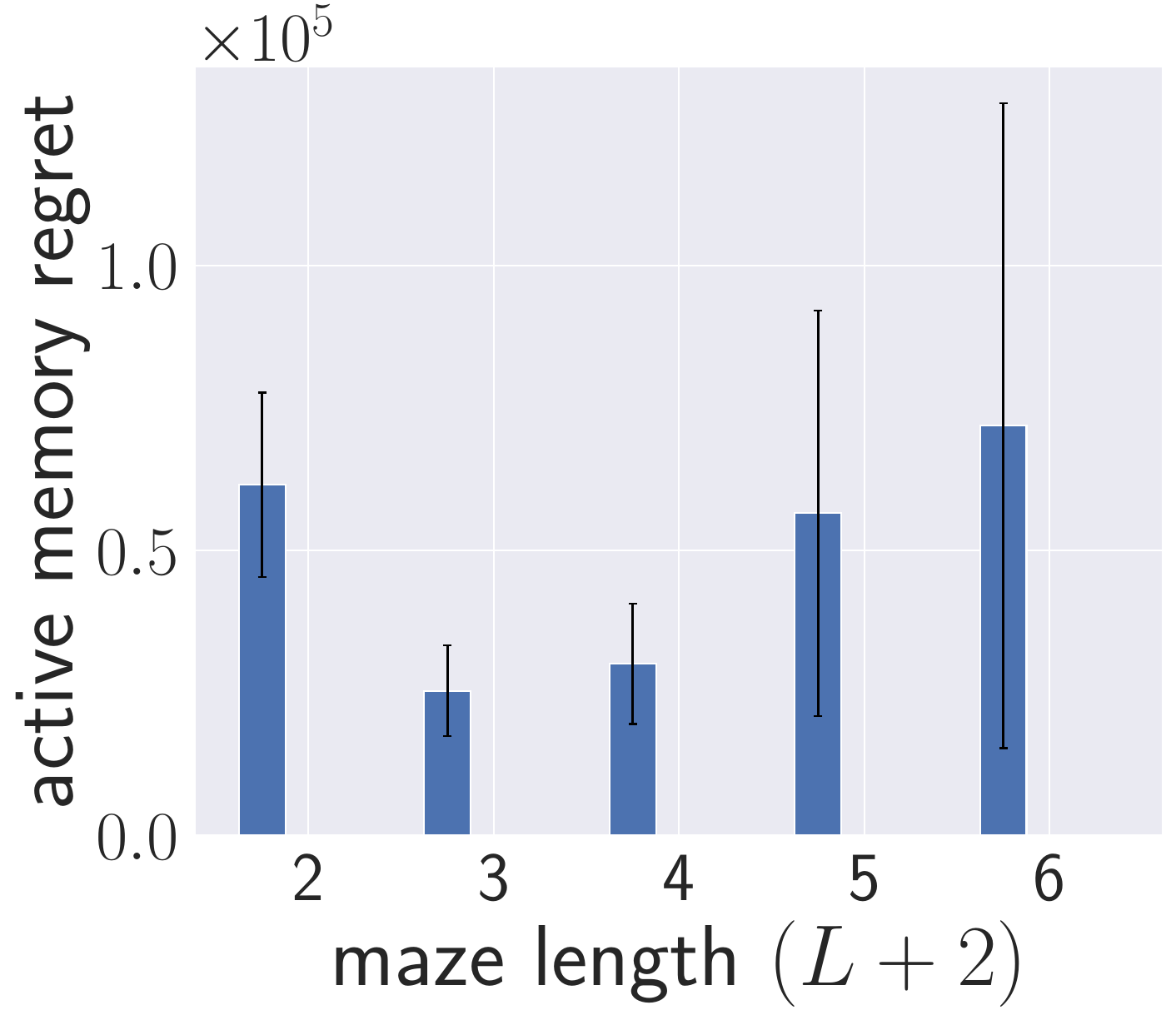}
        \caption{\textbf{Active regret}}
       % \label{fig:PPO_active_tmaze_active_regret}
    \end{subfigure}
    \begin{subfigure}[b]{0.24\textwidth}
        \centering
        \includegraphics[width=\linewidth]{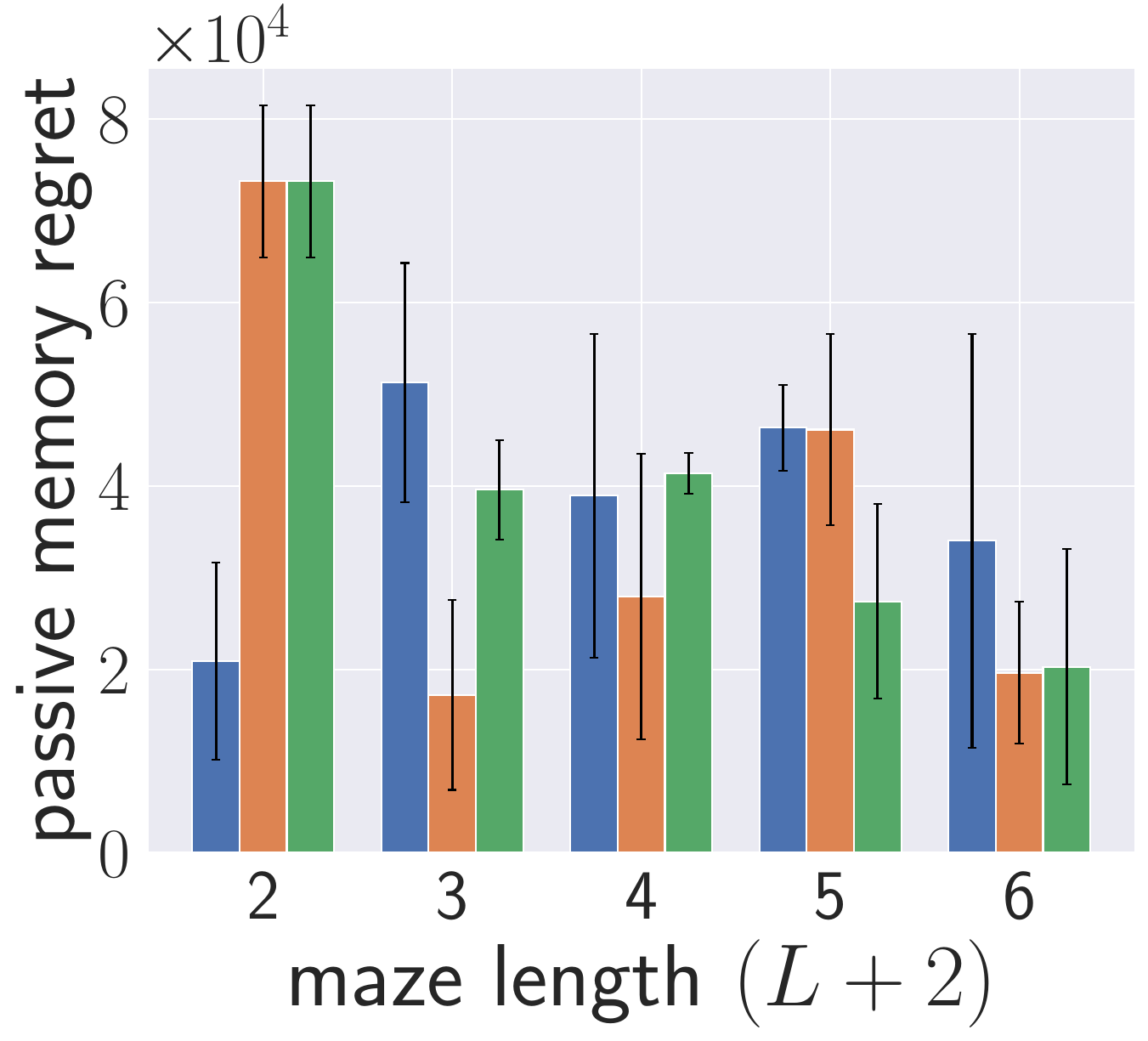}
        \caption{\textbf{Passive regret}}
       % \label{fig:PPO_passive_tmaze_passive_regret}
    \end{subfigure}
    \caption{Episodic \textbf{Active-TMaze} (with corridor lengths per episode fixed to max length) with PPO and MLP policy ($N_{rs}=10$). }
   % \label{fig:PPO-passive-tmazes}
\end{figure}

\begin{figure}[h!]
    \centering
    \begin{subfigure}[b]{0.24\textwidth}
        \centering
        \includegraphics[width=\linewidth]{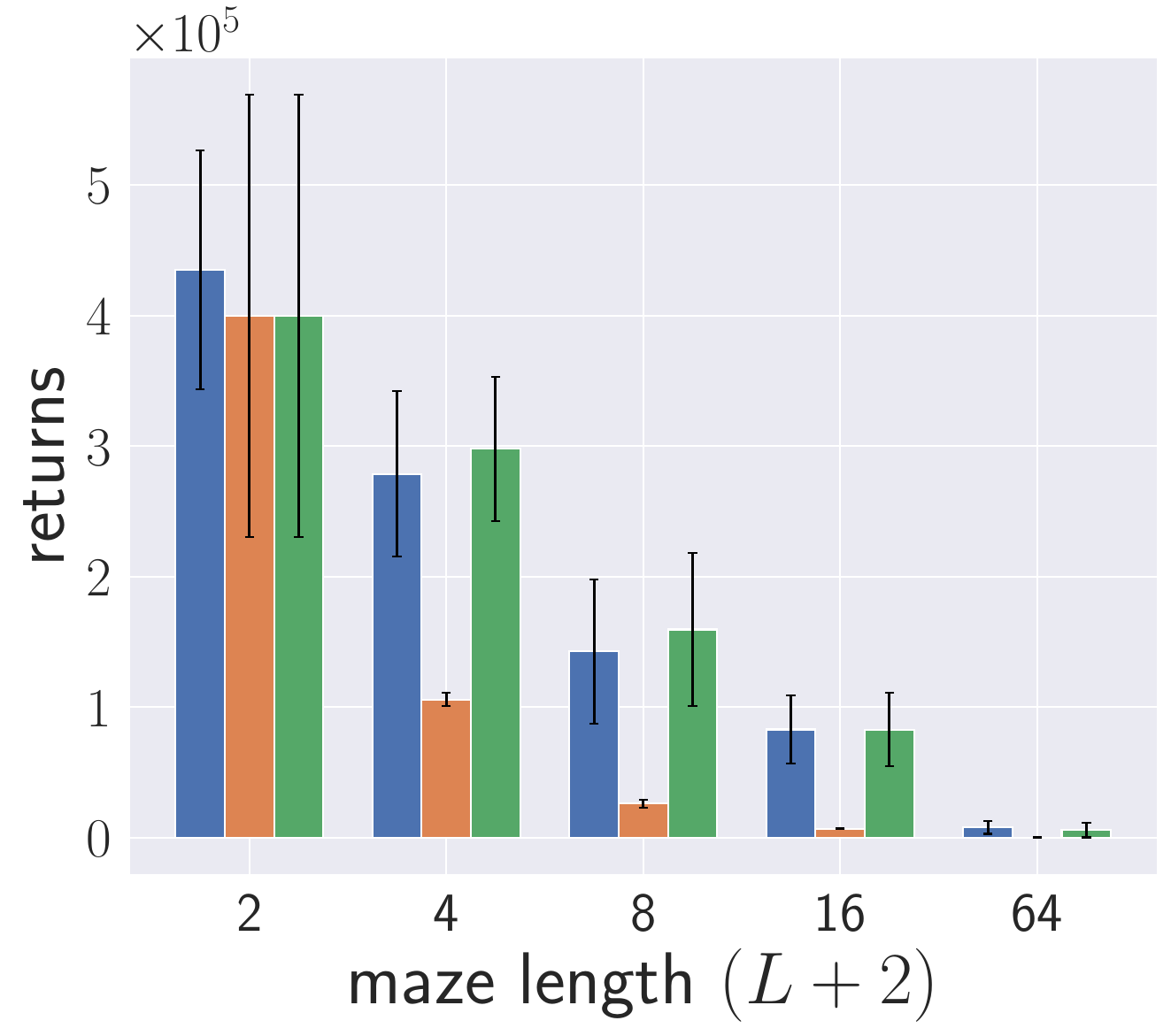}
        \caption{\textbf{Total rewards}}
        % \label{fig:PPO_passive_tmaze_total_rewards}
    \end{subfigure}
    \begin{subfigure}[b]{0.24\textwidth}
        \centering
        \includegraphics[width=\linewidth]{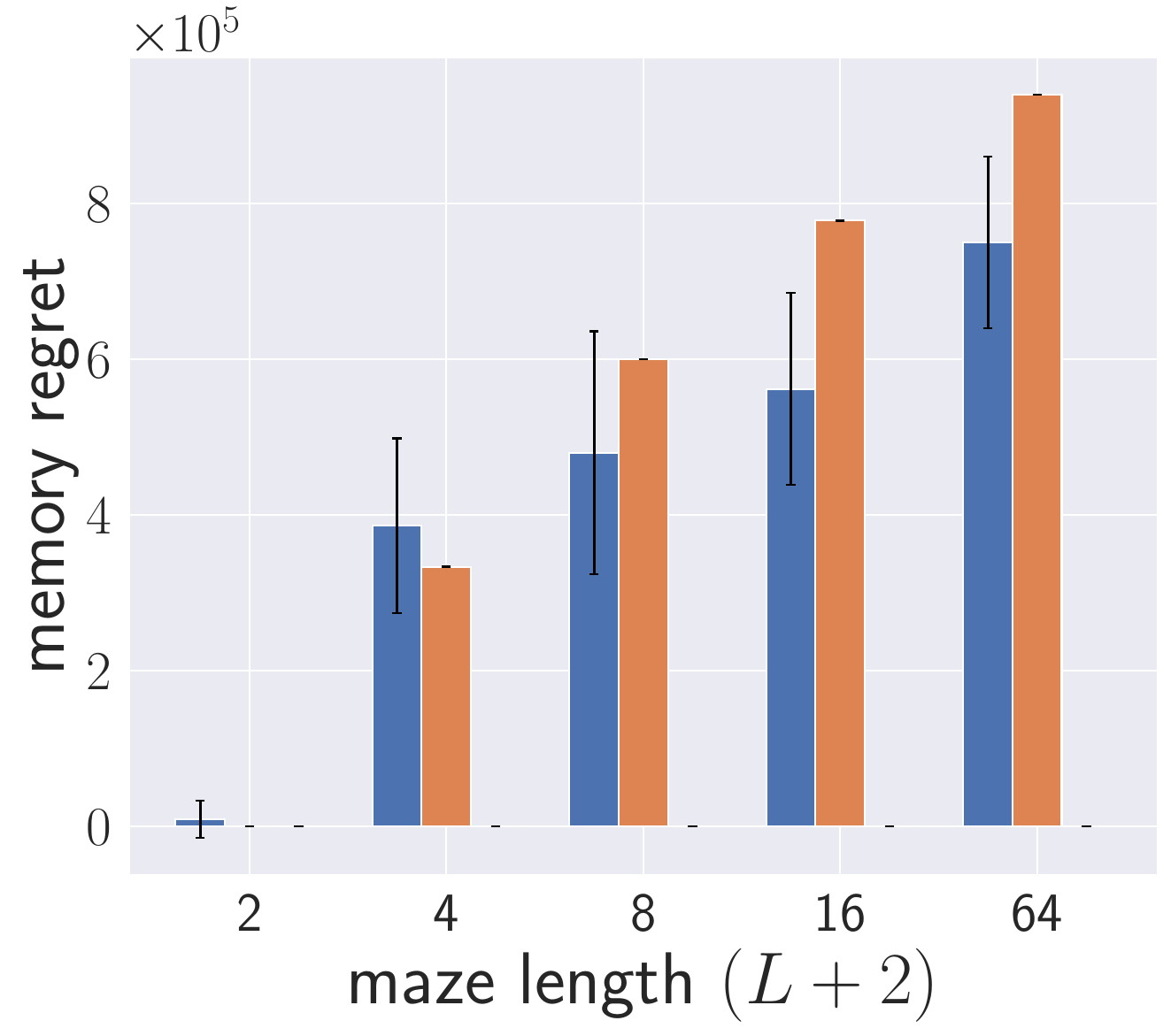}
        \caption{\textbf{Memory regret}}
        % \label{fig:PPO_passive_tmaze_mask_regret}
    \end{subfigure}
    \begin{subfigure}[b]{0.24\textwidth}
        \centering
        \includegraphics[width=\linewidth]{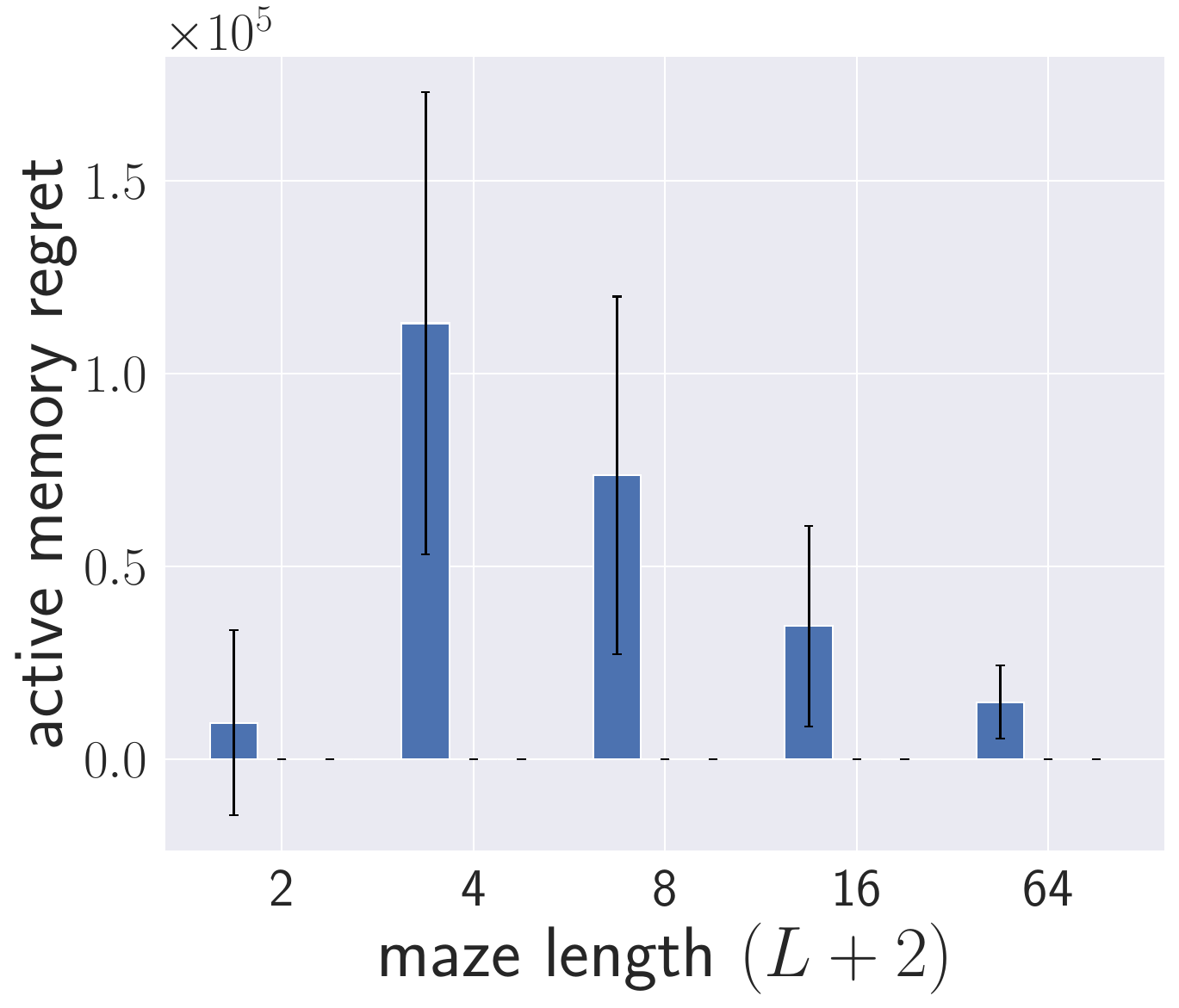}
        \caption{\textbf{Active regret}}
        % \label{fig:PPO_passive_tmaze_active_regret}
    \end{subfigure}
    \begin{subfigure}[b]{0.24\textwidth}
        \centering
        \includegraphics[width=\linewidth]{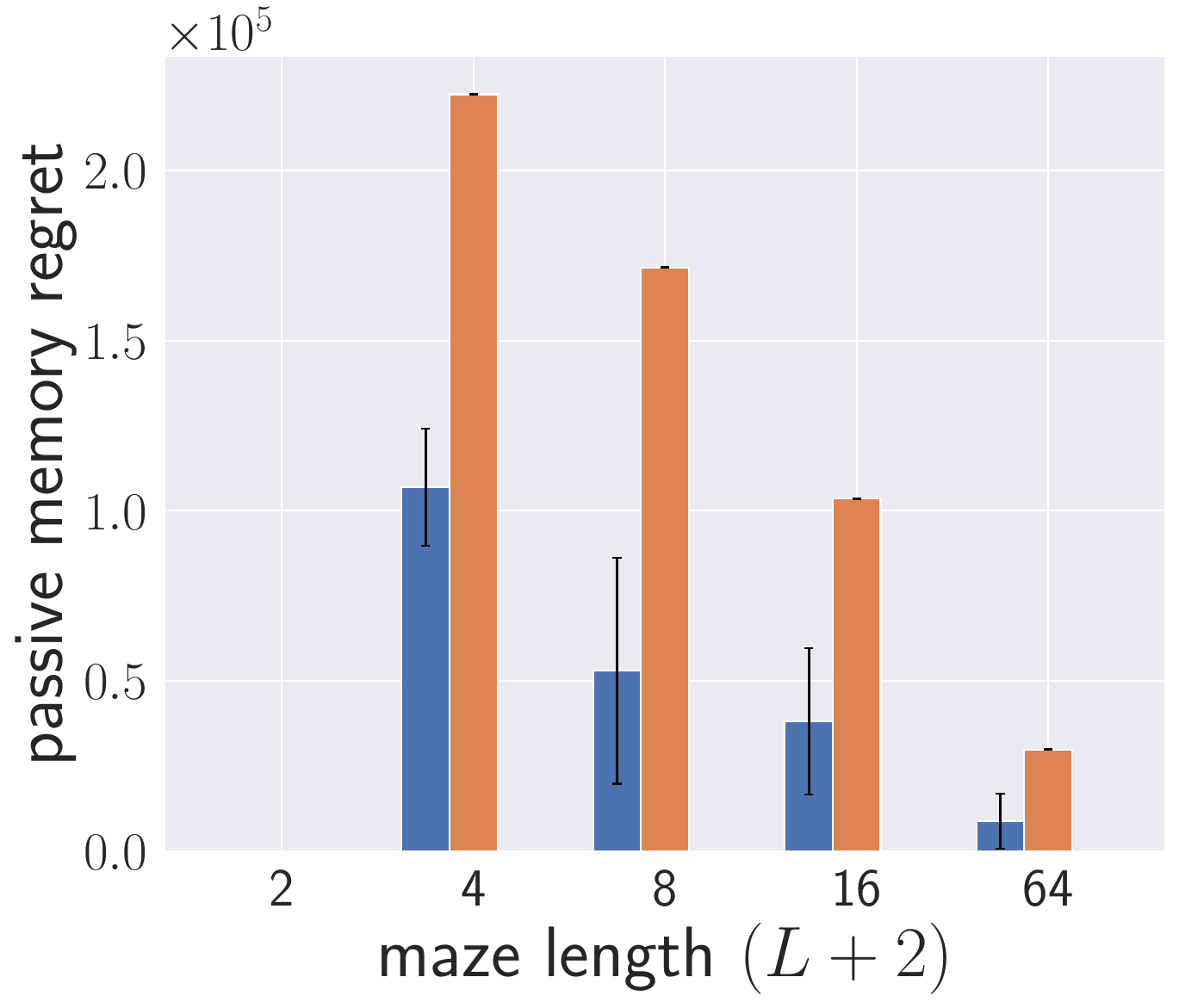}
        \caption{\textbf{Passive regret}}
        % \label{fig:PPO_passive_tmaze_passive_regret}
    \end{subfigure}
    % \begin{subfigure}[b]{0.19\textwidth}
    %     \centering
    %     \includegraphics[width=\linewidth]{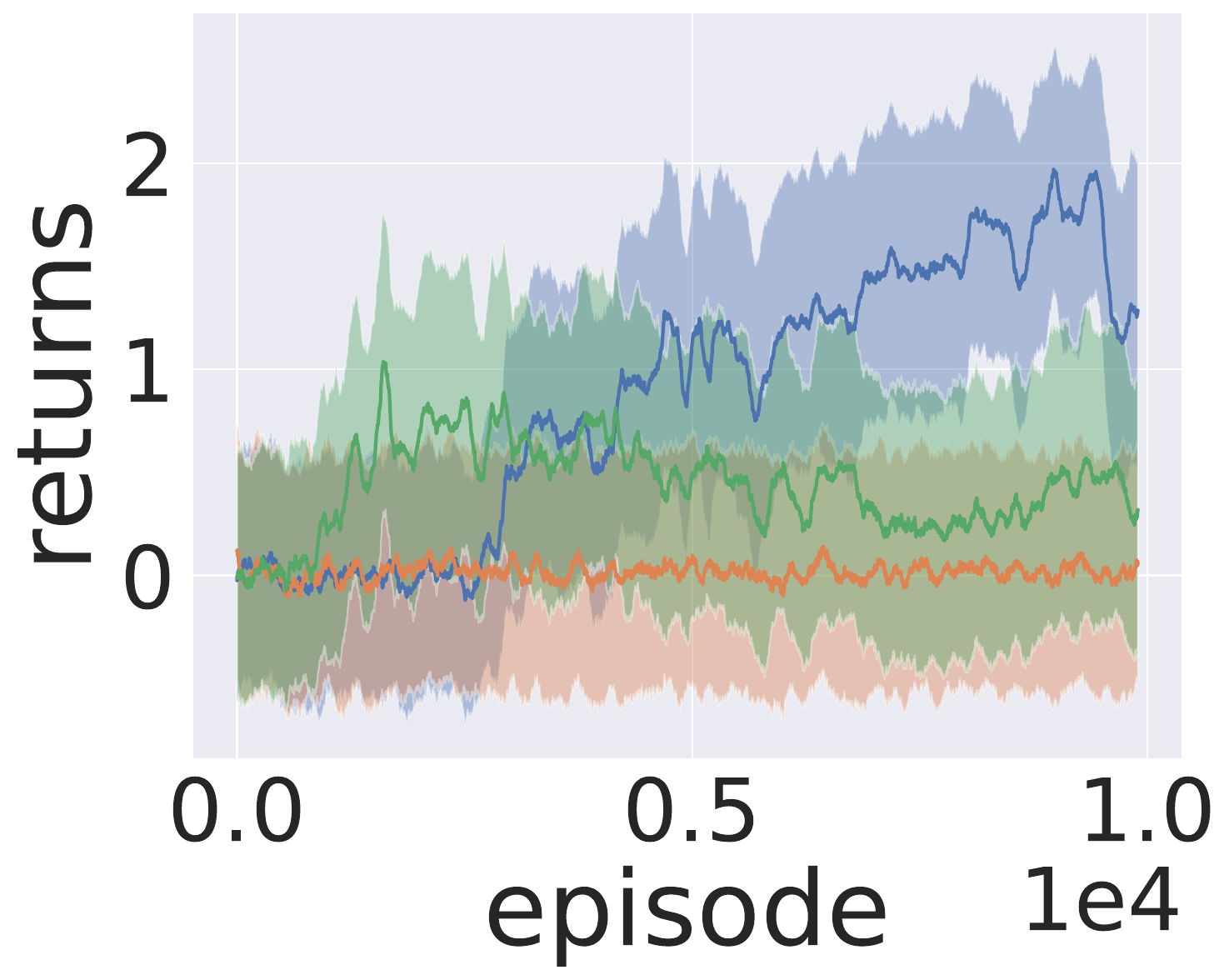}
    %     \caption{\textbf{Returns $L=62$}}
    %     % \label{fig:PPO_passive_tmaze_passive_regret}
    % \end{subfigure}
    \caption{Episodic \textbf{Passive-TMaze} with PPO and LSTM policy ($N_{rs}=10$).}
% \label{fig:PPO-passive-tmazes}
    \label{fig:PPO-passive-tmazes-lstm}
\end{figure}

\begin{figure}[h!]
    \centering
    \begin{subfigure}[b]{0.24\textwidth}
        \centering
        \includegraphics[width=\linewidth]{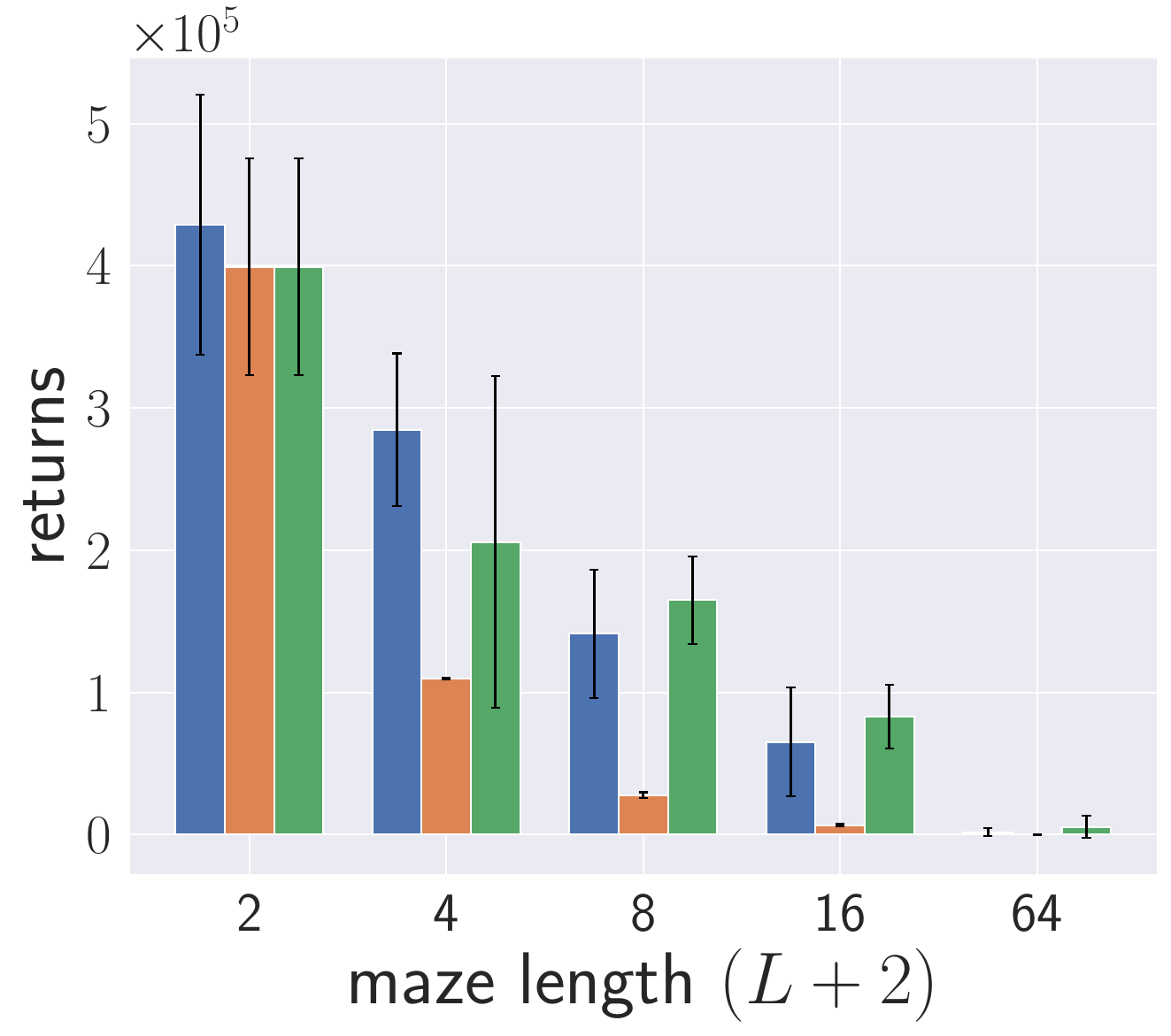}
        \caption{\textbf{Total rewards}}
        % \label{fig:PPO_passive_tmaze_total_rewards}
    \end{subfigure}
    \begin{subfigure}[b]{0.24\textwidth}
        \centering
        \includegraphics[width=\linewidth]{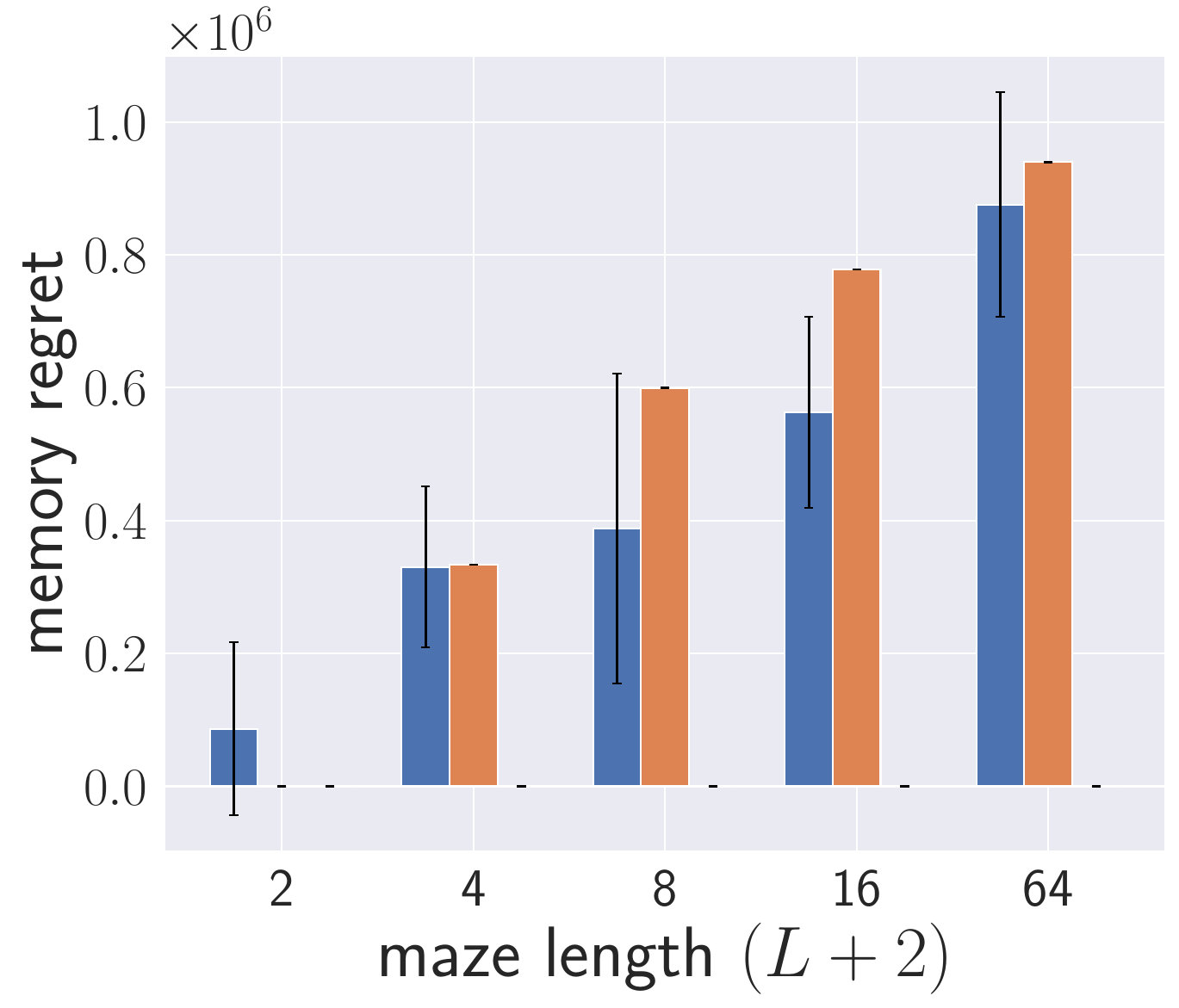}
        \caption{\textbf{Memory regret}}
        % \label{fig:PPO_passive_tmaze_mask_regret}
    \end{subfigure}
    \begin{subfigure}[b]{0.24\textwidth}
        \centering
        \includegraphics[width=\linewidth]{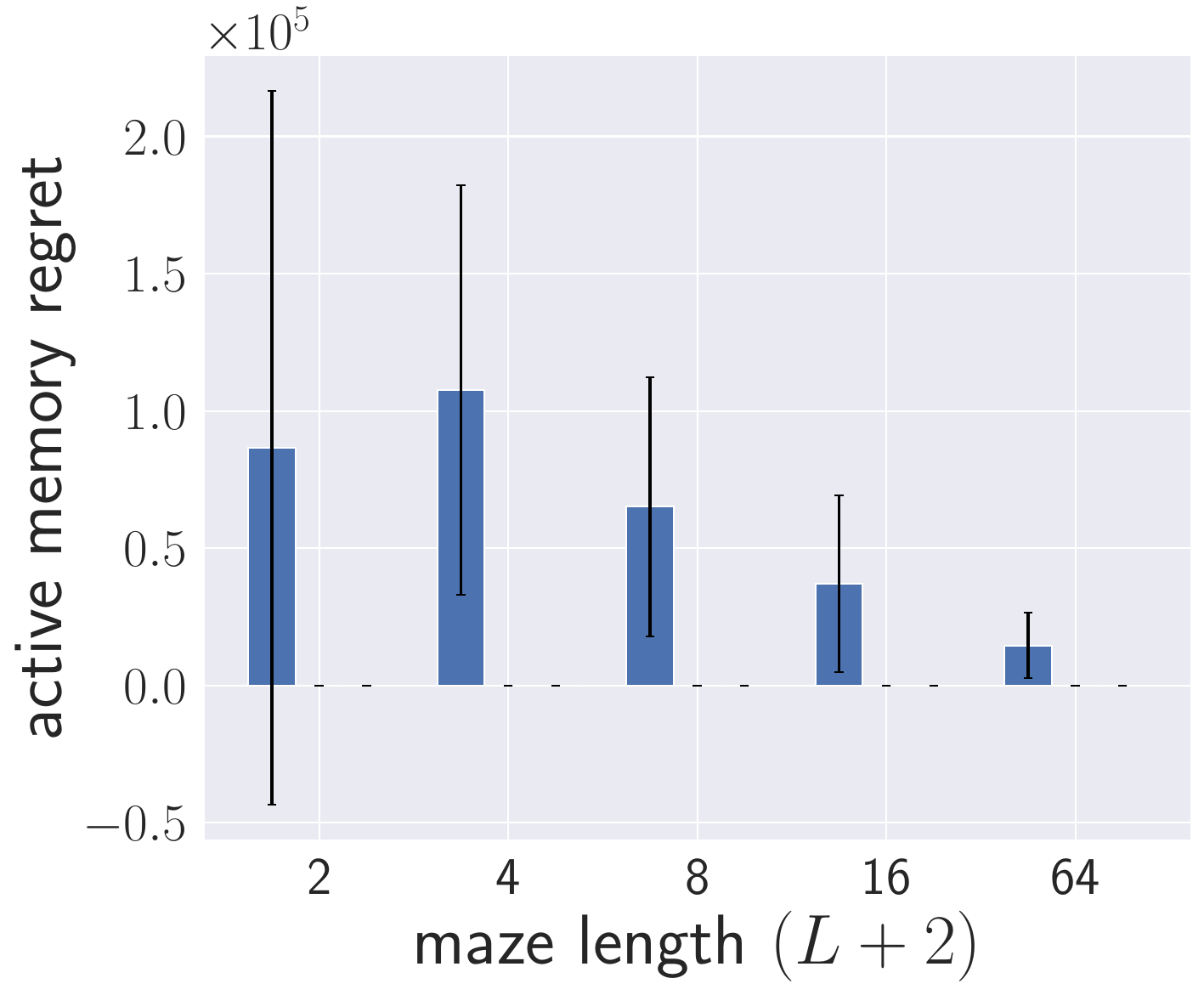}
        \caption{\textbf{Active regret}}
        % \label{fig:PPO_passive_tmaze_active_regret}
    \end{subfigure}
    \begin{subfigure}[b]{0.24\textwidth}
        \centering
        \includegraphics[width=\linewidth]{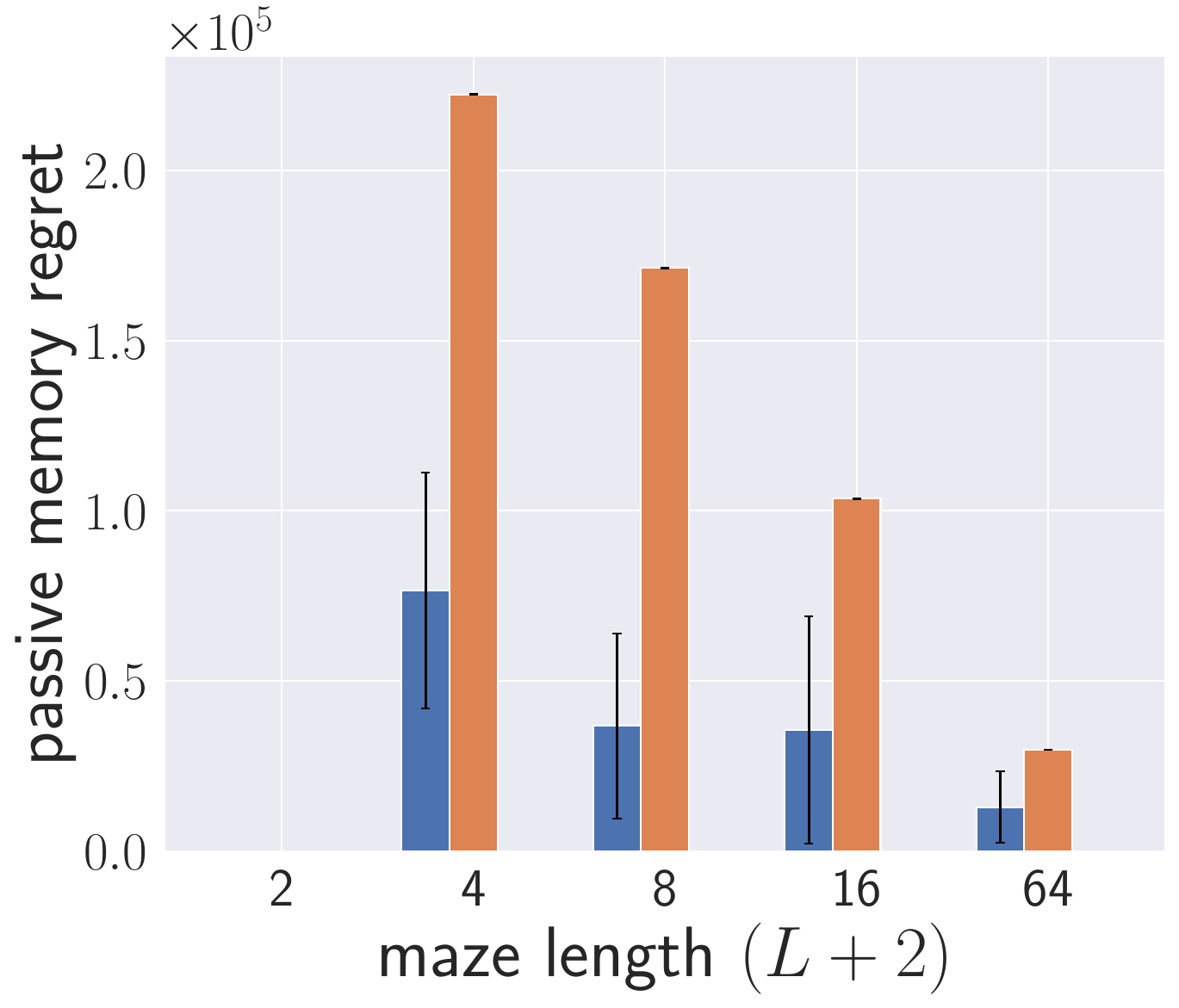}
        \caption{\textbf{Passive regret}}
        % \label{fig:PPO_passive_tmaze_passive_regret}
    \end{subfigure}
    % \begin{subfigure}[b]{0.19\textwidth}
    %     \centering
    %     \includegraphics[width=\linewidth]{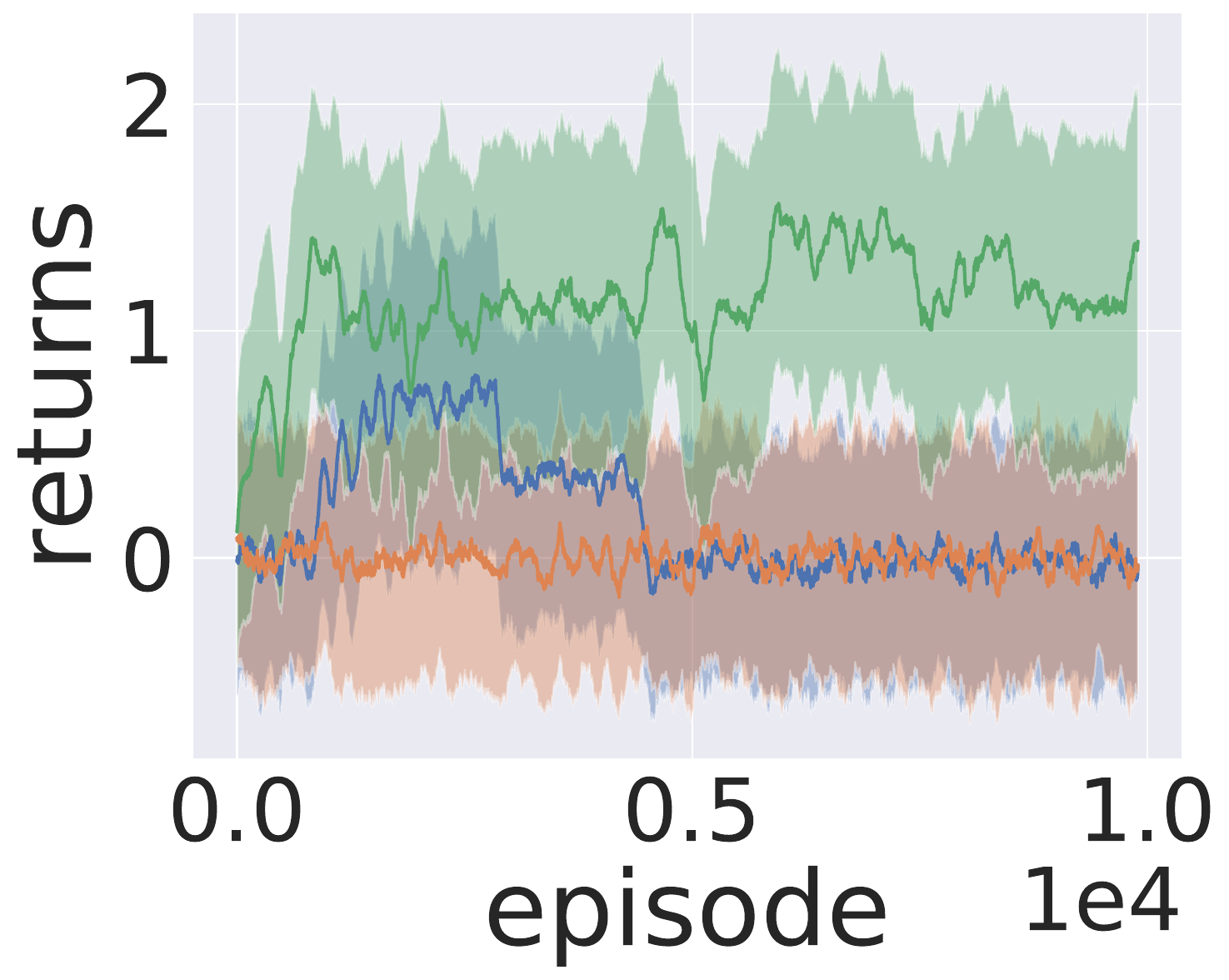}
    %     \caption{\textbf{Returns $L=62$}}
    %     % \label{fig:PPO_passive_tmaze_passive_regret}
    % \end{subfigure}
    \caption{Episodic \textbf{Passive-TMaze} with PPO and Transformer policy ($N_{rs}=10$).).
    }
% \label{fig:PPO-passive-tmazes}
    \label{fig:PPO-passive-tmazes-transformer}
\end{figure}

\newpage
\subsection{Learning Curves}

\begin{figure}[h!]
    \centering
    \includegraphics[width=0.8\linewidth]{images/newplots/legend.pdf}\\
    % \begin{subfigure}[b]{0.19\textwidth}
    %     \centering
    %     \includegraphics[width=\linewidth]{images/newplots/QL-env_passive_tmaze-v0-returns.pdf}
    %     \caption{\textbf{Total rewards}}
    %     \label{fig:ql_passive_tmaze_total_rewards}
    % \end{subfigure}
    \begin{subfigure}[b]{0.19\textwidth}
        \centering
        \includegraphics[width=\linewidth]{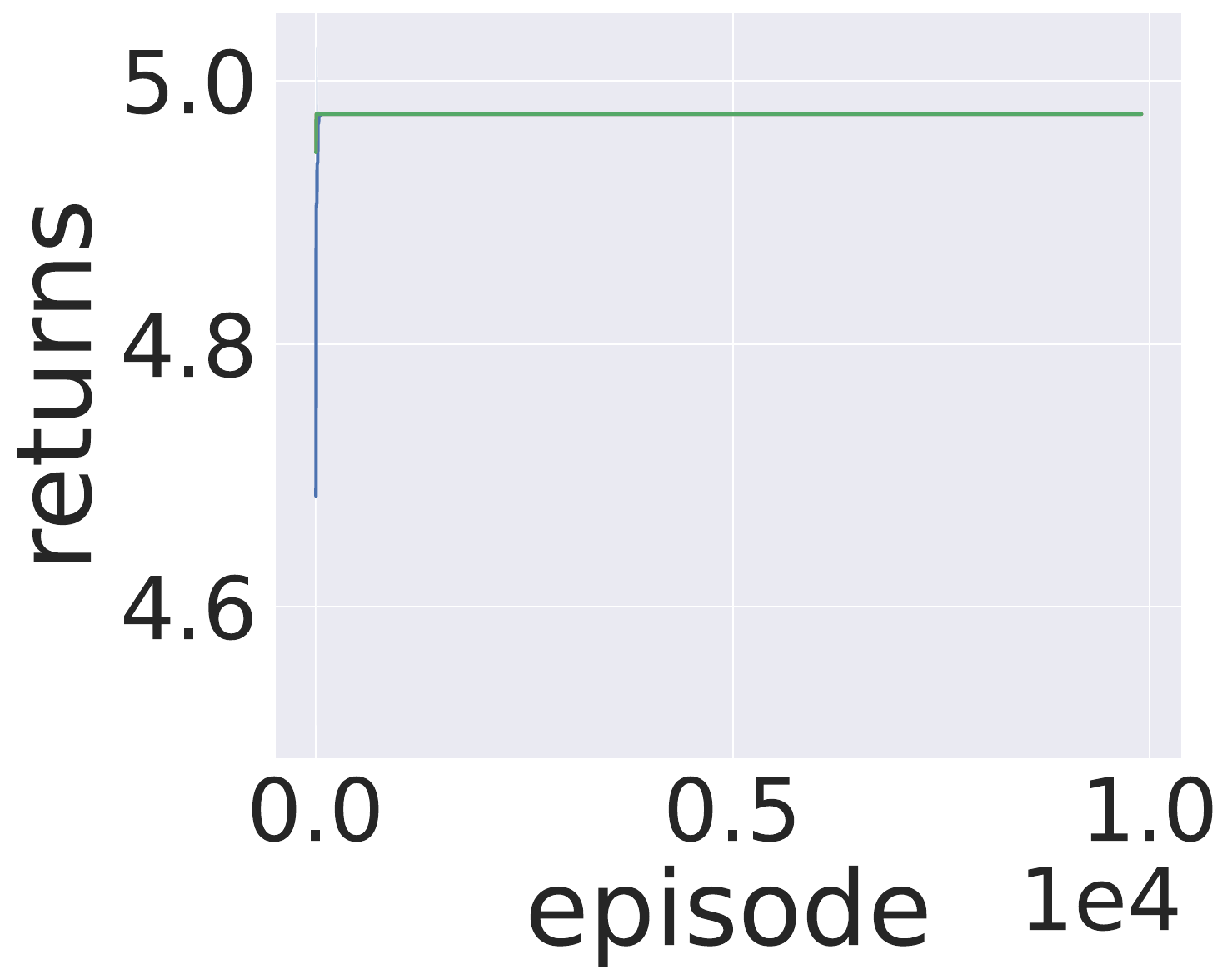}
        \caption{$L=0$}
       % \label{fig:ql_passive_tmaze_rewards_regret}
    \end{subfigure}
    \begin{subfigure}[b]{0.19\textwidth}
        \centering
        \includegraphics[width=\linewidth]{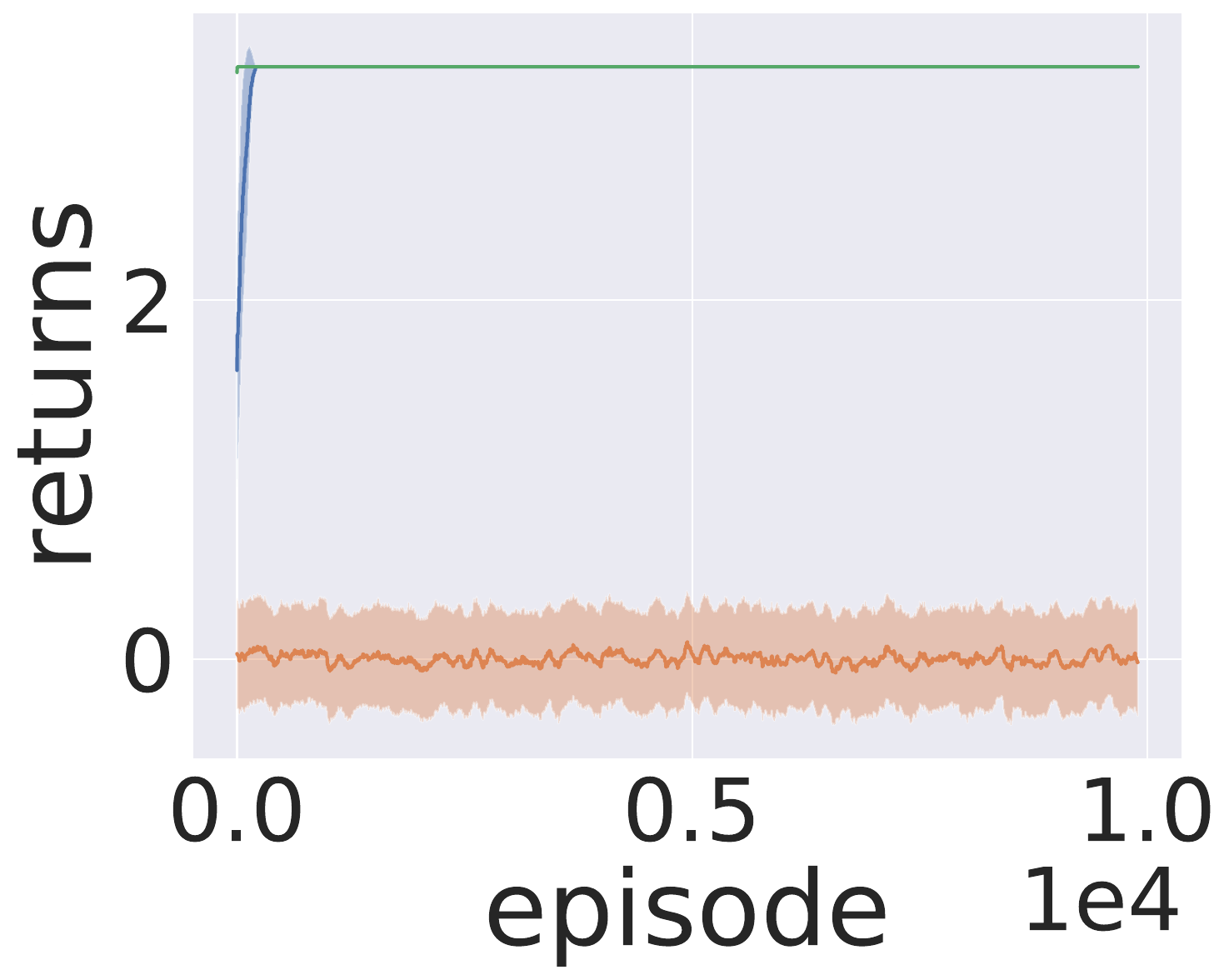}
        \caption{$L=1$}
       % \label{fig:ql_passive_tmaze_mask_regret}
    \end{subfigure}
    \begin{subfigure}[b]{0.19\textwidth}
        \centering
        \includegraphics[width=\linewidth]{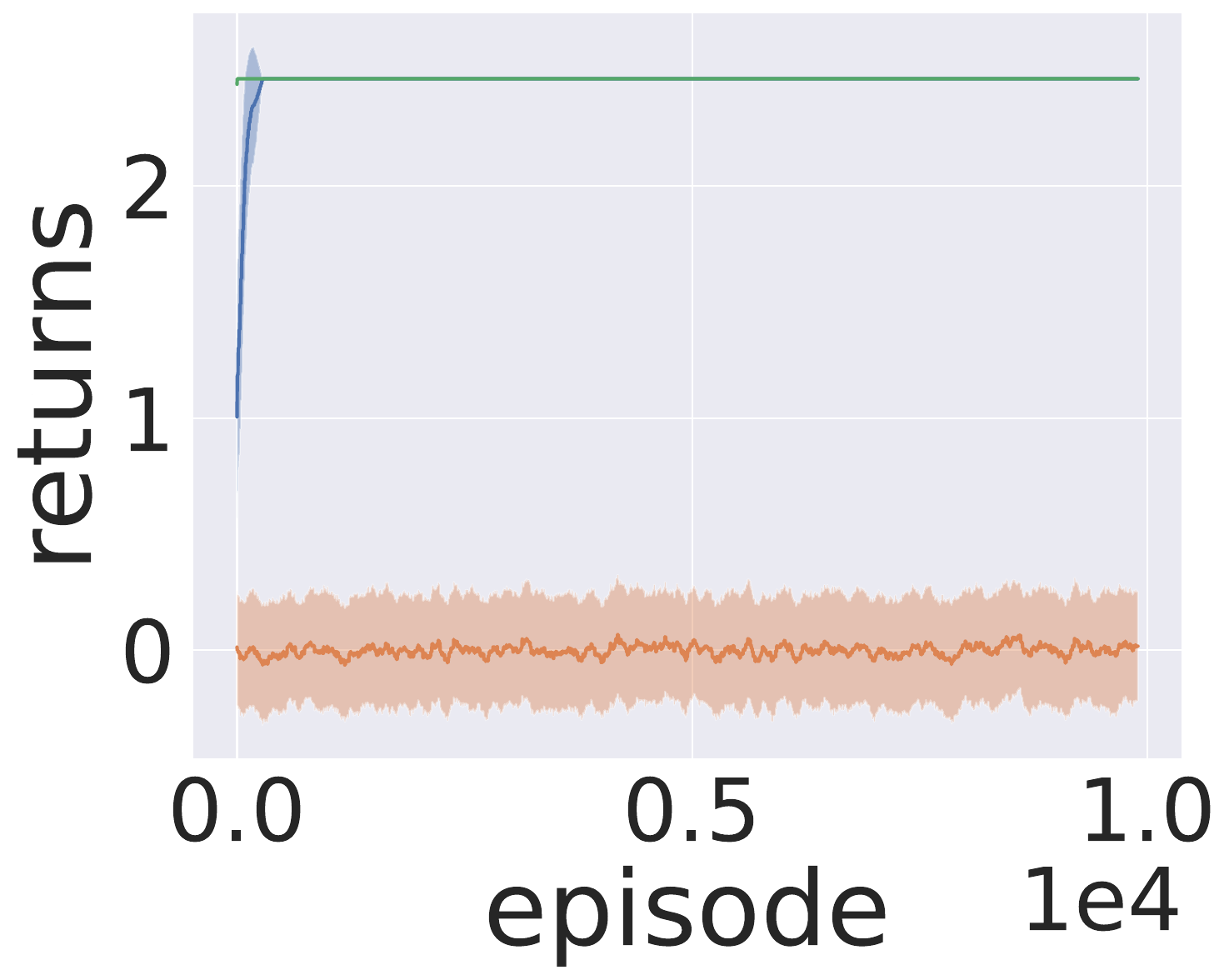}
        \caption{$L=2$}
       % \label{fig:ql_passive_tmaze_active_regret}
    \end{subfigure}
    \begin{subfigure}[b]{0.19\textwidth}
        \centering
        \includegraphics[width=\linewidth]{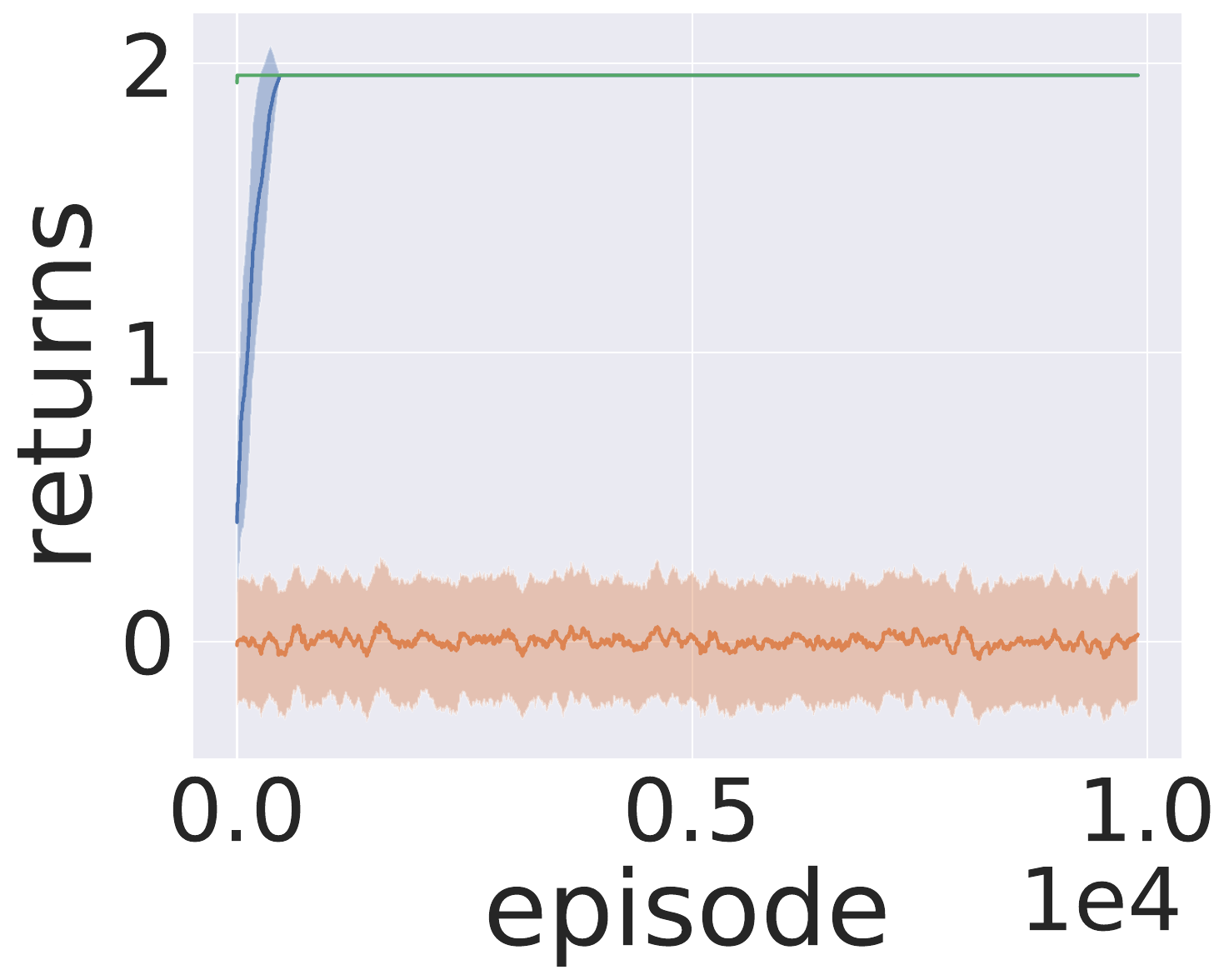}
        \caption{$L=3$}
       % \label{fig:ql_passive_tmaze_passive_regret}
    \end{subfigure}
    \begin{subfigure}[b]{0.19\textwidth}
        \centering
        \includegraphics[width=\linewidth]{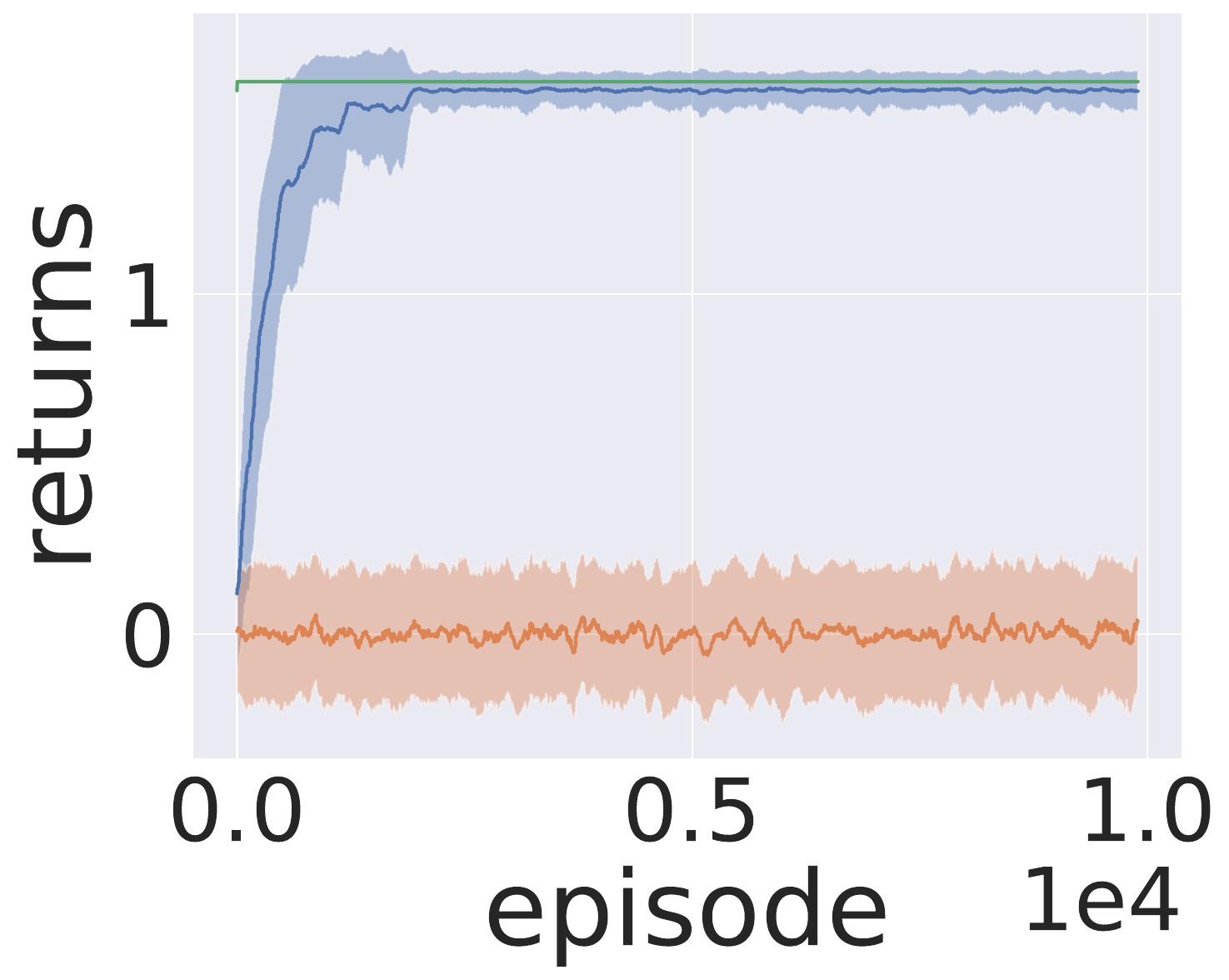}
        \caption{\textbf{$L=4$}}
       % \label{fig:ql_passive_tmaze_passive_returns}
    \end{subfigure}
    \caption{Returns in Continual \textbf{Passive-TMaze} with Q-learning ($N_{rs}=20$) for varying maze lengths ($L+2$). AS quickly matches the oracle FS($k^*$) in returns, while outperforming FS($\kappa$).}
   % \label{fig:ql-passive-tmazes}
\end{figure}

\begin{figure}[h!]
    \centering
    % \includegraphics[width=0.6\linewidth]{images/newplots/legend.pdf}\\
    % \begin{subfigure}[b]{0.19\textwidth}
    %     \centering
    %     \includegraphics[width=\linewidth]{images/newplots/QL-env_passive_tmaze-v0-returns.pdf}
    %     \caption{\textbf{Total rewards}}
    %     \label{fig:ql_passive_tmaze_total_rewards}
    % \end{subfigure}
    \begin{subfigure}[b]{0.19\textwidth}
        \centering
        \includegraphics[width=\linewidth]{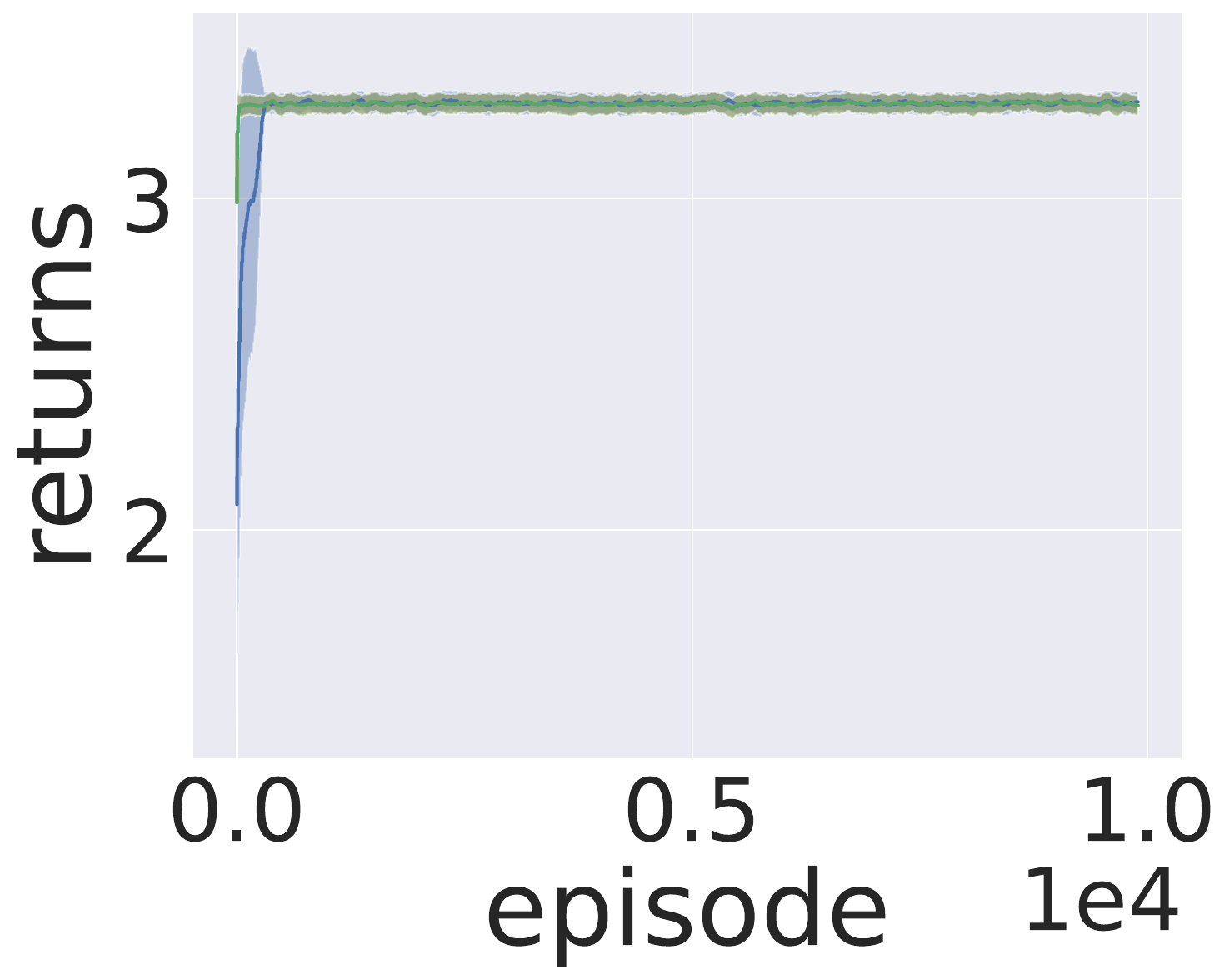}
        \caption{$L=0$}
       % \label{fig:ql_active_tmaze_rewards_regret}
    \end{subfigure}
    \begin{subfigure}[b]{0.19\textwidth}
        \centering
        \includegraphics[width=\linewidth]{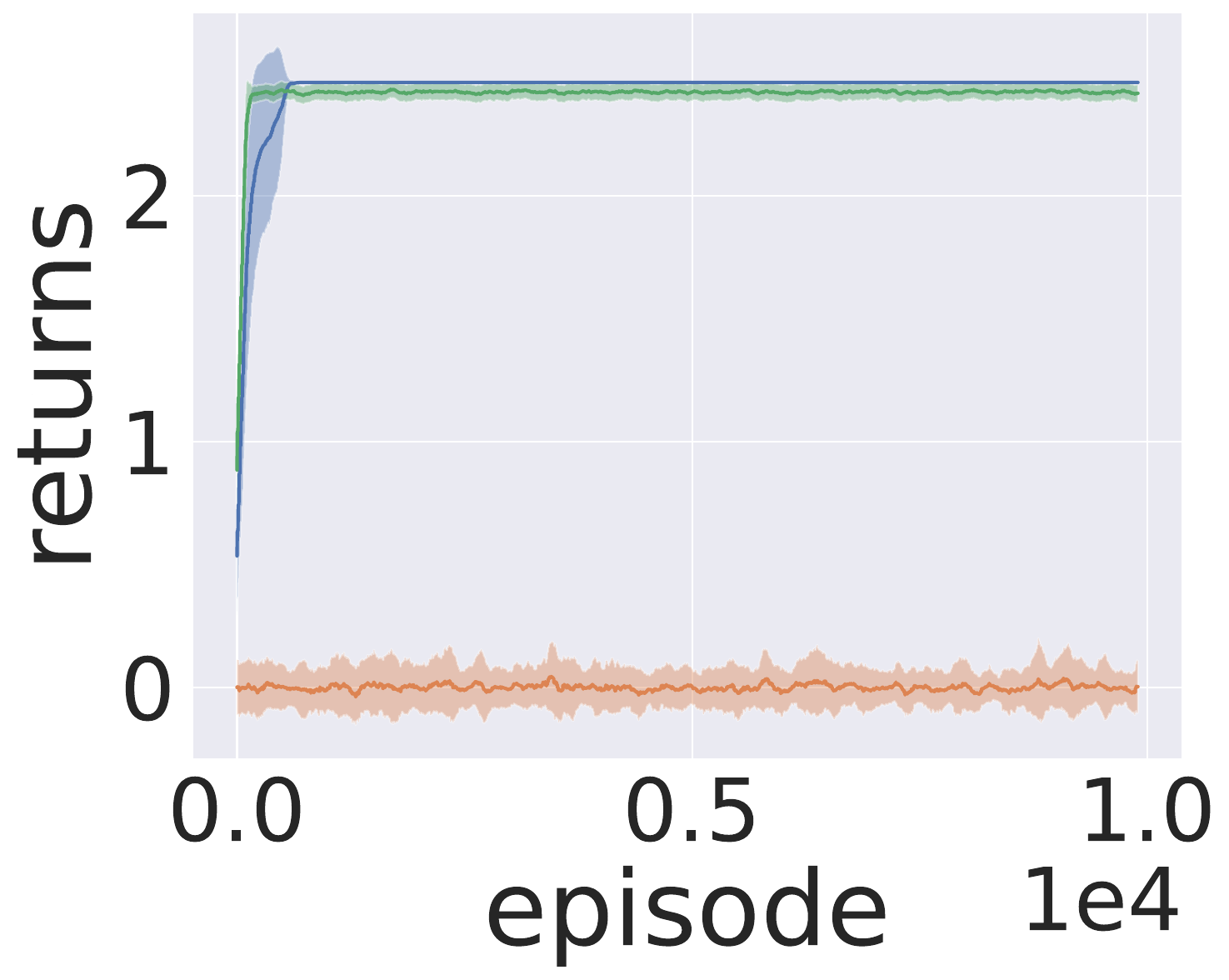}
        \caption{$L=1$}
       % \label{fig:ql_active_tmaze_mask_regret}
    \end{subfigure}
    \begin{subfigure}[b]{0.19\textwidth}
        \centering
        \includegraphics[width=\linewidth]{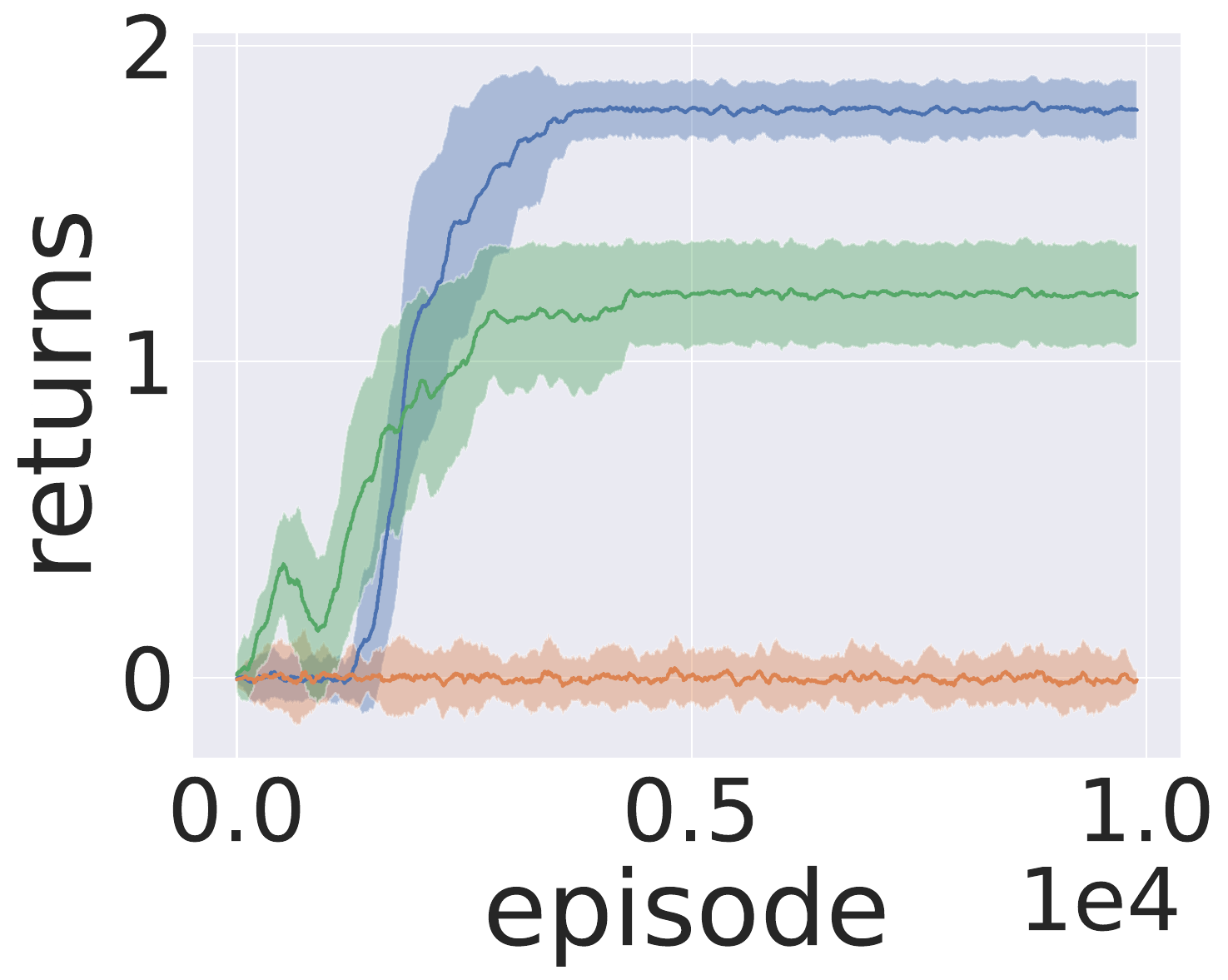}
        \caption{$L=2$}
       % \label{fig:ql_active_tmaze_active_regret}
    \end{subfigure}
    \begin{subfigure}[b]{0.19\textwidth}
        \centering
        \includegraphics[width=\linewidth]{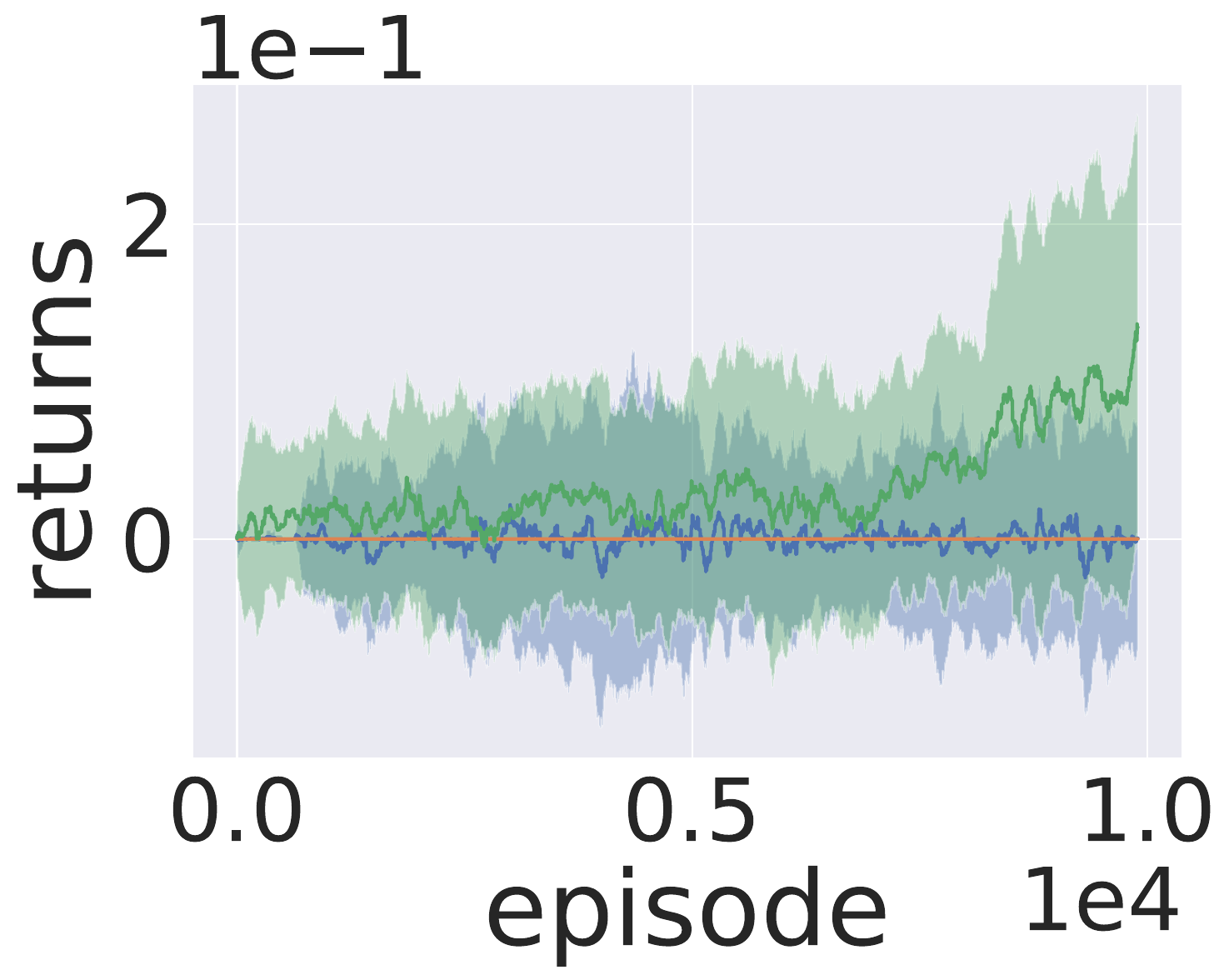}
        \caption{$L=3$}
       % \label{fig:ql_active_tmaze_active_regret}
    \end{subfigure}
    \begin{subfigure}[b]{0.19\textwidth}
        \centering
        \includegraphics[width=\linewidth]{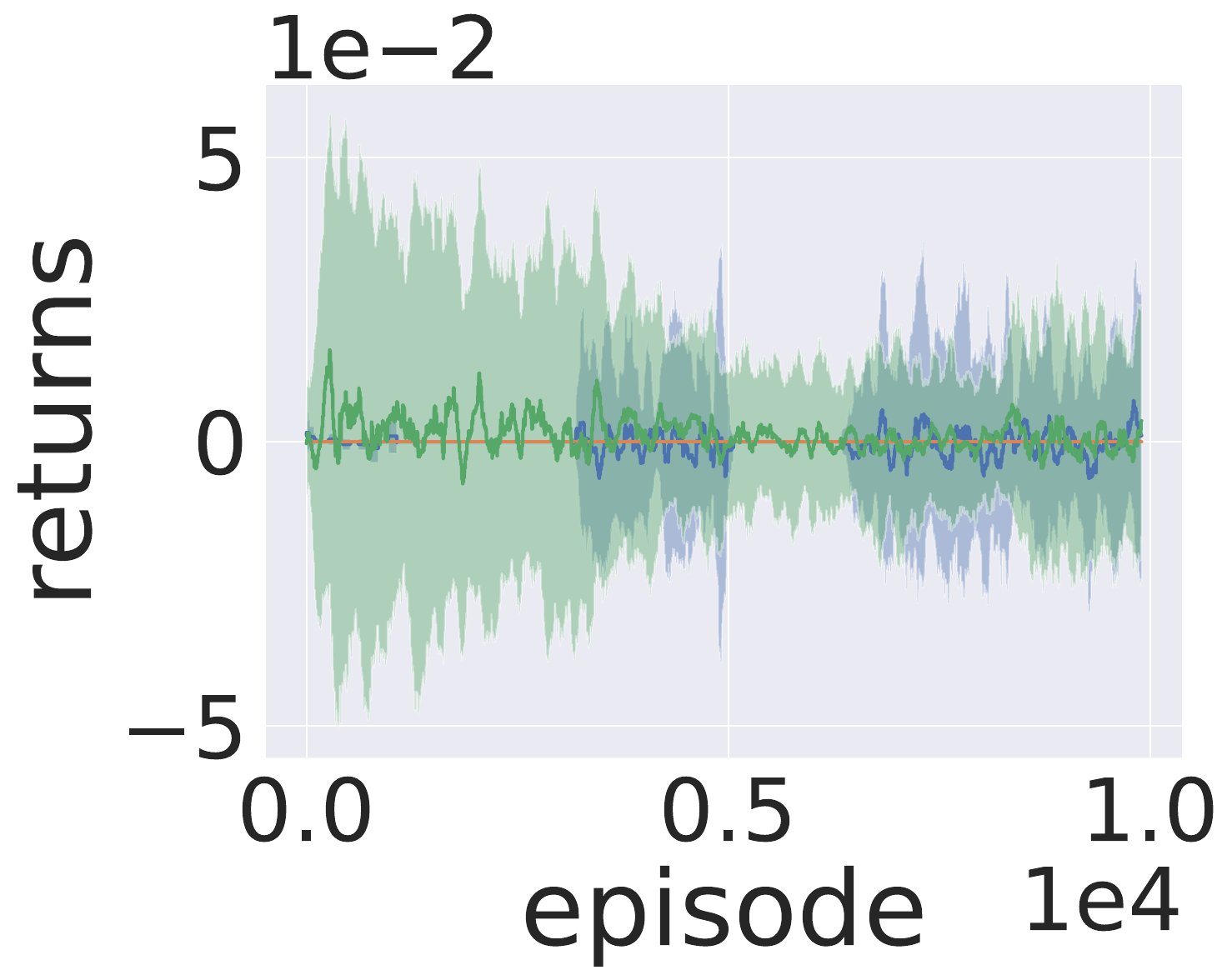}
        \caption{\textbf{$L=4$}}
       % \label{fig:ql_passive_tmaze_passive_returns}
    \end{subfigure}
    \caption{Returns in Continual \textbf{Active-TMaze} with Q-learning ($N_{rs}=20$) for varying maze lengths ($L+2$). AS quickly matches or exceeds the oracle FS($k^*$) in returns,
    % (since the oracle needs to now learn to navigate, which may lead to poor memory management)
     while outperforming FS($\kappa$).}
   % \label{fig:ql-passive-tmazes}
\end{figure}
\begin{figure}[h!]
    \centering
    % \begin{subfigure}[b]{0.19\textwidth}
    %     \centering
    %     \includegraphics[width=\linewidth]{images/newplots/QL-env_passive_tmaze-v0-returns.pdf}
    %     \caption{\textbf{Total rewards}}
    %     \label{fig:ql_passive_tmaze_total_rewards}
    % \end{subfigure}
    \begin{subfigure}[b]{0.19\textwidth}
        \centering
        \includegraphics[width=\linewidth]{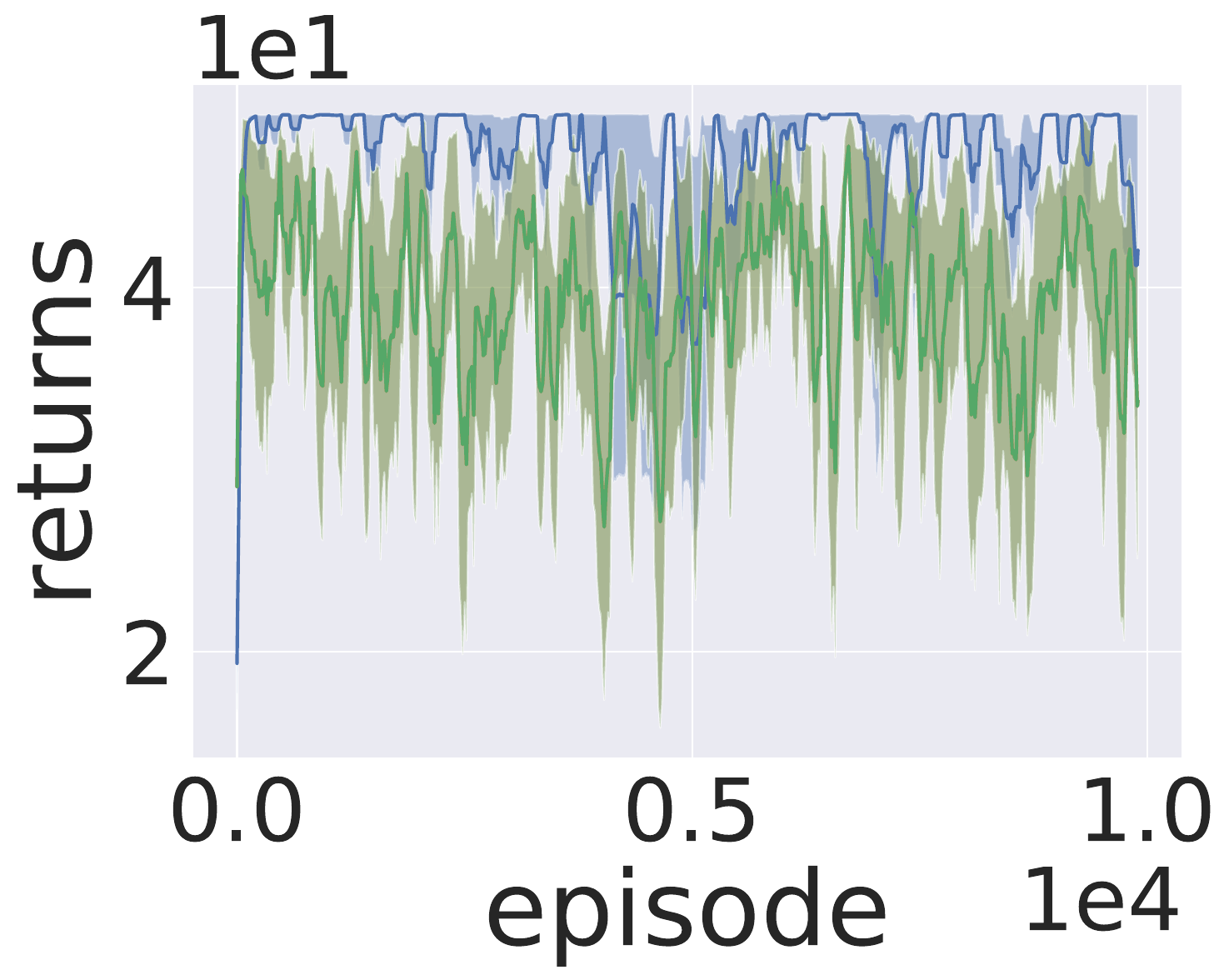}
        \caption{$L=0$}
       % \label{fig:PPO_passive_tmaze_rewards_regret}
    \end{subfigure}
    \begin{subfigure}[b]{0.19\textwidth}
        \centering
        \includegraphics[width=\linewidth]{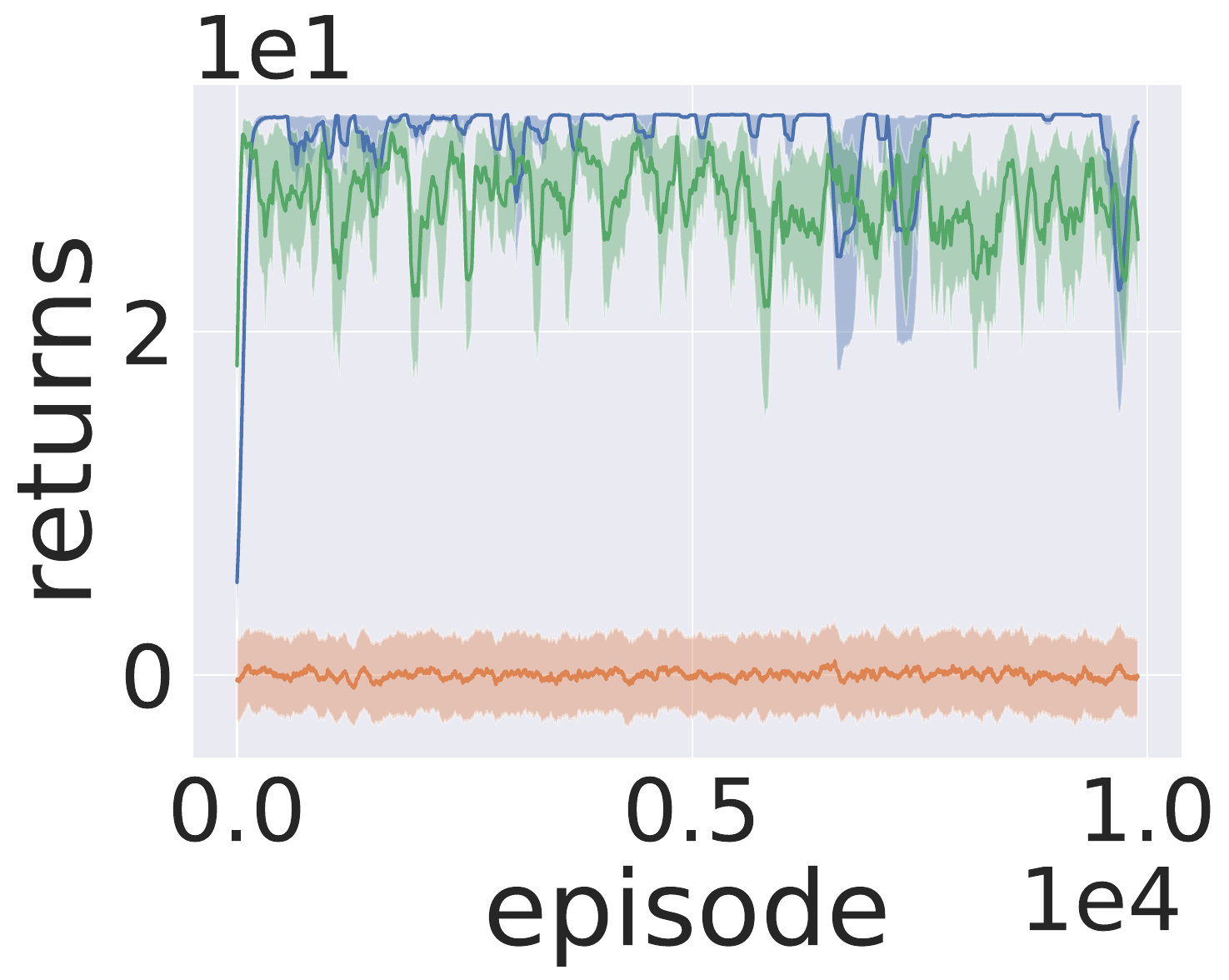}
        \caption{$L=1$}
       % \label{fig:PPO_passive_tmaze_mask_regret}
    \end{subfigure}
    \begin{subfigure}[b]{0.19\textwidth}
        \centering
        \includegraphics[width=\linewidth]{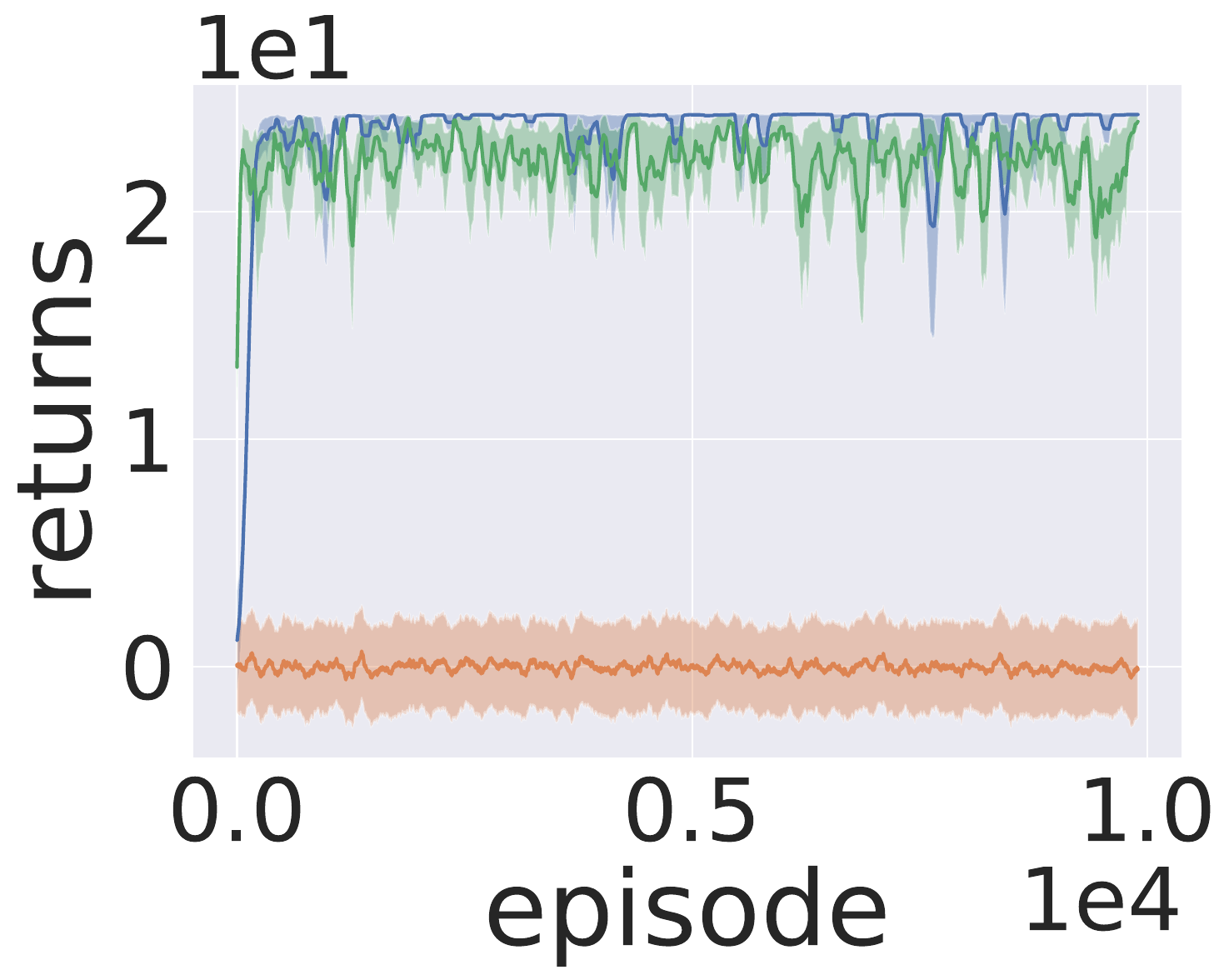}
        \caption{$L=2$}
       % \label{fig:PPO_passive_tmaze_active_regret}
    \end{subfigure}
    \begin{subfigure}[b]{0.19\textwidth}
        \centering
        \includegraphics[width=\linewidth]{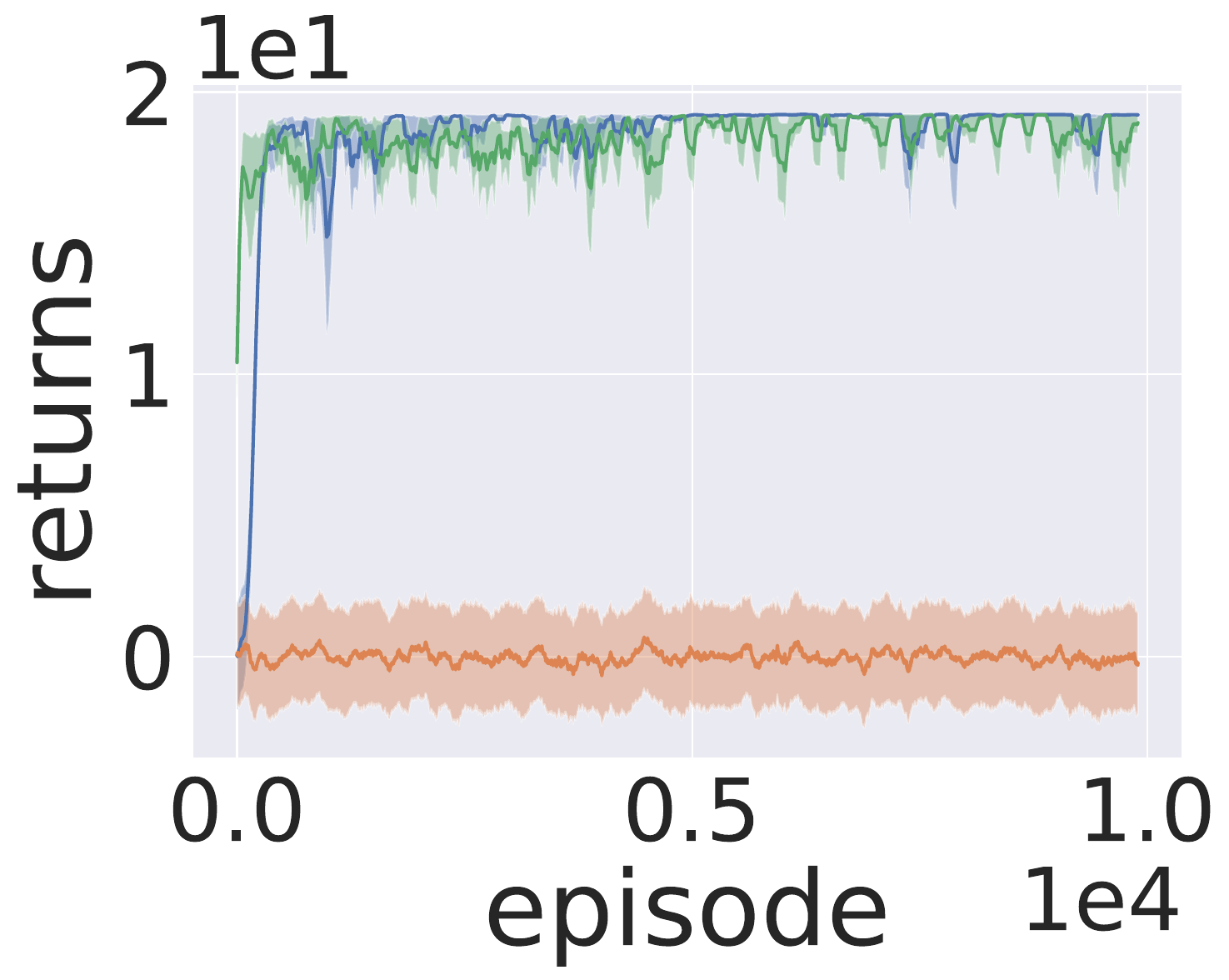}
        \caption{$L=3$}
       % \label{fig:PPO_passive_tmaze_passive_regret}
    \end{subfigure}
    \begin{subfigure}[b]{0.19\textwidth}
        \centering
        \includegraphics[width=\linewidth]{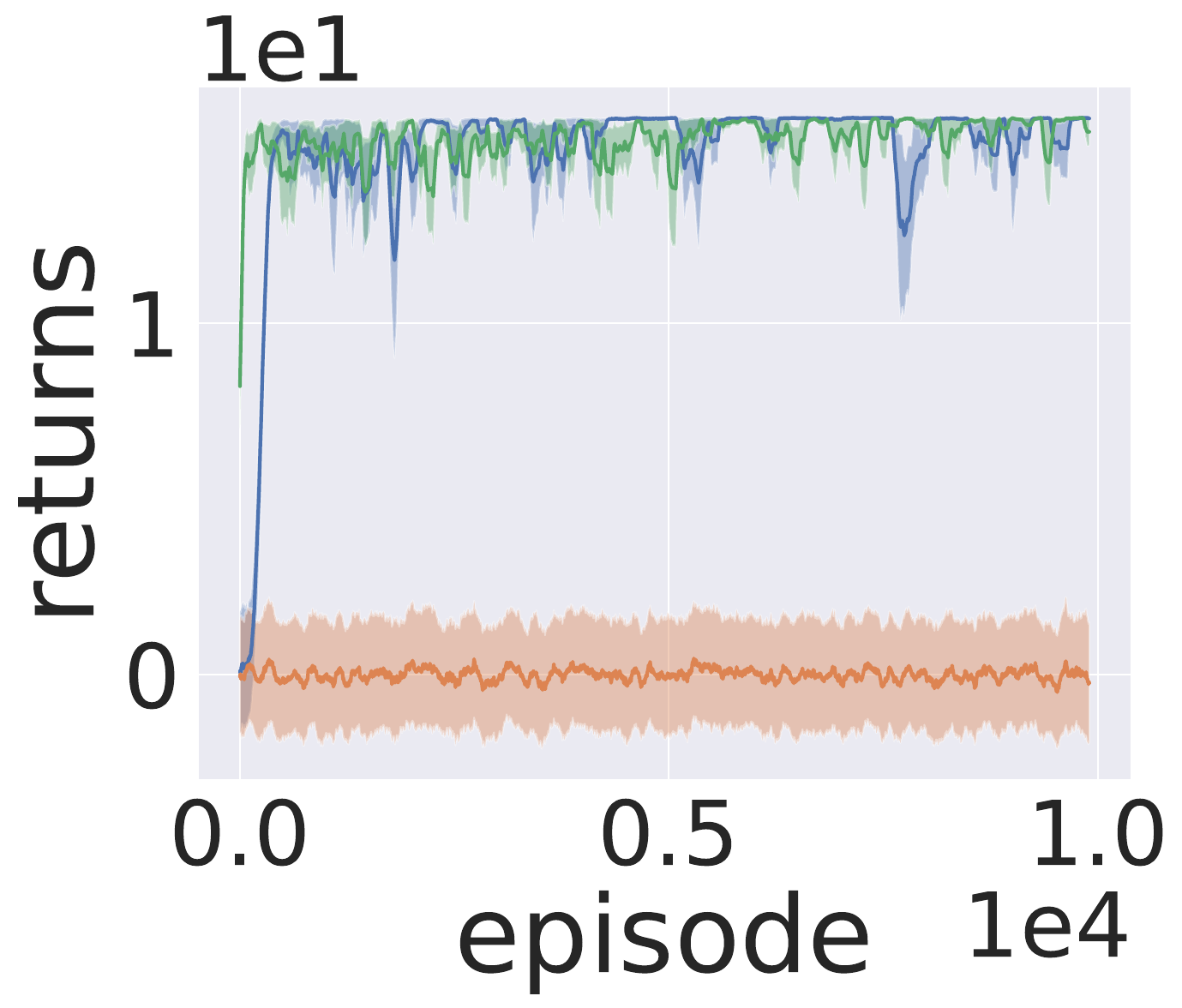}
        \caption{\textbf{$L=4$}}
       % \label{fig:ql_passive_tmaze_passive_returns}
    \end{subfigure}
    \caption{Returns in Episodic \textbf{Passive-TMaze} using PPO with an MLP ($N_{rs}=10$) for varying maze lengths ($L+2$). The corridor lengths per episode are fixed to the max length. AS quickly matches the oracle FS($k^*$) in returns, while outperforming FS($\kappa$, orange) especially for long-term dependencies.}
   % \label{fig:ql-passive-tmazes}
\end{figure}
\begin{figure}[h!]
    \centering
    % \includegraphics[width=0.6\linewidth]{images/newplots/legend.pdf}\\
    % \begin{subfigure}[b]{0.19\textwidth}
    %     \centering
    %     \includegraphics[width=\linewidth]{images/newplots/QL-env_passive_tmaze-v0-returns.pdf}
    %     \caption{\textbf{Total rewards}}
    %     \label{fig:ql_passive_tmaze_total_rewards}
    % \end{subfigure}
    \begin{subfigure}[b]{0.19\textwidth}
        \centering
        \includegraphics[width=\linewidth]{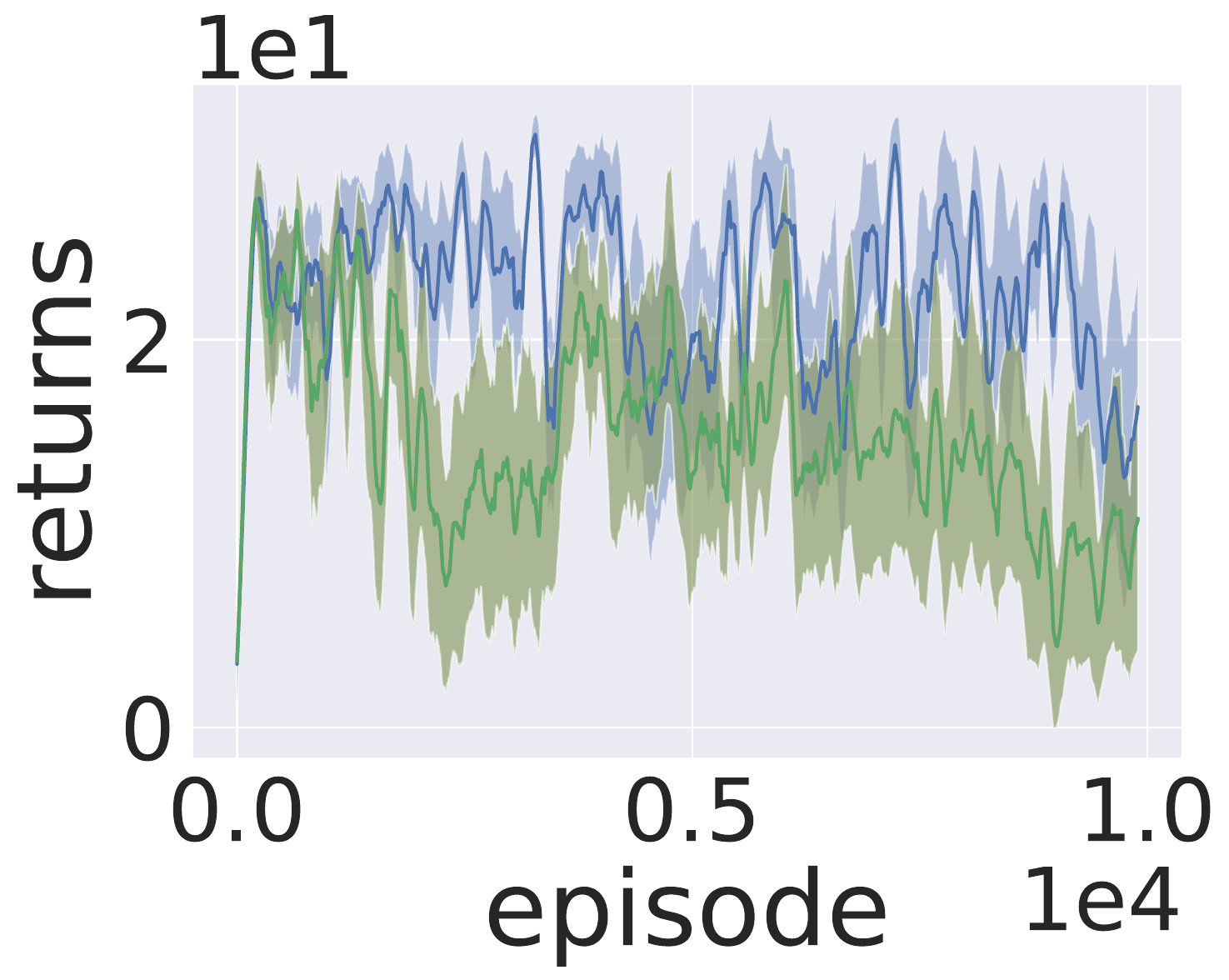}
        \caption{$L=0$}
       % \label{fig:PPO_active_tmaze_rewards_regret}
    \end{subfigure}
    \begin{subfigure}[b]{0.19\textwidth}
        \centering
        \includegraphics[width=\linewidth]{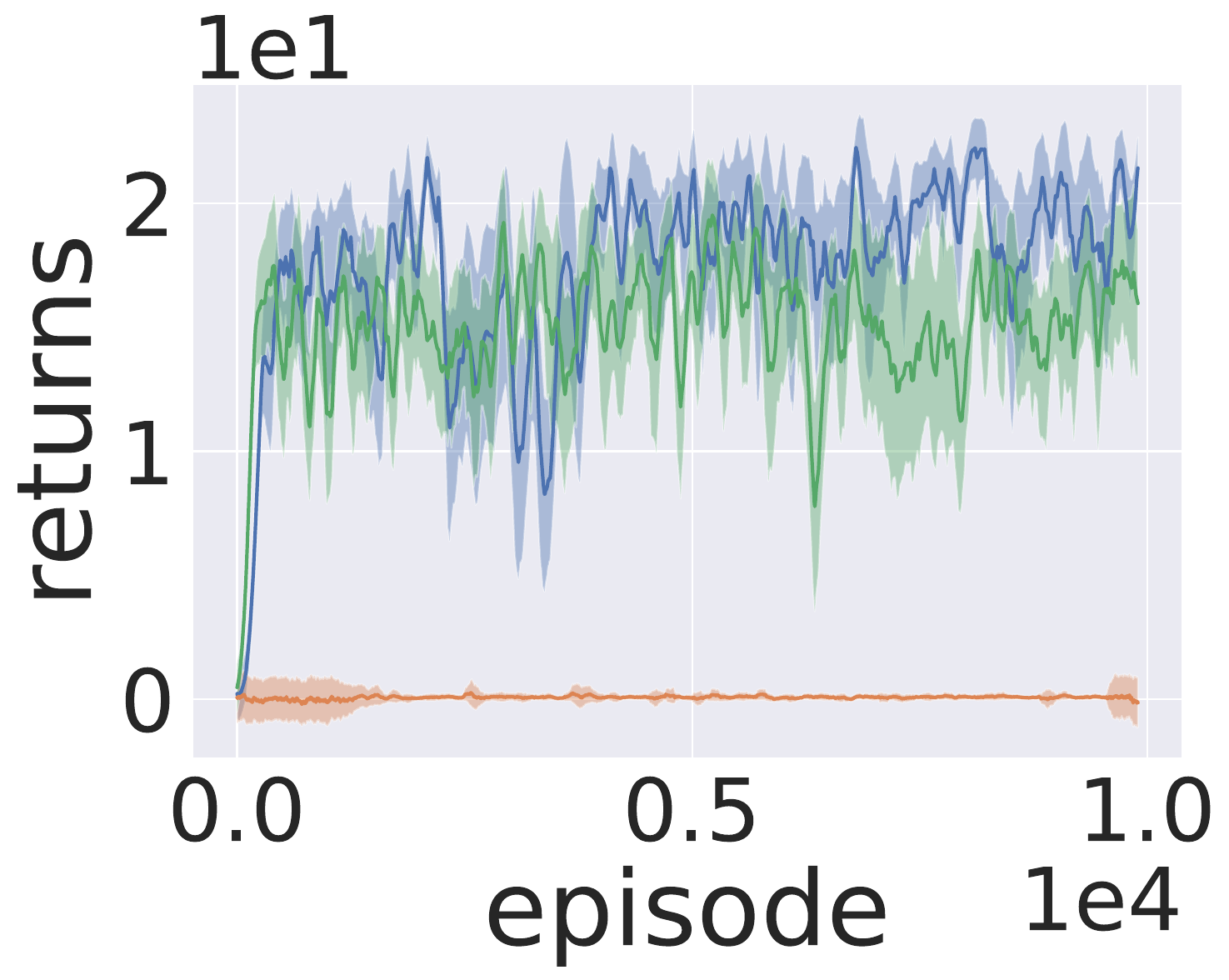}
        \caption{$L=1$}
       % \label{fig:PPO_active_tmaze_mask_regret}
    \end{subfigure}
    \begin{subfigure}[b]{0.19\textwidth}
        \centering
        \includegraphics[width=\linewidth]{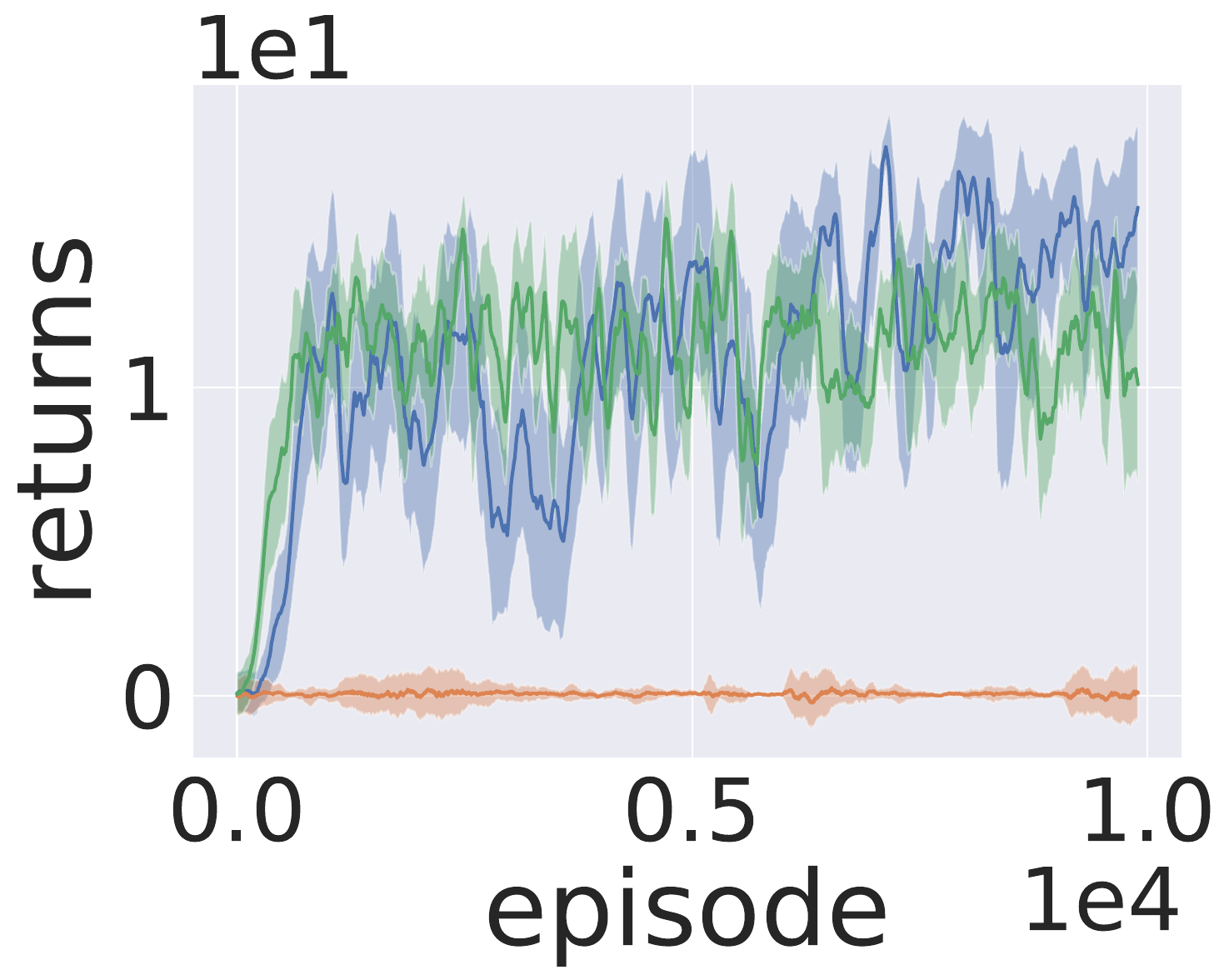}
        \caption{$L=2$}
       % \label{fig:PPO_active_tmaze_active_regret}
    \end{subfigure}
    \begin{subfigure}[b]{0.19\textwidth}
        \centering
        \includegraphics[width=\linewidth]{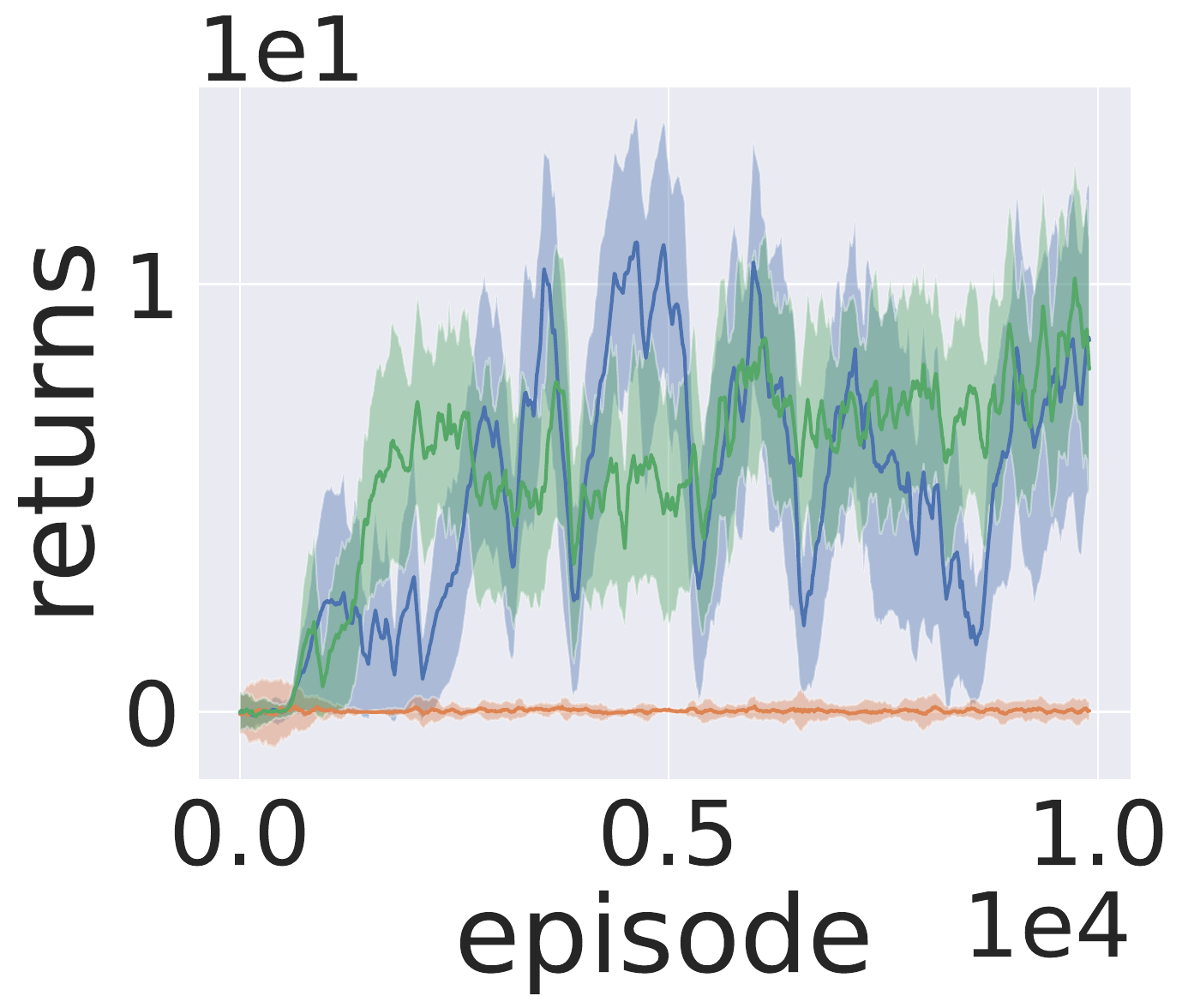}
        \caption{$L=3$}
       % \label{fig:PPO_active_tmaze_active_regret}
    \end{subfigure}
    \begin{subfigure}[b]{0.19\textwidth}
        \centering
        \includegraphics[width=\linewidth]{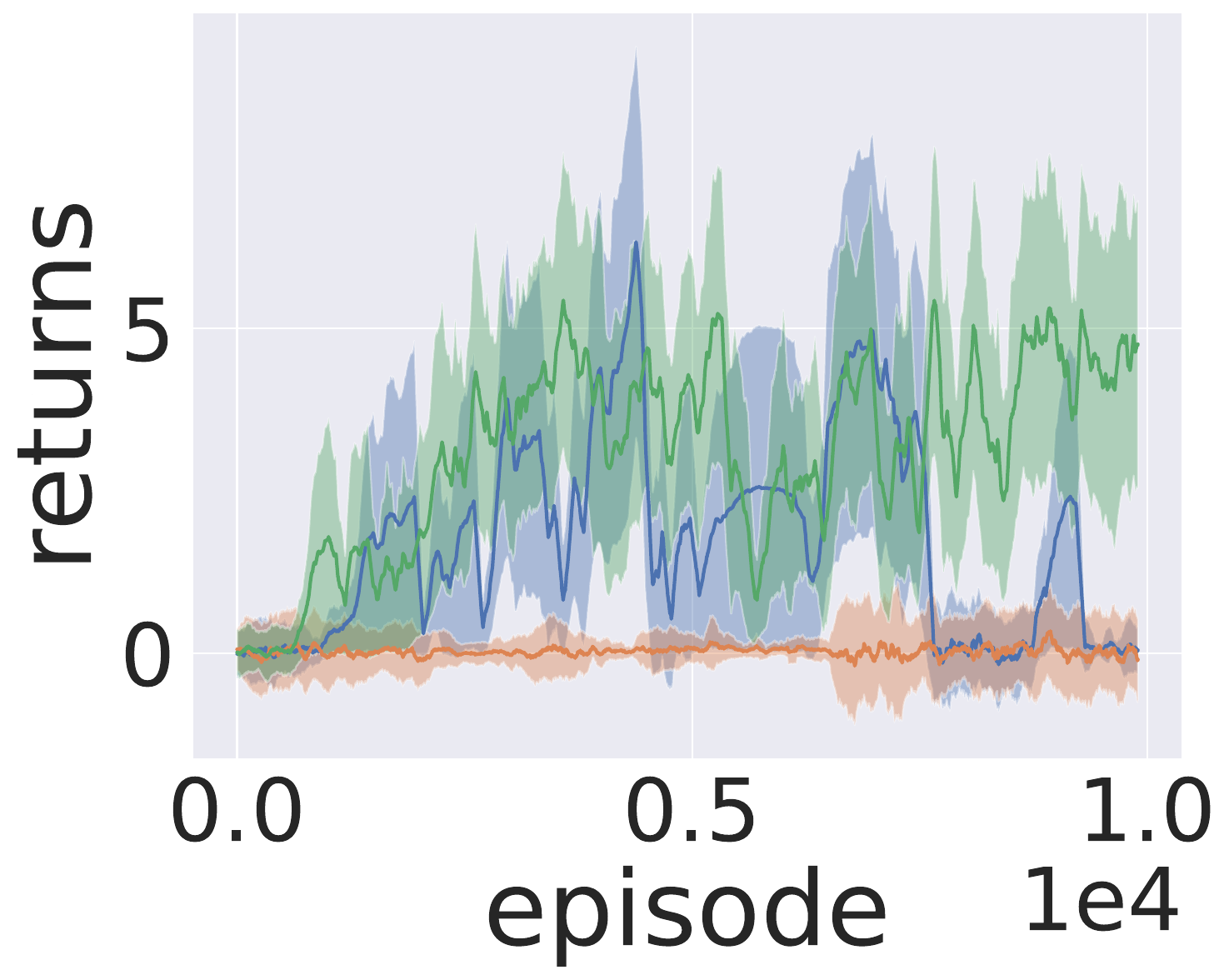}
        \caption{\textbf{$L=4$}}
       % \label{fig:ql_passive_tmaze_passive_returns}
    \end{subfigure}
    \caption{Returns in Episodic \textbf{Active-TMaze} with PPO with an MLP ($N_{rs}=10$) for varying maze lengths ($L+2$). The corridor lengths per episode are fixed to the max length. AS quickly matches the oracle FS($k^*$) in returns, while outperforming FS($\kappa$, orange).}
   % \label{fig:ql-passive-tmazes}
\end{figure}

\newpage
\begin{figure}[h!]
    \centering
    \includegraphics[width=0.8\linewidth]{images/newplots/legend.pdf}\\
    % \includegraphics[width=0.6\linewidth]{images/newplots/legend.pdf}\\
    % \begin{subfigure}[b]{0.19\textwidth}
    %     \centering
    %     \includegraphics[width=\linewidth]{images/newplots/QL-env_passive_tmaze-v0-returns.pdf}
    %     \caption{\textbf{Total rewards}}
    %     \label{fig:ql_passive_tmaze_total_rewards}
    % \end{subfigure}
    \begin{subfigure}[b]{0.19\textwidth}
        \centering
        \includegraphics[width=\linewidth]{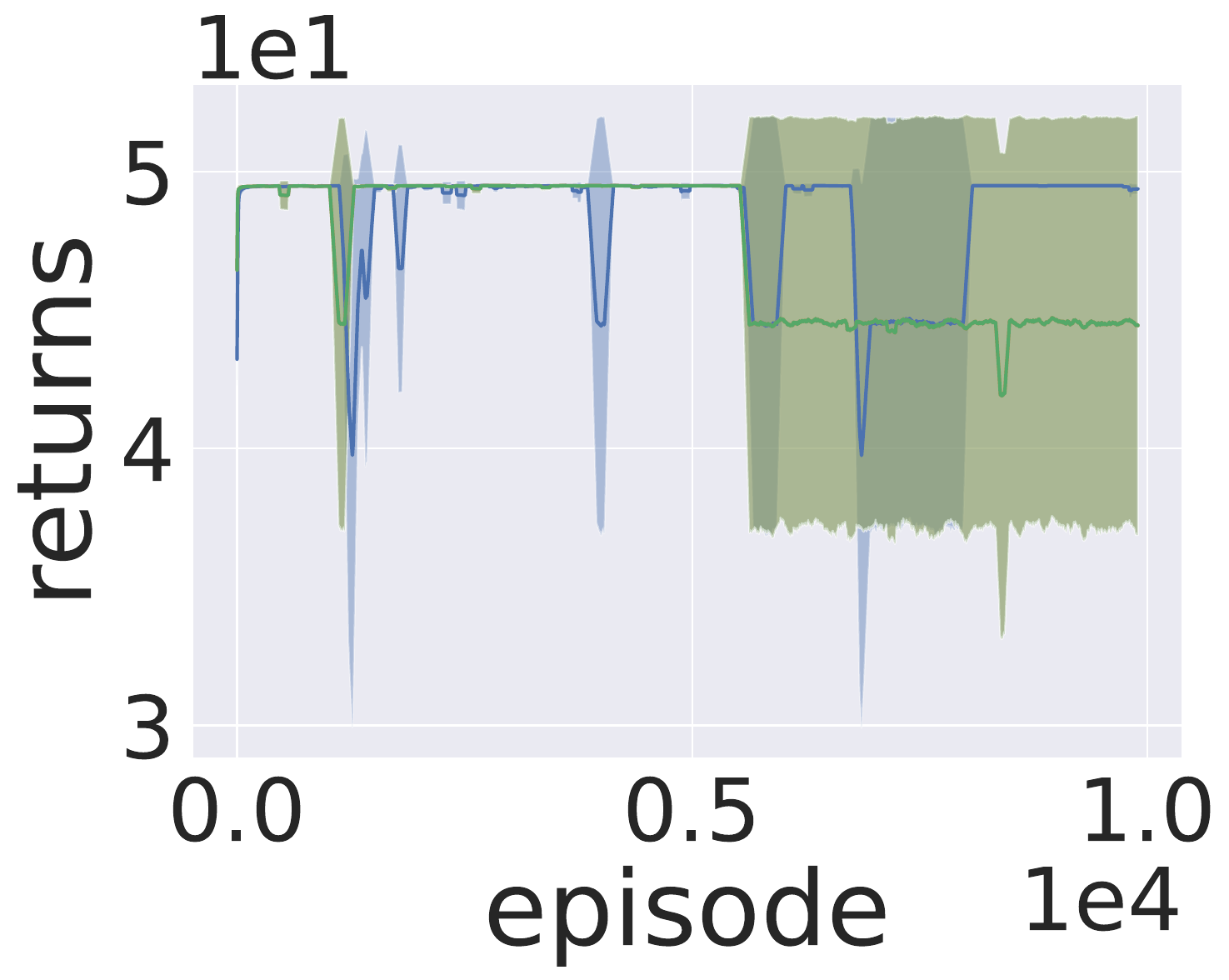}
        \caption{$L=0$}
       % \label{fig:PPO_passive_tmaze_rewards_regret}
    \end{subfigure}
    \begin{subfigure}[b]{0.19\textwidth}
        \centering
        \includegraphics[width=\linewidth]{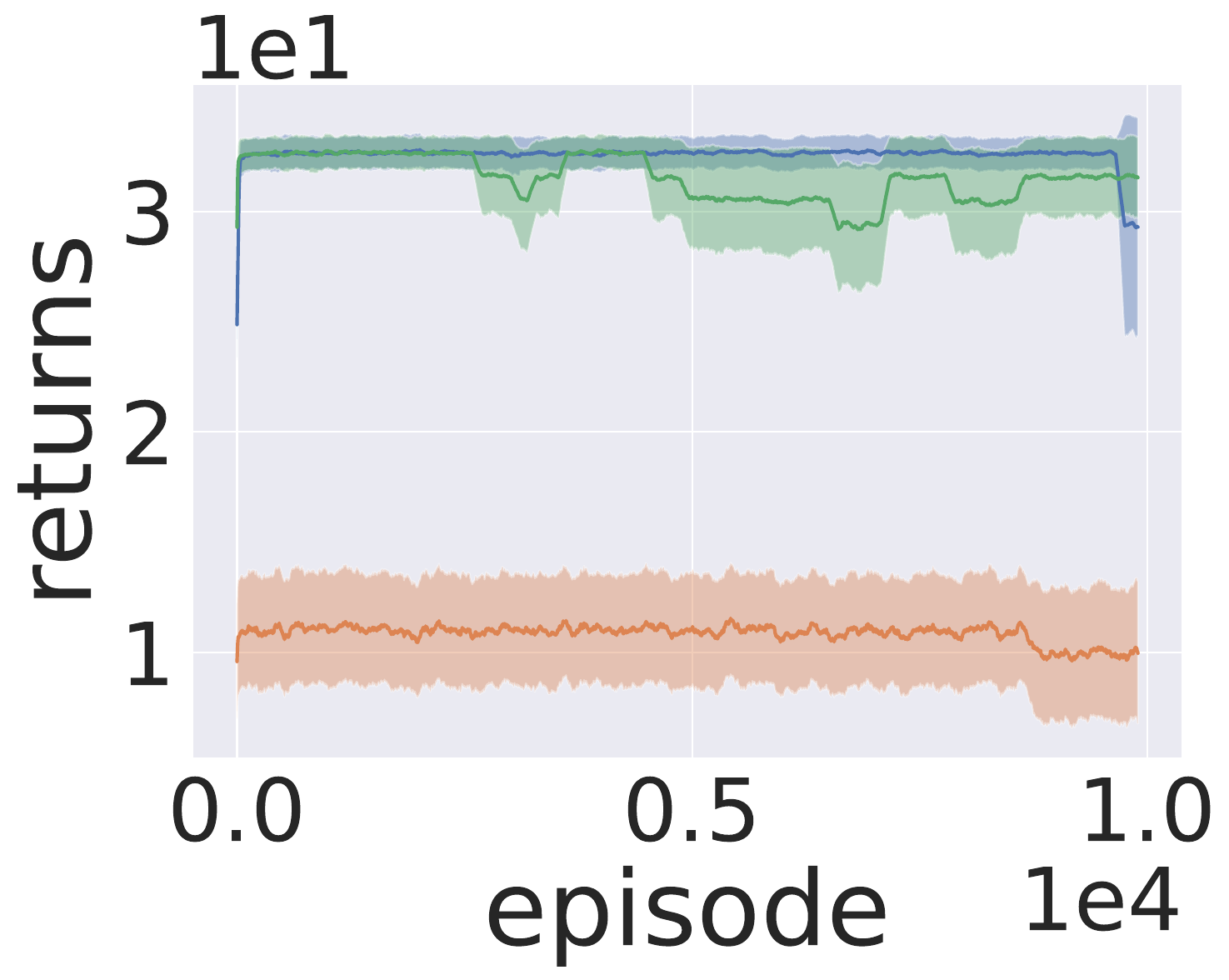}
        \caption{$L=2$}
       % \label{fig:PPO_passive_tmaze_mask_regret}
    \end{subfigure}
    \begin{subfigure}[b]{0.19\textwidth}
        \centering
        \includegraphics[width=\linewidth]{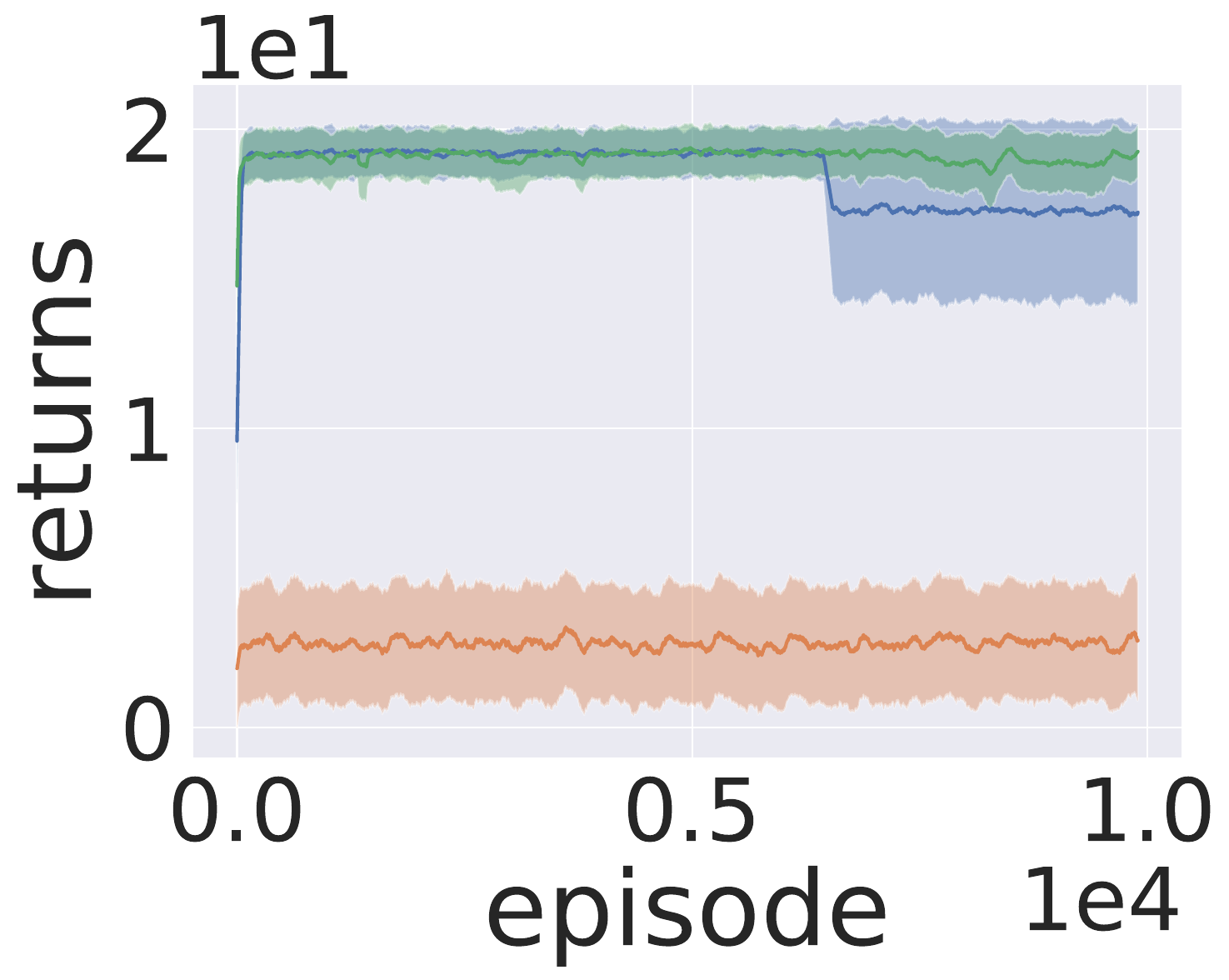}
        \caption{$L=6$}
       % \label{fig:PPO_passive_tmaze_active_regret}
    \end{subfigure}
    \begin{subfigure}[b]{0.19\textwidth}
        \centering
        \includegraphics[width=\linewidth]{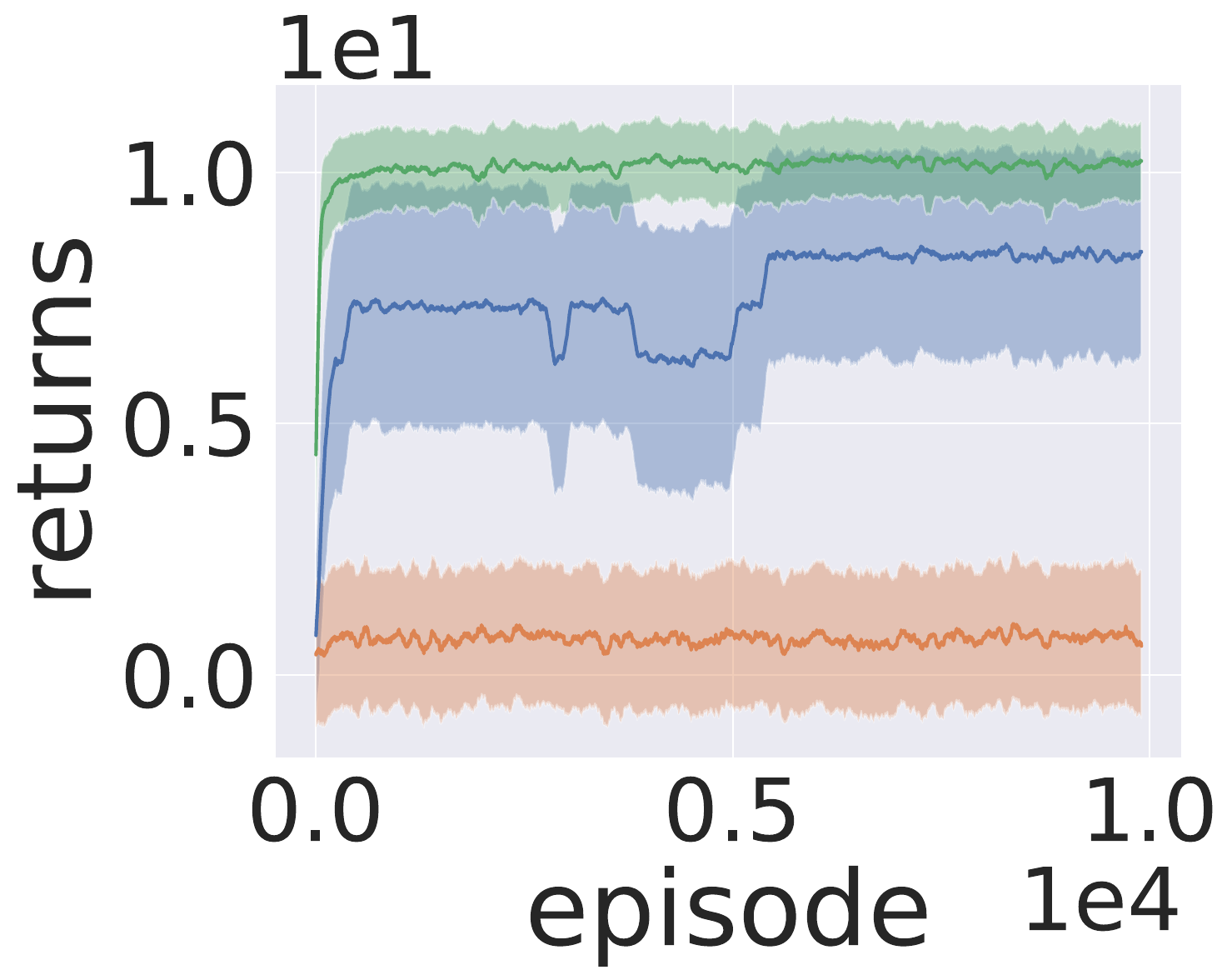}
        \caption{$L=14$}
       % \label{fig:PPO_passive_tmaze_passive_regret}
    \end{subfigure}
    \begin{subfigure}[b]{0.19\textwidth}
        \centering
        \includegraphics[width=\linewidth]{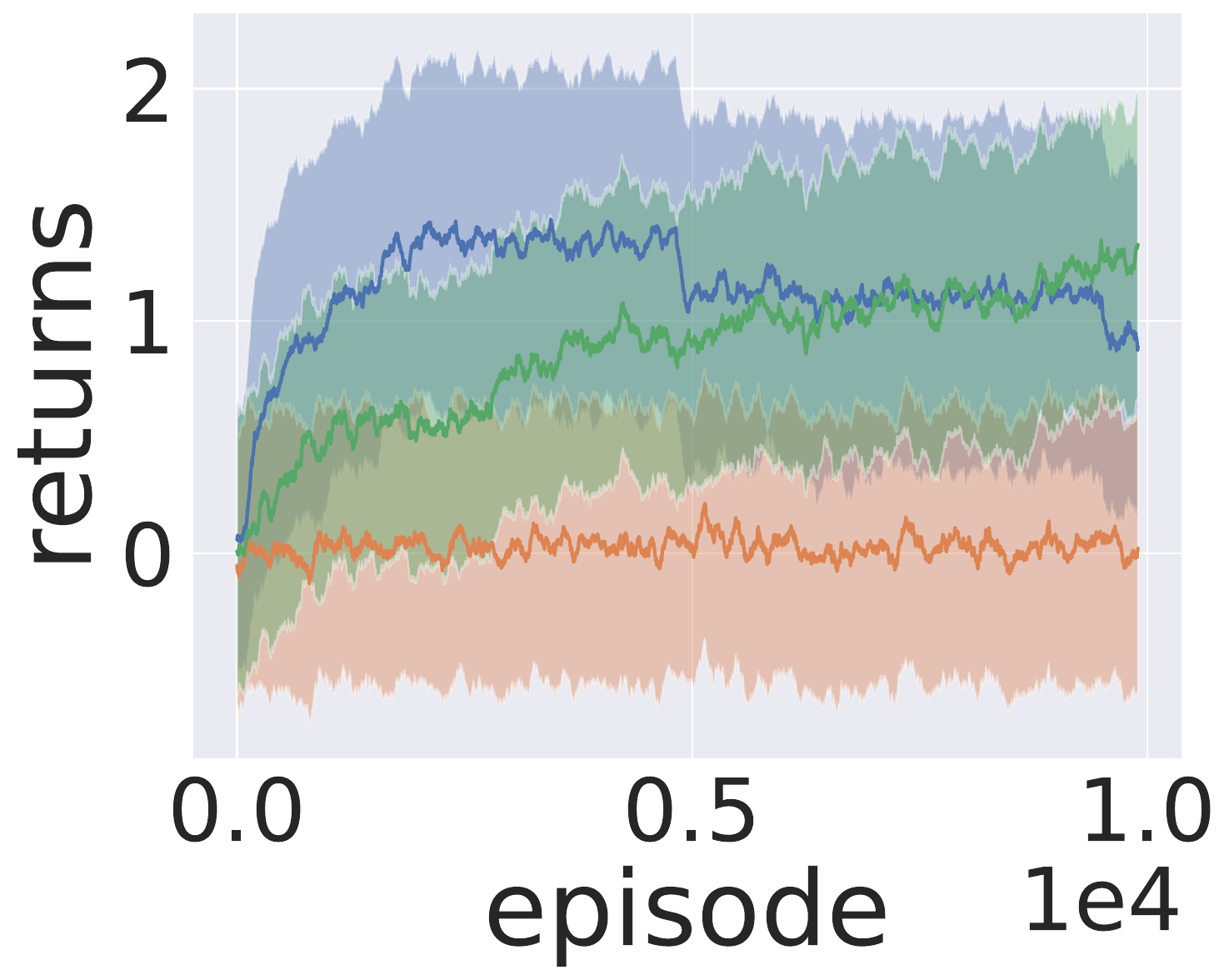}
        \caption{\textbf{$L=62$}}
       % \label{fig:ql_passive_tmaze_passive_returns}
    \end{subfigure}
    \caption{Returns in Episodic \textbf{Passive-TMaze} using PPO with an MLP ($N_{rs}=10$) for varying maze lengths ($L+2$). AS is comparable to the oracle FS($k^*$) in returns, while outperforming FS($\kappa$).}
   % \label{fig:ql-passive-tmazes}
\end{figure}
\begin{figure}[h!]
    \centering
    % \includegraphics[width=0.6\linewidth]{images/newplots/legend.pdf}\\
    % \begin{subfigure}[b]{0.19\textwidth}
    %     \centering
    %     \includegraphics[width=\linewidth]{images/newplots/QL-env_passive_tmaze-v0-returns.pdf}
    %     \caption{\textbf{Total rewards}}
    %     \label{fig:ql_passive_tmaze_total_rewards}
    % \end{subfigure}
    \begin{subfigure}[b]{0.19\textwidth}
        \centering
        \includegraphics[width=\linewidth]{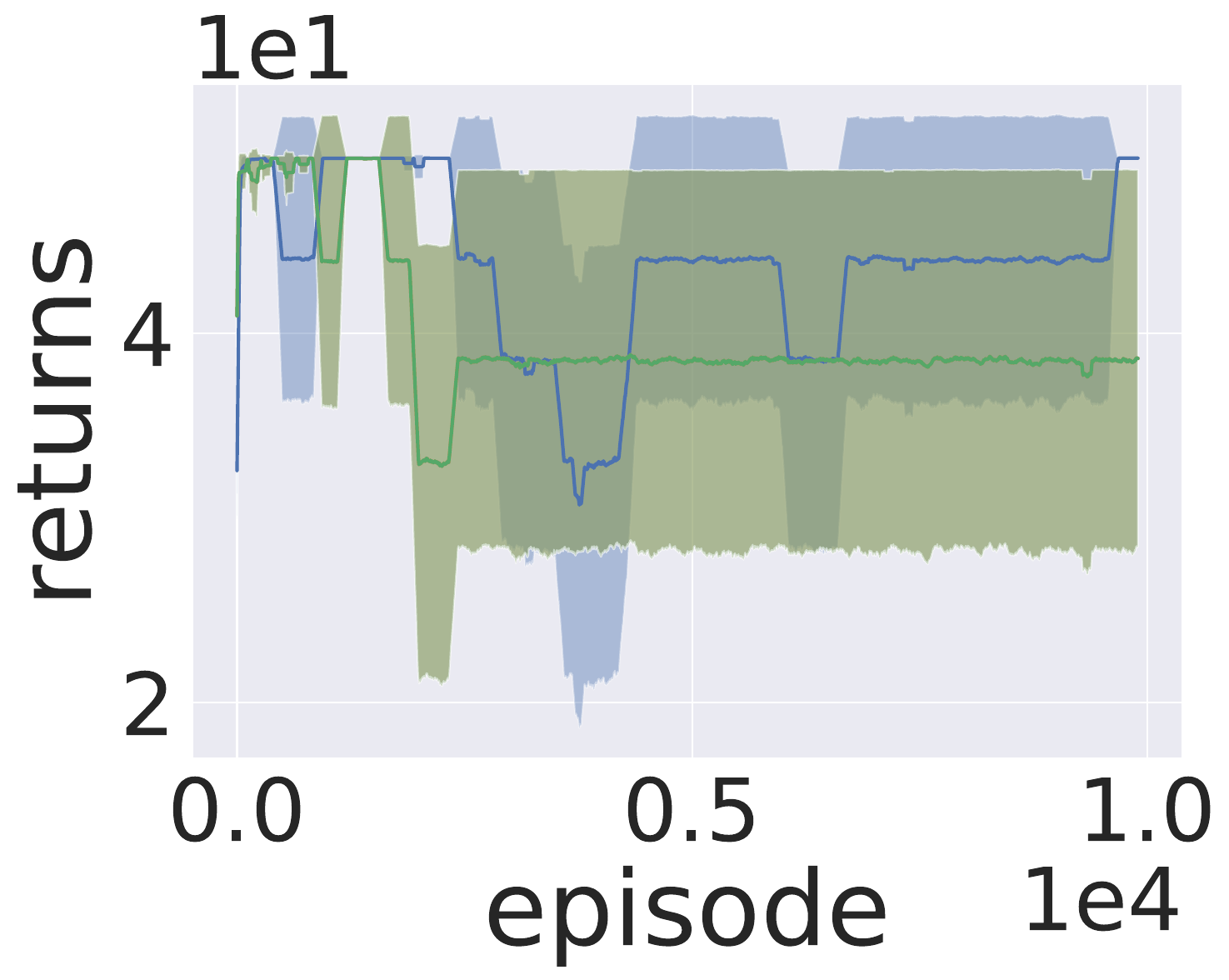}
        \caption{$L=0$}
       % \label{fig:PPO_passive_tmaze_rewards_regret}
    \end{subfigure}
    \begin{subfigure}[b]{0.19\textwidth}
        \centering
        \includegraphics[width=\linewidth]{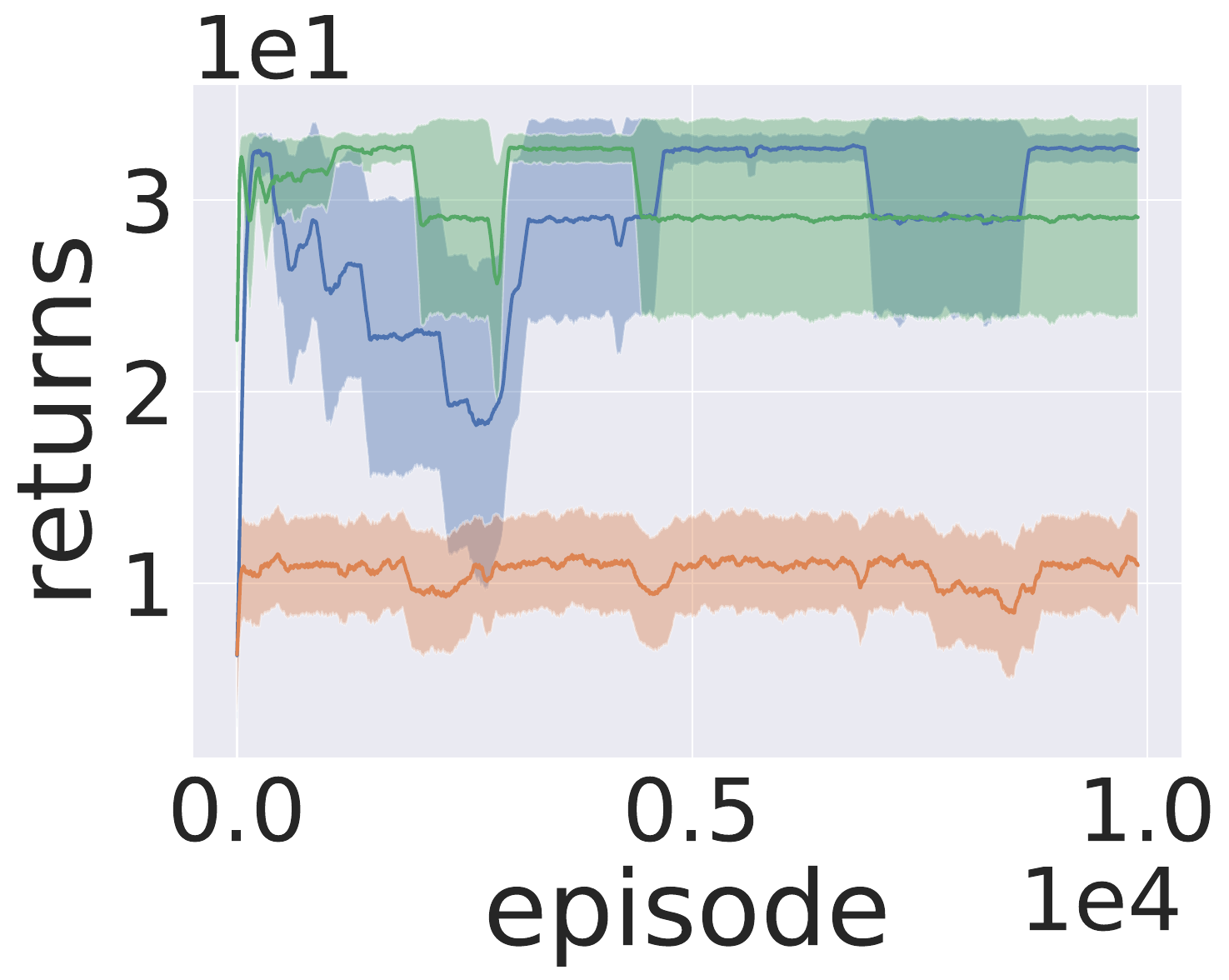}
        \caption{$L=2$}
       % \label{fig:PPO_passive_tmaze_mask_regret}
    \end{subfigure}
    \begin{subfigure}[b]{0.19\textwidth}
        \centering
        \includegraphics[width=\linewidth]{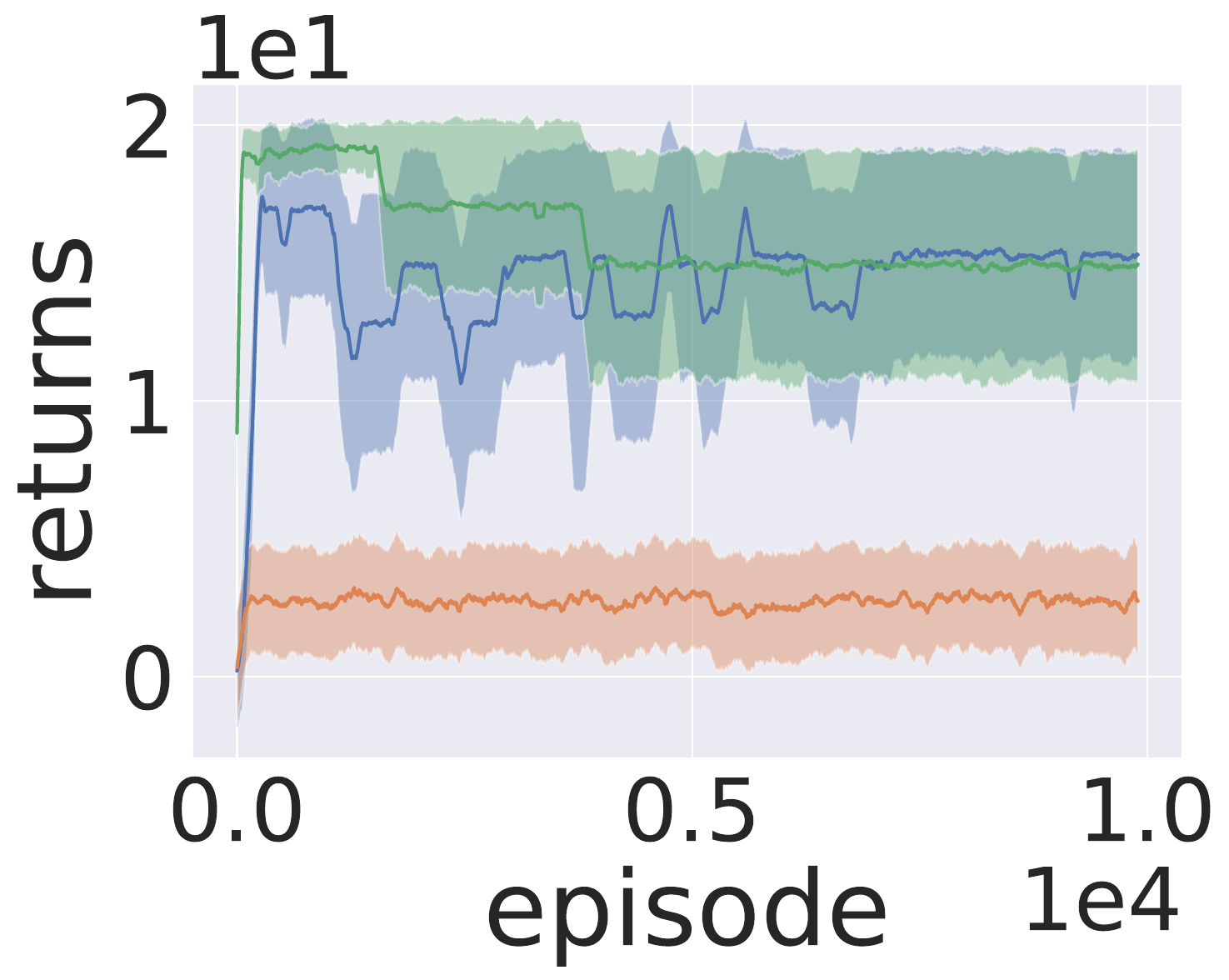}
        \caption{$L=6$}
       % \label{fig:PPO_passive_tmaze_active_regret}
    \end{subfigure}
    \begin{subfigure}[b]{0.19\textwidth}
        \centering
        \includegraphics[width=\linewidth]{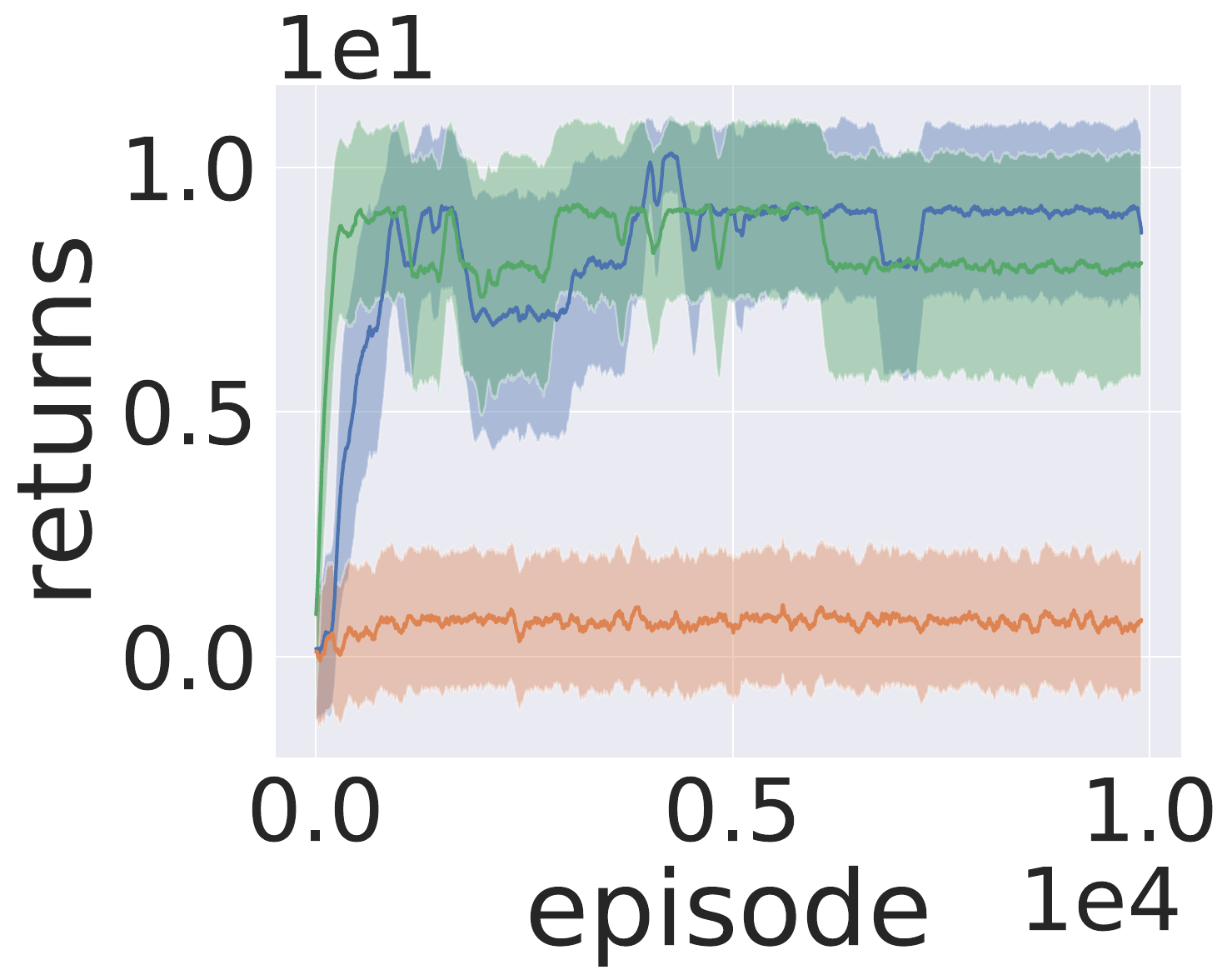}
        \caption{$L=14$}
       % \label{fig:PPO_passive_tmaze_passive_regret}
    \end{subfigure}
    \begin{subfigure}[b]{0.19\textwidth}
        \centering
        \includegraphics[width=\linewidth]{images/newplots/PPO-lstm-env_passive_tmaze-v0_maze_length_62-num_stack_64-returns.pdf}
        \caption{\textbf{$L=62$}}
       % \label{fig:ql_passive_tmaze_passive_returns}
    \end{subfigure}
    \caption{
    Returns in Episodic \textbf{Passive-TMaze} using PPO with an LSTM ($N_{rs}=10$) for varying maze lengths ($L+2$). AS is comparble to the oracle FS($k^*$) in returns, while outperforming FS($\kappa$).}
   % \label{fig:ql-passive-tmazes}
\end{figure}
\begin{figure}[h!]
    \centering
    % \includegraphics[width=0.6\linewidth]{images/newplots/legend.pdf}\\
    % \begin{subfigure}[b]{0.19\textwidth}
    %     \centering
    %     \includegraphics[width=\linewidth]{images/newplots/QL-env_passive_tmaze-v0-returns.pdf}
    %     \caption{\textbf{Total rewards}}
    %     \label{fig:ql_passive_tmaze_total_rewards}
    % \end{subfigure}
    \begin{subfigure}[b]{0.19\textwidth}
        \centering
        \includegraphics[width=\linewidth]{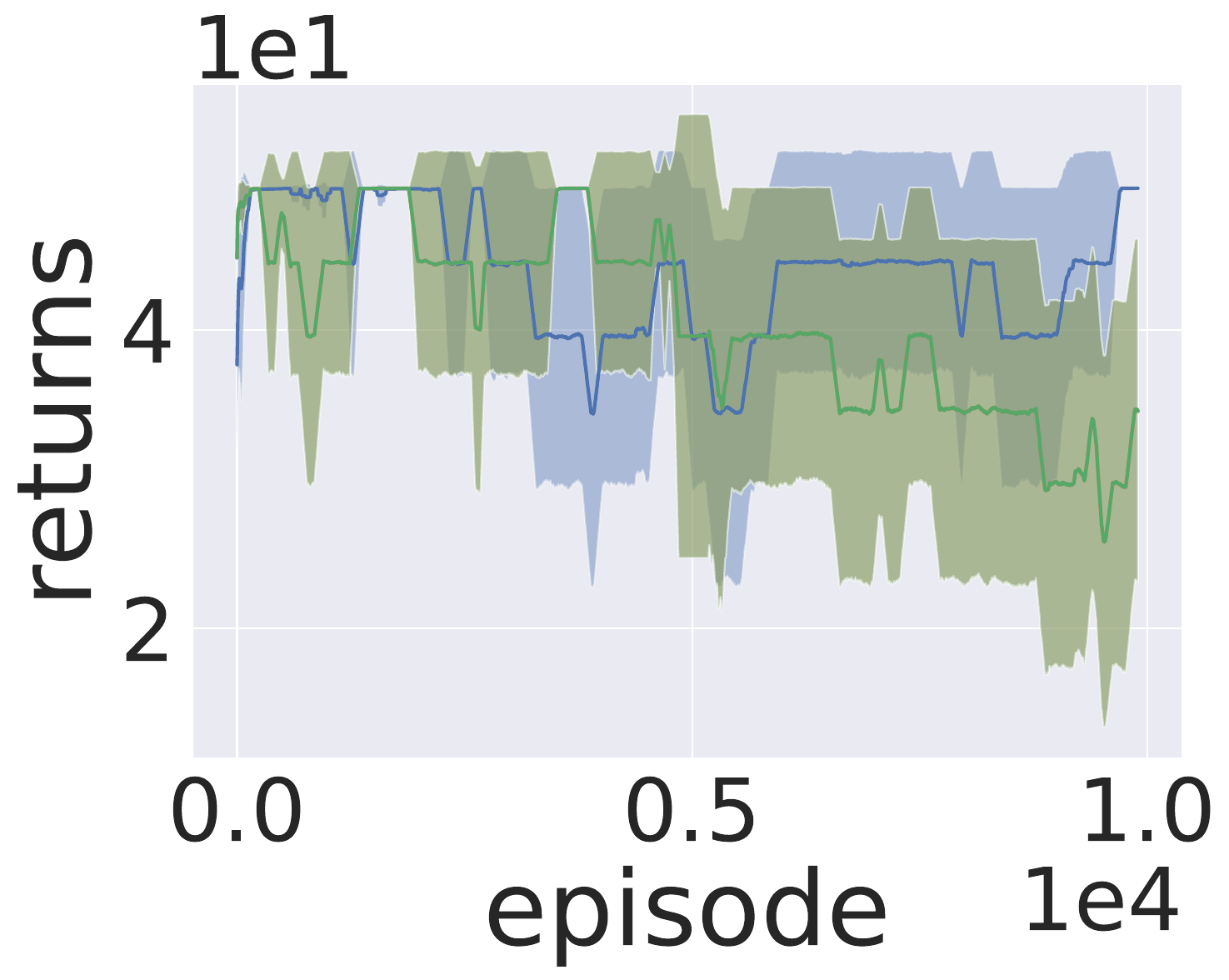}
        \caption{$L=0$}
       % \label{fig:PPO_passive_tmaze_rewards_regret}
    \end{subfigure}
    \begin{subfigure}[b]{0.19\textwidth}
        \centering
        \includegraphics[width=\linewidth]{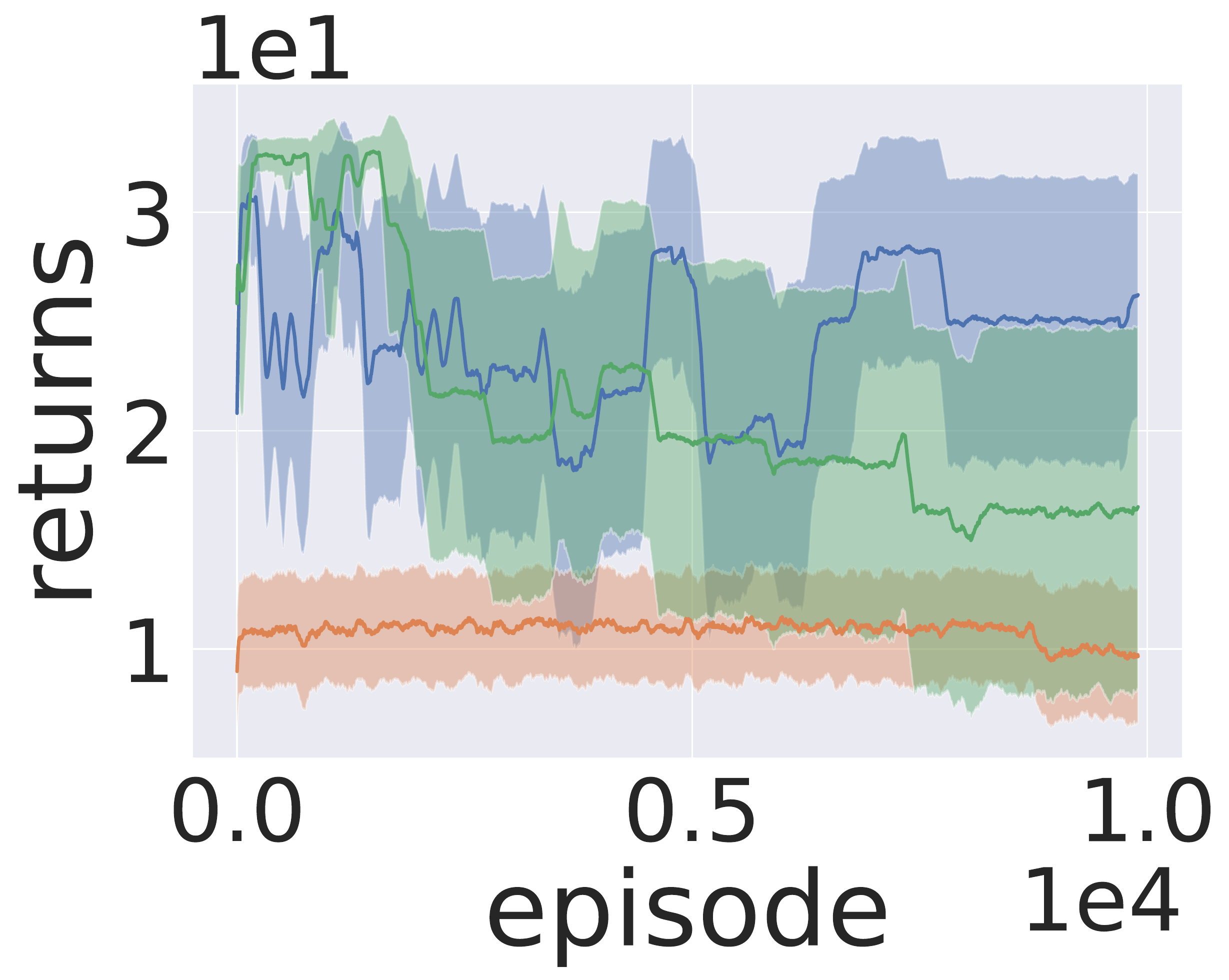}
        \caption{$L=2$}
       % \label{fig:PPO_passive_tmaze_mask_regret}
    \end{subfigure}
    \begin{subfigure}[b]{0.19\textwidth}
        \centering
        \includegraphics[width=\linewidth]{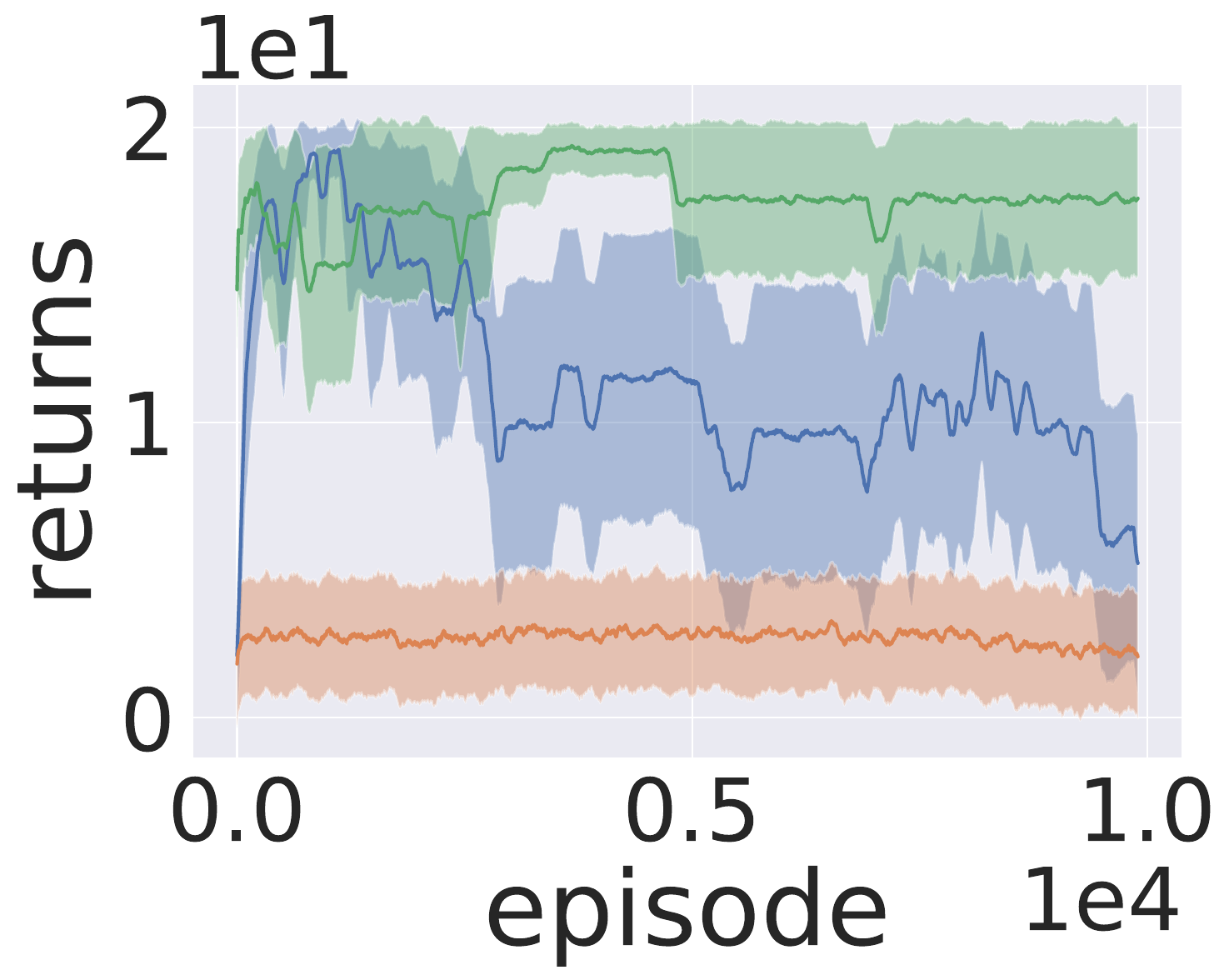}
        \caption{$L=6$}
       % \label{fig:PPO_passive_tmaze_passive_regret}
    \end{subfigure}
    \begin{subfigure}[b]{0.19\textwidth}
        \centering
        \includegraphics[width=\linewidth]{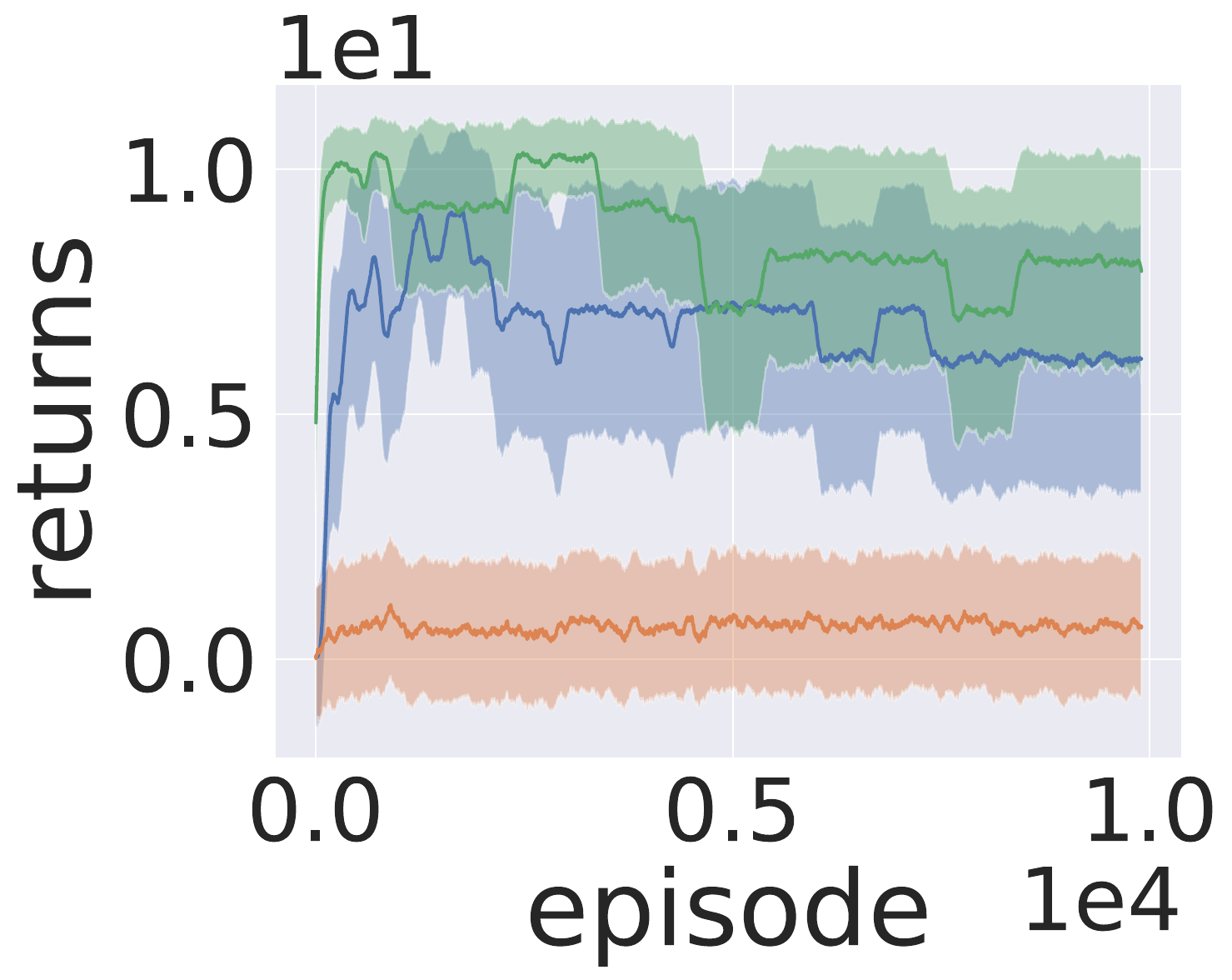}
        \caption{$L=14$}
       % \label{fig:PPO_passive_tmaze_passive_regret}
    \end{subfigure}
    \begin{subfigure}[b]{0.19\textwidth}
        \centering
        \includegraphics[width=\linewidth]{images/newplots/PPO-transformer-env_passive_tmaze-v0_maze_length_62-num_stack_64-returns.pdf}
        \caption{\textbf{$L=62$}}
       % \label{fig:ql_passive_tmaze_passive_returns}
    \end{subfigure}
    \caption{Returns in Episodic \textbf{Passive-TMaze} with PPO with an Transformer ($N_{rs}=10$) for varying maze lengths ($L+2$). AS matches the oracle FS($k^*$) in returns for smaller mazes but struggles to learn for larger mazes, while still outperforming FS($\kappa$, orange).
    }
   % \label{fig:ql-passive-tmazes}
\end{figure}

\begin{figure}[h!]


    \centering
    \begin{subfigure}[b]{0.45\textwidth}
        \centering
        \includegraphics[width=0.5\linewidth]{images/illustrations/cartpole.pdf}%
        \caption{\textbf{POPGym-CartPole}}
        % \label{fig:popgym}
    \end{subfigure}%
    \quad
    \begin{subfigure}[b]{0.45\textwidth}
        \centering
        \includegraphics[width=0.7\linewidth]{images/newplots/PPO-mlp-env_popgym-VelocityOnlyCartPoleHard-num_stack_2-steps.pdf}%
        \caption{Returns}
        \label{fig:PPO_popgym_returns}
    \end{subfigure}%
    \caption{\textbf{POPGym-CartPole} results using PPO with an MLP ($N_{rs}=10$). This is a partially observable environment were the agent needs to keep the pole upright for $T=600$ steps per episode (with rewards of 1 per step). Only velocity information is visible while position information is hidden (so only the previous observation must be remembered). Hence $k^*=\kappa=2$. Consistent with our previous results, we observe that AS achieves performance close to FS($k^*$), even when the task is continuous and $k^*=\kappa$ (which is not our ideal setting where $\kappa \ll k^*$).}
    \label{fig:PPO_popgym}
\end{figure}

\newpage
\subsection{Evaluating Learned Policies}

\begin{figure}[h!]
    \centering
    \includegraphics[width=0.32\linewidth]{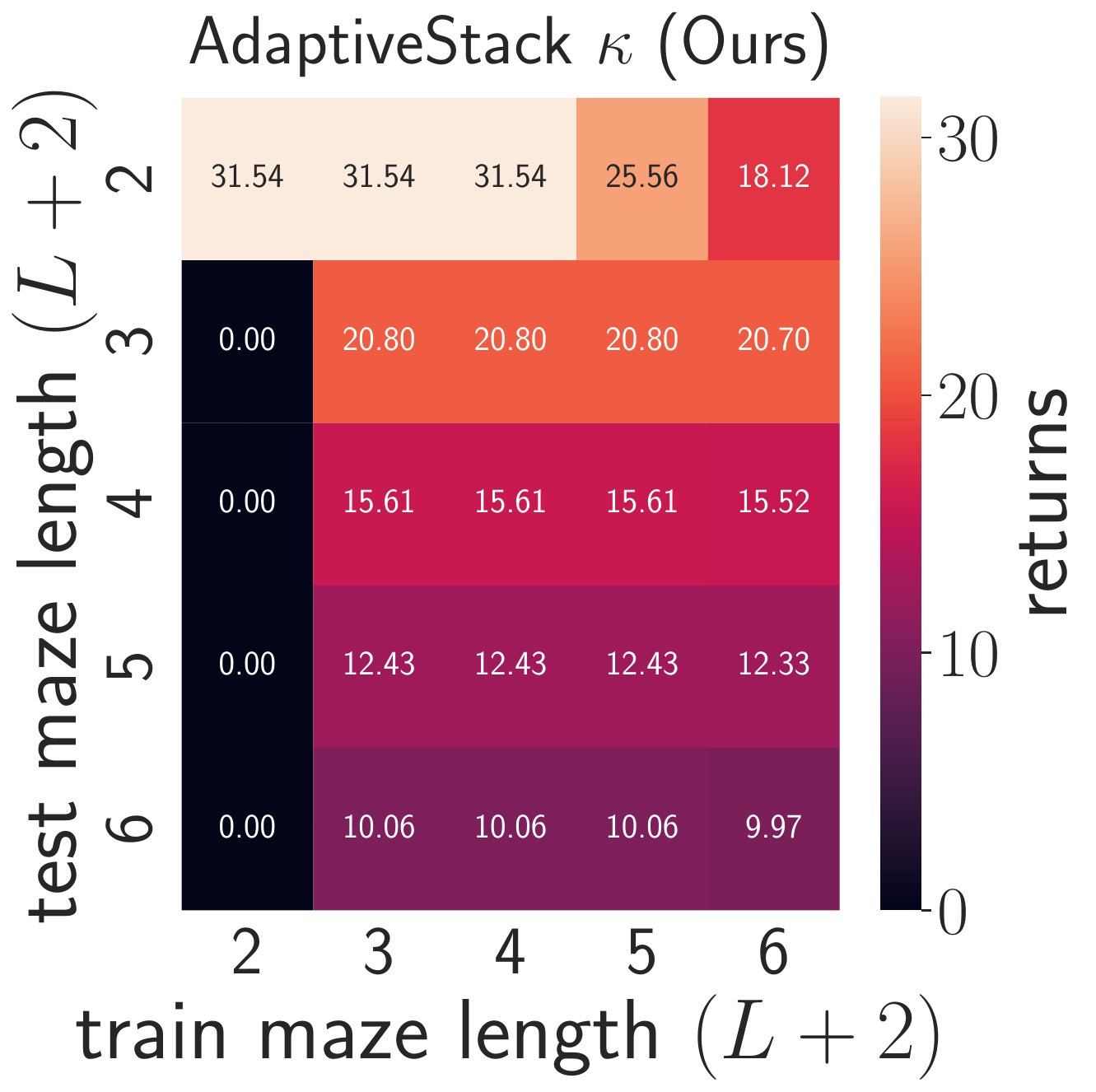}
    \includegraphics[width=0.32\linewidth]{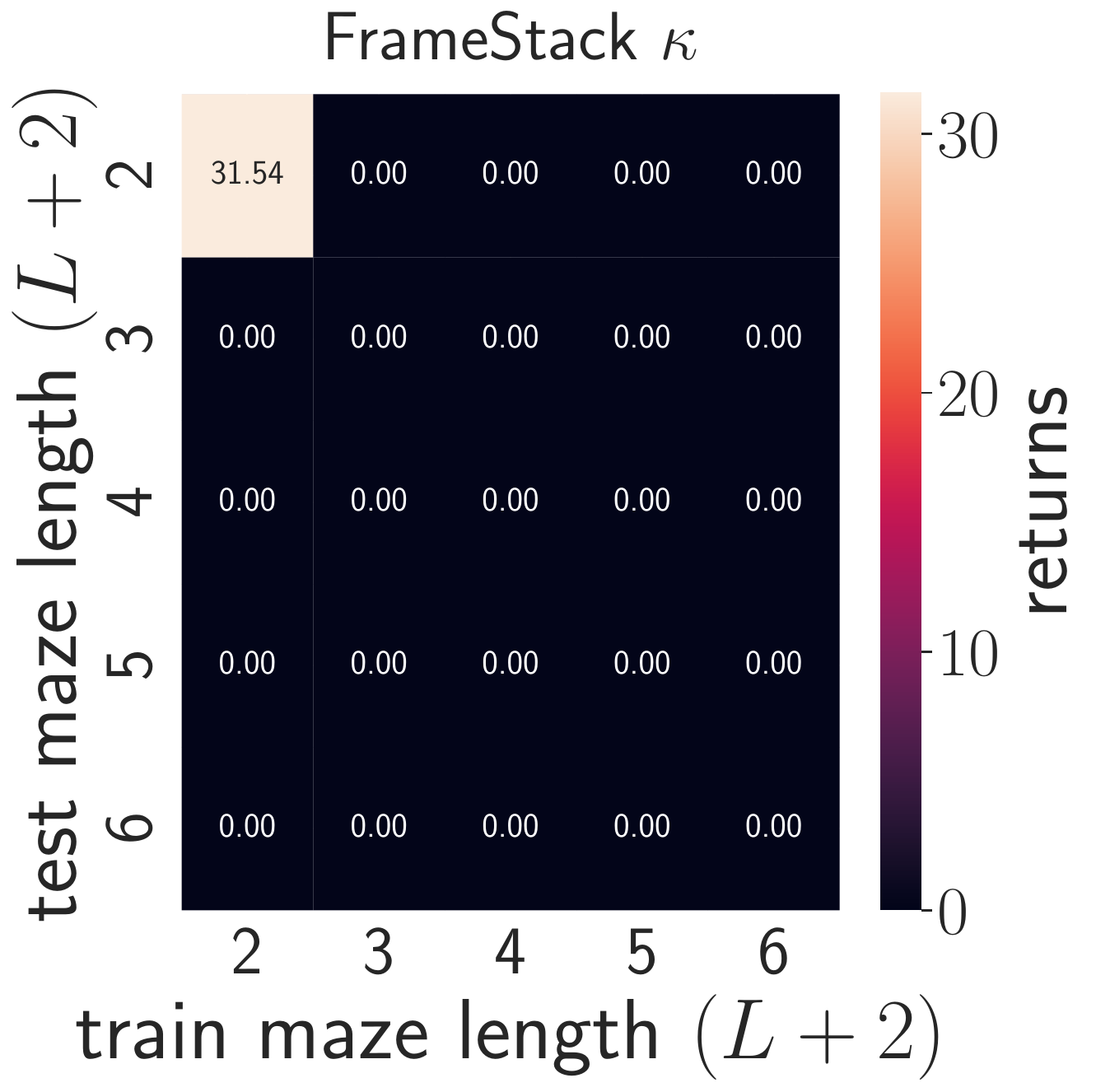}
    \includegraphics[width=0.32\linewidth]{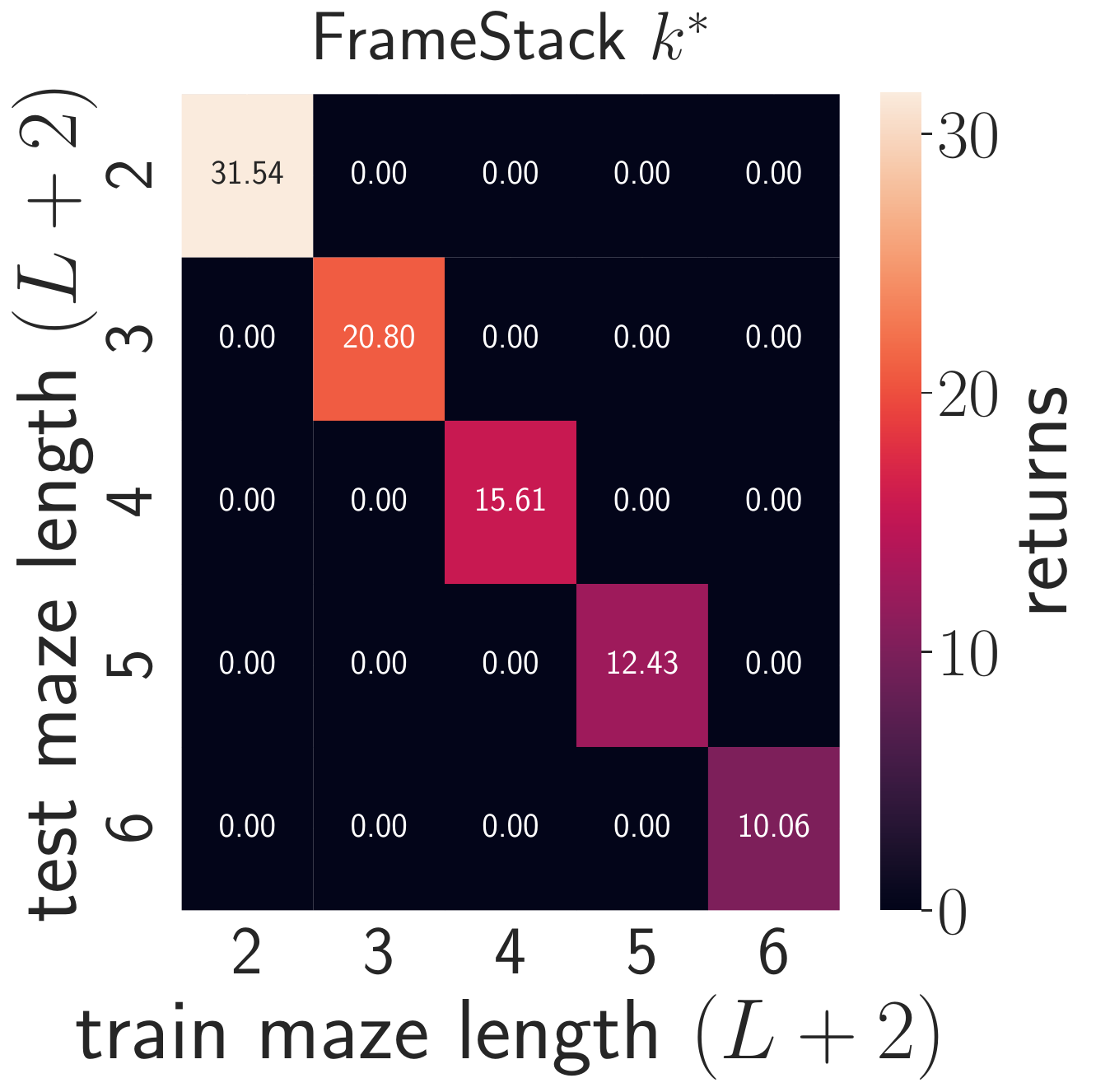}
    % \\
    % \includegraphics[width=0.32\linewidth]{images/newplots/QL-AdaptiveStack-kappa-_Ours_-passive_tmaze-v0-success_rate.pdf}
    % \includegraphics[width=0.32\linewidth]{images/newplots/QL-FrameStack-kappa-passive_tmaze-v0-success_rate.pdf}
    % \includegraphics[width=0.32\linewidth]{images/newplots/QL-FrameStack-k-passive_tmaze-v0-success_rate.pdf}
    \caption{Evaluated returns in the continual \textbf{Passive-TMaze} with Q-learning ($N_{rs}=20$). After training for 1 million steps, each agent is restarted at $s_0$ and tested for 100 additional steps in varying maze lengths. We show results averaged over the $20$ training runs. We observe that AS leads to significantly better generalisation than FS($\kappa$) and even the oracle FS($k^*$), since it explicitly learns to remember only the observations that are relevant for decision-making.
    }
% \label{fig:PPO-passive-tmazes}
    \label{fig:QL-passive-tmazes-continual-eval}
\end{figure}
\begin{figure}[h!]
    \centering
    \includegraphics[width=0.32\linewidth]{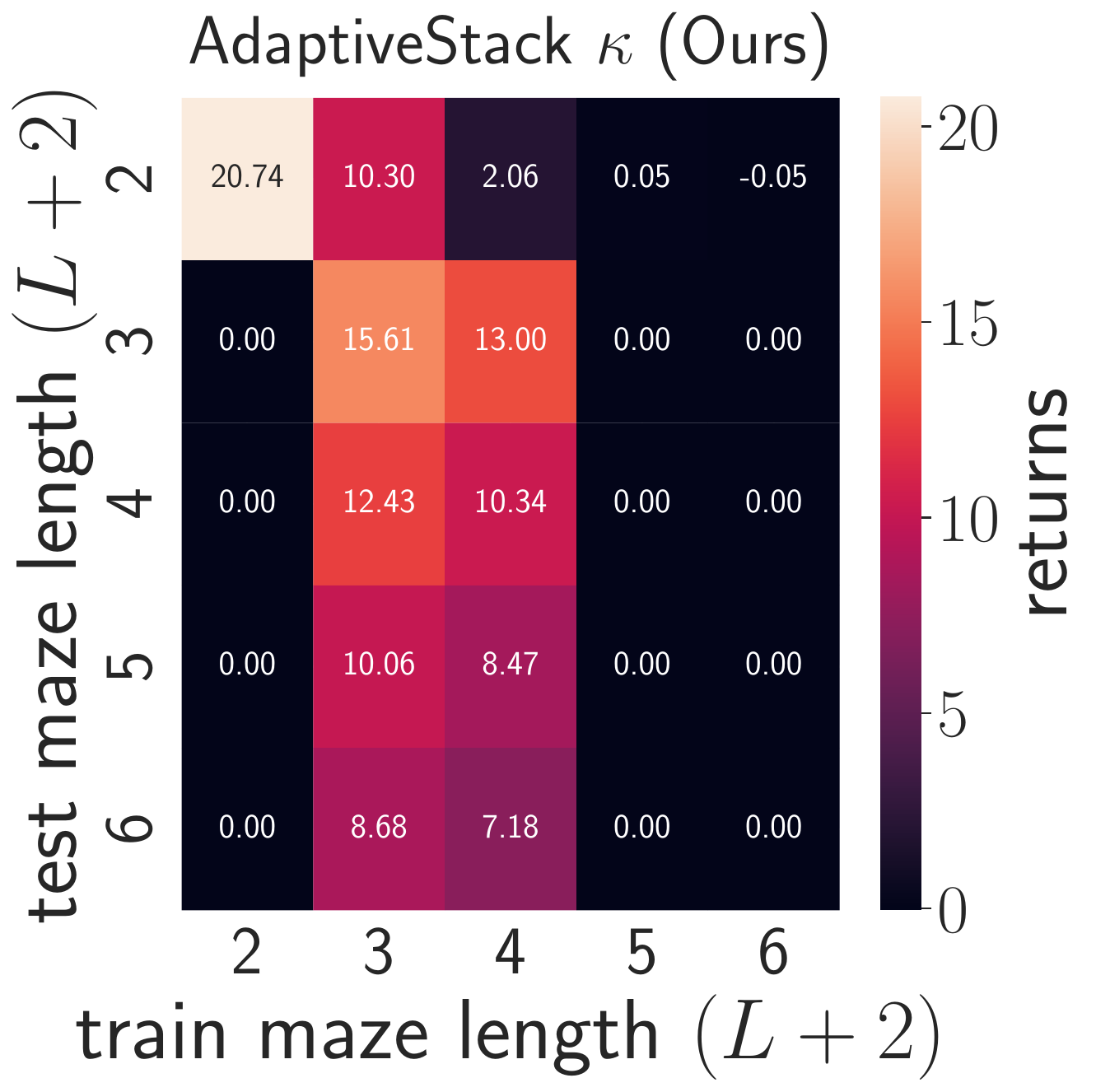}
    \includegraphics[width=0.32\linewidth]{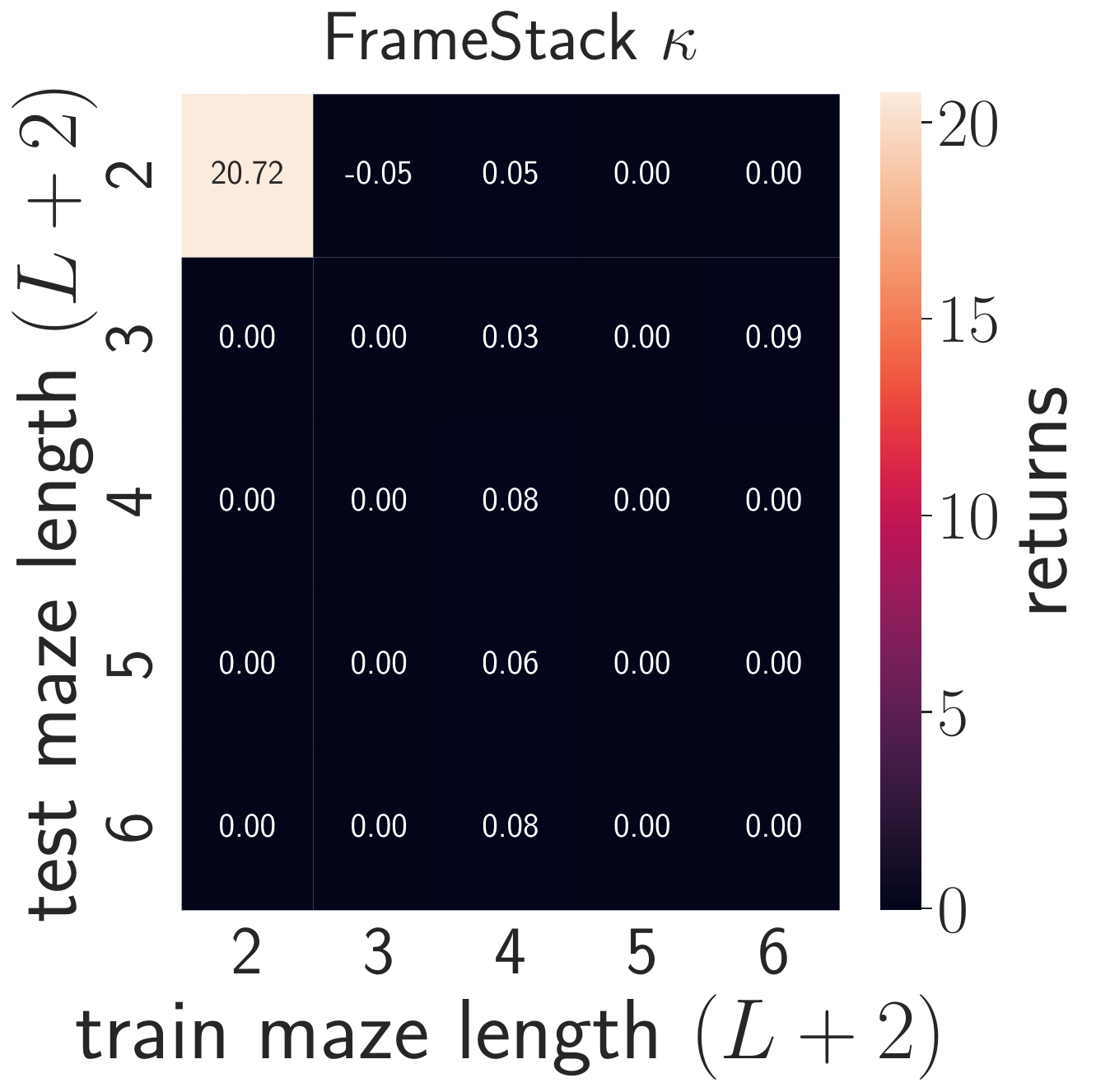}
    \includegraphics[width=0.32\linewidth]{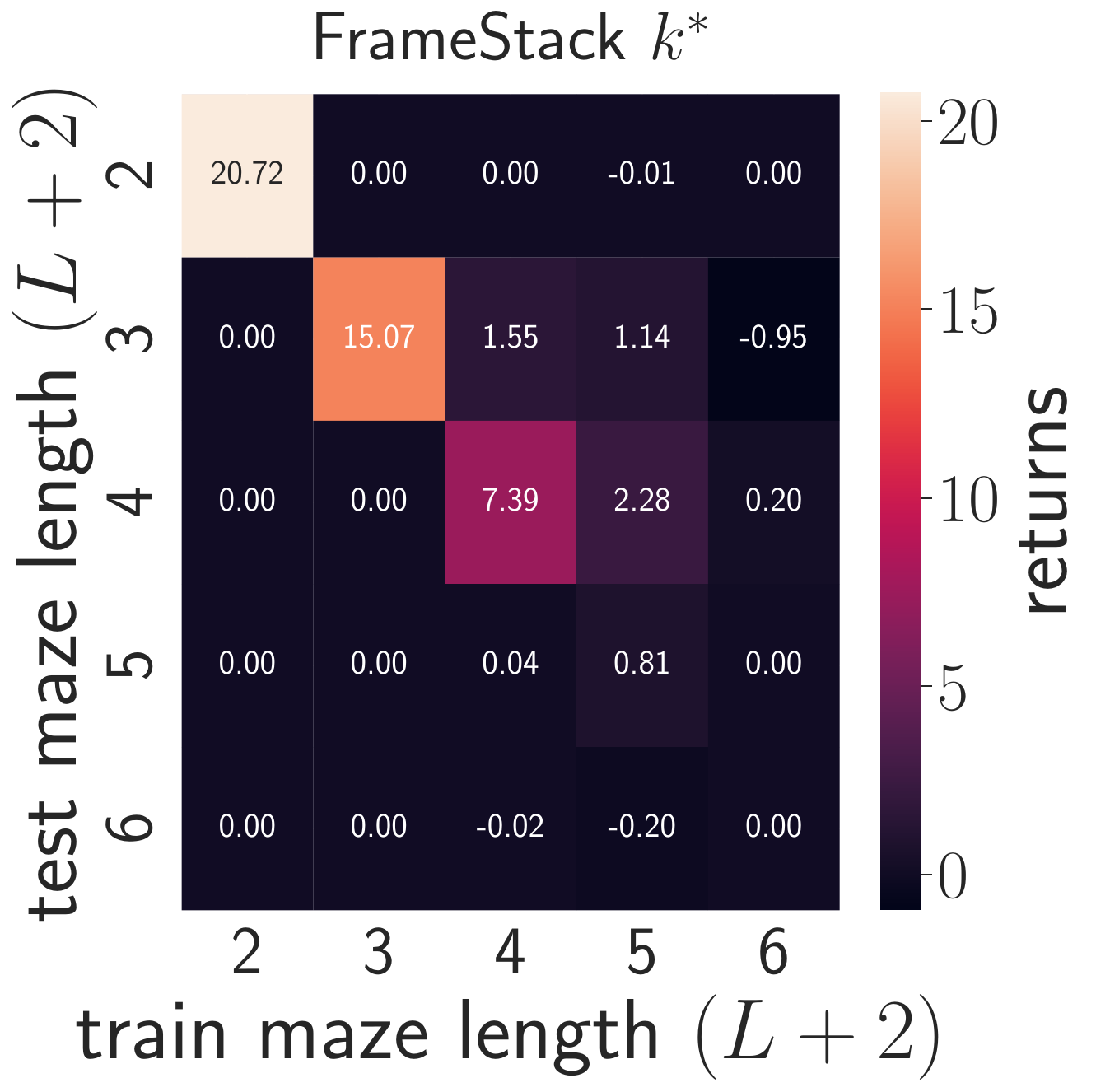}
    % \\
    % \includegraphics[width=0.32\linewidth]{images/newplots/QL-AdaptiveStack-kappa-_Ours_-active_tmaze-v0-success_rate.pdf}
    % \includegraphics[width=0.32\linewidth]{images/newplots/QL-FrameStack-kappa-active_tmaze-v0-success_rate.pdf}
    % \includegraphics[width=0.32\linewidth]{images/newplots/QL-FrameStack-k-active_tmaze-v0-success_rate.pdf}
    \caption{
    \looseness=-1
    Evaluated returns in the continual \textbf{Active-TMaze} with Q-learning ($N_{rs}=20$). We observe that when an agent using AS is able to successfully reach the correct goals during training, it has significantly better generalisation than even one using the oracle FS($k^*$). Note that policies with success rates in $[0 ~ 0.5]$ can have returns of 0 since the rewards are non-zero only for goal transitions.}
% \label{fig:PPO-passive-tmazes}
    \label{fig:QL-active-tmazes-continual-eval}
\end{figure}

\begin{figure}[h!]
    \centering
    \includegraphics[width=0.32\linewidth]{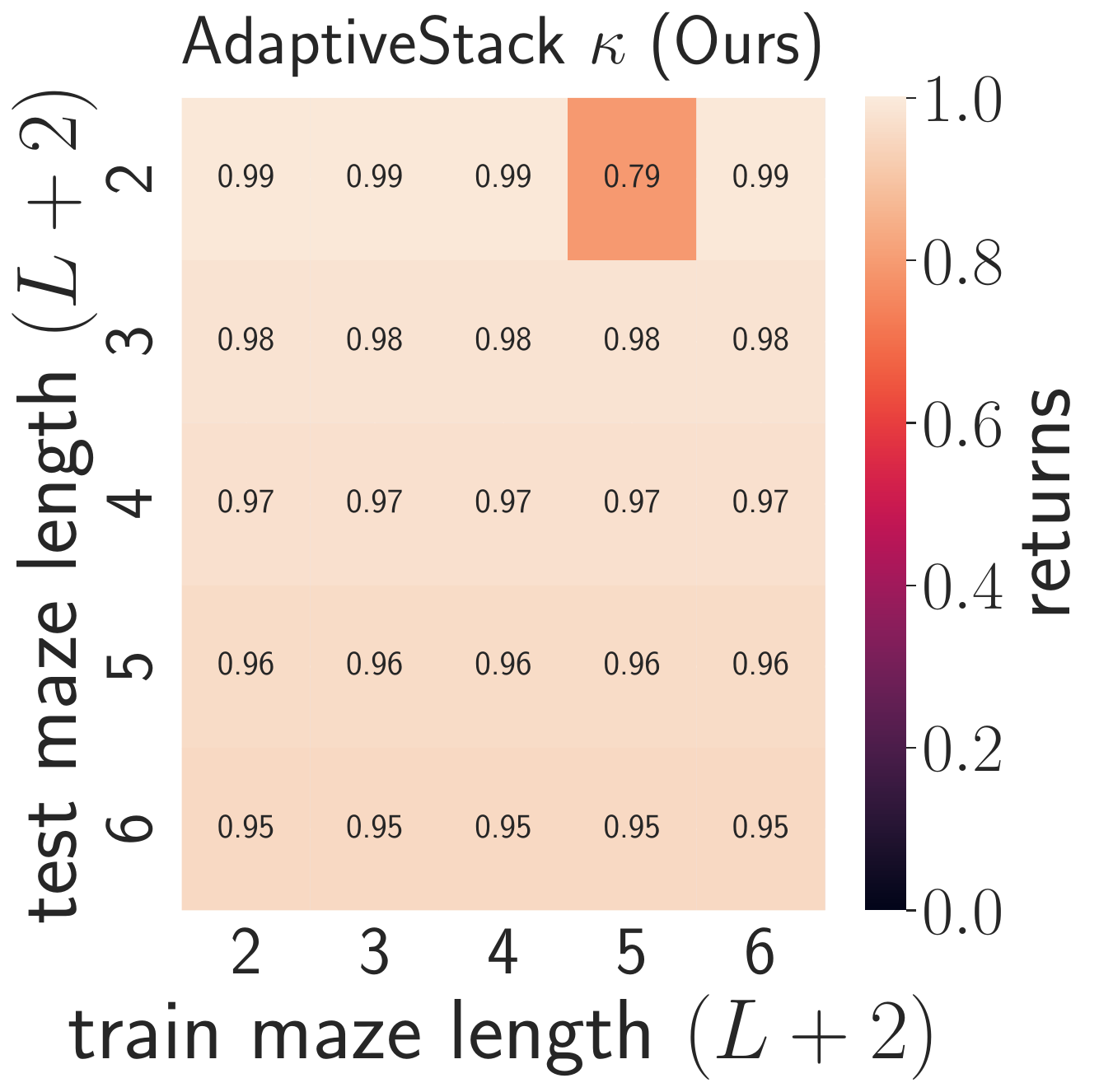}
    \includegraphics[width=0.32\linewidth]{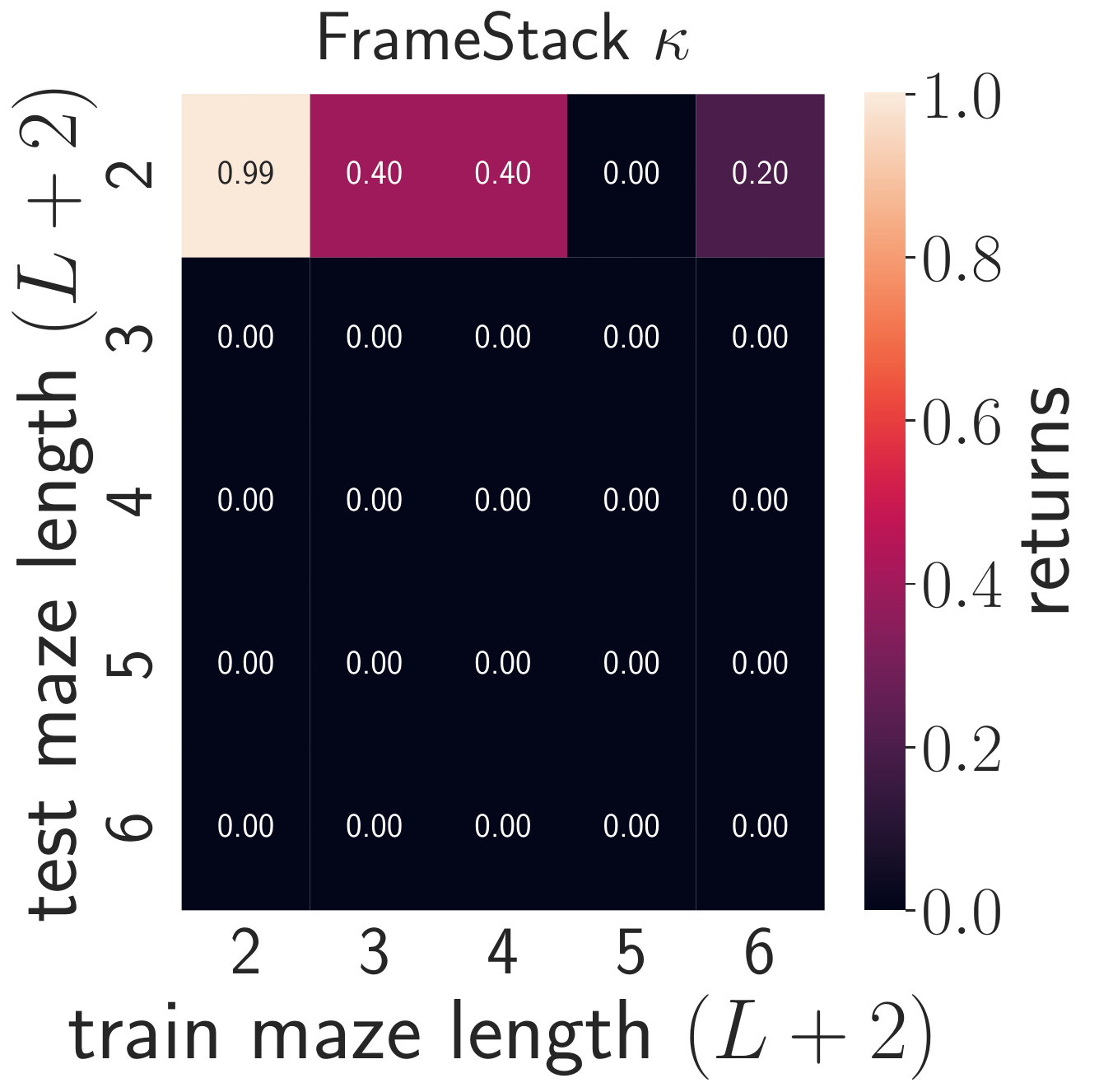}
    \includegraphics[width=0.32\linewidth]{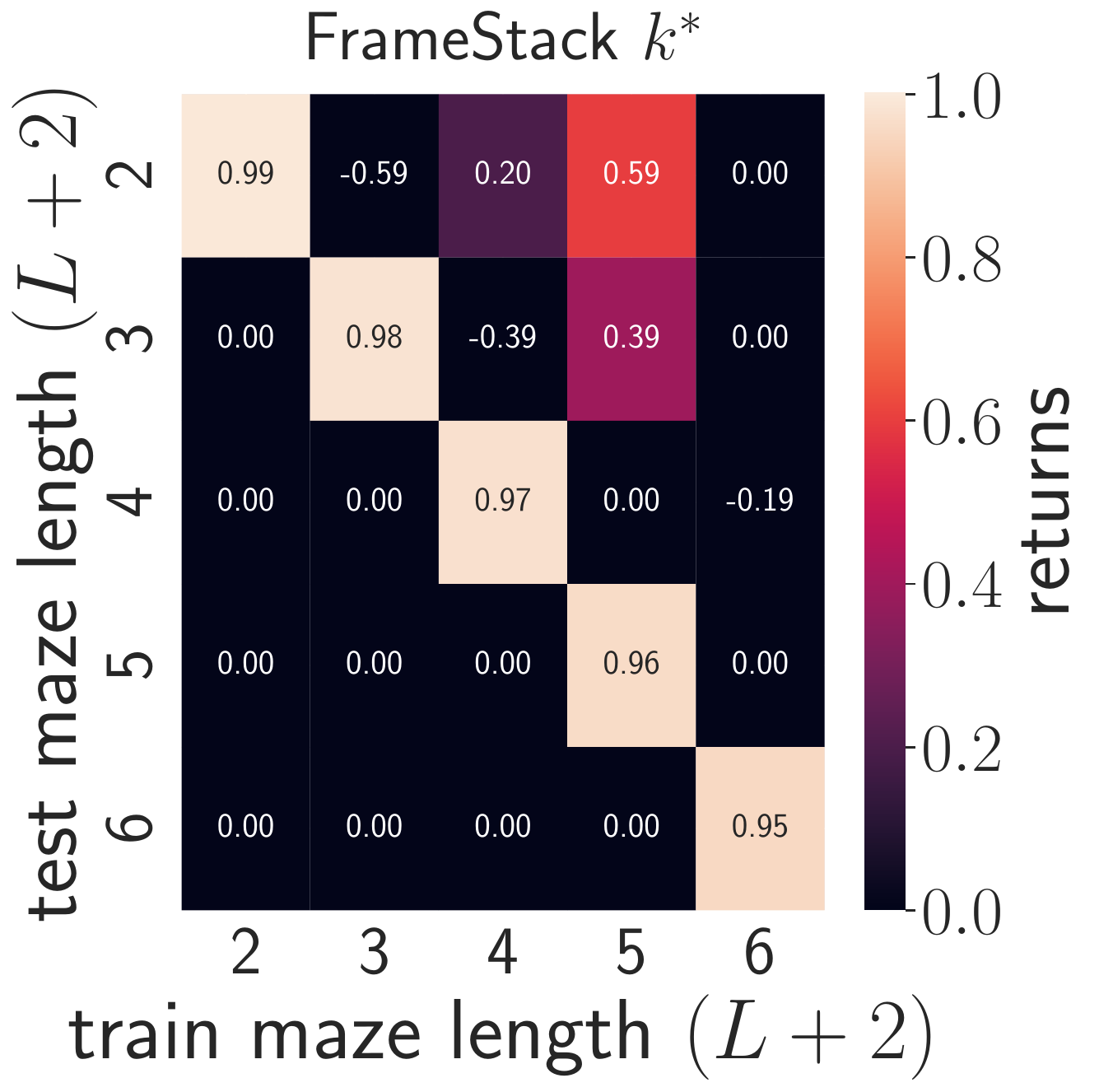}
    % \\
    % \includegraphics[width=0.32\linewidth]{images/newplots/PPO-AdaptiveStack-kappa-_Ours_-passive_tmaze-v0-success_rate.pdf}
    % \includegraphics[width=0.32\linewidth]{images/newplots/PPO-FrameStack-kappa-passive_tmaze-v0-success_rate.pdf}
    % \includegraphics[width=0.32\linewidth]{images/newplots/PPO-FrameStack-k-passive_tmaze-v0-success_rate.pdf}
    \caption{Evaluated returns in the episodic \textbf{Passive-TMaze} (with corridor lengths per episode fixed to max length) with PPO and MLP policy ($N_{rs}=10$). We observe better results for AS and worse results for FS similarly to Figure \ref{fig:QL-passive-tmazes-continual-eval}.
    }
   \label{fig:apdx:PPO-passive-tmazes}
\end{figure}

\begin{figure}[h!]
    \centering
    \includegraphics[width=0.32\linewidth]{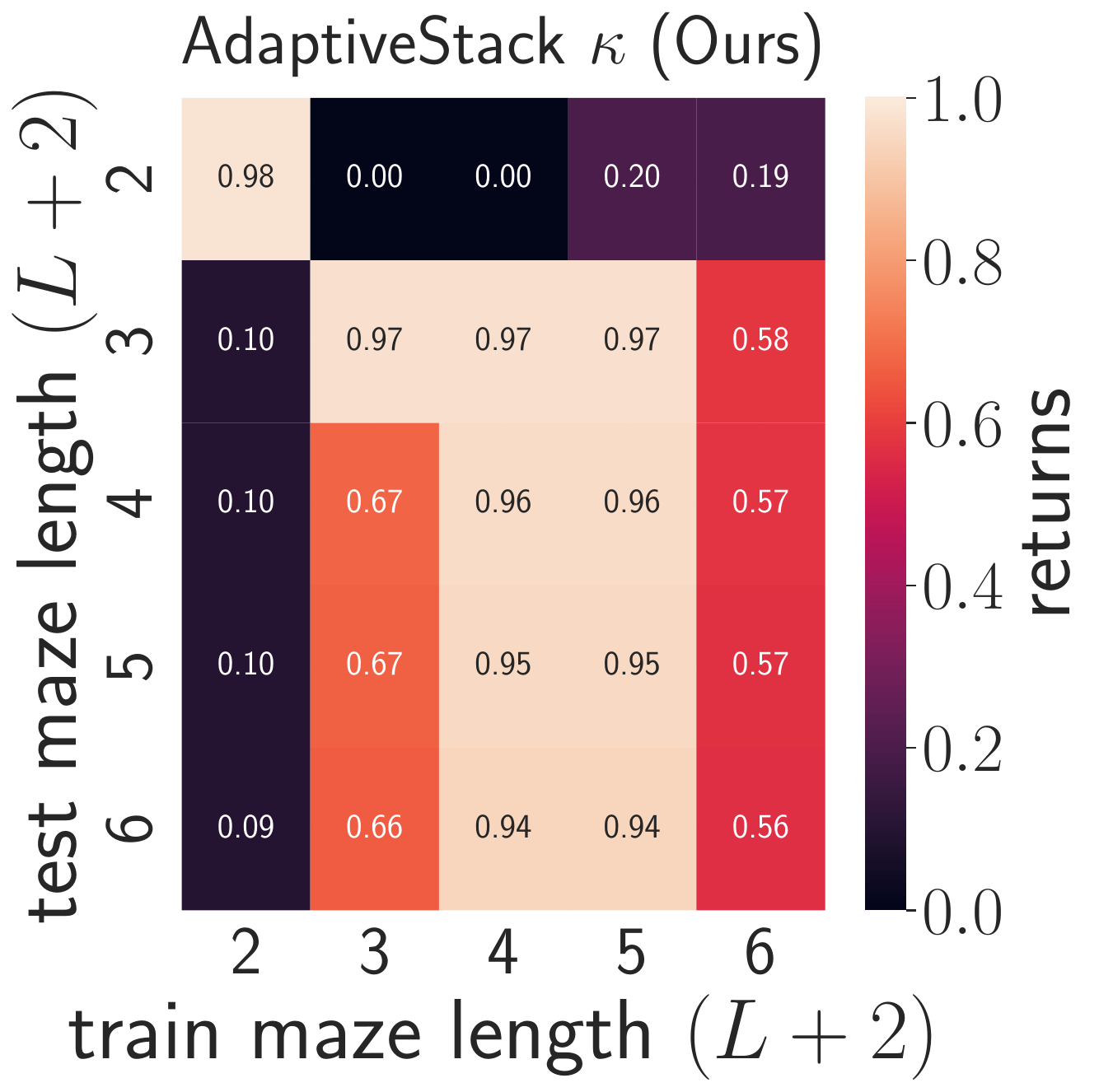}
    \includegraphics[width=0.32\linewidth]{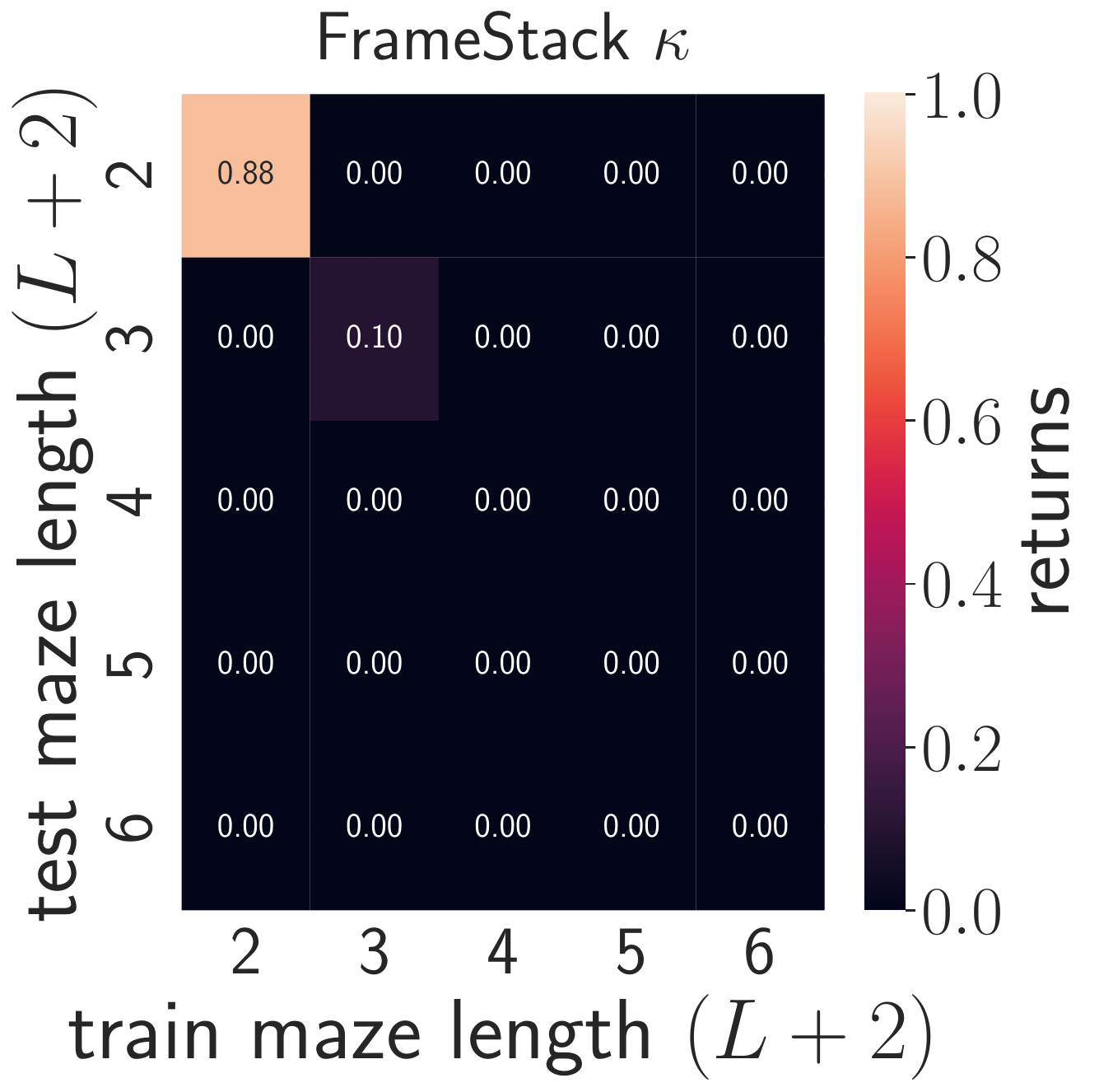}
    \includegraphics[width=0.32\linewidth]{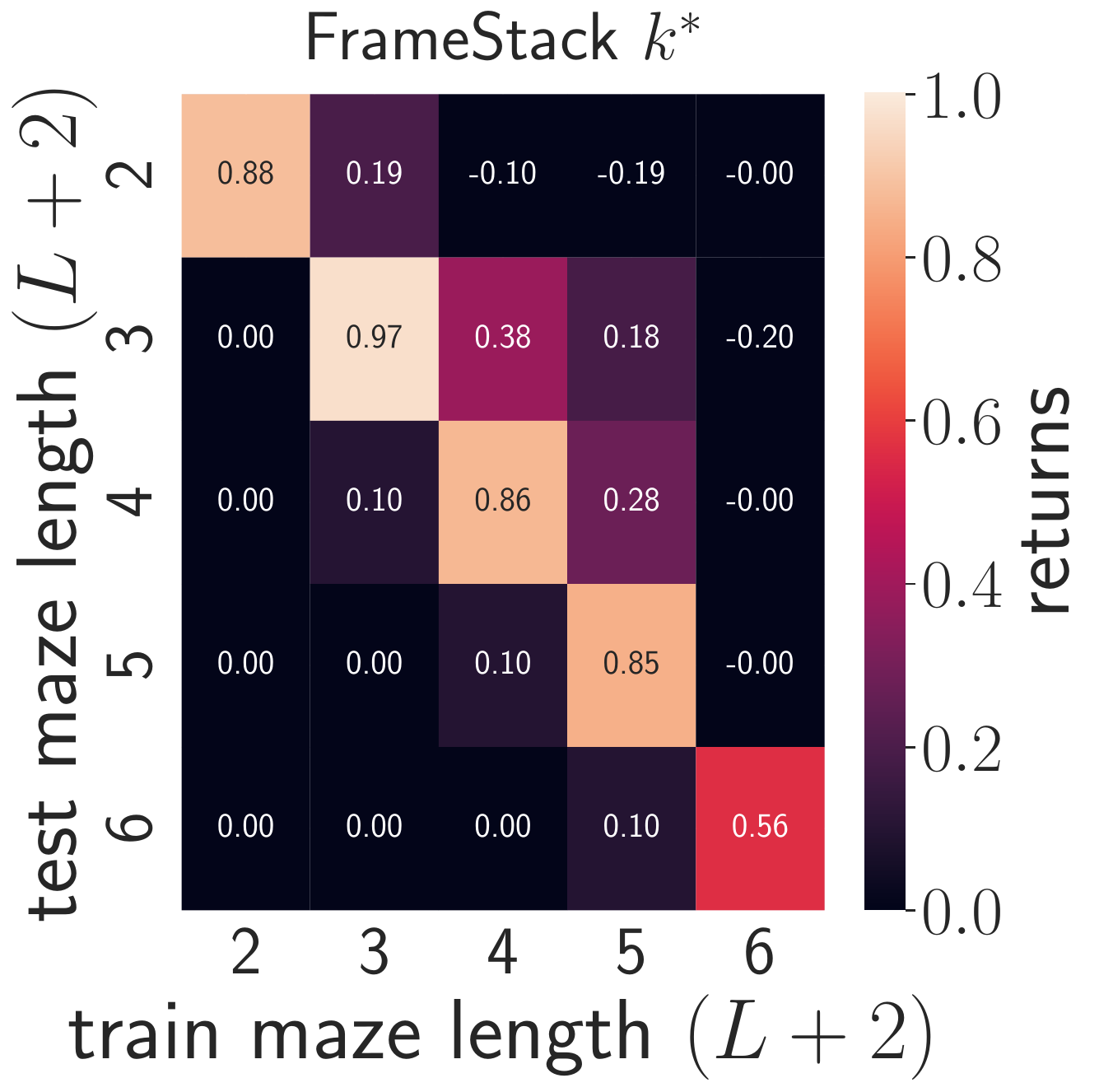}
    % \\
    % \includegraphics[width=0.32\linewidth]{images/newplots/PPO-AdaptiveStack-kappa-_Ours_-active_tmaze-v0-success_rate.pdf}
    % \includegraphics[width=0.32\linewidth]{images/newplots/PPO-FrameStack-kappa-active_tmaze-v0-success_rate.pdf}
    % \includegraphics[width=0.32\linewidth]{images/newplots/PPO-FrameStack-k-active_tmaze-v0-success_rate.pdf}
    \caption{Evaluated returns in the episodic \textbf{Active-TMaze} (with corridor lengths per episode fixed to max length) with PPO and MLP policy ($N_{rs}=10$). We observe similar results as Figures \ref{fig:QL-active-tmazes-continual-eval} and \ref{fig:apdx:PPO-passive-tmazes}, except for maze lengths of 2. This difference is potentially because maze length 2 has no corridor ($\corridor$) observation, which makes it difficult to generalise the correct navigation actions to (and from) it.}
   % \label{fig:PPO-passive-tmazes}
\end{figure}

\begin{figure}[h!]
    \centering
    \includegraphics[width=0.9\linewidth]{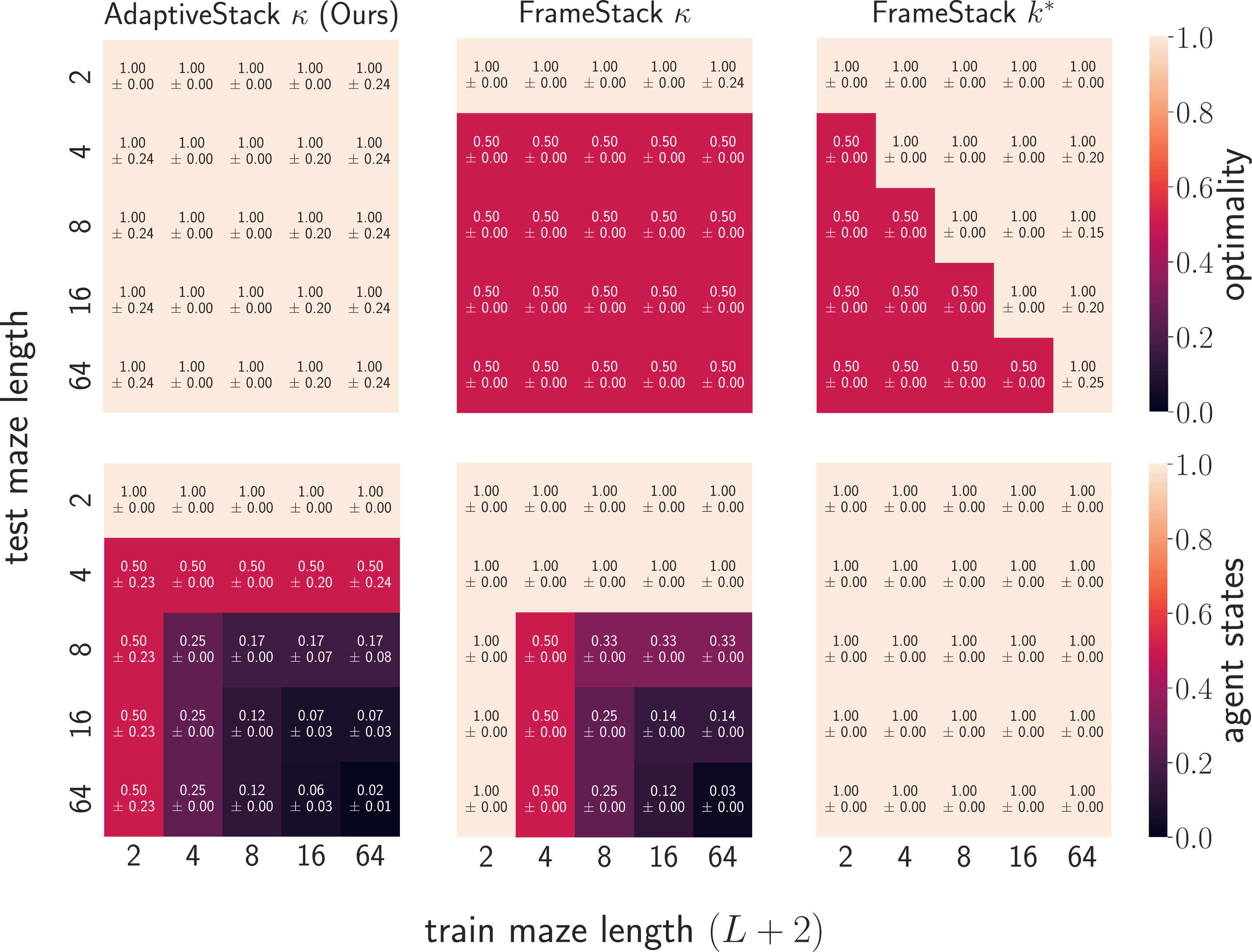}
    % \\
    % \includegraphics[width=0.32\linewidth]{images/newplots/PPO-AdaptiveStack-kappa-_Ours_-mlp-passive_tmaze-v0-success_rate.pdf}
    % \includegraphics[width=0.32\linewidth]{images/newplots/PPO-FrameStack-kappa-mlp-passive_tmaze-v0-success_rate.pdf}
    % \includegraphics[width=0.32\linewidth]{images/newplots/PPO-FrameStack-k-mlp-passive_tmaze-v0-success_rate.pdf}
    \caption{
    \looseness=-1
    Generalisation and state abstraction in the episodic \textbf{Passive-TMaze} with PPO and MLP policy ($N_{rs}=10$). After training for 1 million steps, each agent is tested for 2 additional episodes (for each goal color) in varying maze lengths (the corridor length in each testing episode is fixed to the max length). For each algorithm, we show the best test results across all $10$ trained models, where $\pm$ is their standard deviations. The metrics are
    (a) \textbf{Optimality} (higher is better $\uparrow$): normalised difference between evaluated and optimal values (mean over 50 evaluation episodes per training run), (b) \textbf{Abstraction (agent states)} (lower is better $\downarrow$): normalised difference between the number of observed agent states (memory stacks) during evaluation from a trained policies using $k$ memory and optimal policies using $\kappa$ memory (mean over 50 evaluation episodes per training run).
     We observe that the random corridor lengths during training leads to consistently good in-distribution generalisation (upper-diagonal), even for FS, in contrast to Figure \ref{fig:apdx:PPO-passive-tmazes}. This is potentially because FS mainly relies on the random corridor lengths during training to generalise.
     In contrast, AS still generally leads to better out-of-distribution generalisation (lower-diagonal) than even the oracle FS($k^*$) (as the goal cue gets pushed out of the memory stack).
     This may be because AS leads to far stronger state abstraction than FS, as the percentage of observed agent states (memory stacks) for AS drastically decreases (less observations added to the stack) as the maze length increases. 
     }
\label{fig:apdx:PPO-passive-tmazes-mlp}
%     \label{fig:PPO-passive-tmazes}
\end{figure}

\begin{figure}[h!]
    \centering
    \includegraphics[width=0.9\linewidth]{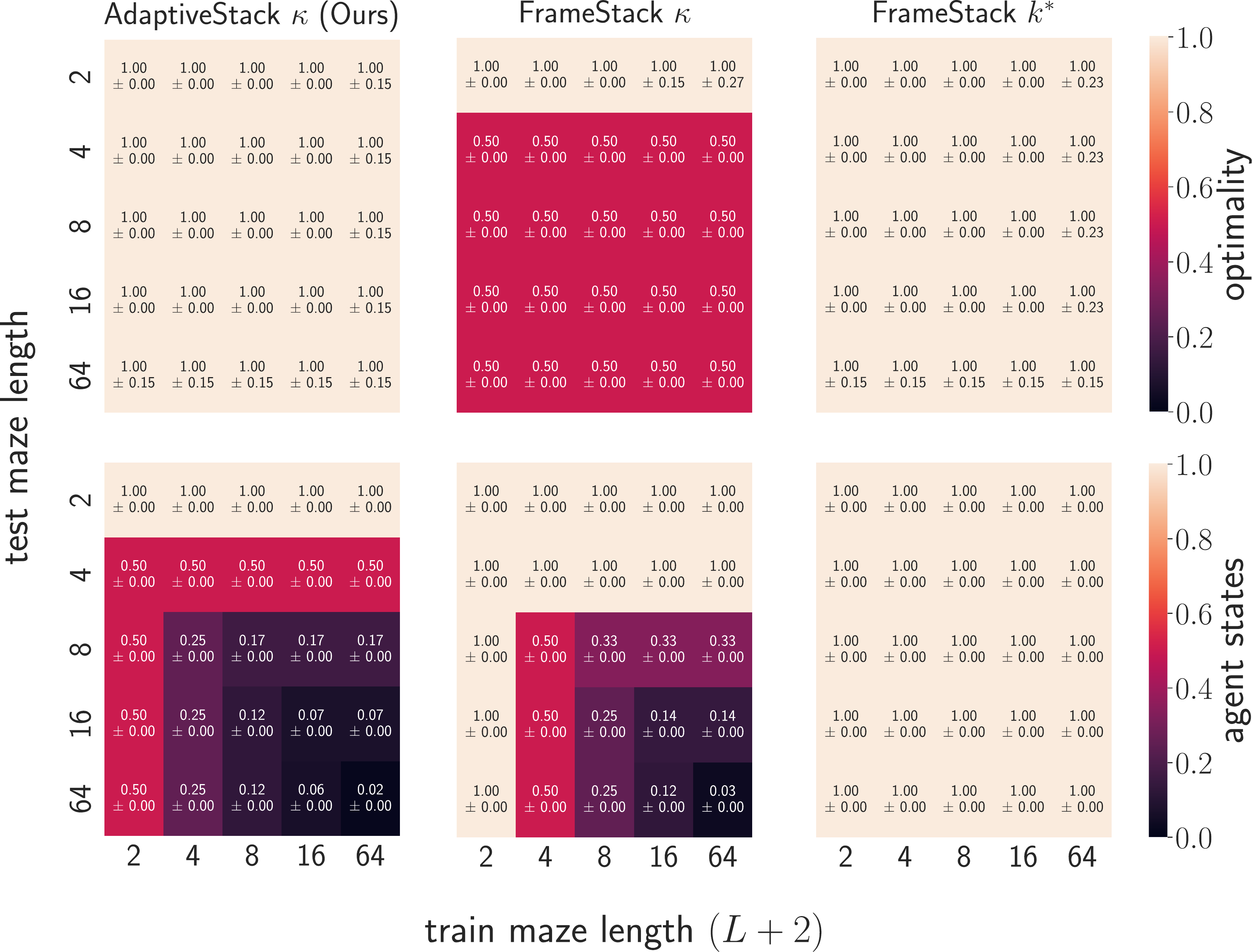}
    % \\
    % \includegraphics[width=0.32\linewidth]{images/newplots/PPO-AdaptiveStack-kappa-_Ours_-lstm-passive_tmaze-v0-success_rate.pdf}
    % \includegraphics[width=0.32\linewidth]{images/newplots/PPO-FrameStack-kappa-lstm-passive_tmaze-v0-success_rate.pdf}
    % \includegraphics[width=0.32\linewidth]{images/newplots/PPO-FrameStack-k-lstm-passive_tmaze-v0-success_rate.pdf}
    \caption{
    \looseness=-1
    Generalisation and abstraction in the episodic \textbf{Passive-TMaze} with PPO and LSTM policy ($N_{rs}=10$). We observe similar results for AS as the MLP case (Figure \ref{fig:apdx:PPO-passive-tmazes-mlp}). However, we also notice that the oracle FS($k^*$) is now able to also generalise out of distribution thanks to the LSTM. This is most likely because it maintains a recurrent hidden state, but this requires backpropagation through time on the full episode rollout during training ($k=k^*$).
    % and hence scales poorly with compute and memory [Table 1].   
    }
% \label{fig:PPO-passive-tmazes}
%     \label{fig:PPO-passive-tmazes}
\end{figure}

\begin{figure}[h!]
    \centering
    \includegraphics[width=0.9\linewidth]{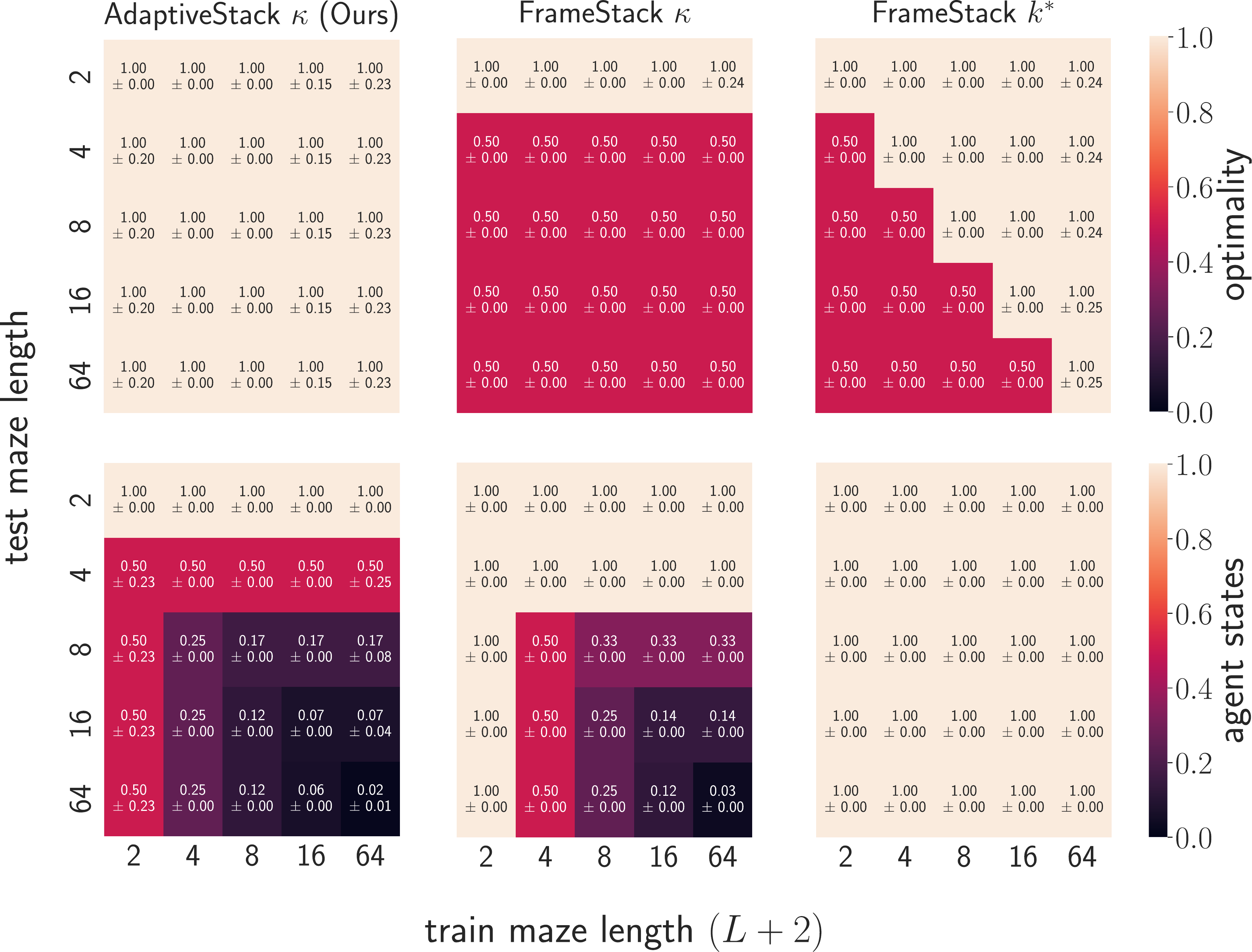}
    % \\
    % \includegraphics[width=0.32\linewidth]{images/newplots/PPO-AdaptiveStack-kappa-_Ours_-transformer-passive_tmaze-v0-success_rate.pdf}
    % \includegraphics[width=0.32\linewidth]{images/newplots/PPO-FrameStack-kappa-transformer-passive_tmaze-v0-success_rate.pdf}
    % \includegraphics[width=0.32\linewidth]{images/newplots/PPO-FrameStack-k-transformer-passive_tmaze-v0-success_rate.pdf}
    \caption{Generalisation and abstraction in the episodic \textbf{Passive-TMaze} with PPO and Transformer policy ($N_{rs}=10$). We observe similar results for Adaptive Stacking as the MLP case (Figure \ref{fig:apdx:PPO-passive-tmazes-mlp}).
    }
% \label{fig:PPO-passive-tmazes}
%     \label{fig:PPO-passive-tmazes}
\end{figure}

\begin{figure}[h!]
    \centering
    \begin{subfigure}[b]{\textwidth}
        \centering
        \includegraphics[width=0.475\linewidth]{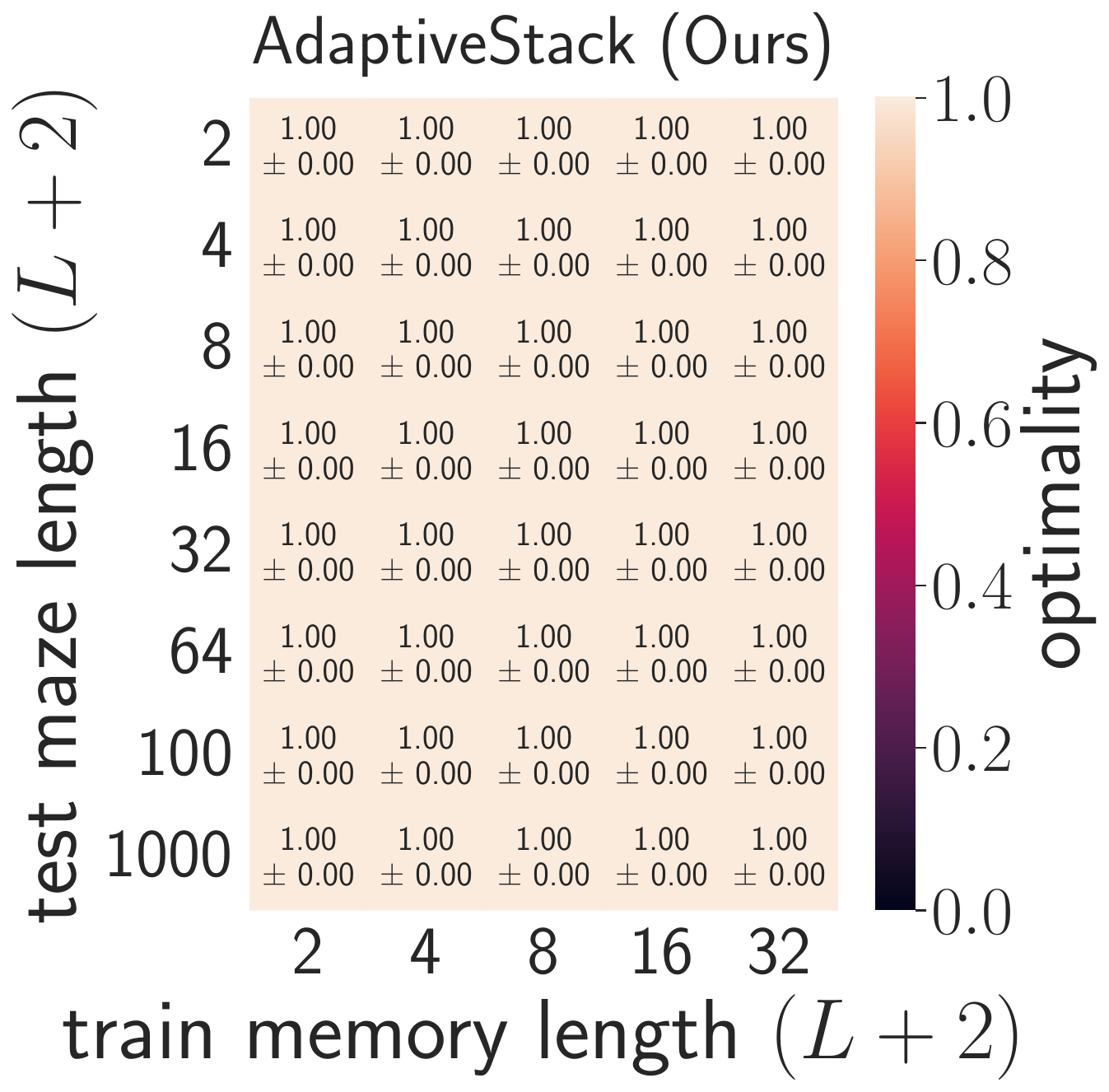}%
        \quad 
        \includegraphics[width=0.475\linewidth]{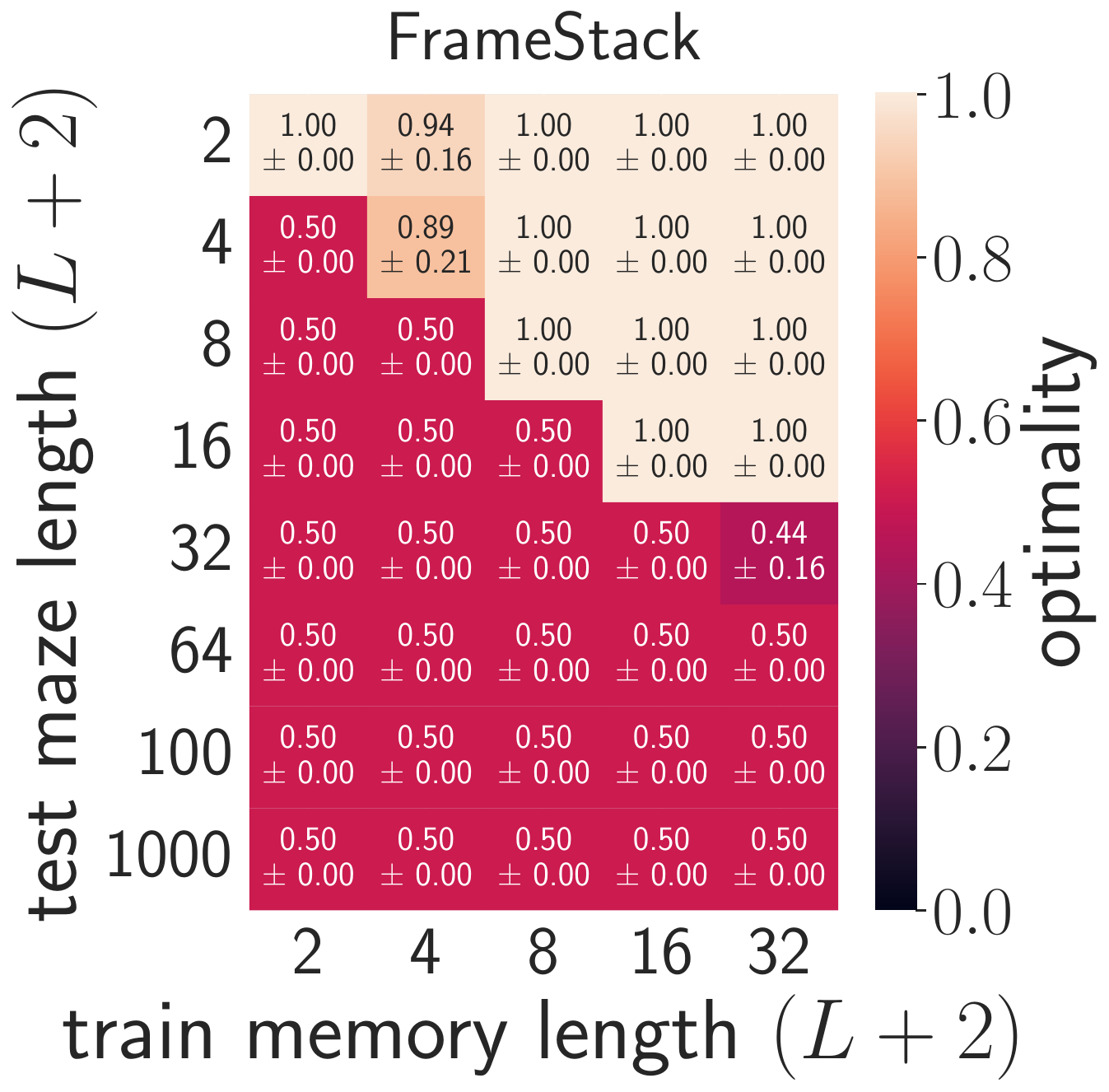}
        \caption{\textbf{Optimality (higher is better $\uparrow$)}}
        \label{fig:PPO_passive_tmaze_AS_optimality_memory}
    \end{subfigure}
    
    \begin{subfigure}[b]{\textwidth}
        \centering
        \includegraphics[width=0.475\linewidth]{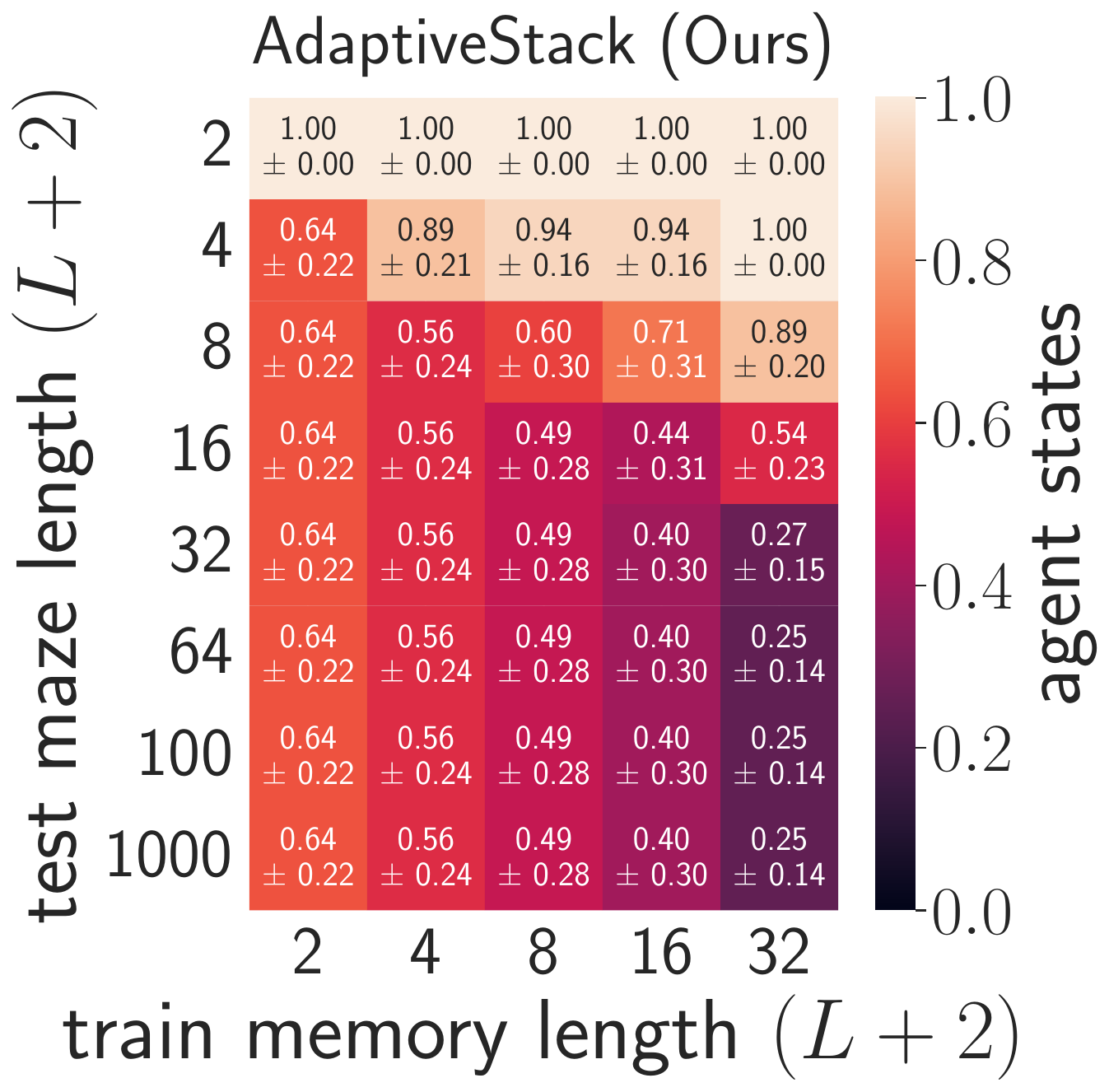}%
        \quad 
        \includegraphics[width=0.475\linewidth]{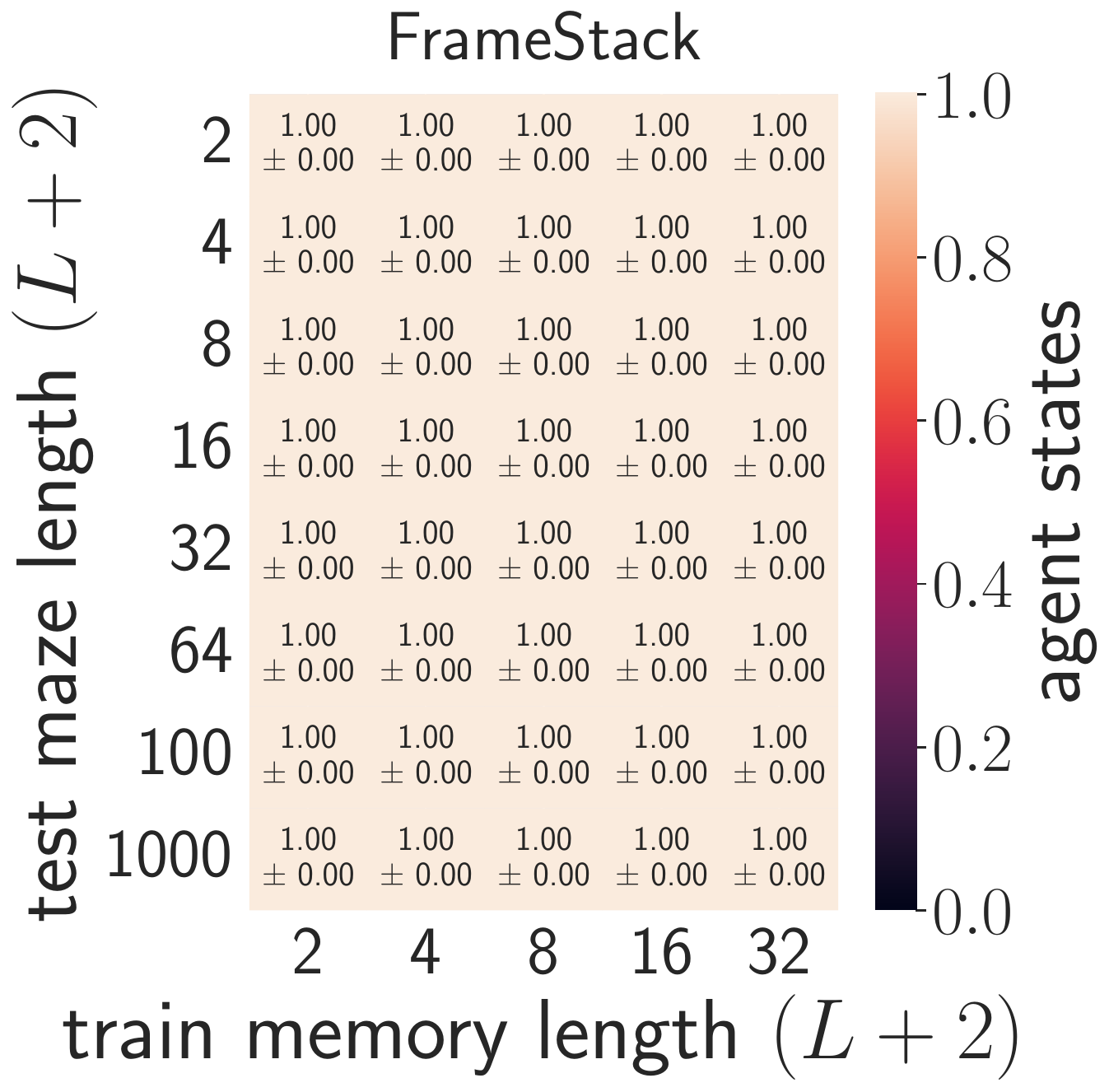}
        \caption{\textbf{Abstraction (lower is better $\downarrow$)}}
        \label{fig:PPO_passive_tmaze_AS_memory_states}
    \end{subfigure}
    \caption{Generalisation and state abstraction in the episodic \textbf{Passive-TMaze} with PPO training and a MLP policy ($N_{rs}=10$). (a) \textbf{Optimality} (higher is better $\uparrow$): normalised difference between evaluated and optimal values (mean over 50 evaluation episodes per training run), (b) \textbf{Abstraction (agent states)} (lower is better $\downarrow$): normalised difference between the number of observed agent states (memory stacks) during evaluation from a trained policies using $k$ memory and optimal policies using $\kappa$ memory (mean over 50 evaluation episodes per training run).
    AS leads to far stronger state abstraction than FS, which explains it's much better generalisation to out of distribution test mazes.}
% \label{fig:PPO-passive-tmazes}
    \label{fig:PPO-passive-tmazes-memory-eval}
\end{figure}

\begin{figure}[h!]
    \centering
    \begin{subfigure}[b]{\textwidth}
        \centering
        \includegraphics[width=0.475\linewidth]{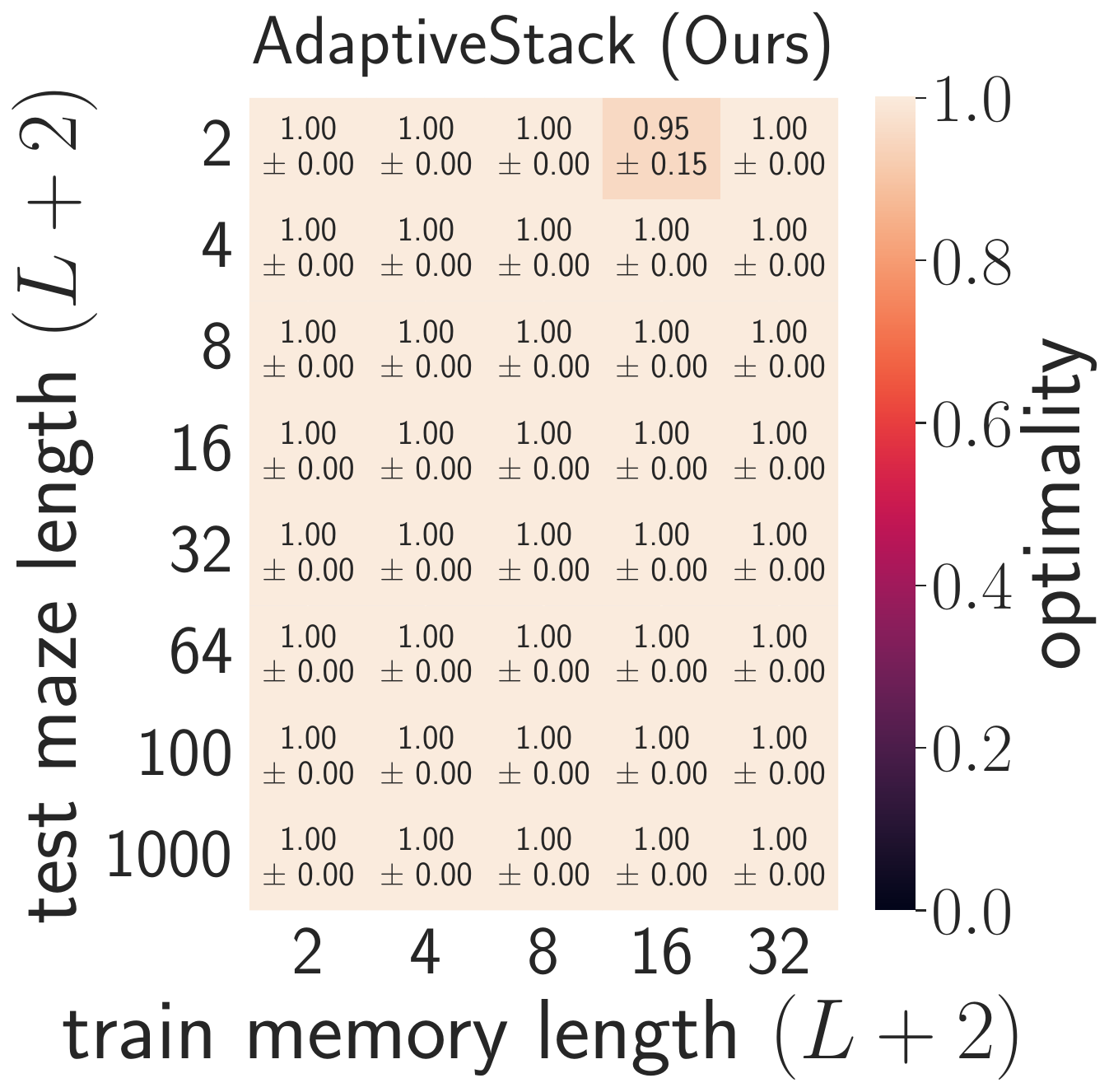}%
        \quad 
        \includegraphics[width=0.475\linewidth]{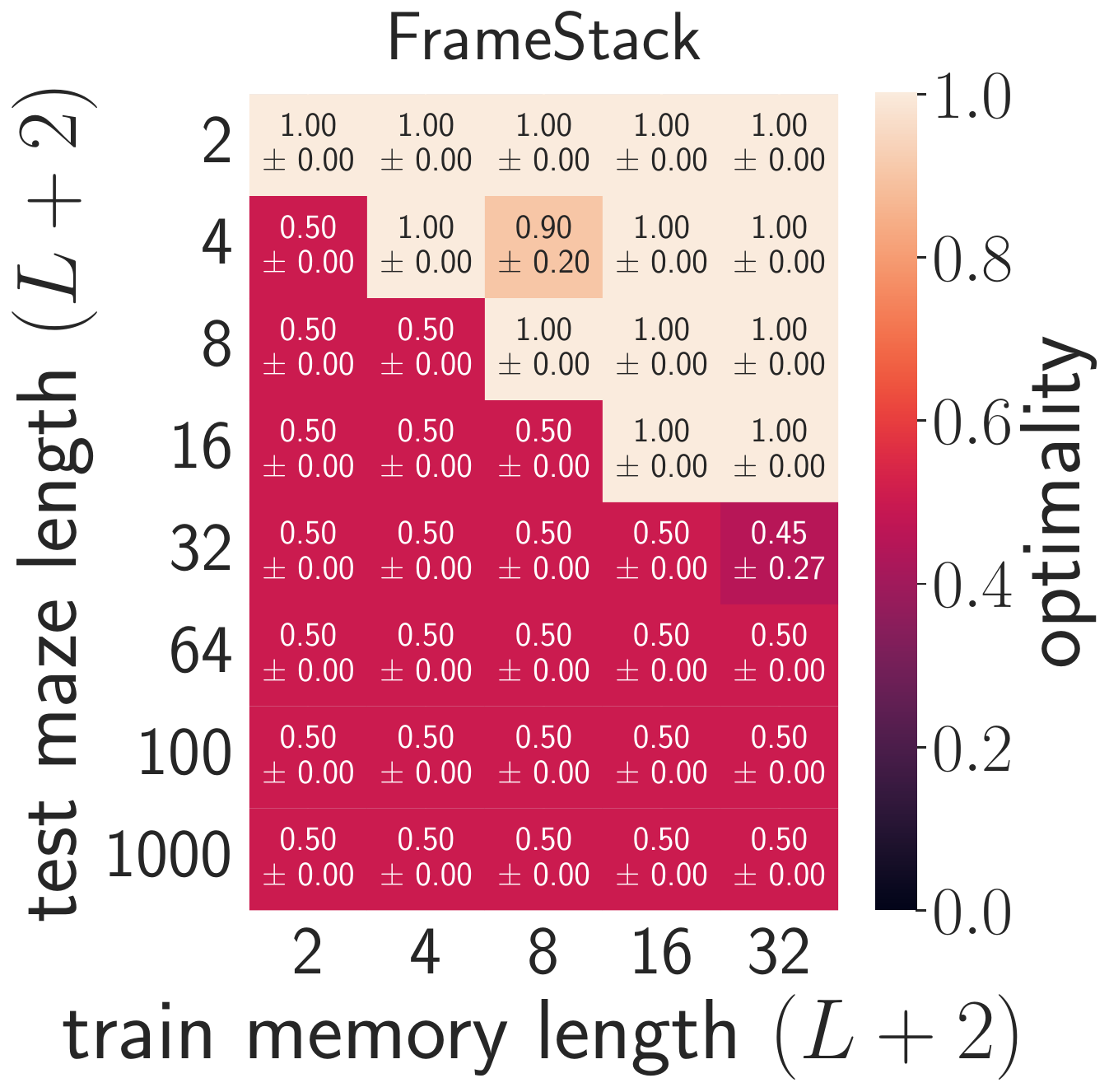}
        \caption{\textbf{Optimality (higher is better $\uparrow$)}}
        \label{fig:GRPO_passive_tmaze_AS_optimality_memory}
    \end{subfigure}
    
    \begin{subfigure}[b]{\textwidth}
        \centering
        \includegraphics[width=0.475\linewidth]{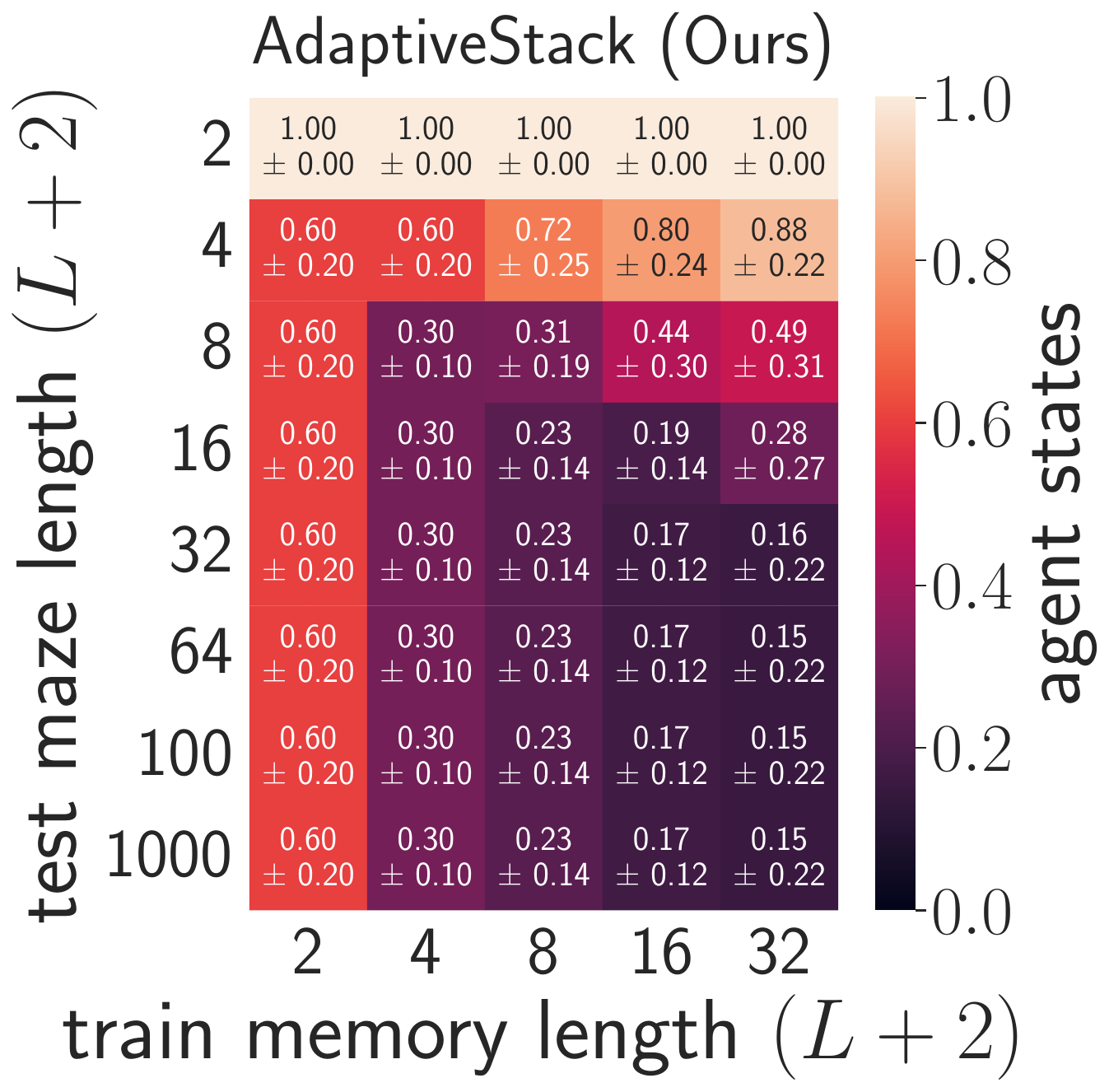}%
        \quad 
        \includegraphics[width=0.475\linewidth]{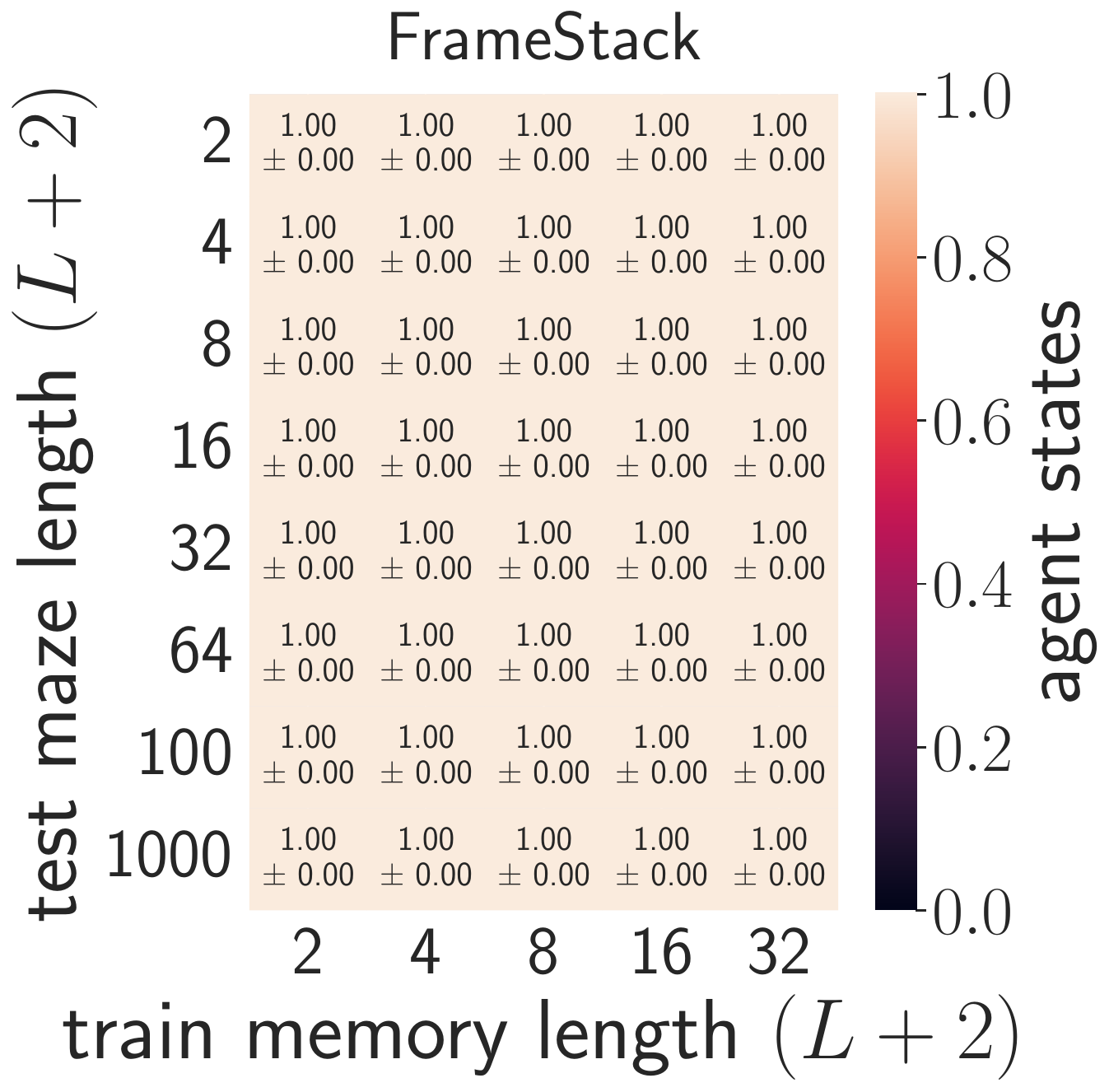}
        \caption{\textbf{Abstraction (lower is better $\downarrow$)}}
        \label{fig:GRPO_passive_tmaze_AS_memory_states}
    \end{subfigure}
    
    \begin{subfigure}[b]{0.35\textwidth}
        \centering
        \includegraphics[width=\linewidth]{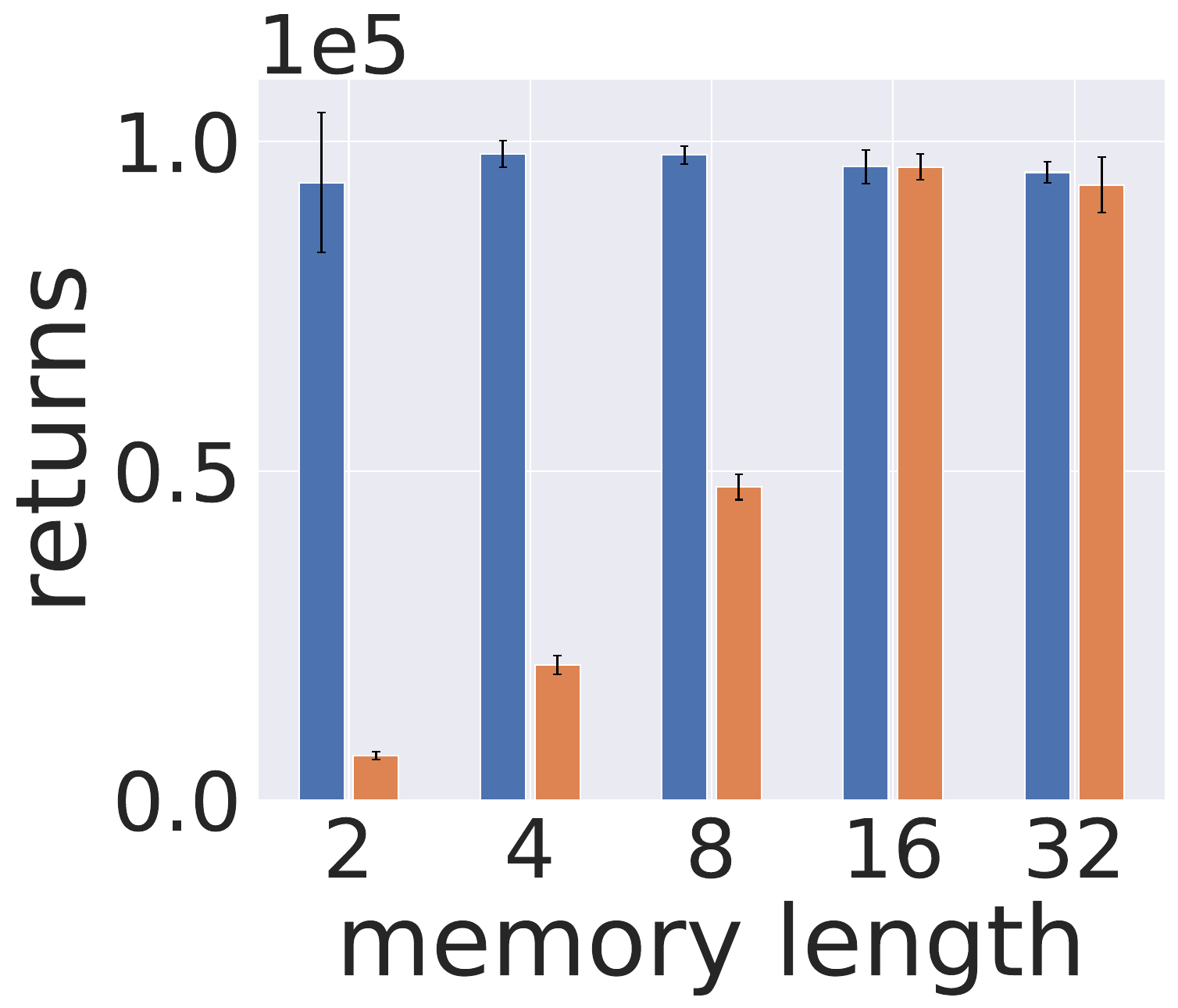}%
        \caption{\textbf{Total rewards (higher is better $\uparrow$)}}
        \label{fig:GRPO_passive_tmaze_memory_returns}
    \end{subfigure}
    \quad
    \begin{subfigure}[b]{0.35\textwidth}
        \centering
        \includegraphics[width=\linewidth]{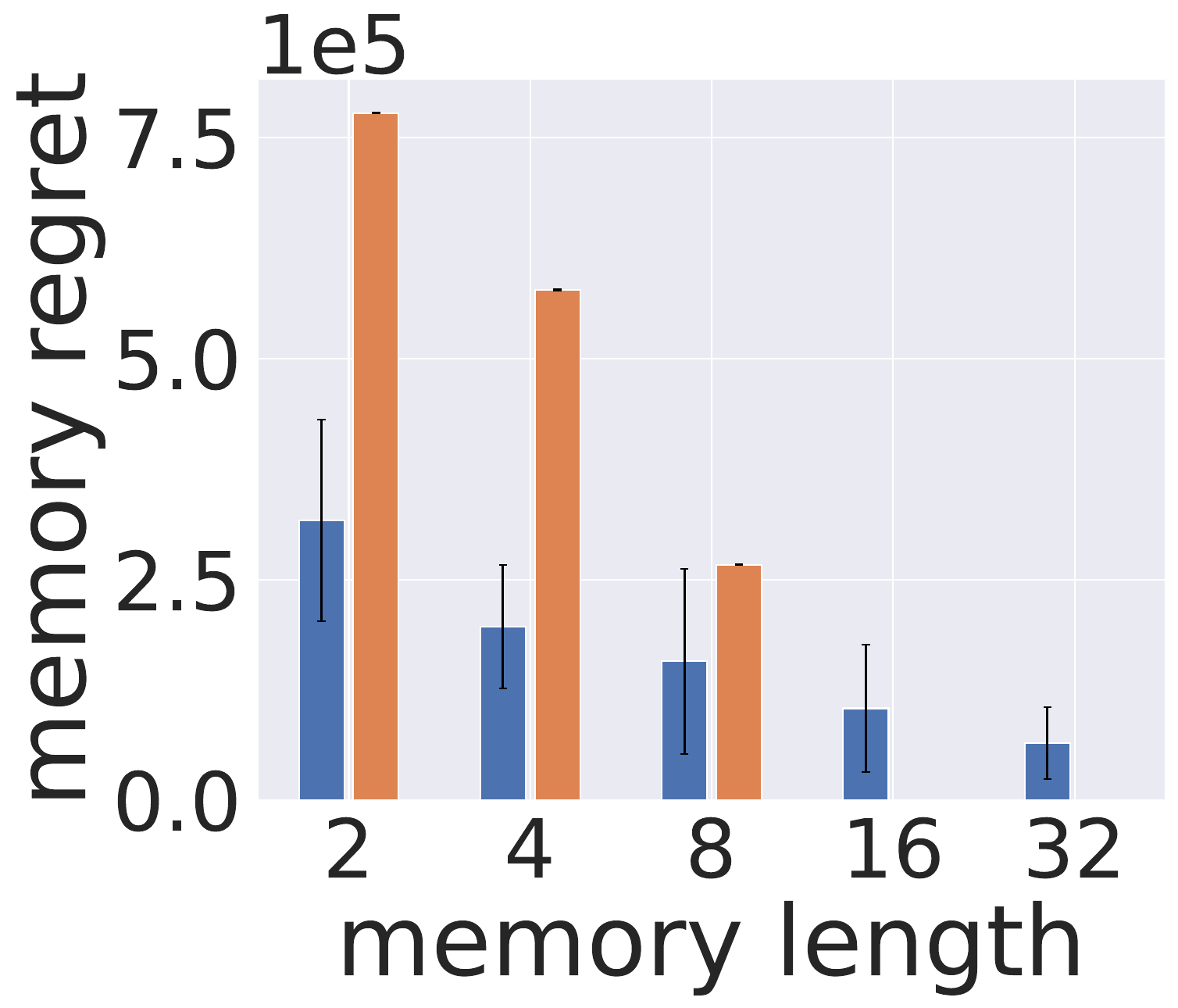}%
        \caption{\textbf{Memory regret (lower is better $\downarrow$)}}
        \label{fig:GRPO_passive_tmaze_memory_regret}
    \end{subfigure}
    \caption{Generalisation and state abstraction in the episodic \textbf{Passive-TMaze} with GRPO \citep{shao2024deepseekmath} training and a MLP policy ($N_{rs}=10$). We implement a simplified version of GRPO in stable-baselines by simply removing the critic from PPO and using only MC estimates. (a) \textbf{Optimality} (higher is better $\uparrow$): normalised difference between evaluated and optimal values (mean over 50 evaluation episodes per training run), (b) \textbf{Abstraction (agent states)} (lower is better $\downarrow$): normalised difference between the number of observed agent states (memory stacks) during evaluation from a trained policies using $k$ memory and optimal policies using $\kappa$ memory (mean over 50 evaluation episodes per training run).
    AS leads to far stronger state abstraction than FS, which explains it's much better generalisation to out of distribution test mazes.}
% \label{fig:PPO-passive-tmazes}
    \label{fig:GRPO-passive-tmazes-memory-eval}
\end{figure}

\begin{figure}[h!]
    \centering
    \includegraphics[width=\linewidth]{images/newplots/legend.pdf}\\
    \begin{subfigure}[b]{\textwidth}
        \centering
        \includegraphics[width=0.33\linewidth]{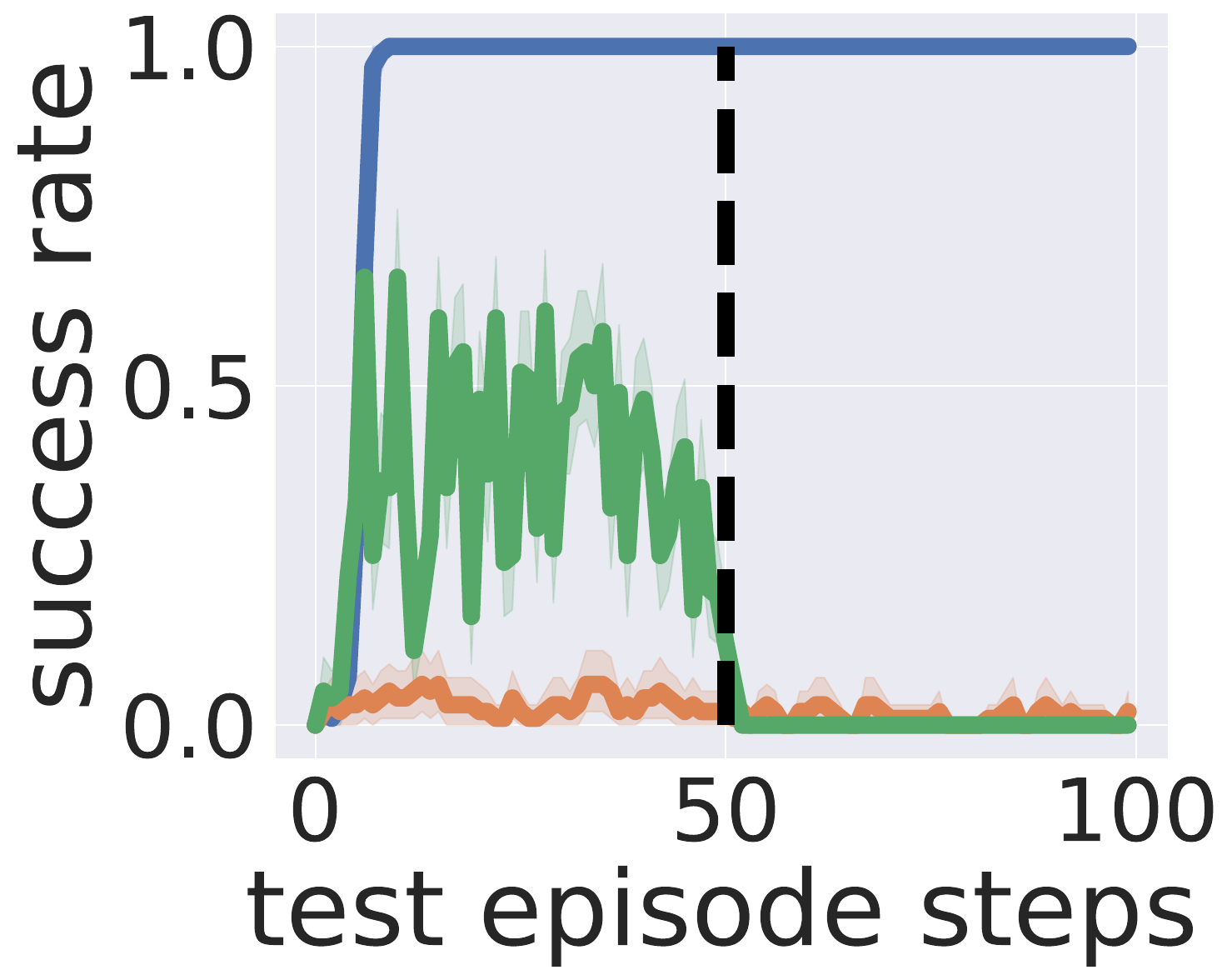}%
        \includegraphics[width=0.33\linewidth]{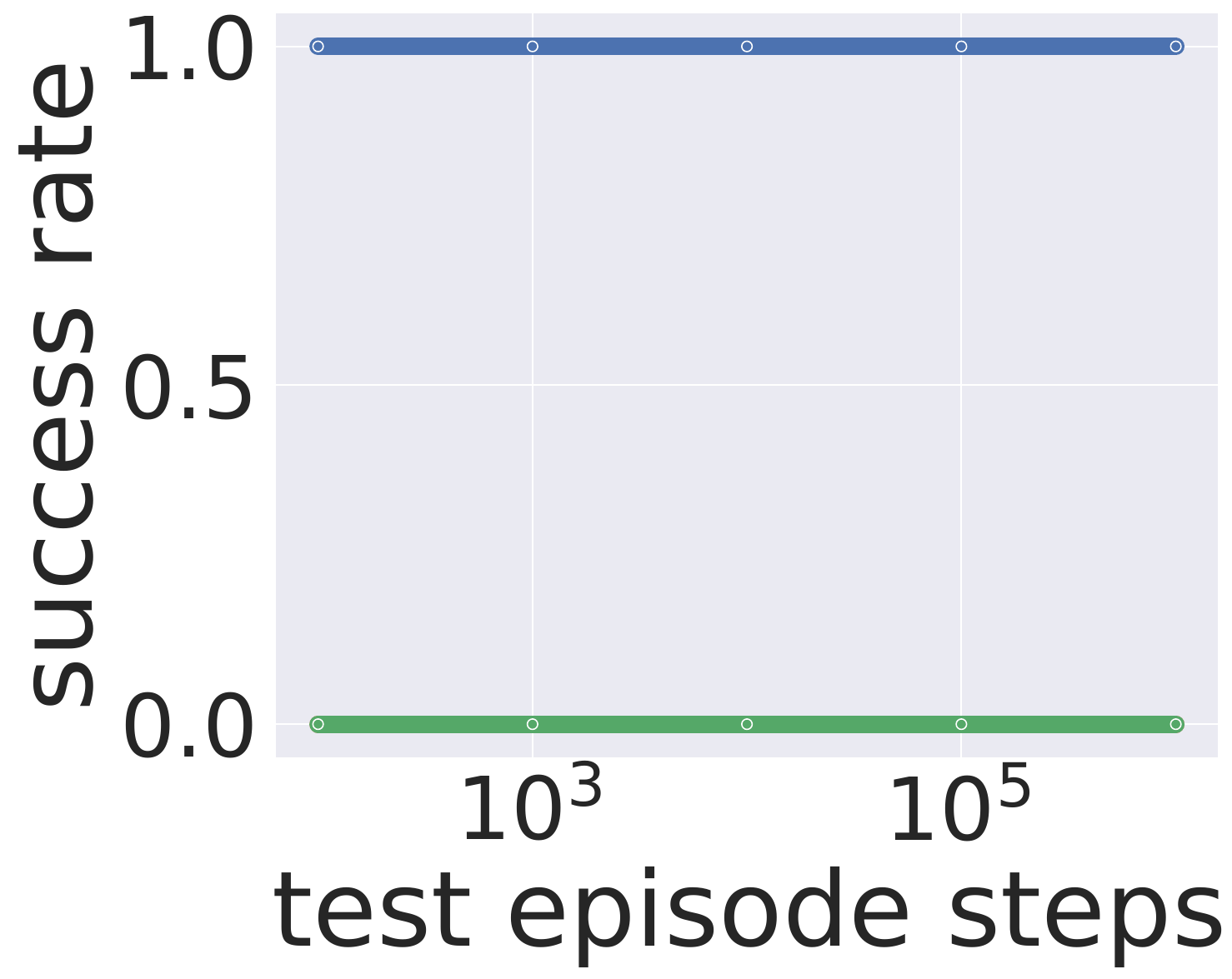}%
        \includegraphics[width=0.33\linewidth]{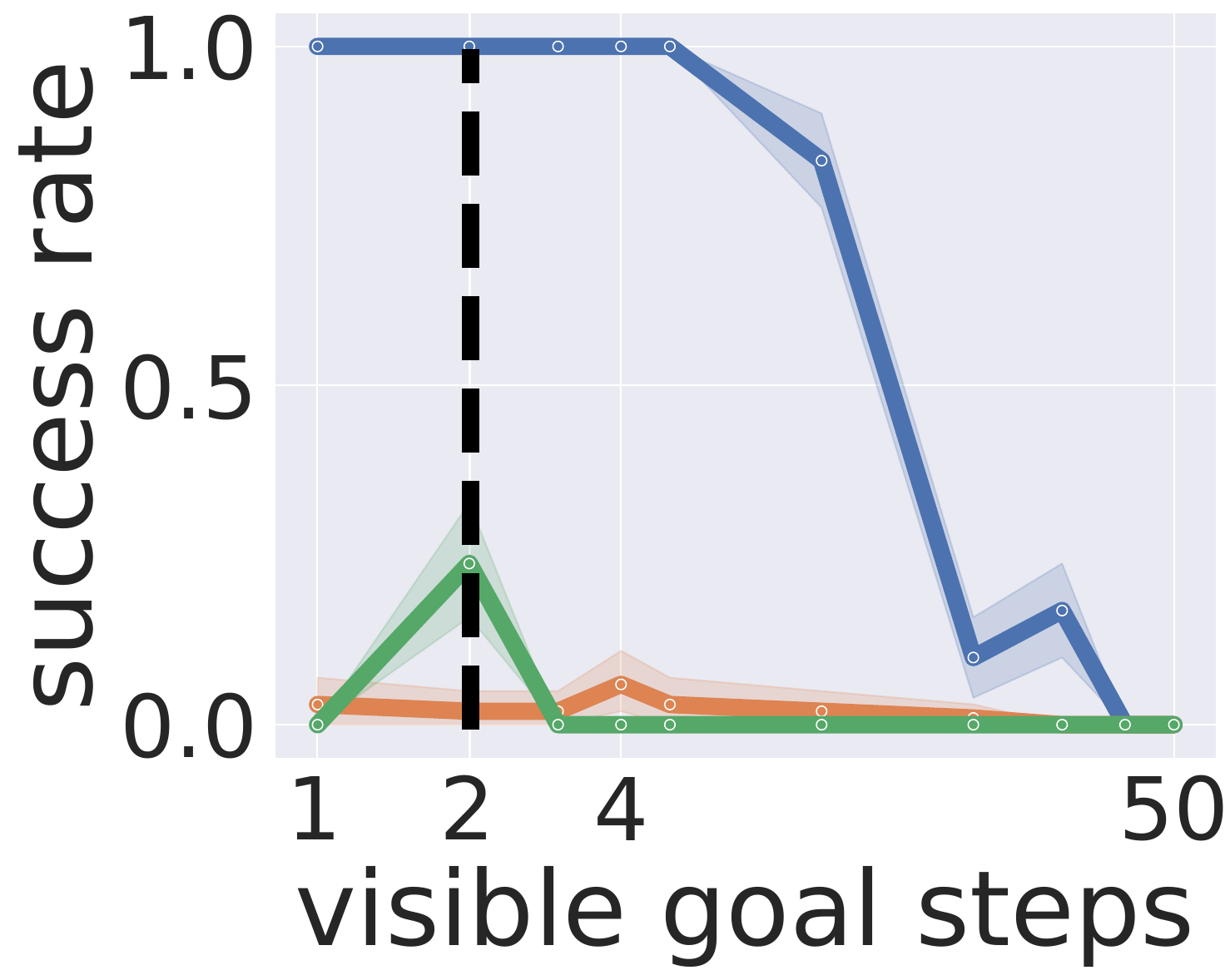}%
        \caption{MLP}
        % \label{fig:PPO_passive_tmaze_total_rewards_memory}
    \end{subfigure}
    
    \begin{subfigure}[b]{\textwidth}
        \centering
        \includegraphics[width=0.33\linewidth]{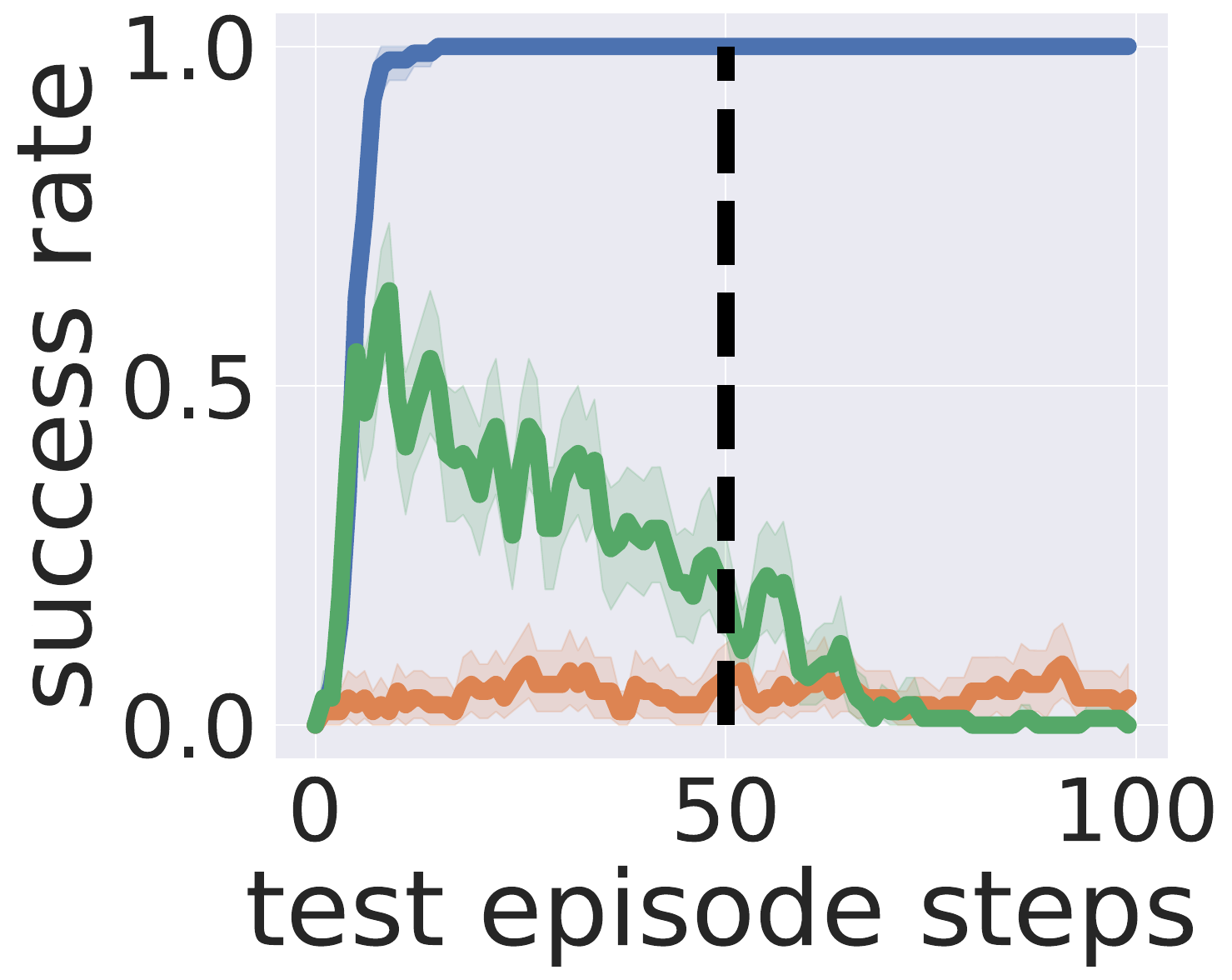}%
        \includegraphics[width=0.33\linewidth]{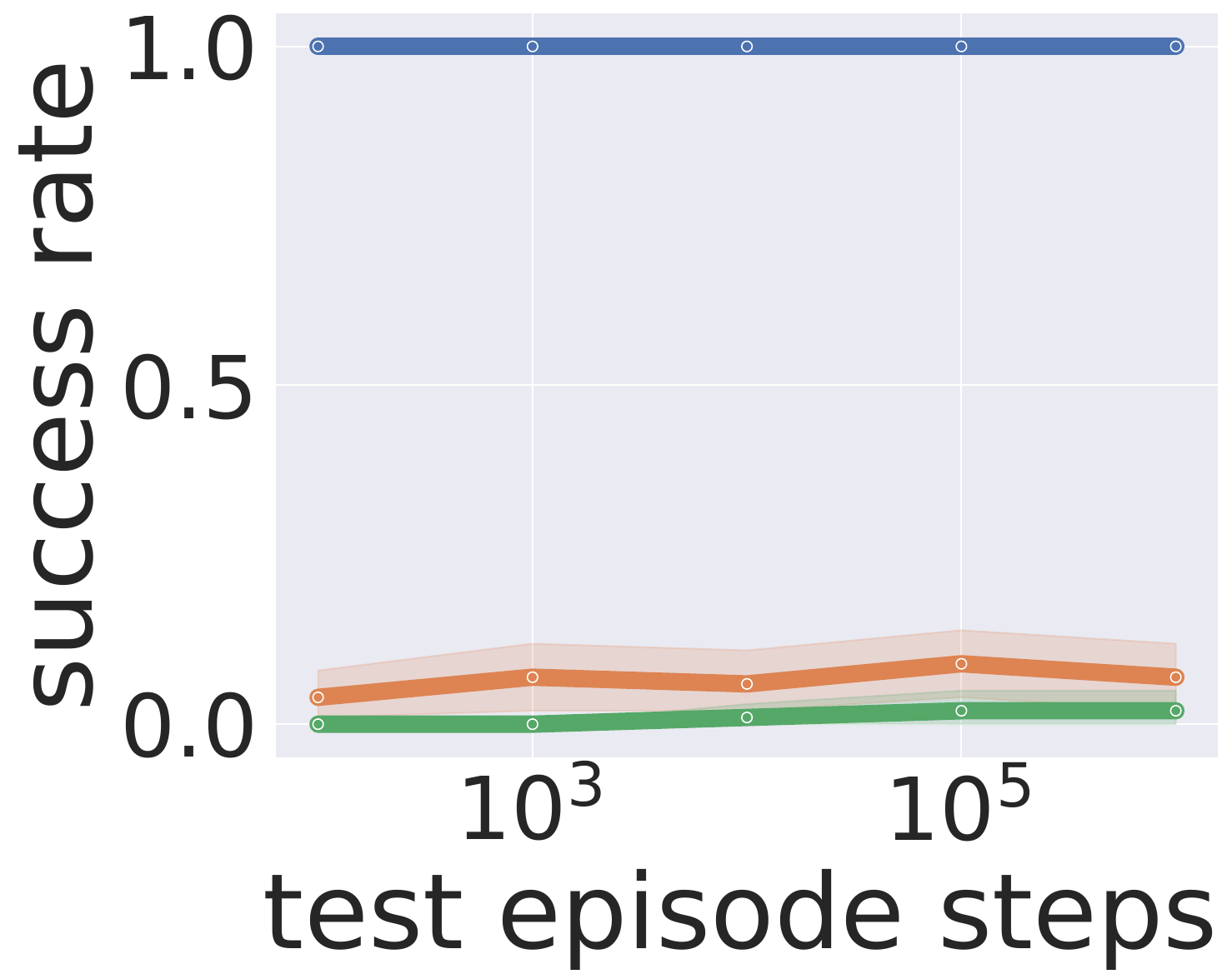}%
        \includegraphics[width=0.33\linewidth]{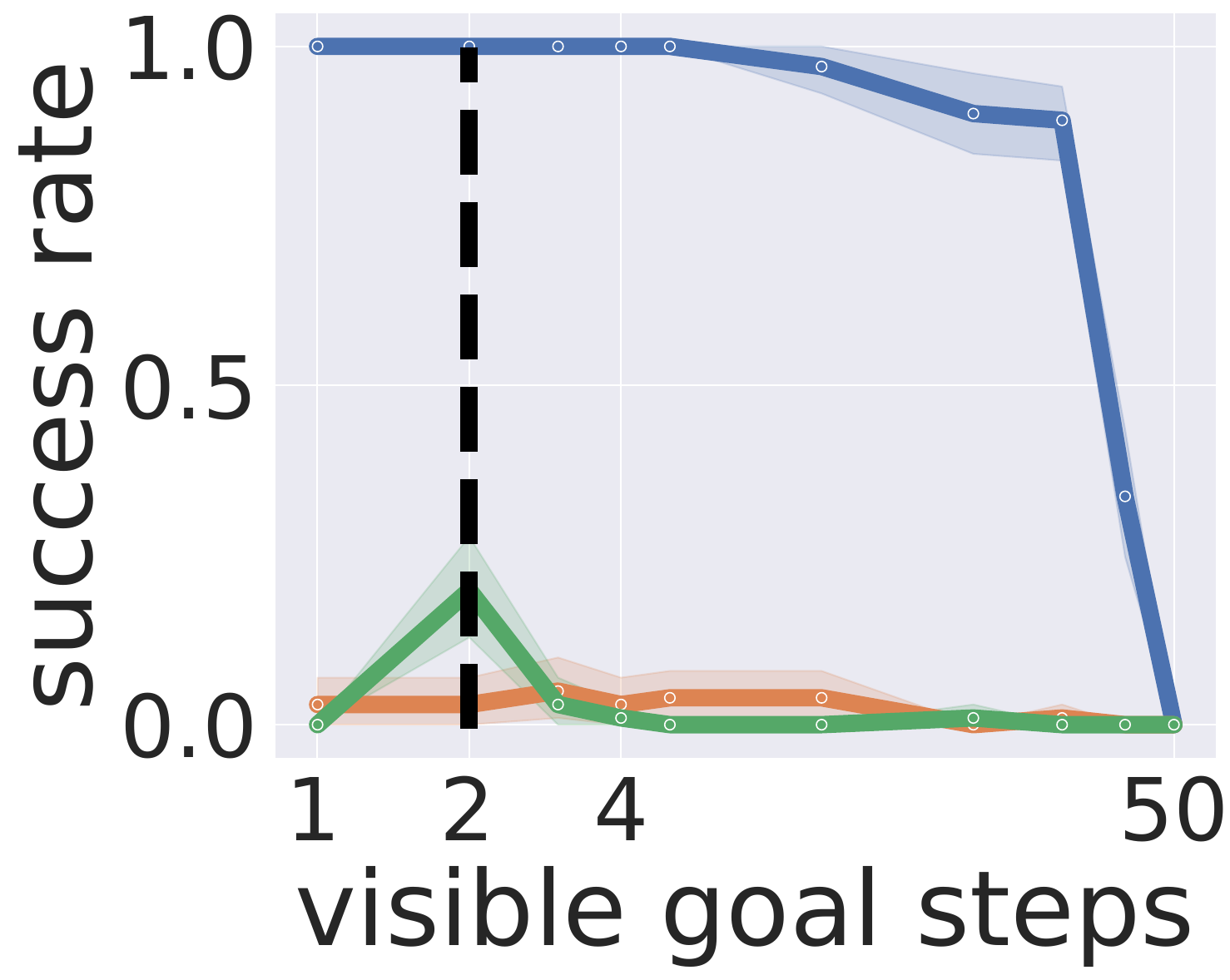}%
        \caption{LSTM}
        % \label{fig:PPO_passive_tmaze_total_rewards_memory}
    \end{subfigure}
    
    \begin{subfigure}[b]{\textwidth}
        \centering
        \includegraphics[width=0.33\linewidth]{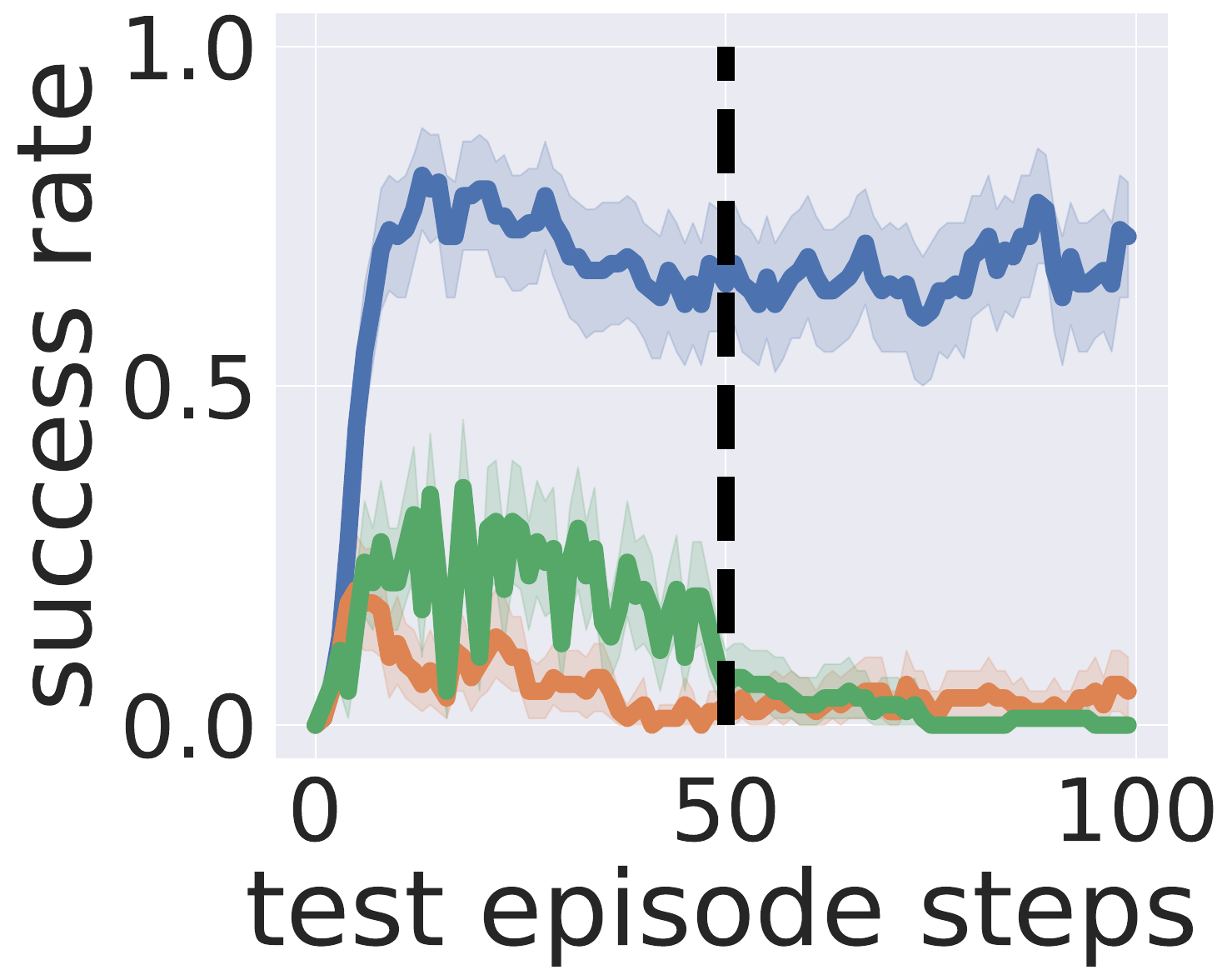}%
        \includegraphics[width=0.33\linewidth]{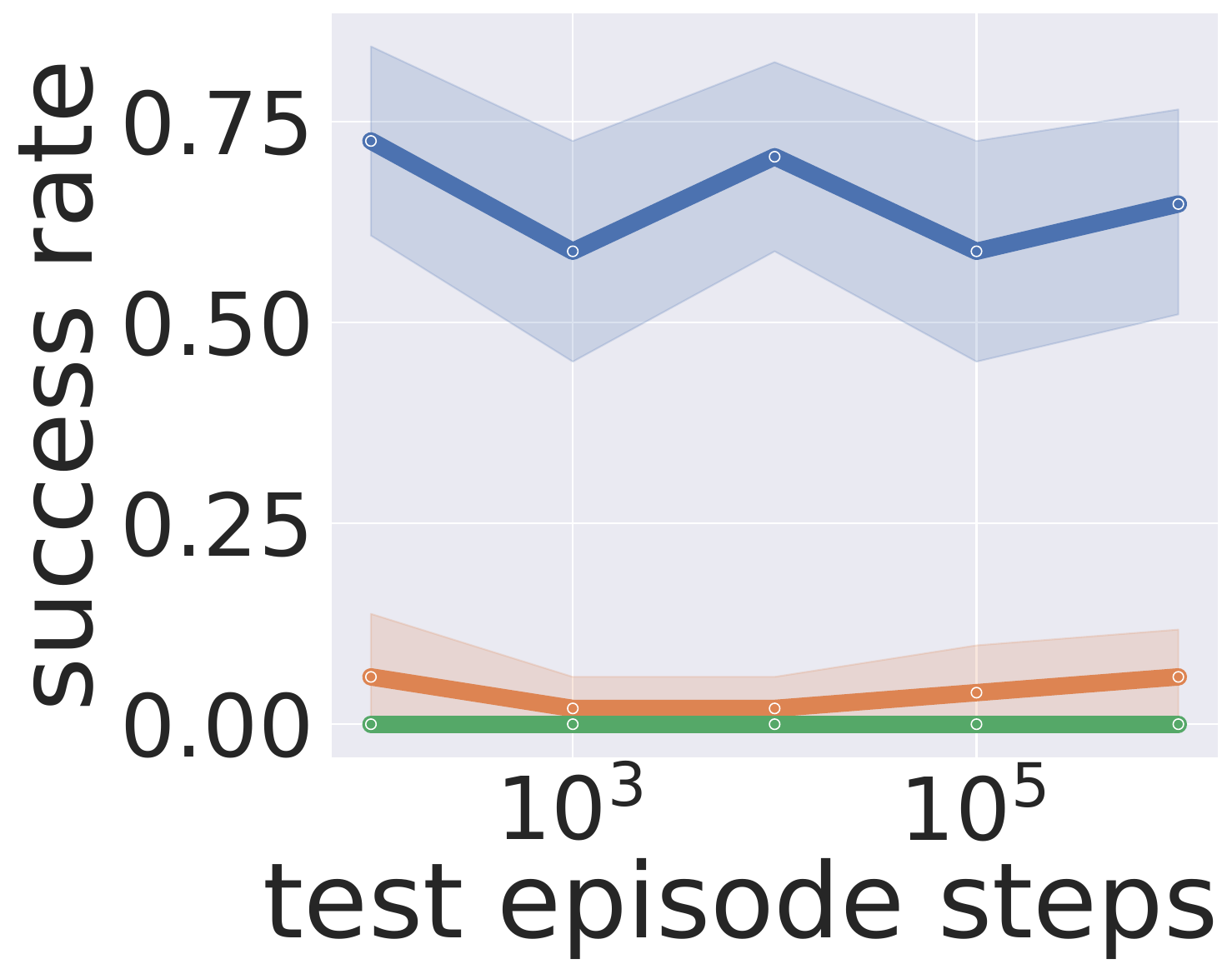}%
        \includegraphics[width=0.33\linewidth]{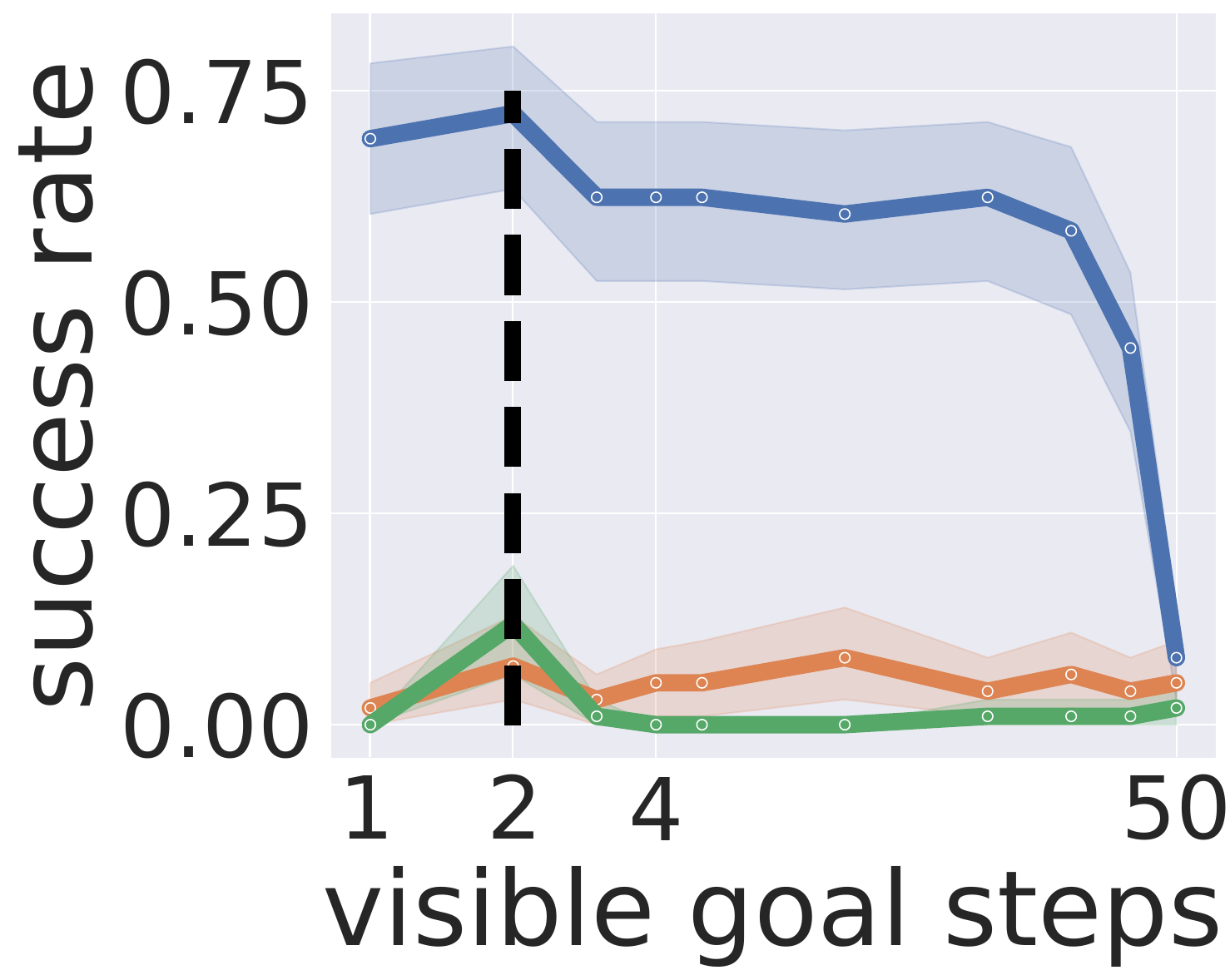}%
        \caption{Transformer}
        % \label{fig:apdx:PPO_passive_tmaze_total_rewards_memory}
    \end{subfigure}%
    \caption{
    % \looseness=-1
    Architecture agnostic generalisation results in \textbf{FetchReach} ($k^*=\infty$) with PPO ($N_{rs}=100$). We show the evaluated success rates of the best models found across all training seeds. The black dashed lines on the left indicate the training: \textbf{(left)} the number of steps before the goal position changes during training, \textbf{(right)} the number of steps the goal is visible for at the beginning of each position change. Consistent with our other results, we observe that AS achieves significantly higher success rates than FS with the same memory stack length $k=4$, and even higher than the oracle FS($k^*$), demonstrating the difficulty of learning from full histories when training steps are bounded ($10^6$). Interestingly, this generalisation of AS holds even under extreme distribution shifts (\textbf{middle} and \textbf{right}), demonstrating that it indeed learns to remember only observations relevant for reward maximisation, while discarding irrelevant observations. Finally, it is interesting that both the LSTM and transformer models trained with FS($k^*$) also struggle, quickly degrading to $0\%$ success rates similarly to a basic MLP. This is consistent with the results of \citet{ni2023transformers}, which demonstrated that LSTMs struggle significantly with memory assignment (e.g. remembering the goal position) while transformers struggle significantly with credit assignment (e.g. learning a policy that maximises rewards, especially when short-term memory is also required). Given that we also observe the same behaviour when using an MLP, our results demonstrate that the use of FS is potentially the main driving factor behind performance difference, irrespective of model architecture.
   }
    \label{fig:apdx:PPO-fetch-generalisation}
\end{figure}

\clearpage
\subsection{Comparing with \cite{Demir2023}}
\label{apdx:demir-ablation}

\looseness=-1
As discussed in our related works, \cite{Demir2023} is a Frame Stacking approach and probably the most directly relevant work to our paper in their use of memory actions over a stack of observations.
% Hence it is worthwhile discussing  at least comparing their performance to regular Frame Stacking and Adaptive Stacking in at least our continual TMazes.
% 
The memory action space they consider is significantly smaller than ours, but the memory architecture is biased in favor of always overwriting the oldest memory (it is still a FIFO memory like Frame Stacking): 
\begin{enumerate}
    \item \textit{push:} 
    \looseness=-1
    which adds an element to the top of the stack. If the stack is full, then it removes the last observation (bottom of the stack) to free up space for the new observation (similarly to FS).
    \item \textit{skip:} which does nothing if the stack is full. Otherwise, it always pushes the new observation on top of the stack (similarly to FS). This means that when $k=k^*$ in environments like our \textbf{Passive-TMaze}, \cite{Demir2023} is identical to regular FS.
\end{enumerate}
Hence it is quite different than the action space we consider that allows for the observation to be skipped or to remove \textit{any} available slot in memory to push the new observation.
Our approach provides the agent with more choice over the maintenance of memory and thus has the potential to more efficiently utilize memory for problems where multiple observations with significant temporal distance must be considered. Our approach also learns a policy to access a memory by maximizing reward whereas \cite{Demir2023} considers intrinsic motivation (IM) to store observations that are more novel. This bias is again intuitively helpful for many problems, but it would be easy to construct counter examples where it is detrimental (such as the famous "noisy TV" scenario). 
For example, Figures \ref{fig:ql-demir-baseline}-\ref{fig:PPO_xormaze} shows the results comparing their performance to regular FS and AS in our continual TMazes and \textbf{XorMaze}. We observe that their approach struggles especially when using IM, which is likely due to the IM incentivising the wrong behaviours. Hence, we focus our analysis in this paper to regular Frame Stacking, since it has theoretical guarantees, is agnostic to the sequence model and RL algorithm, and is the approach most commonly used in RL with function approximation \citep{ni2021recurrent}.

Finally, it is also important to note that while \cite{Demir2023} compares to RNNs with function approximation, their memory management approach is instantiated as a purely tabular method using Sarsa($\lambda$) and compared against A2C and PPO. In contrast, our work extends stack management to function approximation (irrespective of architecture or RL algorithm) and perhaps most importantly, demonstrates utility for Transformer models, whereas most efficient memory methods for RL so far have been restricted to recurrent processing.

\begin{figure}[h!]
    \centering
    \includegraphics[width=0.8\linewidth]{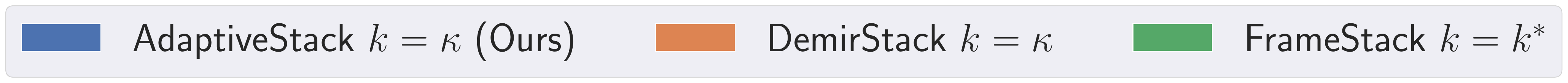}\\
    \begin{subfigure}[b]{\textwidth}
        \centering
        \includegraphics[width=0.25\linewidth]{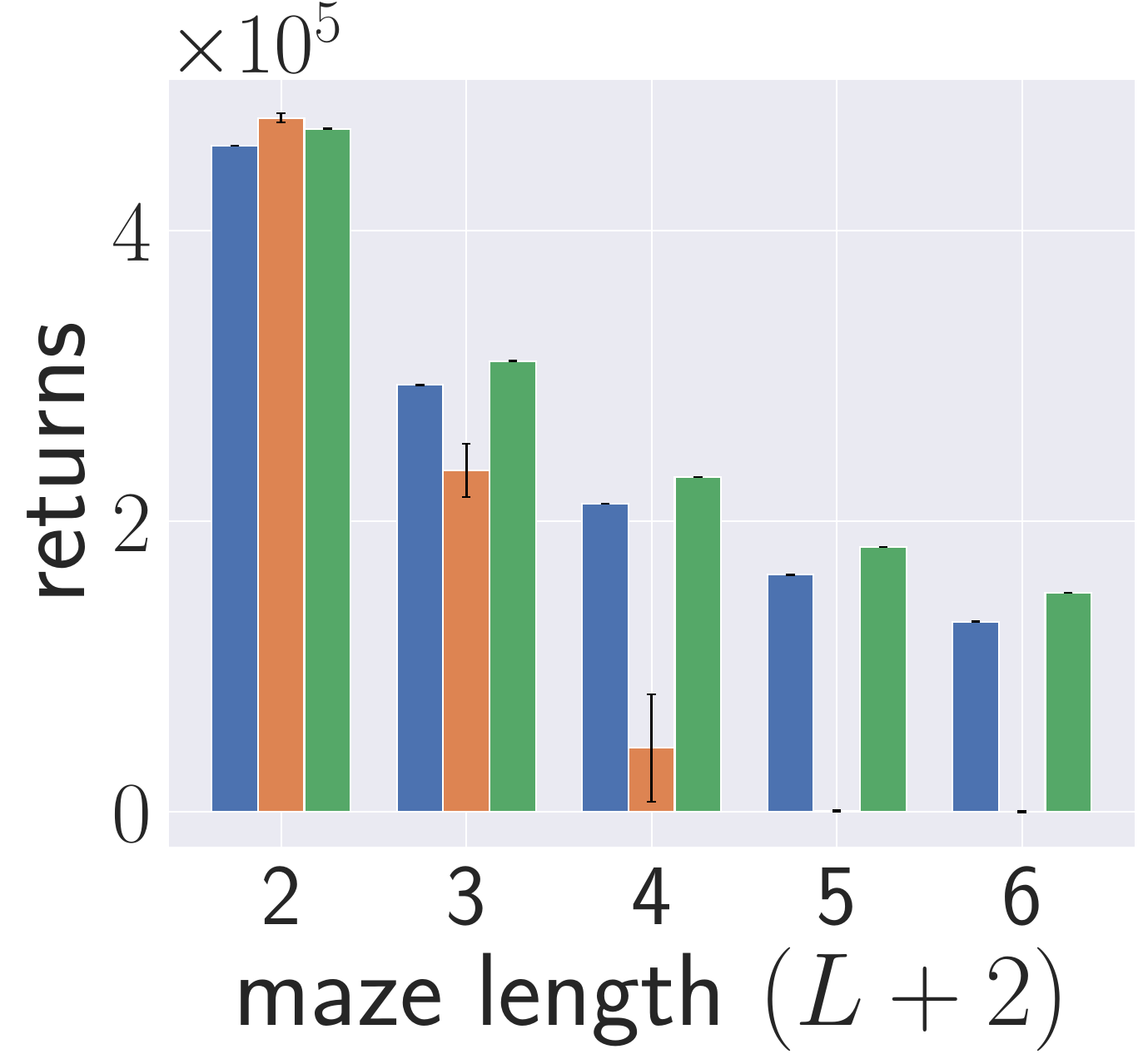}%
        \includegraphics[width=0.25\linewidth]{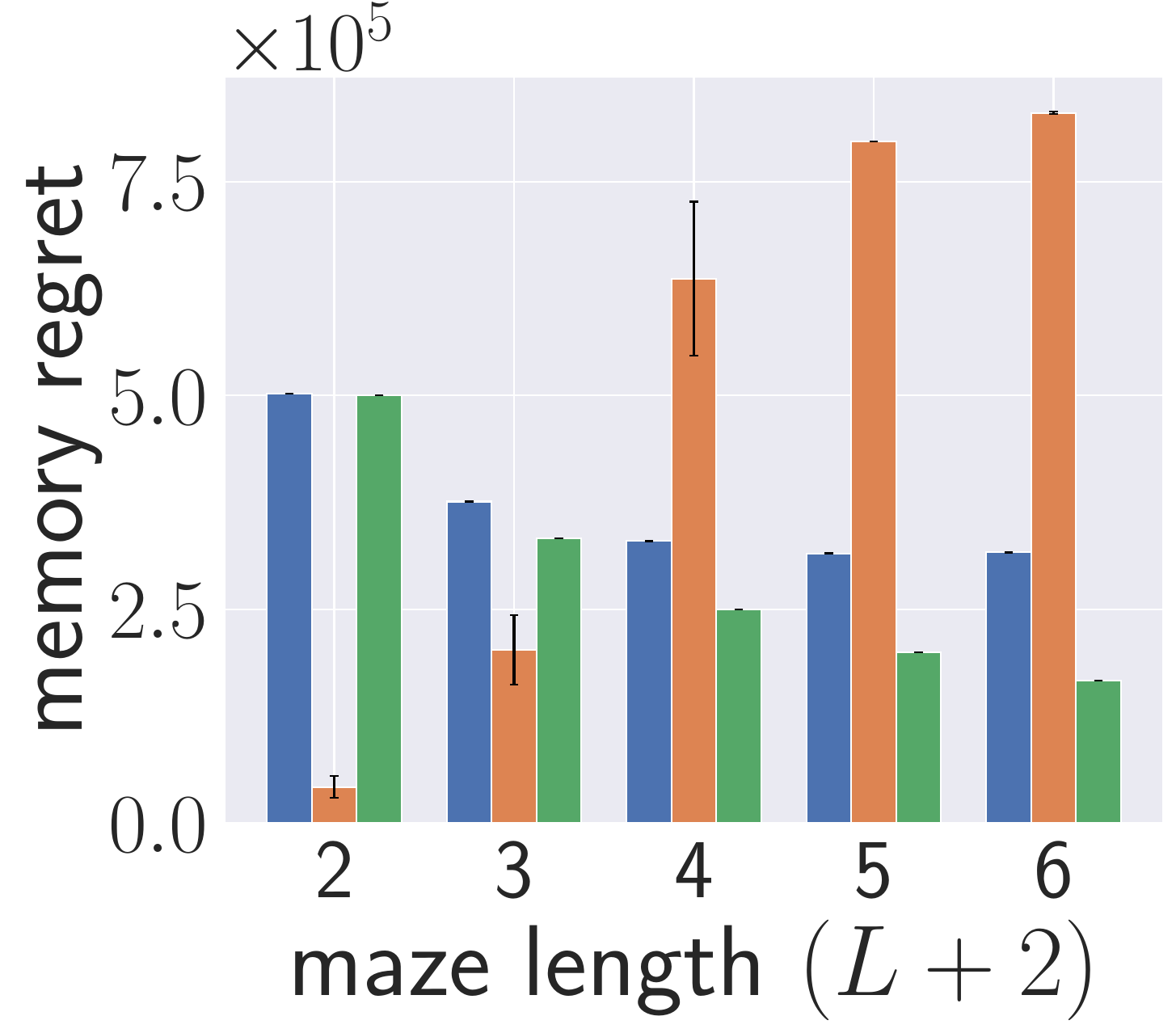}%
        \includegraphics[width=0.25\linewidth]{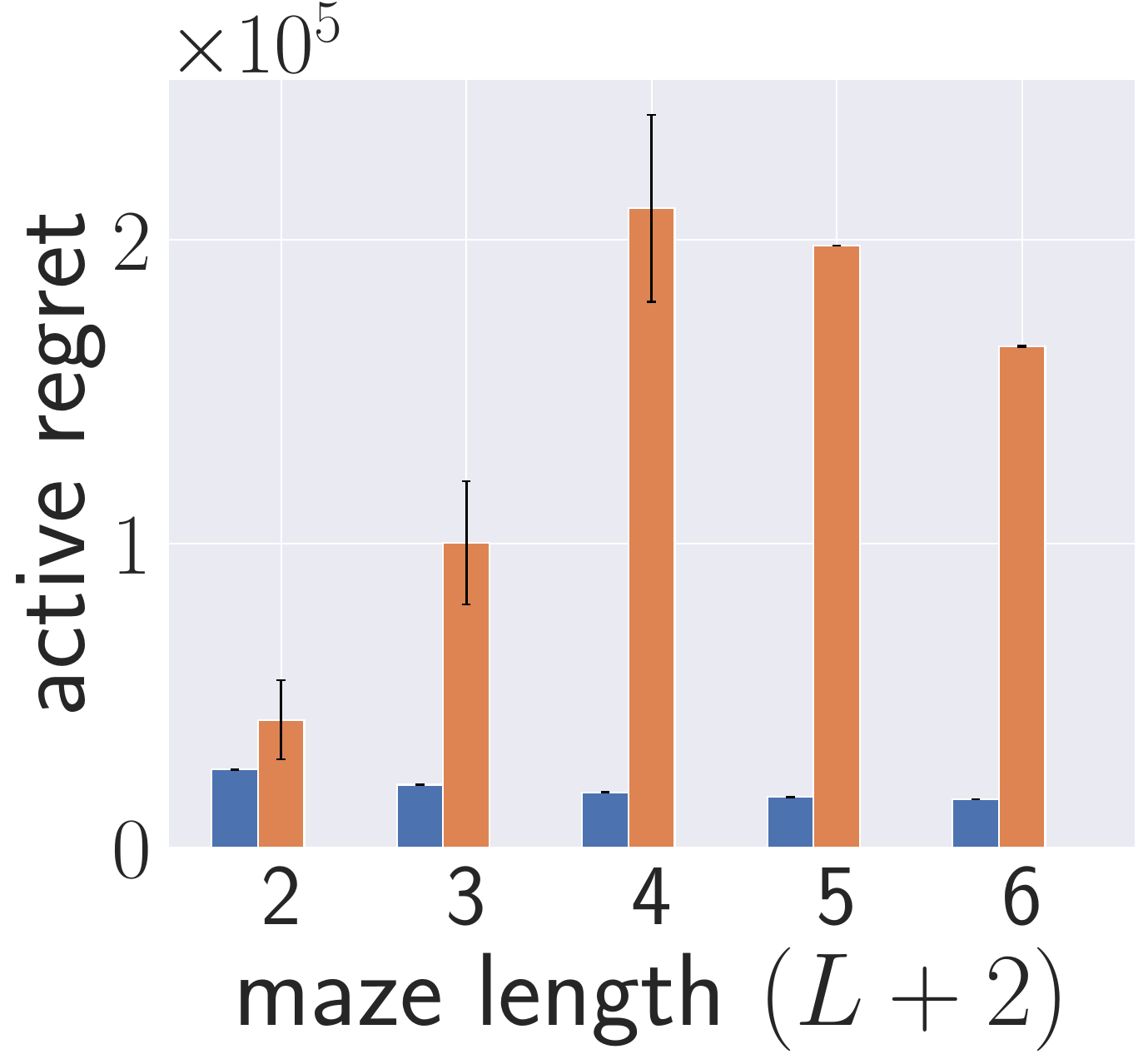}%
        \includegraphics[width=0.25\linewidth]{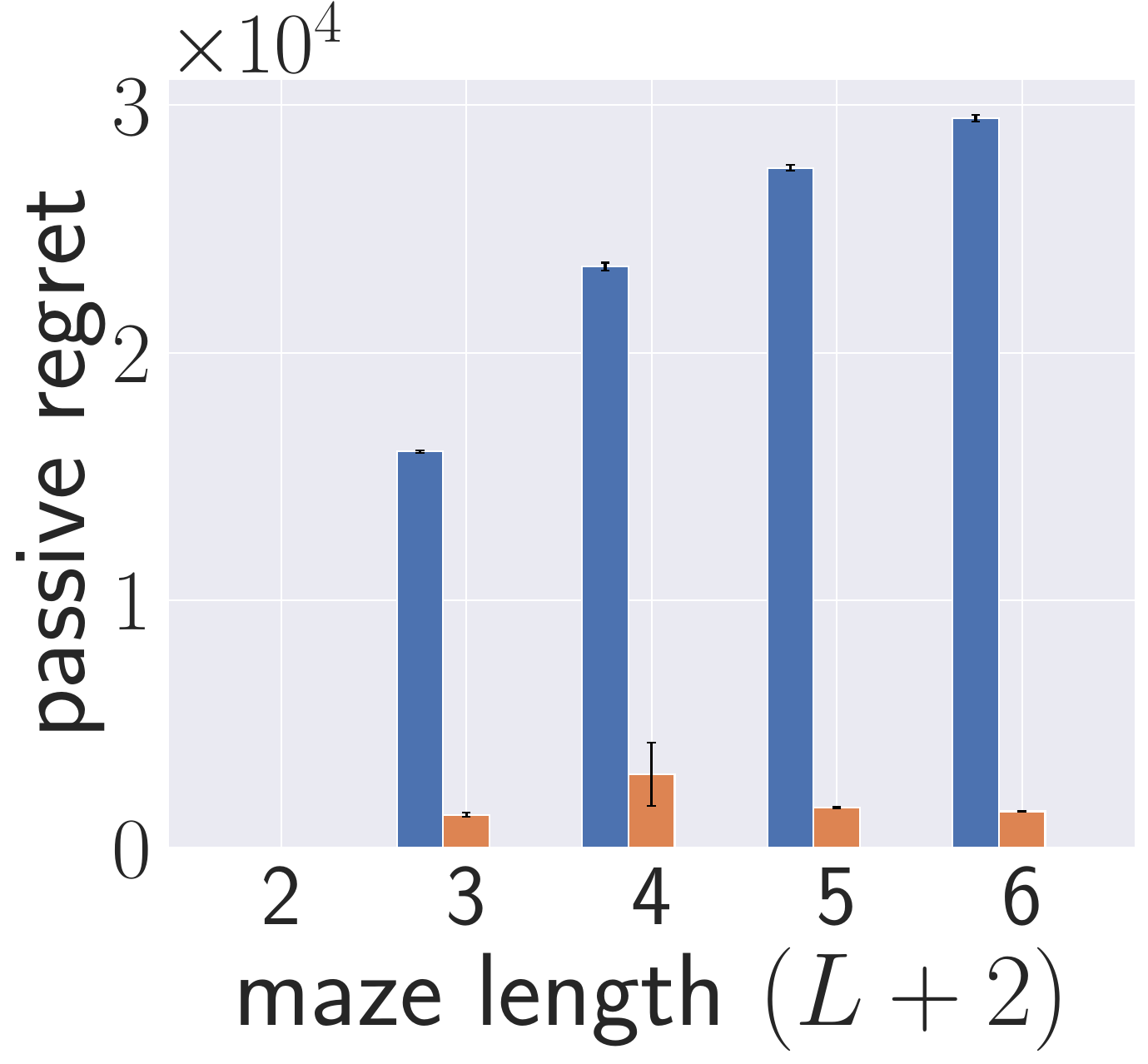}
        \caption{\textbf{\textbf{Passive-TMaze}}}
        % \label{fig:ql_passive_tmaze}
    \end{subfigure}    
    \begin{subfigure}[b]{\textwidth}
        \centering
        \includegraphics[width=0.25\linewidth]{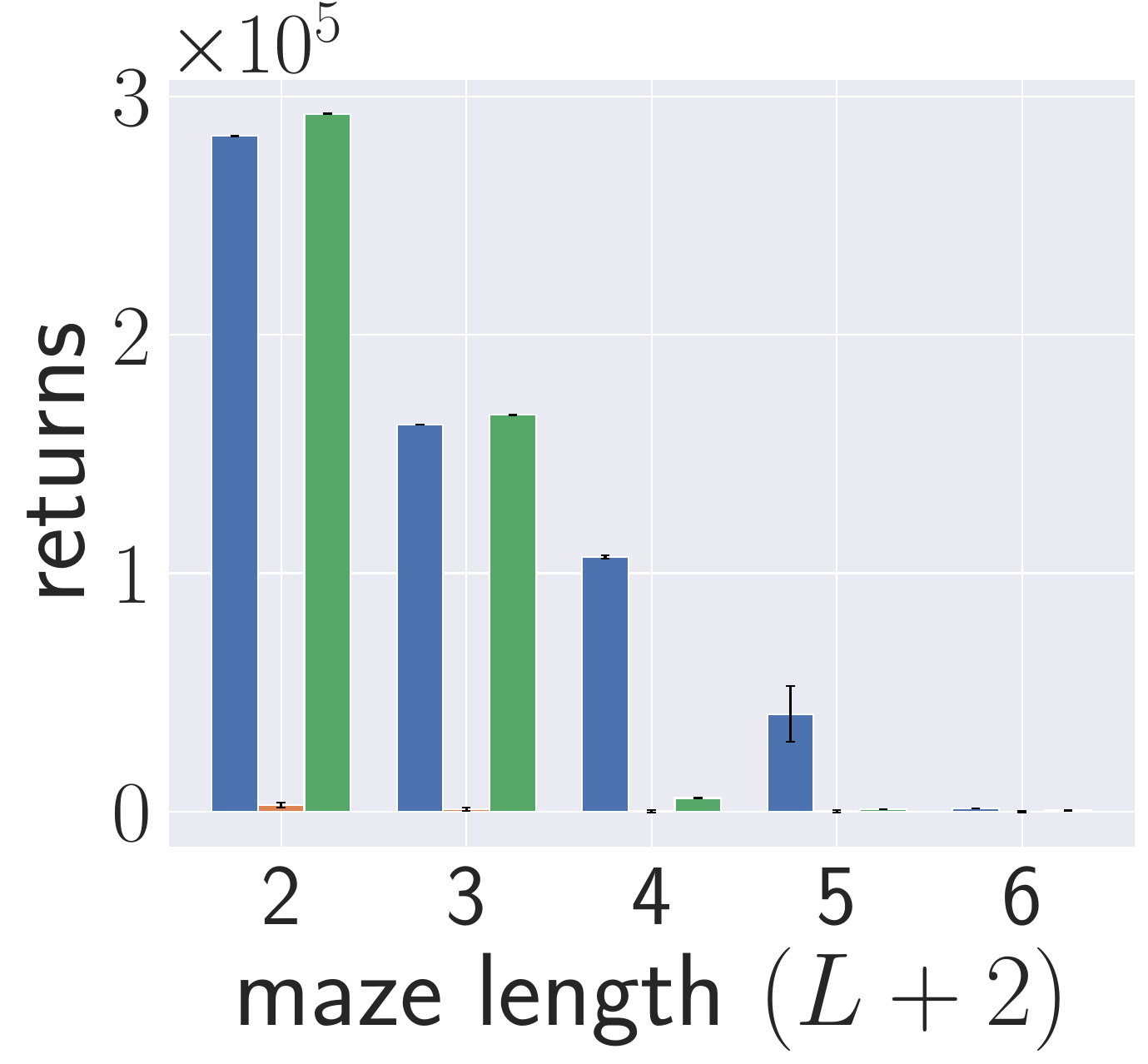}%
        \includegraphics[width=0.25\linewidth]{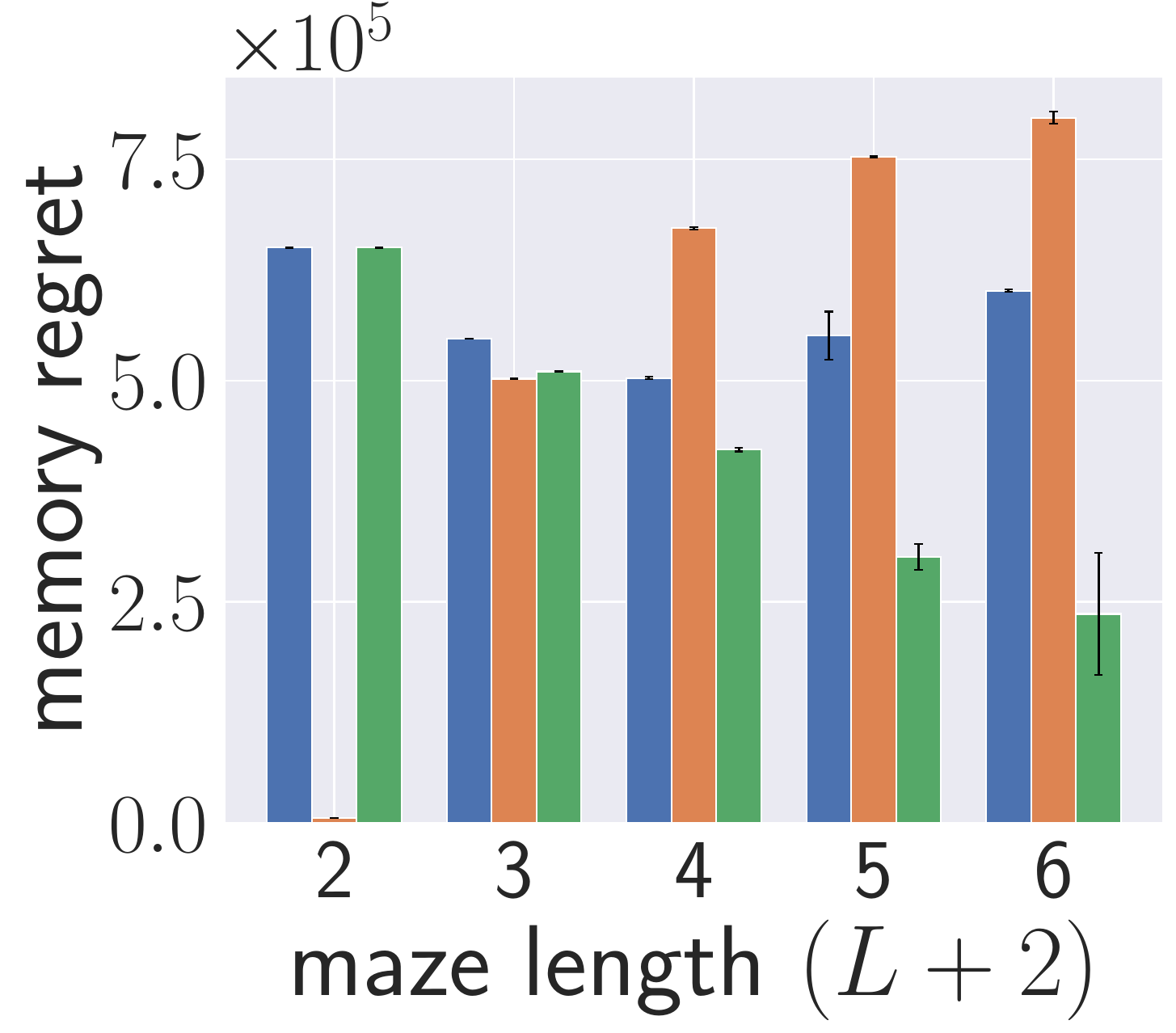}%
        \includegraphics[width=0.25\linewidth]{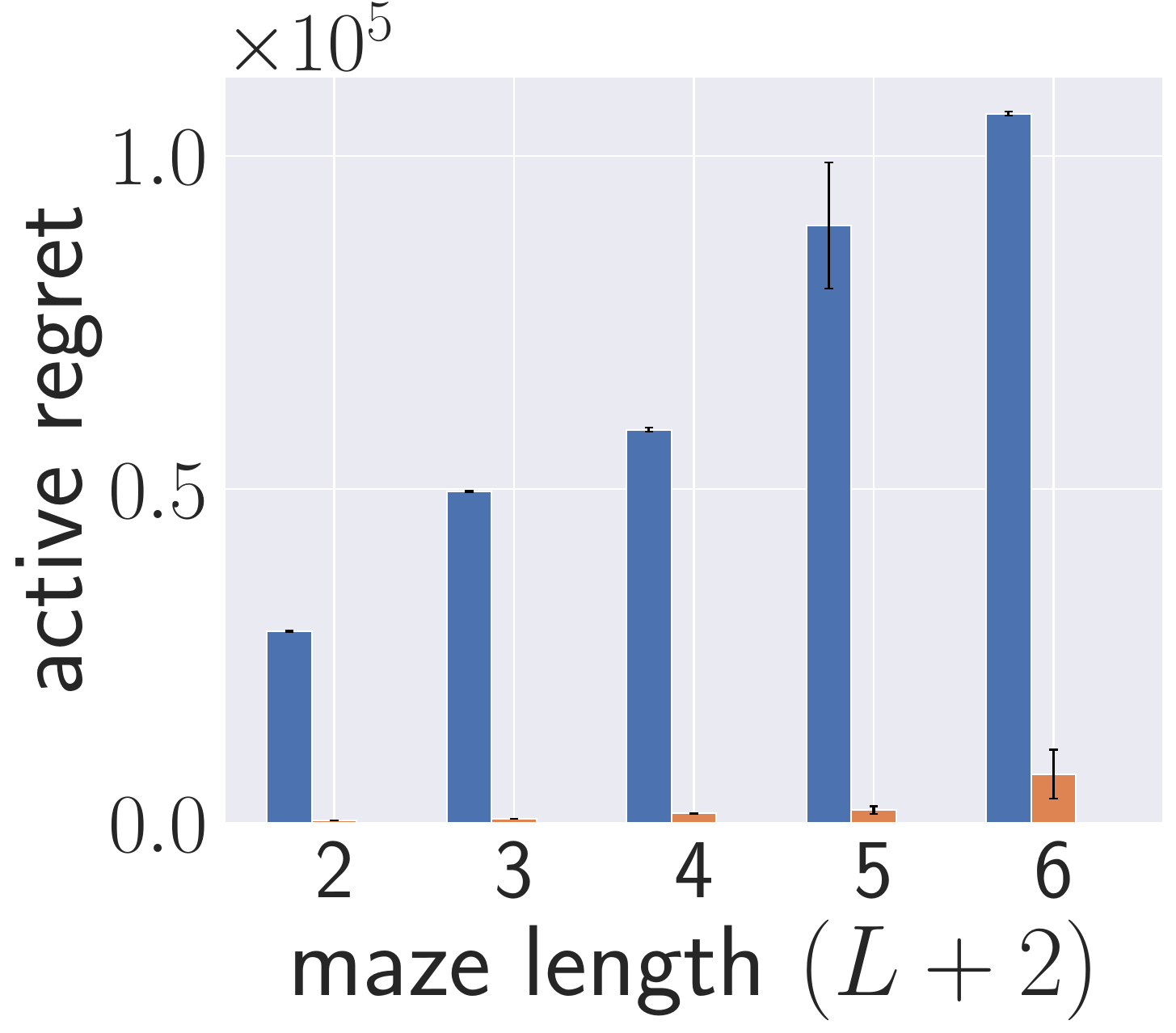}%
        \includegraphics[width=0.25\linewidth]{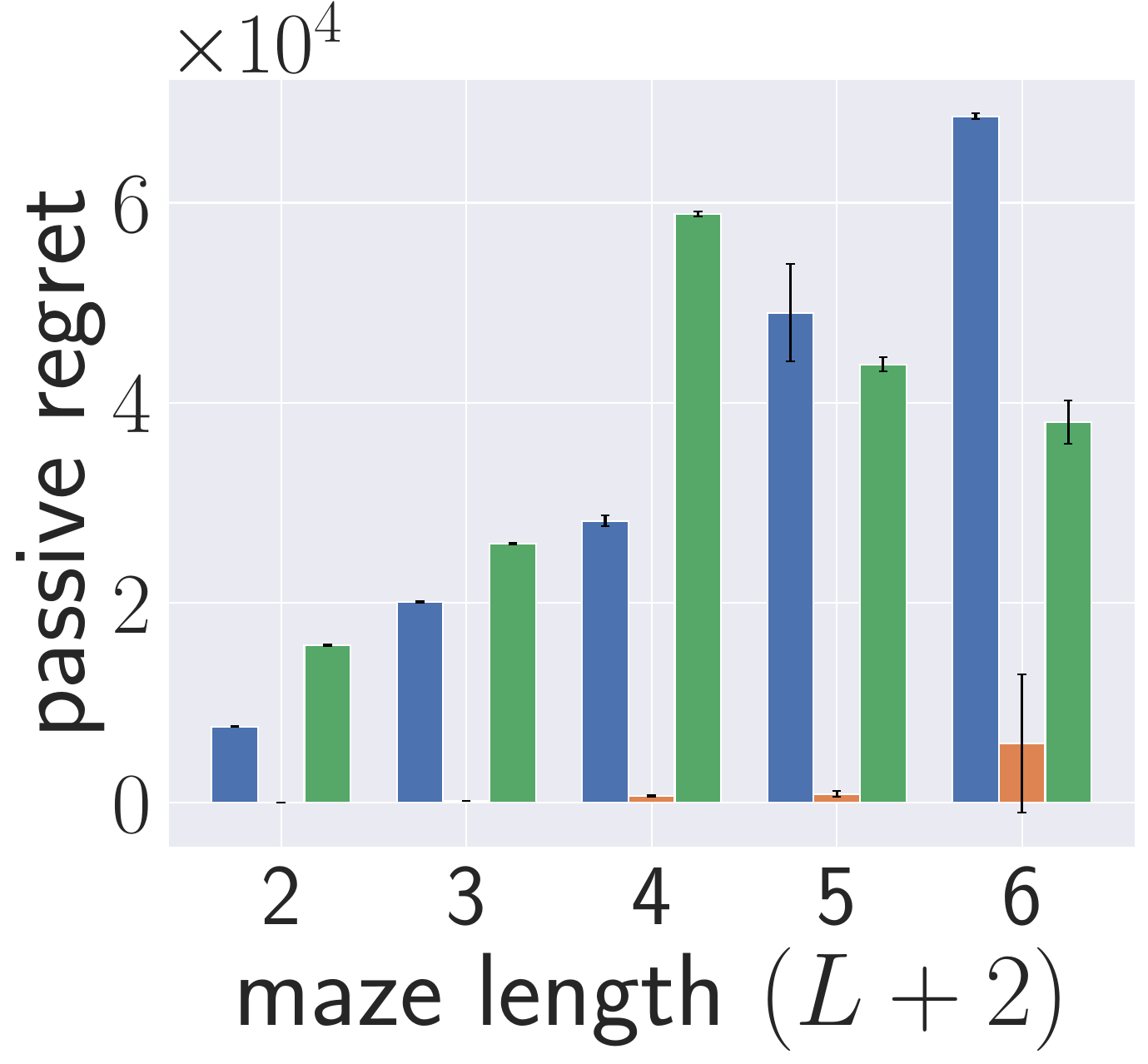}
        \caption{\textbf{\textbf{Active-TMaze}}}
        % \label{fig:ql_active_tmaze}
    \end{subfigure}%
    \caption{
    % \looseness=-1
    Comparison with \citet{Demir2023} (DemirStack) in the Continual TMazes with Q-learning ($N_{rs}=5$). Our approach significantly outperforms it while it significantly struggles in the \textbf{Active-Tmaze}, which shows the importance of theoretically grounded memory management like AS and FS.}
% \label{fig:ql-passive-tmazes}
    \label{fig:ql-demir-baseline}
\end{figure}

\begin{figure}[h!]
    \centering
    \includegraphics[width=\linewidth]{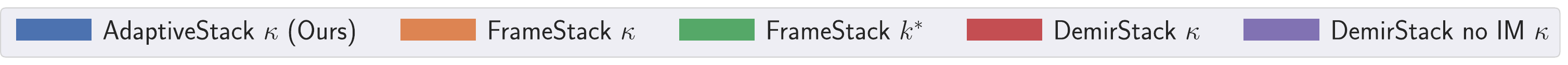}\\
    \begin{subfigure}[b]{0.45\textwidth}
        \centering
        \includegraphics[width=0.7\linewidth]{images/illustrations/xormaze.pdf}%
        \caption{\textbf{XorMaze}}
        % \label{fig:xormaze}
    \end{subfigure}%
    \quad
    \begin{subfigure}[b]{0.5\textwidth}
        \centering
        \includegraphics[width=0.9\linewidth]{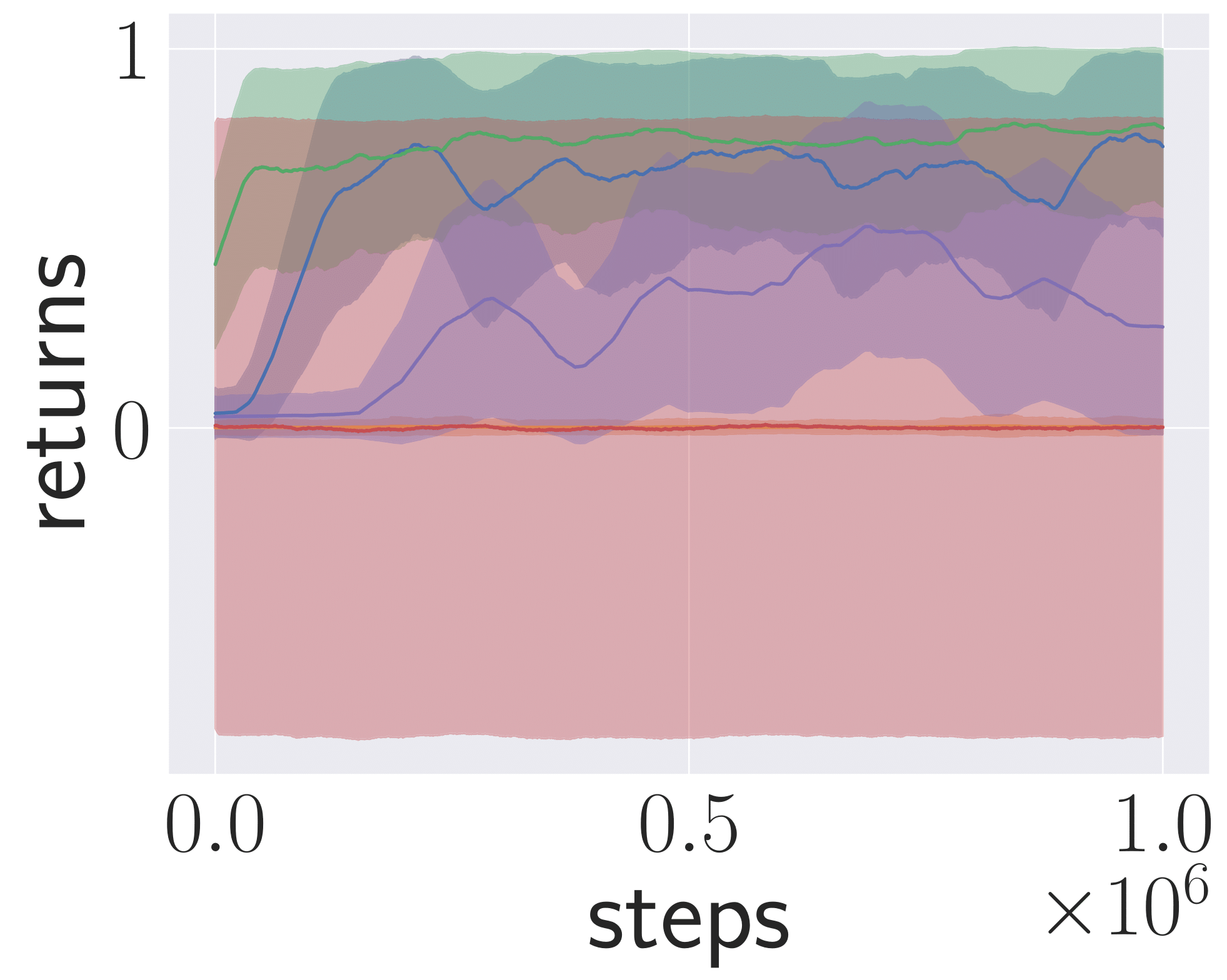}%
        \caption{Returns}
        \label{fig:PPO_xormaze_returns}
    \end{subfigure}%
    \caption{Comparison with \citet{Demir2023} (DemirStack) in the \textbf{XorMaze} task using Q-learning ($N_{rs}=5$). 
    Every episode, the agent must remember both goal cues on the horizontal corners to reach the correct goal ($goal_1$ if both goal cues are the same, otherwise $goal_2$). In addition to this, it must add and remove observations from its stack to be able to know what direction to take each time it is at the junction. This is because it starts at the junction and has to always cross the junction to go to goal cues or final goal locations.
    We observe that DemirStack ($\kappa$) completely fails with even higher variance than FrameStack ($\kappa$) , and struggles to learn without intrinsic motivation (IM).
    This is because some transitions may require adding a new observation while retaining the last ones. In contrast, AdaptiveStack ($\kappa$) learns to solve the task even with the same limited memory, converging to the same performance as the oracle FrameStack ($k^*$).  
    % This is a partially observable continual environment were the agent needs to keep the pole upright for as many steps as possible. Only positional information is visible while velocity information is hidden (so only the previous observation must be remembered). Hence $k^*=\kappa=2$. Consistent with our previous results, we observe that AS achieves performance close to FS($k^*$), even when the task is continuous.
    }
    \label{fig:PPO_xormaze}
\end{figure}

% \clearpage
\section{Compute and Memory Requirements for Transformers}
\label{apdx:compute_memory_transformers}

In this section we analyse the compute and memory efficiency of Adaptive Stacking compared to Frame Stacking. We provide compute and memory requirements for MLP, LSTM, and Transformer models as a function of $k$, the context length processed by the model. In Table 1 of the main text, we further extend these results leveraging the fact that for Frame Stacking to learn the optimal policy $k \geq k^*$ and for Adaptive Stacking to learn the optimal policy $k \geq \kappa$. 

\subsection{MLP Models}

Let us consider an MLP encoder model with weight precision $\mathcal{P}$ bytes per unit, $L$ layers, a hidden size of $h$, an action space size of $|\mathcal{A}|$, an inference batch size of $B_\text{Inference}=1$, and a learning batch size of $B_\text{Learn}=B$. For uniformity with our analysis of the sequence models, we assume the input is already provided to the  MLP in the form of $k$ embeddings of size $h$, so the total input size is $kh$. We also assume a Relu non-linearity for each layer. Additionally, in the case of the Frame Stacking model, we assume that there is an linear output head with a value for each environment action in $\mathcal{A}$. Furthermore, in the case of the Adaptive Stacking model there is another linear output head with a value for each of the $k$ memory eviction actions. The number of copies of the model $G$ that needs to be stored in memory to compute updates during training depends on the learning optimizer. In the case of SGD, we only need to store a single gradient $G=1$, whereas for the popular AdamW optimizer $G=4$.

\subsubsection{Producing a Single Action}

\textbf{Compute of Frame Stacking.} In the first layer of the network $2kh^2+h$ FLOPs are used for the linear layer due to the matrix multiplication and addition of bias (one multiply + one add per element of output). An additional $h$ FLOPs are used for the Relu non-linearity computations. For the next $L - 1$ layers, $2h^2+2h$ FLOPs are used. In the final layer, $2h|\mathcal{A}|$ FLOPs are used. Therefore, the total FLOPs for a single action generation is: 

$$
\boxed{
|c|_{a \sim \pi_\theta} = 2kh^2+2h + (L-1)(2h^2+2h) + 2h|\mathcal{A}| \in \Omega(k)
}
$$

\textbf{Compute of Adaptive Stacking.} For Adaptive Stacking, we do the same amount of computation the first $L$ layers of the network, one using the standard last layer with output size $|\mathcal{A}|$ and one using a layer of size $k$ representing the memory action. This then brings the total FLOPs for a single action generation to:

$$
\boxed{
|c|_{a \sim \pi_\theta} = 2kh^2+2h + (L-1)(2h^2+2h) + 2h(|\mathcal{A}|+k) \in \Omega(k)
}
$$

\textbf{Memory of Frame Stacking.} For MLP inference, we do not need to store intermediate activations after they are used. They are only needed when computing gradients. As such, we lower bound the memory needed for action inference by the number of parameters and precision of the model $|w|_{a \sim \pi_\theta} \geq \mathcal{P}|\theta|$ where $|\theta|=|\theta|_\text{MLP}+|\theta|_\text{stack}$. The weight matrix in the first layer has $kh^2$ parameters, the weight matrix in the middle $L-1$ layers each have $h^2$ parameters, and the weight matrix in the last layers has $h|\mathcal{A}|$ parameters. The bias vector in the first $L$ layers each have $h$ parameters, and the bias vector in the last layer has $|\mathcal{A}|$ parameters. As such, the number of total number parameters is $|\theta|_\text{MLP}=kh^2+(L-1)h^2+Lh+(h+1)|\mathcal{A}|$. Additionally, the stack itself must store $|\theta|_\text{stack}=\mathcal{P}hk$. So the total RAM requirement of the model can be lower bounded as: 

$$
\boxed{
 |w|_{a \sim \pi_\theta} \geq \mathcal{P}|\theta| = \mathcal{P}k(h^2+h)+\mathcal{P}(L-1)h^2+\mathcal{P}Lh+\mathcal{P}(h+1)|\mathcal{A}|\in \Omega(k)
}
$$

\textbf{Memory of Adaptive Stacking.} The number of parameters in the Adaptive Stacking approach are the same for the first $L$ layers, with the addition of a final layer with a weight matrix of size $hk$ and a bias vector of size $k$. Additionally, the stack itself has the same number parameters as a function of $k$. So the total RAM requirement of the model can be lower bounded as:

$$
\boxed{
 |w|_{a \sim \pi_\theta} \geq \mathcal{P}|\theta| = \mathcal{P}k(h^2+h)+\mathcal{P}(L-1)h^2+\mathcal{P}Lh+\mathcal{P}(h+1)(|\mathcal{A}|+k) \in \Omega(k)
}
$$

\subsubsection{Producing a TD Update}

\textbf{Compute of Frame Stacking.} To compute a TD update, we must perform two forward propagations for each item in the batch. The additional forward propagation is for computing the bootstrapping target using a target network that is the same size as the original network. The cost of a backward propagation should match that of a forward propagation, so it is clear that $|c|_\text{TD} = 3B|c|_{a \sim \pi_\theta}$. This then brings the total FLOPs for a TD batch update to: 

$$
\boxed{
|c|_\text{TD} = 3B \bigg(2kh^2+2h + (L-1)(2h^2+2h) + 2h|\mathcal{A}| \bigg) \in \Omega(k)
}
$$

\textbf{Compute of Adaptive Stacking.} In the case of Adaptive Stacking, we must perform TD updates for both the environment actions and the memory actions. Thus, we again have the relationship that $|c|_\text{TD} = 3B|c|_{a \sim \pi_\theta}$. This then implies that the total FLOPs for a TD batch update to:

$$
\boxed{
 |c|_\text{TD} = 3B \bigg(2kh^2+2h + (L-1)(2h^2+2h) + 2h(|\mathcal{A}|+k) \bigg)\in \Omega(k)
}
$$

\textbf{Memory of Frame Stacking.} During a TD update, we must also store the target network in memory, which has the same number of parameters as the original MLP. We also must store the activations of the main network now to compute the gradients. Thus, we can lower bound the memory required as $|w|_\text{TD} \geq (2+G) \mathcal{P}|\theta|_\text{MLP}+\mathcal{P}B|\theta|_\text{stack}+\mathcal{P}BhL$, meaning the total RAM requirement of the model can be lower bounded as:

% $$
% \boxed{
%  |w|_\text{TD} \geq \mathcal{P}B \bigg(k(2h^2+h)+2(L-2)h^2+2(h+1)|\mathcal{A}| +hL \bigg) \in \Omega(k)
% }
% $$

$$
\boxed{
 |w|_\text{TD} \geq (2+G)\bigg(\mathcal{P}kh^2+\mathcal{P}(L-1)h^2+\mathcal{P}Lh+\mathcal{P}(h+1)|\mathcal{A}| \bigg)+ \mathcal{P}Bkh +\mathcal{P}BhL \in \Omega(k)
}
$$

% $$
% \boxed{
%  |w|_{a \sim \pi_\theta} \geq (2+G)\bigg(\mathcal{P}k(h^2)+\mathcal{P}(L-1)h^2+\mathcal{P}Lh+\mathcal{P}(h+1)|\mathcal{A}| \bigg)+ B\mathcal{P}kh \in \Omega(k)
% }
% $$

\textbf{Memory of Adaptive Stacking.} We again have the fact that $|w|_\text{TD} \geq (2+G) \mathcal{P}|\theta|_\text{MLP}+\mathcal{P}B|\theta|_\text{stack}+\mathcal{P}BhL$, but $|\theta|_\text{MLP}$ is different for Adaptive Stacking because of the extra final layer for the memory policy. So the total RAM requirement of the model can be lower bounded as: 

% $$
% \boxed{
%  |w|_\text{TD} \geq \mathcal{P}B \bigg(k(2h^2+h)+2(L-2)h^2+2(h+1)|\mathcal{A}|+2(h+1)(k+1) +hL \bigg) \in \Omega(k)
% }
% $$

$$
\boxed{
 |w|_\text{TD} \geq (2+G)\bigg(\mathcal{P}kh^2+\mathcal{P}(L-1)h^2+\mathcal{P}Lh+\mathcal{P}(h+1)(|\mathcal{A}|+k) \bigg)+ \mathcal{P}Bkh +\mathcal{P}BhL \in \Omega(k)
}
$$

\subsection{LSTM Models}

% \begin{itemize}
%     \item $L$ LSTM layers and one MLP layer 
%     \item Explain adaptive stack architecture 
%     \item Explain need to re-analyze stack during continual learning 
% \end{itemize}

Let us consider an LSTM encoder model with weight precision $\mathcal{P}$ bytes per unit, $L$ layers, a hidden size of $h$, an action space size of $|\mathcal{A}|$, an inference batch size of $B_\text{Inference}=1$, and a learning batch size of $B_\text{Learn}=B$. We assume the input is already provided in the form of $k$ embeddings of size $h$. Additionally, in the case of the Frame Stacking model, we assume that there is an linear output head with a value for each environment action in $\mathcal{A}$. Furthermore, in the case of the Adaptive Stacking model there is another linear output head with a value for each of the $k$ memory eviction actions. The number of copies of the model $G$ that needs to be stored in memory to compute updates during training depends on the learning optimizer. In the case of SGD, we only need to store a single gradient $G=1$, whereas for the popular AdamW optimizer $G=4$.

While it is well known that RNNs can have inference costs independent of the history length, we note that this only works in pure testing settings and is not relevant to the continual learning setting we explore in this work. The issue is that the historical examples must be re-encoded by the RNN if any update has happened to the network during this sequence.

\subsubsection{Producing a Single Action}

\textbf{Compute of Frame Stacking.} Each LSTM cell at a given time step performs operations for 4 gates: the input gate, the forget gate, the output gate, and the candidate cell update. Each gate for each item in the batch for each time-step requires a matrix multiplication with the input, a matrix multiplication with the last hidden state, an additive bias vector, and a cost per hidden unit of applying non-linearities. Thus for each of the $L$ layers we need $8h^2+4h+16h$ FLOPs. For the last linear layer at the last step we need $2h|\mathcal{A}|$ FLOPs. This then brings the total FLOPs for a single action generation to: 

$$
\boxed{
|c|_{a \sim \pi_\theta} = kL \bigg( 8h^2+20h \bigg) + 2h|\mathcal{A}| \in \Omega(k)
}
$$

\textbf{Compute of Adaptive Stacking.} For Adaptive Stacking, we must do two passes through the $L$ layer LSTM and additionally produce a memory action with a final layer head requiring $2hk$ FLOPs. This then brings the total FLOPs for a single action generation to: 

$$
\boxed{
|c|_{a \sim \pi_\theta} = kL \bigg( 8h^2+20h \bigg) + 2h(|\mathcal{A}|+k) \in \Omega(k)
}
$$

\textbf{Memory of Frame Stacking.} As with the MLP network, $|w|_{a \sim \pi_\theta} \geq \mathcal{P}|\theta|$ where the total parameters can be decomposed as $|\theta|=|\theta|_\text{LSTM}+|\theta|_\text{activation}+|\theta|_\text{stack}$. The network consists of 4 gates in each layer, including two matrices with $h^2$ parameters and one bias vector with $h$ parameters. So, there are $8h^2 + 4h$ parameters per layer, and $L(8h^2 + 4h)$ parameters in the $L$ layers. The linear output layer then contains $(h+1)|\mathcal{A}|$ parameters. The activation memory only needs to be stored at the current step during inference, requiring $\mathcal{P}hL$ bytes of memory. The stack itself requires $\mathcal{P}kh$ bytes of memory. So the total RAM requirement of the model can be lower bounded as:

$$
\boxed{
 |w|_{a \sim \pi_\theta} \geq \mathcal{P}L(8h^2 + 4h) + \mathcal{P}(h+1)|\mathcal{A}|+\mathcal{P}hL+ \mathcal{P}kh \in \Omega(k)
}
$$

\textbf{Memory of Adaptive Stacking.} The Adaptive Stacking case only adds the additional output layer for memory actions, which has $(h+1)k$ total parameters. So the total RAM requirement of the model can be lower bounded as:

$$
\boxed{
 |w|_{a \sim \pi_\theta} \geq \mathcal{P}L(8h^2 + 4h) + \mathcal{P}(h+1)(|\mathcal{A}|+k)+\mathcal{P}hL+ \mathcal{P}kh \in \Omega(k)
}
$$

\subsubsection{Producing a TD Update}

\textbf{Compute of Frame Stacking.} To compute a TD update, we must perform two forward propagations for each item in the batch. The additional forward propagation is for computing the bootstrapping target using a target network that is the same size as the original network. The cost of a backward propagation should match that of a forward propagation, so it is clear that $|c|_\text{TD} = 3B|c|_{a \sim \pi_\theta}$. This then brings the total FLOPs for a TD batch update to: 

$$
\boxed{
|c|_\text{TD} = 3BkL \bigg( 8h^2+20h \bigg) + 6Bh|\mathcal{A}| \in \Omega(k)
}
$$

\textbf{Compute of Adaptive Stacking.} In the case of Adaptive Stacking, we must perform TD updates for both the environment actions and the memory actions. Thus, we again have the relationship that $|c|_\text{TD} = 3B|c|_{a \sim \pi_\theta}$. This then implies that the total FLOPs for a TD batch update to:

$$
\boxed{
 |c|_\text{TD} = 3BkL \bigg( 8h^2+20h \bigg) + 6Bh(|\mathcal{A}|+k) \bigg)\in \Omega(k)
}
$$

\textbf{Memory of Frame Stacking.} During a TD update, we must also store the target network in memory, which has the same number of parameters as the original LSTM. We also must store the activations of the main network for all steps now to compute the gradients. Thus, we can lower bound the memory required as $|w|_\text{TD} \geq (2 + G)\mathcal{P}|\theta|_\text{LSTM}+\mathcal{P}B|\theta|_\text{stack}+\mathcal{P}BkhL$, meaning the total RAM requirement of the model can be lower bounded as:

$$
\boxed{
 |w|_\text{TD} \geq (2 + G) \bigg(\mathcal{P}L(8h^2 + 4h) + \mathcal{P}(h+1)|\mathcal{A}| \bigg)+\mathcal{P}BkhL+ \mathcal{P}Bkh \in \Omega(k)
}
$$

\textbf{Memory of Adaptive Stacking.} We again have the fact that $|w|_\text{TD} \geq (2 + G) \mathcal{P}|\theta|_\text{LSTM}+\mathcal{P}B|\theta|_\text{stack}+\mathcal{P}BkhL$, but $|\theta|_\text{LSTM}$ is different for Adaptive Stacking because of the extra final layer for the memory policy. So the total RAM requirement of the model can be lower bounded as: 

$$
\boxed{
 |w|_\text{TD} \geq (2 + G) \bigg( \mathcal{P}L(8h^2 + 4h) + \mathcal{P}(h+1)(|\mathcal{A}|+k) \bigg)+\mathcal{P}BkhL+ \mathcal{P}Bkh \in \Omega(k)
}
$$

\subsection{Transformer Models}

% \begin{itemize}
%     \item $L$ Transformer layers and one MLP layer 
%     \item Explain adaptive stack architecture 
% \end{itemize}

Let us consider an Transformer model with weight precision $\mathcal{P}$ bytes per unit, $L$ layers, a hidden size of $h$, an action space size of $|\mathcal{A}|$, an inference batch size of $B_\text{Inference}=1$, and a learning batch size of $B_\text{Learn}=B$. We assume the input is already provided in the form of $k$ embeddings of size $h$. Additionally, in the case of the Frame Stacking model, we assume that there is an linear output head with a value for each environment action in $\mathcal{A}$. Furthermore, in the case of the Adaptive Stacking model there is another linear output head with a value for each of the $k$ memory eviction actions. The number of copies of the model $G$ that needs to be stored in memory to compute updates during training depends on the learning optimizer. For example, in the case of SGD, we only need to store a single gradient $G=1$, whereas for the popular AdamW optimizer $G=4$.

%the model must first select $k$ of $k+1$ items to store in memory and then produce an action using this size $k$ memory with computational cost $\geq 24Lh^2k + 4Lhk^2+ 2h|\mathcal{A}|k$ following Equation \ref{eq:vanilla_transformer_action}

\subsubsection{Producing a Single Action}

\textbf{Compute of Frame Stacking.} We consider the analysis of the compute required for a typical Transformer from \citet{narayanan2021efficient}. The compute cost $|c|$ of doing inference of the final hidden state over a batch size of $B_\text{Inf}$ over tokenized inputs with a context length of $k$ using a Transformer with $L$ layers and a hidden size of $h$ is $24LB_\text{Inf}kh^2 + 4LB_\text{Inf}k^2h$ the compute cost of the final logit layer producing values for each action in $\mathcal{A}$ is $2B_\text{Inf}h|\mathcal{A}|$ only applied once per sequence. So we can lower bound the compute cost of producing a single action (i.e. $B_\text{Inf}=1$) as: 

$$
\boxed{
|c|_{a \sim \pi_\theta} \geq 24Lh^2k + 4Lhk^2+ 2h|\mathcal{A}| \in \Omega(k^2) 
}
$$

It is a lower bound because we do not include any pre-Transformer layers needed to produce embeddings for the input. We also do not include actions and rewards as part of the interaction history, which would bring the context length to $k' = 3k-2$. Additionally, we do not include any recomputation costs that make sense to incur when we are bound by memory rather than compute -- here we assume we are compute bound. 

\textbf{Compute of Adaptive Stacking.} For producing a single action with Adaptive Stacking, the new compute overhead comes from the addition of the memory action head that comprises an extra $2hk$ FLOPs. This then brings the total FLOPs for a single action generation to:

%For Adaptive Stacking, the model must first select $k$ of $k+1$ items to store in memory and then produce an action using this size $k$ memory with computational cost $\geq 24Lh^2k + 4Lhk^2+ 2h|\mathcal{A}|k$ following Equation \ref{eq:vanilla_transformer_action}. The selection of the single memory item of $k+1$ items to remove from memory needs a separate logit head with an action space of $|\mathcal{A}| = k+1$, so the forward propagation of a base transformer with this head has computational cost $\geq 24Lh^2k + 4Lhk^2+ 2hk^2+2hk$. So the computational cost of producing a single action is: 

% $$
% \boxed{
%     |c|_{a \sim \pi_\theta} \geq 48Lh^2k + 8Lhk^2+ 2hk(|\mathcal{A}|+k+1) \in \Omega(k^2) 
% }
% $$

$$
\boxed{
|c|_{a \sim \pi_\theta} \geq 24Lh^2k + 4Lhk^2+ 2h(|\mathcal{A}|+k) \in \Omega(k^2) 
}
$$

\textbf{Memory of Frame Stacking.} We now assume that we are memory bound and not compute bound and include the cost of storing the model of parameter size $|\theta|$ at precision $\mathcal{P}$ where $|\theta|=|\theta|_\text{Transformer}+|\theta|_\text{stack}+|\theta|_\text{activations}$. In each Transformer layer, there are $4h^2$ parameters used to compute attention, $8h^2$ parameters used in the feedforward network, and $4h$ parameters used in the layer norm. If biases are used for all linear layers, there are an additional $9h$ parameters -- we exclude these for now in the spirit of lower bounds as they do not change the asymptotic result in terms of $k$ either way. The output layer then has $(h+1)|\mathcal{A}|$ parameters, making $|\theta|_\text{Transformer}=L(12h^2+4h)+(h+1)|\mathcal{A}|$. The memory used for the stack itself is $\mathcal{P}kh$. Additionally, the cost of activations $\mathcal{P}khL$ assuming full re-computations at each step. This results in a lower bound on the working memory cost of producing a single action: 

$$
\boxed{
    |w|_{a \sim \pi_\theta} \geq \mathcal{P}L(12h^2+4h)+\mathcal{P}(h+1)|\mathcal{A}|) + \mathcal{P}B(L+1)hk \in \Omega(k)
}
$$

% \textbf{Memory of Frame Stacking.} During a TD update, we must also store the target network in memory, which has the same number of parameters as the original Transformer. Thus, we can lower bound the memory required as $|w|_\text{TD} \geq (2 + G)\mathcal{P}|\theta|_\text{Transformer}+\mathcal{P}B|\theta|_\text{stack}+\mathcal{P}BkhL$, meaning the total RAM requirement of the model can be lower bounded as:

% $$
% \boxed{
%  |w|_\text{TD} \geq (2 + G) \bigg(\mathcal{P}L(8h^2 + 4h) + \mathcal{P}(h+1)|\mathcal{A}| \bigg)+\mathcal{P}BkhL+ \mathcal{P}Bkh \in \Omega(k)
% }
% $$

\textbf{Memory of Adaptive Stacking.} The main additional memory overhead of Adaptive Stacking is the output layer for the memory policy, which has $(h+1)k$ parameters. This results in a lower bound on the working memory cost of producing a single action:

%Producing a single action with an adaptive stack requires processing a context length of $k+1$ for the memory update procedure rather than $k$. While the memory update procedure may need to be processed separately from action selection itself, the memory can be reused and thus it does not impact the minimum memory needed outside of the growth of the context length to $k+1$:

$$
\boxed{
    |w|_{a \sim \pi_\theta} \geq \mathcal{P}L(12h^2+4h)+\mathcal{P}(h+1)(|\mathcal{A}|+k) + \mathcal{P}(L+1)hk \in \Omega(k)
}
$$

\subsubsection{Producing a TD Update}

\textbf{Compute of Frame Stacking.} To compute a TD update, we must perform two forward propagations for each item in the batch. The additional forward propagation is for computing the bootstrapping target using a target network that is the same size as the original network. The cost of a backward propagation should match that of a forward propagation, so it is clear that $|c|_\text{TD} = 3B|c|_{a \sim \pi_\theta}$. This then brings the total FLOPs for a TD batch update to: 

$$
\boxed{
|c|_\text{TD} \geq 3B \bigg(24Lh^2k + 4Lhk^2+ 2h|\mathcal{A}|\bigg) \in \Omega(k^2) 
}
$$

% Following from the previous analysis, let's consider $B_\text{Learn}=B$ for TD updates. A forward pass is needed for both the learning network and the target network. The cost of backpropogating through the learning network is twice that of forward propogating. As such, we can lower bound the compute cost of a single TD update with batch size $B$:

% \begin{equation}
%     |c|_{\text{TD}} \geq 96BLh^2k + 16BLhk^2+ 8Bh|\mathcal{A}|k
% \end{equation}

\textbf{Compute of Adaptive Stacking.}  In the case of Adaptive Stacking, we must perform TD updates for both the environment actions and the memory actions. Thus, we again have the relationship that $|c|_\text{TD} = 3B|c|_{a \sim \pi_\theta}$. This then implies that the total FLOPs for a TD batch update to:

% For the adaptive stack Transformer we must compute two separate TD updates, one over an action space of a decision to get rid of one memory element $|\mathcal{A}|=k+1$ and one over the actual action space $\mathcal{A}$. So we can lower bound the compute for a full TD update as follows: 

% \begin{equation}
%     |c|_{\text{TD}} \geq 192BLh^2k + 32BLhk^2+ 8Bhk(|\mathcal{A}|+k+1)
% \end{equation}

$$
\boxed{
|c|_\text{TD} \geq 3B \bigg(24Lh^2k + 4Lhk^2+ 2h(|\mathcal{A}|+k) \bigg) \in \Omega(k^2) 
}
$$

\textbf{Memory of Frame Stacking.} To analyse the working memory requirements $|w|$ of producing a single action for a typical Transformer, we follow \citet{transformer-math-eleutherai}. We now assume that we are memory bound and not compute bound. During a TD update, we must also store the target network in memory, which has the same number of parameters as the original Transformer. Thus, we can lower bound the memory required as $|w|_\text{TD} \geq (2 + G)\mathcal{P}|\theta|_\text{Transformer}+\mathcal{P}B|\theta|_\text{stack}+\mathcal{P}B|\theta|_\text{activations}$, meaning the total RAM requirement of the model can be lower bounded as:

$$
\boxed{
    |w|_\text{TD} \geq (2 + G) \bigg(\mathcal{P}L(12h^2+4h)+\mathcal{P}(h+1)|\mathcal{A}| \bigg) + \mathcal{P}B(L+1)hk \in \Omega(k)
}
$$

% The number of copies of the model that needs to be stored in memory depends on the learning optimizer. For the case of AdamW it is 5, for the case of SGD with no optimizer memory it is 2 (the model and the gradient). We assume SGD for now as we are interested in a lower bound that can easily be extended for more advanced optimizers. Additionally, assuming full recomputations, only the activations of the learning model must be stored and those of the target network can be discarded for semi-gradient updates. So we can lower bound the working memory cost of a single TD update with batch size $B$:

% \begin{equation}
%     |w|_{\text{TD}} \geq 2\mathcal{P}|\theta| + B\mathcal{P}Lhk
% \end{equation}

% $$
% \boxed{
%     |w|_{a \sim \pi_\theta} \geq \mathcal{P}L(12h^2+4h)+\mathcal{P}(h+1)(|\mathcal{A}|+k) + \mathcal{P}(L+1)hk \in \Omega(k)
% }
% $$

\textbf{Memory of Adaptive Stacking.} We again have the fact that $|w|_\text{TD} \geq (2 + G) \mathcal{P}|\theta|_\text{Transformer}+\mathcal{P}B|\theta|_\text{stack}+\mathcal{P}B|\theta|_\text{activations}$, but $|\theta|_\text{Transformer}$ is different for Adaptive Stacking because of the extra final layer for the memory policy. So the RAM requirement of the model can be lower bounded as:

%Once again because recomputation is possible, the only additional memory requirement is having to store $k+1$ memory elements rather than $k$:

$$
\boxed{
    |w|_\text{TD} \geq (2 + G) \bigg(\mathcal{P}L(12h^2+4h)+\mathcal{P}(h+1)(|\mathcal{A}|+k) \bigg) + \mathcal{P}B(L+1)hk \in \Omega(k)
}
$$

\end{document}